\newcommand{\distas}[1]{\mathbin{\overset{#1}{\kern\z@\sim}}}%
\newsavebox{\mybox}\newsavebox{\mysim}
\newcommand{\distras}[1]{%
	\savebox{\mybox}{\hbox{\kern3pt$\scriptstyle#1$\kern3pt}}%
	\savebox{\mysim}{\hbox{$\sim$}}%
	\mathbin{\overset{#1}{\kern\z@\resizebox{\wd\mybox}{\ht\mysim}{$\sim$}}}%
}
\newtheorem*{theorem*}{Theorem}
\newtheorem*{corollary*}{Corollary}
\newtheorem{definition}{Definition}
\newtheorem*{definition*}{Definition}
\newtheorem{proposition}{Proposition}
\newtheorem*{proposition*}{Proposition}
\newtheorem{assumption}{Assumption}
\newtheorem*{assumption*}{Assumption}
\newtheorem{lemma}{Lemma}
\newtheorem*{lemma*}{Lemma}
\newtheorem{remark}{Remark}
\newtheorem*{remark*}{Remark}
\newtheorem*{example*}{Example}
\title{The Ridgelet Prior: A Covariance Function Approach to Prior Specification for Bayesian Neural Networks}
\author{\name Takuo Matsubara \email tmatsubara@turing.ac.uk \\
	\addr School of Mathematics, Statistics \& Physics\\
	Newcastle University\\
	Newcastle upon Tyne, NE1 7RU, United Kingdom
	\AND
	\name Chris J.\ Oates \email chris.oates@ncl.ac.uk \\
	\addr School of Mathematics, Statistics \& Physics\\
	Newcastle University\\
	Newcastle upon Tyne, NE1 7RU, United Kingdom
	\AND
	\name Fran\c{c}ois-Xavier Briol \email f.briol@ucl.ac.uk \\
	\addr Department of Statistical Science\\
	University College London\\
	London, WC1E 6BT, United Kingdom
}
\begin{document}

\maketitle

\begin{abstract}
Bayesian neural networks attempt to combine the strong predictive performance of neural networks with formal quantification of uncertainty associated with the predictive output in the Bayesian framework. 
However, it remains unclear how to endow the parameters of the network with a prior distribution that is meaningful when lifted into the output space of the network. 
A possible solution is proposed that enables the user to posit an appropriate Gaussian process covariance function for the task at hand.
Our approach constructs a prior distribution for the parameters of the network, called a \textit{ridgelet prior}, that approximates the posited Gaussian process in the output space of the network. 
In contrast to existing work on the connection between neural networks and Gaussian processes, our analysis is non-asymptotic, with finite sample-size error bounds provided. 
This establishes the universality property that a Bayesian neural network can approximate any Gaussian process whose covariance function is sufficiently regular.
Our experimental assessment is limited to a proof-of-concept, where we demonstrate that the ridgelet prior can out-perform an unstructured prior on regression problems for which a suitable Gaussian process prior can be provided.
\end{abstract}

\begin{keywords}
Bayesian neural networks, Gaussian processes, prior selection, ridgelet transform, statistical learning theory
\end{keywords}


\section{Introduction}
\label{sec:introduction}

Neural networks are beginning to be adopted in a range of sensitive application areas such as healthcare \citep{Topol2019}, social care \citep{Serrano2019}, and the justice system \citep{Tortora2020}, where the accuracy and reliability of their predictive output demands careful assessment. 
This problem lends itself naturally to the Bayesian paradigm and there has been a resurgence in interest in Bayesian neural networks (BNNs), originally introduced and studied in \citep{Buntine1991, Mackay1992, Neal1995}. 
BNNs use the language of probability to express uncertainty regarding the ``true'' value of the parameters in the neural network, initially by assigning a \textit{prior} distribution over the space of possible parameter configurations and then updating this distribution on the basis of a training dataset. 
The resulting \textit{posterior} distribution over the parameter space implies an associated predictive distribution for the output of the neural network, assigning probabilities to each of the possible values that could be taken by the output of the network.
This predictive distribution carries the formal semantics of the Bayesian framework and can be used to describe epistemic uncertainty associated with the phenomena being modelled. 

Attached to any probabilistic quantification of uncertainty are \textit{semantics}, which describe how probabilities should be interpreted (e.g. are these probabilities epistemic or aleatoric; whose belief is being quantified; what assumptions are premised?).
As for any Bayesian model, the semantics of the posterior predictive distribution are largely inherited from the semantics of the prior distribution, which is typically a representation of a user's subjective belief about the unknown ``true'' values of parameters in the model. 
This represents a challenge for BNNs, as a user cannot easily specify their prior belief at the level of the parameters of the network in general settings where the influence of each parameter on the network's output can be difficult to understand.  
Furthermore, the total number of parameters can go from a few dozens to several million or more, rendering careful selection of priors for each parameter impractical. 
This has lead some researchers to propose \textit{ad hoc} choices for the prior distribution, which will be reviewed in \Cref{sec:background} \citep[see also][]{Nalisnick2018}. 
Such \textit{ad hoc} choices of prior appear to severely limit interpretability of the semantics of the BNN. 
It has also been reported that such priors can have negative consequences for the predictive performance of BNNs \citep{Yao2019}.

The development of interpretable prior distributions for BNNs is an active area of research that, if adequately solved, has the potential to substantially advance methodology for neural networks. Potential benefits include:
\begin{itemize}
	\item \textbf{Fewer Data Required:} BNNs are ``data hungry'' models; their large number of parameters means that a large number of data are required for the posterior to concentrate on a suitable configuration of parameter values.
	The inclusion of domain knowledge in the prior distribution could be helpful in reducing the effective degrees of freedom in the parameter space, mitigating the requirement for a large training dataset.
	\item \textbf{Faster Computation:} The use of an \textit{ad hoc} prior distribution can lead to a posterior distribution that is highly multi-modal \citep{Pourzanjani2017}, creating challenges for computation (e.g. using variational inference or Markov chain Monte Carlo; MCMC). 
	The inclusion of domain knowledge could be expected to counteract (to some extent) the multi-modality issue by breaking some of the symmetries present in the parametrisation of the network.
	\item \textbf{Lower Generalisation Error:} An important issue with BNNs is that their out-of-sample performance can be poor when an \textit{ad hoc} prior is used. 
	These issues have lead several authors to question the usefulness of the BNNs; see \cite{Mitros2019} and \cite{Wenzel2020}.
	Model predictions are strongly driven by the prior and we therefore expect inclusion of domain knowledge to be an important factor in improving the generalisation performance of BNNs.
\end{itemize}
In this paper we do \textit{not} claim to provide a solution to the problem of prior selection that enjoys all the benefits just discussed.
Such an endeavour would require very extensive (and application-specific) empirical investigation, which is not our focus in this work.
Rather, this paper proposes and studies a novel approach to prior specification that operates at the level of the output of the neural network and, in doing so, provides a route for expert knowledge on the phenomenon being modelled to be probabilistically encoded.
The construction that we present is stylised to admit a detailed theoretical analysis and therefore the empirical results in this paper are limited to a proof-of-concept.
In subsequent work we will discuss generalisations of the construction that may be more amenable to practical applications, for example by reducing the number of hidden units that may be required.

Our analysis can be viewed in the context of a recent line of research which focuses on the predictive distribution as a function of the prior distribution on network parameters \citep{Flam-Shepherd2017,Hafner2019,Pearce2019,Sun2019}. 
These papers propose to reduce the problem of prior selection for BNNs to the somewhat easier problem of prior selection for Gaussian processes (GPs). 
The approach studied in these papers, and also adopted in the present paper, can be summarised as follows: (i) Elicit a GP model that encodes domain knowledge for the problem at hand, (ii) Select a prior for the parameters of the BNN such that the output of the BNN in some sense ``closely approximates'' the GP. 
This high-level approach is appealing since it provides a direct connection between the established literature on covariance modelling for GPs \citep{Duvenaud2014b,Rasmussen2006,Stein1999} and the literature on uncertainty quantification using a BNN. 
For instance, existing covariance models can be used to encode \textit{a priori} assumptions of amplitude, smoothness, periodicity and so on as required. 
Moreover, the number of parameters required to elicit a GP (i.e. the parameters of the mean and covariance functions) is typically much smaller than the number of parameters in a BNN.

Existing work on this topic falls into two categories. 
In the first, the prior is selected in order to minimise a variational objective between the BNN and the target GP \citep{Flam-Shepherd2017,Hafner2019,Sun2019}. 
Although some of the more recent approaches have demonstrated promising empirical results, all lack theoretical guarantees. 
In addition, these approaches often constrain the user to use a particular algorithm for posterior approximation (such as variational inference), or require the need to see some of the training data in order to construct the prior model. 
The second approach consists of carefully adapting the architecture of the BNN to ensure convergence to the GP via a central limit theorem argument \citep{Pearce2019}. 
This approach is particularly efficient, but requires deriving a new BNN architecture for every GP covariance function and in this sense may be considered impractical.

In this paper we propose the \emph{ridgelet prior}, a novel method to construct interpretable prior distributions for BNNs. 
It follows the previous two-stage approach, but remedies several issues with existing approaches:
\begin{itemize}
	\item \textbf{Universal Approximation:} The ridgelet prior can be used for a BNN to approximate \emph{any} GP of interest (provided generic regularity conditions are satisfied) in the prior predictive without the need to modify the architecture of the network. 
	\item \textbf{Finite Sample-Size Error Bounds:} Approximation error bounds are obtained which are valid for a finite-dimensional parametrisation of the network, as opposed to relying on asymptotic results such as a central limit theorem. 
	\item \textbf{Compatibility:} The prior can be used within most existing algorithms for posterior approximation, such as variational inference or MCMC. 
	\item \textbf{No Optimisation Required:} The ridgelet prior does not require access to any part of the dataset in its construction and is straight-forward to implement (e.g. it does not require any numerical optimisation routine).
\end{itemize}

\noindent To construct the ridgelet prior, we build on existing analysis of the \textit{ridgelet transform} \citep{Candes1998, Murata1996b, Sonoda2017b}, which was used to study the consistency of (non-Bayesian) neural networks. 
In particular, we derive a novel result for numerical approximation using a finite-bandwidth version of the ridgelet transform, presented in Theorem \ref{thm: recon_rate}, that may be of independent interest.
The ridgelet prior is defined for neural networks with $L \geq 1$ hidden layers but our theoretical analysis focuses on the ``shallow'' case of $L = 1$ hidden layer, which is nevertheless a sufficiently rich setting for our consistency results to hold.

The remainder of the paper is structured as follows: 
\Cref{sec:background} reviews common prior choices for BNNs and known connections between BNNs and GPs. 
\Cref{sec:methodology} presents the ridgelet prior in full detail. 
Our theoretical guarantees for the ridgelet prior are outlined in \Cref{sec:theory}.
A proof-of-concept empirical assessment is contained \Cref{sec:experiments}.
\Cref{sec:conclusion} summarises the main conclusions of this work.
Code to reproduce all results in this paper can be downloaded from: \url{https://github.com/takuomatsubara/BNN-RidgeletPrior}.


\section{Background}
\label{sec:background}

To begin we briefly introduce notation for GPs and BNNs, discussing the issue of prior specification for these models.

\subsection{Prior Specification and Covariance Functions}

This paper focusses on the problem of approximating a deterministic function $\mathrm{f}:\mathbb{R}^d \rightarrow \mathbb{R}$ using a BNN. 
This problem is fundamental and underpins algorithms for regression and classification.
The Bayesian approach is to model $\mathrm{f}$ as a stochastic process (also called ``random function'') $f : \mathbb{R}^d \times \Theta \rightarrow \mathbb{R}$, where $\Theta$ is a measurable parameter space on which a \emph{prior} probability distribution is elicited, denoted $\mathbb{P}$.
The set $\Theta$ may be either finite or infinite-dimensional.
In either case, $\theta \mapsto f(\cdot,\theta)$ is a random variable taking values in the vector space of real-valued functions on $\mathbb{R}^d$. 
The combination of a dataset of size $n$ and Bayes' rule are used to constrain, in a statistical sense, this distribution on $\Theta$, to produce a posterior $\mathbb{P}_n$ that is absolutely continuous with respect to $\mathbb{P}$.
If the model is well-specified, then there exists an element $\theta^\dagger \in \Theta$ such that $f(\cdot,\theta^\dagger) = \mathrm{f}(\cdot)$ and, if the Bayesian procedure is consistent, $\mathbb{P}_n$ will converge (in an appropriate sense) to a point mass on $\theta^\dagger$ in the $n \rightarrow \infty$ limit.

In practice, Bayesian inference requires that a suitable prior distribution $\mathbb{P}$ is elicited. Stochastic processes are intuitively described by their moments, and these can be used by a domain expert to elicit $\mathbb{P}$.
The first two moments are given by the mean function $m: \mathbb{R}^d \rightarrow \mathbb{R}$ and the covariance function $k: \mathbb{R}^d \times \mathbb{R}^d \rightarrow \mathbb{R}$, given pointwise by
\begin{equation*}
m(\bm{x}) := \int_\Theta f(\bm{x},\theta) \mathrm{d}\mathbb{P}(\theta), \qquad
k(\bm{x},\bm{x}') := \int_\Theta (f(\bm{x},\theta)-m(\bm{x}))(f(\bm{x}',\theta)-m(\bm{x}')) \mathrm{d}\mathbb{P}(\theta) .
\end{equation*} 
for all $\bm{x},\bm{x}' \in \mathbb{R}^d$. GPs and BNNs are examples of stochastic processes that can be used.
In the case of a GP, the first two moments completely characterise $\mathbb{P}$.
Indeed, under the conditions of Mercer's theorem \citep[see e.g. Section 4.5 of][]{Steinwart2008},
\begin{equation*}
f(\bm{x},\theta) = m(\bm{x}) + \sum_{i=1}^{\text{dim}(\Theta)} \theta_i \varphi_i(\bm{x}), \qquad \theta_i \stackrel{\text{i.i.d.}}{\sim} \mathcal{N}(0,1)
\end{equation*}
where the functions $\varphi_i:\mathbb{R}^d \rightarrow \mathbb{R}$ are obtained from the Mercer decomposition of the covariance function $k(\bm{x},\bm{x}') = \sum_i \varphi_i(\bm{x}) \varphi_i(\bm{x}')$ and $\text{dim}(\Theta)$ denotes the dimension of $\Theta$.
The shorthand notation $\mathcal{GP}(m,k)$ is often used to denote this GP.
The \emph{kernel trick} enables explicit computation with the $\theta_i$ and $\varphi_i$ to be avoided, so that the user can specify the mean and covariance functions and, in doing so, $\mathbb{P}$ is implicitly defined.
There is a well-established literature on covariance modelling \citep{Duvenaud2014b,Rasmussen2006,Stein1999} for GPs.
For BNNs, however, there is no analogue of the kernel trick and it is unclear how to construct a prior $\mathbb{P}$ for the parameters $\theta$ of the BNN that is in agreement with moments that have been expert-elicited.

Fix a function $\phi: \mathbb{R} \to \mathbb{R}$, which we will call \emph{activation function}.
In this paper a BNN with $L \geq 1$ \emph{hidden layers} is understood to be a stochastic process with functional form
\begin{eqnarray}\label{eq:twolayer_neural_net}	
f(\bm{x},\theta) = \sum_{j=1}^{N_L} w_{1,j}^L \phi(z_j^L(\bm{x})), \quad z_i^{l}(\bm{x}) := b_{i}^{l-1} + \sum_{j=1}^{N_{l-1}} w_{i,j}^{l-1} \phi(z_j^{l-1}(\bm{x})), \quad l = 2,\dots,L
\end{eqnarray}
where $N_l := \text{dim}(\bm{z}^l(\bm{x}))$ is the number of nodes in the $l$th layer and the edge case is the \emph{input layer} 
$$
z_i^{1}(\bm{x}) := b_{i}^0 + \sum_{j=1}^d w_{i,j}^0 x_j,
$$
The parameters $\theta$ of the BNN consists of the \emph{weights} $w_{i,j}^l \in \mathbb{R}$ and the \emph{biases} $b_{i}^l \in \mathbb{R}$ of each layer $l = 0,\dots,L$, where the $L$th layer's bias is excluded in our definition. 
Common examples of activation functions include the rectified linear unit (ReLU) $\phi(x)=\max(0,x)$, logistic $\phi(x)=1/(1+\exp(-x))$, hyperbolic tangent $\phi(x)=\tanh(x)$ and the Gaussian $\phi(x)=\exp(-x^2)$. 
In all cases the complexity of the mapping $\theta \mapsto f(\cdot,\theta)$ in \eqref{eq:twolayer_neural_net} makes prior specification challenging, since it is difficult to ensure that a distribution on the parameters $\theta$ will be meaningful when lifted to the output space of the neural network.

\begin{figure}[t!] 
	\includegraphics[width=0.325\textwidth]{./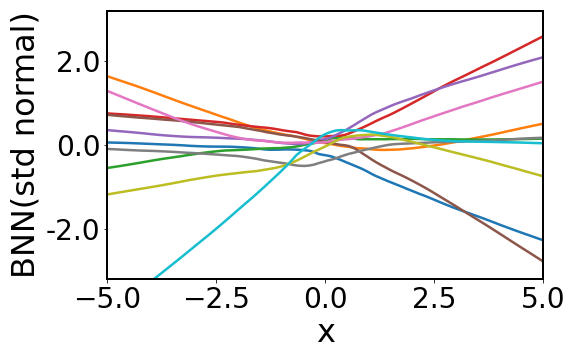}
	\hfill
	\includegraphics[width=0.325\textwidth]{./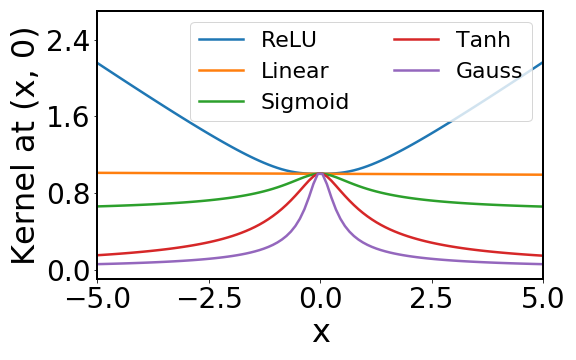}
	\hfill
	\includegraphics[width=0.325\textwidth]{./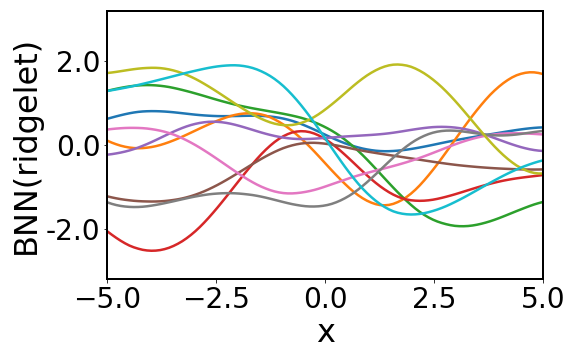}
	\hfill
	\caption{\emph{Lifting the parameter prior distribution of a Bayesian neural network (BNN) to the output space of the network.} 
		\emph{Left:} Realisations from a BNN, with the ReLU activation function and independent standard Gaussian distributions on the weights and bias parameters. 
		\emph{Middle:} The covariance function of a BNN, as the activation function is varied over ReLU, linear, sigmoid, hyperbolic tangent and Gaussian.
		\emph{Right:} Realisations from a BNN endowed with a ridgelet prior, which is constructed to approximate a GP with covariance $k(\bm{x},\bm{y}) = \sigma^2\exp(-\frac{1}{2 l^2} \|\bm{x}-\bm{y}\|_2^2)$ with $ \sigma = 1.0 $, $ l = 1.75 $. 
		[In all cases one hidden layer was used.]
	}
	\label{fig:BNN_priors}
\end{figure}

\subsection{The Covariance of Bayesian Neural Networks}

Here we discuss existing choices for the prior $\mathbb{P}$ on $\theta$ in a BNN, which are motivated by the covariance structure that they induce on the output space of the neural network. 
This is a rapidly evolving field and a full review requires a paper in itself; we provide a succinct summary and refer the reader to the survey in \cite{Nalisnick2018}.

Several deep connections between BNNs and GPs have been exposed \citep{Rasmussen2006,Stein1999,Kristiadi2020,Khan2019,Adlam2020}. 
The first detailed connection between BNNs and GPs was made by \citet{Neal1995}. 
Let $[\bm{w}_i^l]_{j} := w_{i,j}^l$.
In the case of a shallow BNN with $L = 1$, assume that each of the weights $\bm{w}_1^1$, $\bm{w}_i^0$ and biases $b_i^0$ are \textit{a priori} independent, each with mean $0$ and with finite second moments $\sigma_{\bm{w}^1}^2, \sigma_{\bm{w}^0}^2, \sigma_{b}^2$, respectively, where $ \sigma_{\bm{w}^1}^2 = \sigma^2/{N_1} $ for some fixed $\sigma>0$. To improve presentation, let $f(\bm{x}) := f(\bm{x} , \theta)$, so that $\theta$ is implicit, and let $\mathbb{E}$ denote expectation with respect to $\theta \sim \mathbb{P}$. 
A well-known result from \citet{Neal1995} is that, according to the central limit theorem, the BNN converges asymptotically (as $N_1 \rightarrow \infty$) to a zero-mean GP with covariance function
\begin{align}
k^1(\bm{x}, \bm{x}') := \mathbb{E}[f(\bm{x}) f(\bm{x}')] = \sigma^2 \mathbb{E} \left[ \phi\left(\bm{w}_{1}^0 \cdot \bm{x} + b_{1}^0 \right) \phi\left(\bm{w}_{1}^0 \cdot \bm{x}' + b_{1}^0 \right) \right] + \sigma_{b}^2, \label{eq: C_BNN}
\end{align}
Analytical forms of the GP covariance were obtained for several activation functions $\phi$, such as the ReLU and Gaussian error functions in, \cite{Lee2018,Williams1996,Yang2019a}. Furthermore, similar results were obtained more recently for neural networks with multiple hidden layers in \cite{Lee2018,Matthews2018,Novak2019,Garriga-Alonso2019}. 
Placing independent priors with the same second moments $\sigma^2/N_{l-1}$ and $\sigma^2_b$ on the weights and biases of the $l^{\text{th}}$ layer, and taking $N_1 \rightarrow \infty, N_2 \rightarrow \infty,\ldots$ in succession, it can be shown that the $l^{\text{th}}$ layer of this BNN convergences to a zero mean GP with covariance:
\begin{align}
k^l(\bm{x},\bm{x}') = \sigma^2 \mathbb{E}_{z^{l-1}_i \sim \mathcal{GP}(0,k^{l-1})}[ \phi(z_i^{l-1}(\bm{x})) \phi(z_i^{l-1}(\bm{x}')) ] +\sigma^2_b.
\end{align}
Of course, the discussion of this section is informal only and we refer the reader to the original references for full and precise detail.

The identification of limiting forms of covariance function allows us to investigate whether such priors are suitable for performing uncertainty quantification in real-world tasks.
Unfortunately, the answer is often ``no''. 
One reason, which has been demonstrated empirically by multiple authors \citep{Hafner2019,Yang2019,Yao2019}, is that BNNs can have poor out-of-sample performance. 
Typically one would want a covariance function to have a notion of locality, so that $k(\bm{x},\bm{x}')$ decays sufficiently rapidly as $\bm{x}$ and $\bm{x}'$ become distant from each other.
This ensures that when predictions are made for a location $\bm{x}$ that is far from the training dataset, the predictive variance is appropriately increased.
However, as exemplified in \Cref{fig:BNN_priors}, the covariance structure of a BNN need not be local.
Even the use of a light-tailed Gaussian activation function still has a possibility of leading to a covariance model that is non-local.
These existing studies \citep{Hafner2019,Yang2019,Yao2019} illustrate the difficulties of inducing a meaningful prior on the output space of the neural network when operating at the level of the parameters $\theta$ of the network.


\section{Methods}
\label{sec:methodology}

In this section the ridgelet approach to prior specification is presented.
The approach relies on the classical ridgelet transform, which is briefly introduced in \Cref{subset: RT intro}.
Then, in \Cref{subsec: prior construction}, we describe how a ridgelet prior is constructed.

\subsection{The Ridgelet Transform} \label{subset: RT intro}

The \textit{ridgelet transform} \citep{Candes1998, Murata1996b, Sonoda2017b} was developed in the context of harmonic analysis in the 1990s \citep{Barron1993b, Jones1992b, Leshno1993b, Murata1996b, Kurkova2001} and has received recent interest as a tool for the theoretical analysis of neural networks \citep{Sonoda2017b,Bach2018b,Sonoda2018b, Ongie2019b}.
In this section we provide a brief and informal description of the ridgelet transform, deferring all mathematical details until \Cref{sec:theory}.
To this end, let $\widehat{f}$ denote the Fourier transform of a function $f$, and let $\bar{z}$ denote the complex conjugate of $z \in \mathbb{C}$.
Given an activation function $\phi : \mathbb{R} \rightarrow \mathbb{R}$, suppose we have a corresponding function $\psi : \mathbb{R} \rightarrow \mathbb{R}$ such that the relationship 
\begin{align*}
(2 \pi)^{\frac{d}{2}} \int_{\mathbb{R}} | \xi |^{-d} \overline{\widehat{\psi}(\xi)} \widehat{\phi}(\xi) \mathrm{d} \xi = 1
\end{align*}
is satisfied.
Such a function $\psi$ is available in closed form for many of the activation functions $\phi$ that are commonly used in neural networks; examples can be found in \Cref{ex: ridgelet_func} in \Cref{subsec: ridgelet}.
Then, under regularity conditions detailed in \Cref{sec:theory}, the ridgelet transform of a function $f : \mathbb{R}^d \rightarrow \mathbb{R}$ is defined as
\begin{eqnarray}
R[f](\bm{w}, b) := \int_{\mathbb{R}^{d}} \psi(\bm{w} \cdot \bm{x} + b) f(\bm{x}) \mathrm{d} \bm{x} \label{eq: classical ridgelet}
\end{eqnarray}
for $ \bm{w} \in \mathbb{R}^d $, and $ b \in \mathbb{R} $, and the dual ridgelet transform of a function $\tau : \mathbb{R}^d \times \mathbb{R} \rightarrow \mathbb{R}$ is defined as
\begin{eqnarray}
R^{\ast}[\tau](\bm{x}) := \int_{\mathbb{R}^{d+1}} \phi(\bm{w} \cdot \bm{x} + b) \tau(\bm{w}, b) \mathrm{d} \bm{w} \mathrm{d} b \label{eq: classical inv trans}
\end{eqnarray}
for $ \bm{x} \in \mathbb{R}^d$.
There are two main properties of the ridgelet transform that we exploit in this work. 
First, a discretisation of \eqref{eq: classical inv trans} using a cubature method gives rise to an expression closely related to one layer of a neural network; c.f. \eqref{eq:twolayer_neural_net}.
Second, under regularity conditions, the dual ridgelet transform works as the pseudo-inverse of the ridgelet transform, meaning that $(R^*  R)[ f] = f$ whenever the left hand side is defined.
Next we explain how these two properties of the ridgelet transform will be used.

\subsection{The Ridgelet Prior} \label{subsec: prior construction}

In this section our proposed ridgelet prior is presented.
Our starting point is a Gaussian stochastic process $\mathcal{GP}(m,k)$ and we aim to construct a probability distribution for the parameters $\theta$ of a neural network $f(\bm{x},\theta)$ in \eqref{eq:twolayer_neural_net} such that the stochastic process $\theta \mapsto f(\cdot,\theta)$ closely approximates the GP, in a sense yet to be defined.

The construction proceeds in three elementary steps.
The first step makes use of the property that discretisation of $R^*$ in \eqref{eq: classical inv trans} using a cubature method (i.e. a linear combination of function values) gives rise to a neural network.
To see this, let us abstractly denote by $\tilde{R}$ and $\tilde{R}^*$ approximations of $R$ and $R^*$ obtained using a cubature method with $D$ nodes:
\begin{align}
R[f](\bm{w}, b) \approx \tilde{R}[f](\bm{w},b) & := \sum_{j=1}^D u_j  \psi(\bm{w} \cdot \bm{x}_j + b) f(\bm{x}_j) \label{eq: discretise R} \\
R^{\ast}[\tau](\bm{x}) \approx \tilde{R}^* [\tau ](\bm{x}) & := \sum_{i=1}^{N_1} v_i  \phi(\bm{w}_i^0 \cdot \bm{x} + b_i^0) \tau(\bm{w}_i, b_i) \label{eq: discretise Rs}
\end{align}
where $ ( \bm{x}_j, u_j )_{j=1}^{D} \subset \mathbb{R}^{d} \times \mathbb{R} $ and $ ( ( \bm{w}_i^0, b_i^0 ) , v_i )_{i=1}^{N_1} \subset \mathbb{R}^{d+1} \times \mathbb{R} $ are the cubature nodes and weights employed respectively in \eqref{eq: discretise R} and \eqref{eq: discretise Rs}.
The specification of suitable cubature nodes and weights will be addressed in \Cref{sec:theory}, but for now we assume that they have been specified.
It is clear that \eqref{eq: discretise Rs} closely resembles one layer of a neural network; c.f. \eqref{eq:twolayer_neural_net}.

The second step makes use of the fact that $(R^* R)[f] = f$, which suggests that we may approximate a function $f$ using the discretised ridgelet transform and its dual
\begin{align}
(\tilde{R}^* \tilde{R})[f](\bm{x}) & = \sum_{i=1}^{N_1} v_i \phi(\bm{w}_i^0 \cdot \bm{x} + b_i^0) \left[ \sum_{j=1}^D u_j \psi(\bm{w}_i^0 \cdot \bm{x}_j + b_i^0) f(\bm{x}_j) \right] \nonumber \\
& = \sum_{i=1}^{N_1} \underbrace{ \sum_{j=1}^D \left[ v_i u_j \psi(\bm{w}_i^0 \cdot \bm{x}_j + b_i^0 ) f(\bm{x}_j) \right]  }_{=: w_{1,i}^1} \phi(\bm{w}_i^0 \cdot \bm{x} + b_i^0) \label{eq: sub in discrete}
\end{align}
where the coefficients $\bm{w}_1^1 = (w_{1,1}^1,\dots,w_{1,N_1}^1)^\top$ depend explicitly on the function $f$ being approximated.
Thus $\tilde{R}^* \tilde{R}$ is a linear operator that returns a neural network approximation to each function $f$ provided as input.

The third and final step is to compute the pushforward of $\mathcal{GP}(m,k)$ through the linear operator $\tilde{R}^* \tilde{R}$, in order to obtain a probability distribution over the coefficients $\bm{w}_1^1$ of the neural network in \eqref{eq: sub in discrete}.
Let $[ \bm{f} ]_i := f(\bm{x}_i)$, $[ \bm{m} ]_i := m(\bm{x}_i)$, $[\bm{K}]_{i,j} := k(\bm{x}_i,\bm{x}_j)$ and $[\bm{\Psi}^0]_{i,j} := v_i u_j \psi(\bm{w}_i^0 \cdot \bm{x}_j + b_i^0)$ so that $\bm{f}, \bm{m} \in \mathbb{R}^{D}$, $\bm{K} \in \mathbb{R}^{D \times D}$ and $\bm{\Psi}^0 \in \mathbb{R}^{N_1 \times D}$.
If $f \sim \mathcal{GP}(m,k)$ then it follows immediately that $\bm{w}_1^1 = \bm{\Psi}^0 \bm{f}$ is a Gaussian random vector with
\begin{align*}
\mathbb{E}[\bm{w}_1^1] & = \bm{\Psi}^0 \bm{m} \\
\mathbb{E}[(\bm{w}_1^1 - \mathbb{E}[\bm{w}_1^1]) ( \bm{w}_1^1 - \mathbb{E}[\bm{w}_1^1])^\top ] & = \bm{\Psi}^0 \bm{K} (\bm{\Psi}^0)^\top .
\end{align*}
To arrive at a prior for a general neural network of the form \eqref{eq:twolayer_neural_net} we apply this construction recursively, starting from the input layer and working up towards the output layer.
The dimension of the cubature rule $((\bm{w}_i^{l-1},b_i^{l-1}),v_i^{l-1})_{i=1}^{N_l}$ used at level $l-1$ is required to equal $N_l$ so that our discretised ridgelet transform inherits the same network architecture as in \eqref{eq:twolayer_neural_net}.
Our notation is generalised to $[\bm{\Psi}^{l-1}]_{i,j} := v_i u_j \psi(\bm{w}_{i}^{l-1} \cdot \bm{\phi}^{l-1}(\bm{x}_j) + b_{i}^{l-1})$ in order to indicate that this cubature rule with $N_l$ elements was used to construct the matrix $\bm{\Psi}^{l-1}$, where $[ \bm{\phi}^{l-1}(\bm{x}_j) ]_i := \phi(z_i^{l-1}(\bm{x}_j))$ and $\bm{\phi}^{0}(\bm{x}_j) = \bm{x}_j$.
Let $N_{L+1}$ be the output dimension of the neural network in \eqref{eq:twolayer_neural_net}, that is $N_{L+1} = 1$.
Our ridgelet prior can now be formally defined:

\begin{definition}[Ridgelet Prior] \label{def: BNN_prior}
	Consider the neural network in \eqref{eq:twolayer_neural_net}.
	Given a mean function $m: \mathbb{R}^d \to \mathbb{R}$ and a covariance function $k : \mathbb{R}^d \times \mathbb{R}^d \rightarrow \mathbb{R}$, a prior distribution is called a \emph{ridgelet prior} if the weights $\bm{w}_i^l$ at level $l$ depend on the weights and biases at level $l-1$ according to
	\begin{align*}
	\bm{w}_i^l  | \{ (\bm{w}_r^{l-1},b_r^{l-1}) : r = 1,\dots,N_{l} \} & \stackrel{\text{\normalfont i.i.d.}}{\sim} \mathcal{N}(\bm{\Psi}^{l-1} \bm{m}, \bm{\Psi}^{l-1} \bm{K} (\bm{\Psi}^{l-1})^\top)  
	\end{align*}
	where $i = 1,\dots,N_{l+1}$ and $l = 1,\dots,L$.
	To complete the prior specification, the bias parameters $b_i^{l-1}$ at all layers and the weights $\bm{w}_i^0$ at the input layer are required to be independent and distributed as $b_i^{l-1} \sim \mathcal{N}(0, \sigma_{b}^2)$ and $\bm{w}_i^0 \sim \mathcal{N}(0, \sigma_{\bm{w}}^2 \bm{I}_{d \times d})$.
\end{definition}

\begin{figure}[t!]
	\includegraphics[height=90pt]{./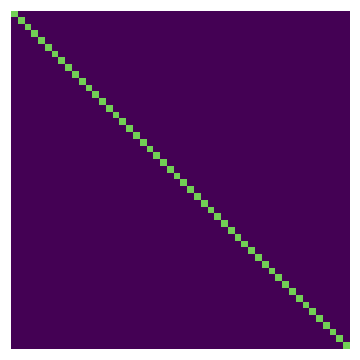}
	\hfill
	\includegraphics[height=90pt]{./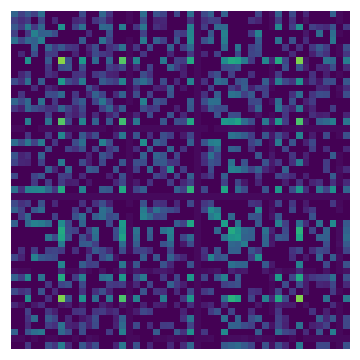}
	\includegraphics[height=90pt]{./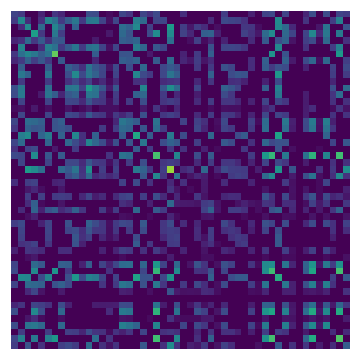}
	\raisebox{-5pt}{\includegraphics[height=97pt]{./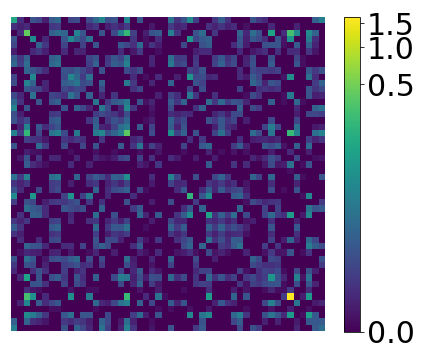}}
	
	\caption{\emph{Example of the covariance matrix $\bm{\Psi}^{l-1} \bm{K} (\bm{\Psi}^{l-1})^\top$ of the ridgelet prior for $l = 1$ and $N_{1} = 50$.} 
		\emph{Left:} The covariance matrix of independent standard Gaussian distribution prior on $\mathbb{R}^{N_{1}}$ for comparison.
		\emph{Right:} The covariance matrix $\bm{\Psi}^{0} \bm{K} (\bm{\Psi}^{0})^\top$ computed from 3 independent realisations of $\{ \bm{w}_i^0, b_i^0 \}_{i=1}^{50}$ and from $\bm{K}$ of a Gaussian covariance function.}
	\label{fig:exp_prior_covmat}
\end{figure}

Several initial remarks are in order:

\begin{remark}
	The dependence of the distribution for the weights $\bm{w}_i^l$ on the previous layer's weights $\bm{w}_{r}^{l-1}$ and the biases $b_{r}^{l-1}$, $r = 1,\dots,N_{l}$, is an important feature of our construction and seems essential if we aim to approximate a user-specified covariance model.
	This dependence is illustrated in \Cref{fig:exp_prior_covmat}.
\end{remark}

\begin{remark}
	Let $[\bm{\phi}^0(\bm{x})]_i := \phi(\bm{w}_i^0 \cdot \bm{x} + b_i^0)$.
	In the comparison with \eqref{eq: C_BNN}, the covariance of a BNN with one hidden layer and endowed with our ridgelet prior takes the form
	\begin{eqnarray}
	\mathbb{E}[(f(\bm{x}) - m(\bm{x})) (f(\bm{x}') - m(\bm{x}'))] = \mathbb{E} \left[ \bm{\phi}^0(\bm{x})^\top \bm{\Psi}^0 \bm{K} (\bm{\Psi}^0)^\top \bm{\phi}^0(\bm{x}') \right] \label{eq: one layer ridgelet cov}
	\end{eqnarray}
	where the expectation on the right hand side is with respect to the random weights $\bm{w}_i^0$ and biases $b_i^0$.
	In \Cref{sec:theory} it is shown that, in an appropriate limit, the expression in \eqref{eq: one layer ridgelet cov} converges to $k(\bm{x},\bm{x}')$, the covariance model that we aimed to approximate at the outset.
\end{remark}

\begin{remark}
	To limit scope we consider a standard feed-forward network architecture, but it is possible in principle to extend the ridgelet prior to other types of network.
	For convolutional neural networks (CNN), we note that convolutional layers can be seen as feed-forward layers with a sparse weight matrix. 
	In this case one could consider taking the ridgelet prior as a starting point and then condition on specific weights being exactly zero, in order to produce identical sparsity structure to a CNN \citep[see for example Section 2.1 of][]{Garriga-Alonso2019}.
	For residual neural networks (ResNets), the predictor vector $\bm{\phi}^{l}$ can be augmented to include any skip connections that may be present.
	However, our analysis in \Cref{sec:theory} focuses on the simple case of a single hidden layer, for which both CNNs and ResNets reduce to a standard feed-forward neural network.
\end{remark}

This completes our definition of the ridgelet prior.
An illustration is provided in \Cref{fig: exp_prior}, the full details of which are reserved for \Cref{sec: exp_01}.
It can be seen that as $N$, the number of hidden units in the BNN, is increased the samples from the BNN begin to resemble, in a statistical sense, samples from the target GP.
In the case of multiple hidden layers, a larger number $N$ of hidden units appear to be required to achieve a similar degree of approximation to the GP.
Next we present our theoretical analysis, which considers only the case of one hidden layer, and is the principal contribution of the paper.

\begin{figure}[t!]
	
	\subcaptionbox{BNN ($\scriptsize \begin{array}{l} L = 1 \\ N=100 \end{array}$)}{\includegraphics[width=0.2425\textwidth]{./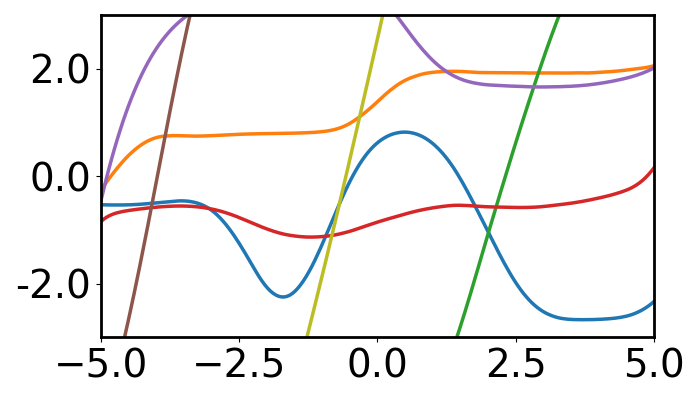}}
	\subcaptionbox{BNN ($\scriptsize \begin{array}{l} L = 1 \\ N=1,000 \end{array}$)}{\includegraphics[width=0.2425\textwidth]{./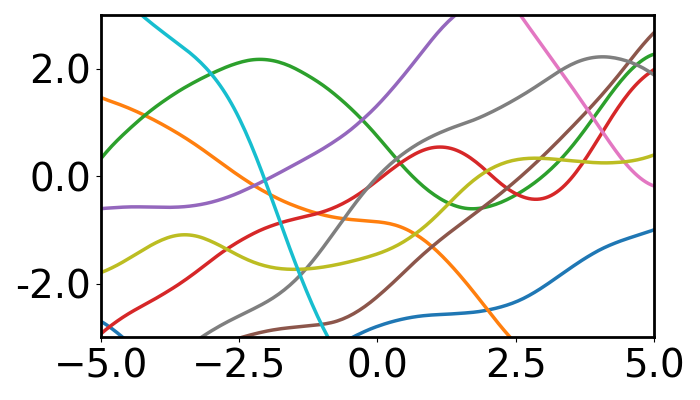}}
	\subcaptionbox{BNN ($\scriptsize \begin{array}{l} L = 1 \\ N=3,000 \end{array}$)}{\includegraphics[width=0.2425\textwidth]{./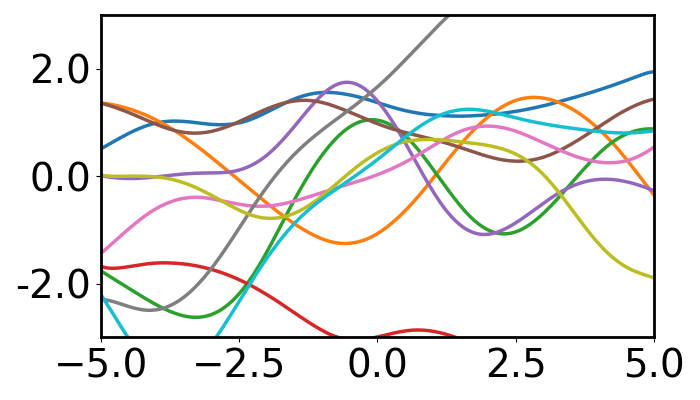}}
	
	\subcaptionbox{BNN ($\scriptsize \begin{array}{l} L = 3 \\ N=5,000 \end{array}$)}{\includegraphics[width=0.2425\textwidth]{./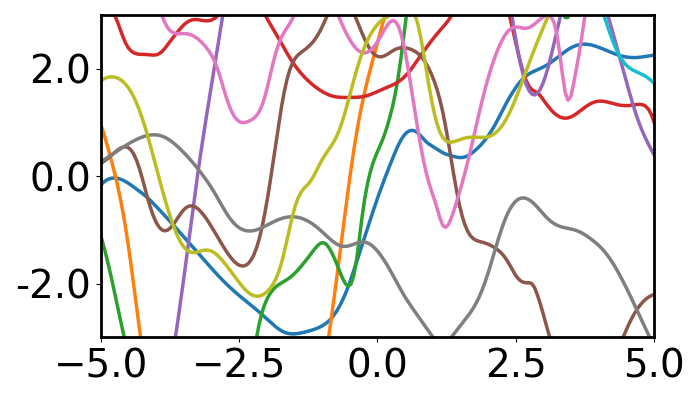}}
	\subcaptionbox{BNN ($\scriptsize \begin{array}{l} L = 3 \\ N=10,000 \end{array}$)}{\includegraphics[width=0.2425\textwidth]{./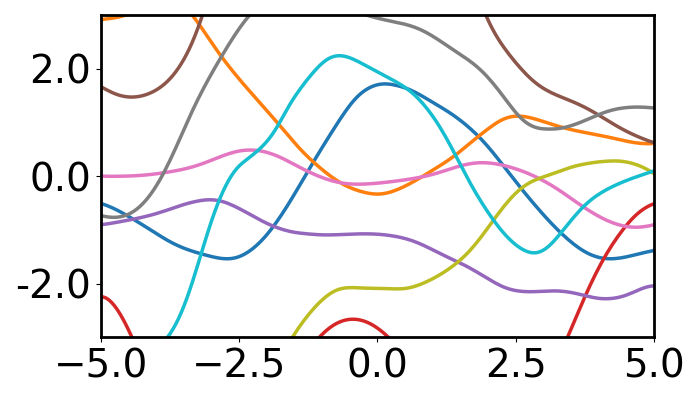}}
	\subcaptionbox{BNN ($\scriptsize \begin{array}{l} L = 3 \\ N=50,000 \end{array}$)}{\includegraphics[width=0.2425\textwidth]{./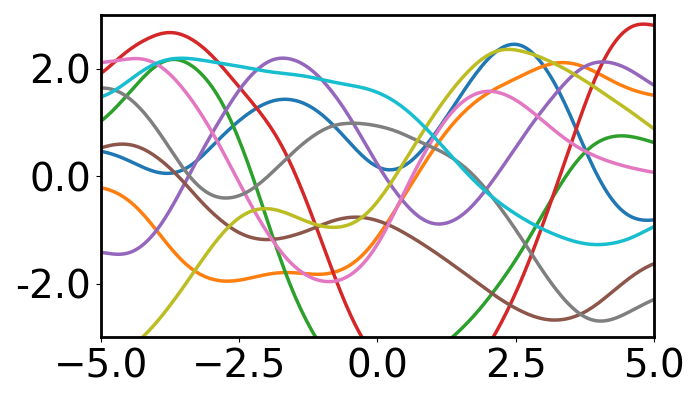}}
	\subcaptionbox{GP}{\includegraphics[width=0.2425\textwidth]{./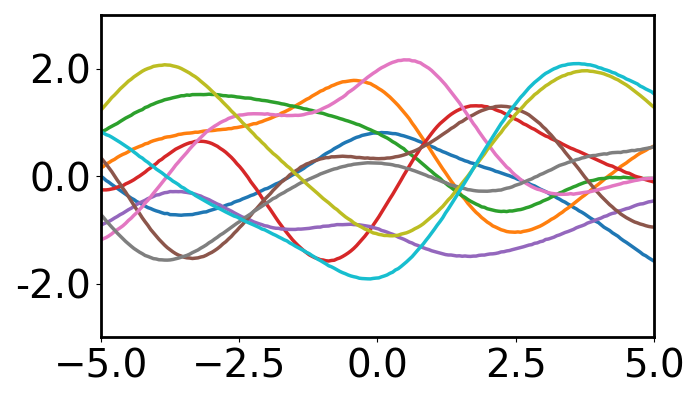}}
	
	\caption{Sample paths from a Bayesian neural network (BNN) equipped with a ridgelet prior. 
		Here we examine the effect of increasing the number $N$ of hidden units in a BNN with $L=1$ hidden layer from (a) 100, to (b) 1,000 and (c) 3,000, and in a BNN with $L=3$ hidden layers from (d) 5,000, to (e) 10,000 and (f) 50,000. 
		In (g) sample paths are shown from the target Gaussian process (GP) prior.
		Full details of this example are reserved for \Cref{sec: exp_01}.
	}
	\label{fig: exp_prior}
\end{figure}


\section{Theoretical Assessment}
\label{sec:theory}

This section presents our theoretical analysis of the ridgelet prior in the setting of a single hidden layer ($L=1$).
The main result is \Cref{thm: recon_rate}, which establishes an explicit error bound for approximation of a GP prior using the ridgelet prior in a BNN.
The bound is non-asymptotic and valid for networks with finite numbers of units, which is an important point of distinction from earlier work \citep{Neal1995, Williams1996}.
First, in \Cref{sec: def_not}, we introduce definitions and notation required for the ridgelet transform to be rigorously studied.
In \Cref{subsec: ridgelet} we introduce a finite-bandwidth ridgelet transform, with the error between the finite-bandwidth and original ridgelet transforms being analysed in \Cref{sec: proof_recon_mod}.
\Cref{subsec: theorems etc} applies these finite-bandwidth results to perform a theoretical analysis of the ridgelet prior for a BNN.

\vspace{5pt}
\noindent
{\bf Notation:}
The following notation will be used.
For an open set $\mathcal{X} \subseteq \mathbb{R}^d$ and a Borel measure $\mu$ on $\mathcal{X}$, let $L^p(\mathcal{X}, \mu)$ denote the set of functions $f : \mathcal{X} \rightarrow \mathbb{R}$ such that $\| f \|_{L^p(\mathcal{X}, \mu)} < \infty$ where
$$
\| f \|_{L^p(\mathcal{X}, \mu)} := \left\{ \begin{array}{ll} \left( \int_{\mathcal{X}} | f(x) |^p \mathrm{d} \mu(x) \right)^{1/p} & 1 \leq p < \infty \\
\text{ess} \sup_{\bm{x} \in \mathcal{X}} | f(\bm{x}) | & p = \infty. \end{array} \right.
$$ 
If $\mu$ is the Lebesgue measure on $\mathcal{X}$, we use the shorthand $L^p(\mathcal{X})$ for $L^p(\mathcal{X}, \mu)$ and furthermore we let $L^p_{\text{loc}}(\mathcal{X})$ denote the set of function $f : \mathcal{X} \rightarrow \mathbb{R}$ such that $\| f \|_{L^p(K)}$ exists and is finite on all compact $K \subseteq \mathcal{X}$.
Let $C(\mathcal{X})$ denote the set of all continuous functions $f : \mathcal{X} \rightarrow \mathbb{R}$ and let $C^{1}(\mathcal{X})$ denote the set of all functions $f : \mathcal{X} \rightarrow \mathbb{R}$ whose first order partial derivatives $\partial_{x_i}f(\bm{x})$ exist and are continuous on $\mathcal{X}$.
Denote by $C^{1 \times 1}(\mathcal{X} \times \mathcal{X})$ the set of all functions $h : \mathcal{X} \times \mathcal{X} \rightarrow \mathbb{R}$ whose mixed first order derivatives  $\partial_{x_i} \partial_{y_j} h(\bm{x},\bm{y})$ exist and are continuous on $\mathcal{X} \times \mathcal{X}$.
For multinomials and higher-order derivatives, we employ multi-index notation $\bm{x}^{\bm{\alpha}} := x_1^{\alpha_1} \dots x_d^{\alpha_d}$, $\partial^{\bm{\alpha}} h(\bm{x}) := \partial_{x_1}^{\alpha_1} \dots \partial_{x_d}^{\alpha_d} h(\bm{x})$ and $\partial^{\bm{\alpha} , \bm{\beta}}h(\bm{x},\bm{y}) := \partial_{x_1}^{\alpha_1} \dots \partial_{x_d}^{\alpha_d} \partial_{y_1}^{\beta_1} \dots \partial_{y_d}^{\beta_d} h(\bm{x},\bm{y})$.
A bivariate function $k : \mathcal{X} \times \mathcal{X} \rightarrow \mathbb{R}$ is called \emph{positive definite} if $\sum_{i,j} a_i a_j k(\bm{x}_i, \bm{x}_j) > 0$ for all $\bm{0} \neq (a_1,\dots,a_n) \in \mathbb{R}^n$ and all distinct $\{\bm{x}_i\}_{i=1}^n \subset \mathcal{X}$.

\subsection{Regularity of the Activation Function} \label{sec: def_not}

In this section we outline our regularity assumptions on the activation function $\phi$.
To do this we first recall the classical Fourier transform and its generalisations, all for real-valued functions on $\mathbb{R}^d$. 

\vspace{5pt}
\noindent
{\bf Fourier transform on $L^1(\mathbb{R}^d)$ and $L^2(\mathbb{R}^d)$:} 
The Fourier transform of $f \in L^1(\mathbb{R}^d)$ is defined by $ \widehat{f}(\bm{\xi}) := (2 \pi)^{-\frac{d}{2}} \int_{\mathbb{R}^d} f(\bm{x}) \exp(- i \bm{\xi} \cdot \bm{x}) \mathrm{d} \bm{x} $ for each $\bm{\xi} \in \mathbb{R}^d$.
The Fourier transform of $f \in L^2(\mathbb{R}^d)$ is formally defined as a limit of a sequence $(\hat{f}_n)_{n \in \mathbb{N}}$ where the $f_n \in L^1(\mathbb{R}^d) \cap L^2(\mathbb{R}^d)$ and $f_n \rightarrow f$ in $L^2(\mathbb{R}^d)$ \citep[see e.g.][p.113-114]{Grafakos2000b}.

\vspace{5pt}
The image of the Fourier transform on $L^1(\mathbb{R}^d)$ is not contained in $L^1(\mathbb{R}^d)$.
However, there exists a subset of $L^1(\mathbb{R}^d)$ on which the Fourier transform defines an automorphism.
It is convenient to work on this subset, which consists of so-called \emph{Schwartz functions}, defined next.

\vspace{5pt}
\noindent
{\bf Schwartz functions:} 
An infinitely differentiable function $f$ is called a \textit{Schwartz function} if for any pair of multi-indices $ \bm{\alpha}, \bm{\beta} \in \mathbb{N}_{0}^{d} $ we have that $C_{\bm{\alpha}, \bm{\beta}} := \sup_{\bm{x} \in \mathbb{R}^d} \left| \bm{x}^{\bm{\alpha}} \partial^{\bm{\beta}} f(\bm{x}) \right| < \infty $. 
The set of Schwartz functions is denoted by $ \mathcal{S}(\mathbb{R}^d) $ \cite[p.105]{Grafakos2000b}.
Note that the Fourier transform on $\mathcal{S}(\mathbb{R}^d)$ is well-defined as $ \mathcal{S}(\mathbb{R}^d) \subset L^1(\mathbb{R}^d)$.
Moreover, the Fourier transform is a homeomorphism from $ \mathcal{S}(\mathbb{R}^d) $ onto itself \cite[p.113]{Grafakos2000b}.

\vspace{5pt}
The most commonly encountered activation functions $\phi$ are not elements of $L^1(\mathbb{R})$ and we therefore also require the notion of a generalised Fourier transform:

\vspace{5pt}
\noindent
{\bf Generalised Fourier transform:} 
Let $ \mathcal{S}_m(\mathbb{R}^d) $ be the vector space of functions $ f \in \mathcal{S}(\mathbb{R}^d) $ that satisfy $ f(\bm{x}) = \mathcal{O}(\| \bm{x} \|^m) $ for $ \bm{x} \to 0 $.
For $ k \in \mathbb{N}_{0} $, let $ t: \mathbb{R}^d \to \mathbb{R} $ be any continuous function of at most polynomial growth of order $ k $, meaning that $\sup_{\bm{x} \in \mathbb{R}^d} |t(\bm{x})| / (1 + \|\bm{x}\|^k) < \infty$.
A measurable function $ \widehat{t} \in L^2_{\text{loc}}(\mathbb{R}^d \setminus \{ \bm{0} \}) $ is called a \emph{generalised Fourier transform} of $ t $ if there exists an integer $ m \in \mathbb{N}_{0} $ such that $ \int_{\mathbb{R}^{d}} \widehat{t}(\bm{w}) f(\bm{w}) \mathrm{d} \bm{w} = \int_{\mathbb{R}^{d}} t(\bm{x}) \widehat{f}(\bm{x}) \mathrm{d} \bm{x} $ for all $ f \in \mathcal{S}_{2m}(\mathbb{R}^d) $ \cite[p.103]{Wendland2005}.
The set of all continuous functions of at most polynomial growth of order $k$ that admit a generalized Fourier transform will be denoted $ C_{k}^{*}(\mathbb{R}^d) $.

The generalized Fourier transform can be computed for activation functions $\phi$ that are typically used in a neural network, for which classical Fourier transforms are not well-defined:

\begin{example}[Generalised Fourier transform of the ReLU function]
	Let $ \phi(x) := \max(0, x) $, then $\phi \in C_1^{*}(\mathbb{R})$.
	Although $\phi$ is not an element of $L^1(\mathbb{R}^d)$, it has polynomial growth and admits a generalised Fourier transform $ \widehat{\phi}(w) = - (\sqrt{2 \pi} w^2)^{-1} $.
\end{example}

\begin{example}[Generalised Fourier transform of the $\tanh$ function]
	Let $ \phi(x) := \tanh(x) $, then $\phi \in C_0^{*}(\mathbb{R})$.
	Likewise, $\phi$ admits a generalised Fourier transform $ \widehat{\phi}(w) = - i \sqrt{\frac{\pi}{2}} \mathrm{csch}\left( \frac{\pi}{2} w \right) $ where $\mathrm{csch}$ is the hyperbolic cosecant.
\end{example}

In order to present our theoretical results, we will make the following assumptions on the activation function $\phi$ that defines the ridgelet transform:

\begin{assumption}[Activation function] \label{assm: ridgelet pair}
	The activation function $\phi : \mathbb{R} \rightarrow \mathbb{R}$ satisfies
	\begin{enumerate}
		\item $ \phi \in C_{0}^{*}(\mathbb{R}),$
	\end{enumerate}
	and there exists a function $\psi : \mathbb{R} \rightarrow \mathbb{R}$ such that
	\begin{equation*}
	\text{2. } \psi \in \mathcal{S}(\mathbb{R}), \qquad  \text{3. }(2 \pi)^{\frac{d}{2}} \int_{\mathbb{R}} | \xi |^{-d} \overline{\widehat{\psi}(\xi)} \widehat{\phi}(\xi) \mathrm{d} \xi = 1, \qquad \text{4. }\int_{\mathbb{R}} | \xi |^{-d-2} \big| \overline{\widehat{\psi}(\xi)} \widehat{\phi}(\xi) \big| \mathrm{d} \xi < \infty.
	\end{equation*}
\end{assumption}

The boundedness assumption $\phi \in C_0^*(\mathbb{R})$ rules out some commonly used activation functions, such as ReLU, but enables stronger convergence results to be obtained.
However, our analysis is also able to handle $\phi \in C_1^*(\mathbb{R})$.
For presentational purposes we present results under the assumption $\phi \in C_0^*(\mathbb{R})$ in the main text and, in \Cref{sec: proof_lemma_unbounded} we present theoretical results for the unbounded setting $\phi \in C_1^*(\mathbb{R})$.
Parts 2 and 3 of \Cref{assm: ridgelet pair} are the standard assumptions for the ridgelet transform \citep{Candes1998, Murata1996b}; several examples of $(\phi, \psi)$ pairs are shown in \Cref{ex: ridgelet_func}.

\begin{table}[t!]
	\centering
	\footnotesize{
		\begin{tabular}{|l|l|}
			\hline
			\textbf{Activation $\phi$} & \textbf{Associated Function $\psi$} \\
			\hline
			$ \tanh(z) $ & $ \frac{\mathrm{d}^{d-r+2}}{\mathrm{d} z^{d-r+2}} \left[ \exp\left( - \frac{z^2}{2}  \right) \sin\left( \frac{\pi z}{2} \right) \right] \times 2^{-\frac{d+r}{2}} \pi^{-\frac{d-r+2}{2}} \exp\left( \frac{\pi^2}{8} \right) $ \\
			$ ( 1 + \exp(- z) )^{-1} $ & $ \frac{\mathrm{d}^{d-r+2}}{\mathrm{d} z^{d-r+2}} \left[ \exp\left( - \frac{z^2}{2}  \right) \sin\left( \pi z \right) \right] \times 2^{-\frac{d+r}{2}} \pi^{-\frac{d-r+2}{2}} \exp\left( \frac{\pi^2}{2} \right) $ \\
			$ \exp\left(- \frac{z^2}{2} \right) $ & $ \frac{\mathrm{d}^{d+r}}{\mathrm{d} z^{d+r}} \exp\left(- \frac{z^2}{2} \right) \times 2^{-\frac{d-2r}{2}} \pi^{-\frac{d+r}{2}}$ \\
			$ \max(0, z) $ & $ \frac{\mathrm{d}^{d+r+2}}{\mathrm{d} z^{d+r+2}} \exp\left( - \frac{z^2}{2} \right) \times \left( - 2^{-\frac{d}{2}} \pi^{-\frac{d-2r+1}{2}} \right) $ \\
			\hline
		\end{tabular}
		\caption{Examples of functions $\phi$, $\psi$ that satisfy the regularity assumptions used in this work where $r = (d \mod 2)$.}
		\label{ex: ridgelet_func}
	}
\end{table}

Next we turn our attention to proving novel and general results about the ridgelet transform, under \Cref{assm: ridgelet pair}.

\subsection{A Finite-Bandwidth Ridgelet Transform} \label{subsec: ridgelet}

Our aim in this section is to approximate the dual ridgelet transform in \eqref{eq: classical inv trans} with a \emph{finite-bandwidth} transform, meaning that the Lebesgue measure  $\lambda(\bm{w},b)$ is approximated using a finite measure $\lambda_\sigma(\bm{w},b)$. 
This will in turn enable cubature methods to be applied to discretise \eqref{eq: classical inv trans}.
In anticipation of our analysis of the ridgelet prior in \Cref{def: BNN_prior}, the finite measure we consider is
\begin{align*}
\lambda_\sigma(\bm{w}, b) := Z \times \frac{1}{( 2 \pi \sigma_{\bm{w}}^2)^{\frac{d}{2}}} \exp\left( - \frac{\| \bm{w} \|^2}{2 \sigma_{\bm{w}}^2} \right) \times \frac{1}{( 2 \pi \sigma_{b}^2)^{\frac{1}{2}}} \exp\left( - \frac{b^2}{2 \sigma_{b}^2} \right) .
\end{align*}
where it will be convenient to set $Z := (2 \pi)^{\frac{1}{2}} \sigma_{\bm{w}}^{d} \sigma_b$ indicating the total measure assigned by $\lambda_{\sigma}$ to $\mathbb{R}^{d+1}$.
It is possible to generalise our analysis beyond this Gaussian case, and other choices for $\lambda_\sigma$ are analysed in \Cref{sec: proof_recon_mod}.

\begin{definition}[Finite-bandwidth ridgelet transform] \label{def: ridgelet_modified}
	In the setting of \Cref{assm: ridgelet pair}, the ridgelet transform $R$ and the \emph{finite-bandwidth} approximation $R_\sigma^\ast$ of its dual $R^*$ are defined pointwise as
	\begin{eqnarray}
	R[f](\bm{w}, b) & := & \int_{\mathbb{R}^{d}} f(\bm{x}) \psi(\bm{w} \cdot \bm{x} + b) \mathrm{d} \bm{x},  \label{eq: int1} \\
	R_\sigma^{\ast}[\tau](\bm{x}) & := & \int_{\mathbb{R}^{d+1}} \tau(\bm{w}, b) \phi(\bm{w} \cdot \bm{x} + b) \mathrm{d} \lambda_\sigma(\bm{w}, b) \label{eq: int2}
	\end{eqnarray}
	for all $ f \in L^1(\mathbb{R}^d) $, $ \tau \in L^1(\mathbb{R}^{d+1}, \lambda_\sigma) $, $ \bm{x} \in \mathbb{R}^d $, $ \bm{w} \in \mathbb{R}^d $, and $ b \in \mathbb{R}$. 
\end{definition}

\noindent The integral transforms in \eqref{eq: int1} and \eqref{eq: int2} are indeed well-defined:
For any $f \in L^1(\mathbb{R}^d)$, the boundedness of $\psi \in \mathcal{S}(\mathbb{R})$ guarantees $R[f] \in L^\infty(\mathbb{R}^{d+1})$.
Similarly for any $ \tau \in L^1(\mathbb{R}^{d+1}, \lambda_\sigma) $, the boundedness of $\phi \in C_0^{*}(\mathbb{R})$ guarantees $R_\sigma^{\ast}[\tau] \in L^\infty(\mathbb{R}^{d})$.
Recall that a discussion on relaxation of the boundedness assumption $\phi \in C_0^*(\mathbb{R})$ is reserved for \Cref{sec: proof_lemma_unbounded}.

The classical ridgelet transform in the sense of \cite{Murata1996b} corresponds to the limit $\sigma_{\bm{w}}, \sigma_{b} \rightarrow \infty$, where the measure $\lambda_\sigma$ becomes flat and $R_\sigma^*$ coincides with $R^*$.
An original contribution of our work, which may be of more general interest, is to study approximation properties of the reconstruction operator $R_\sigma^\ast R$ when a finite measure $ \lambda_\sigma $ is used.
Intuitively, $(R_\sigma^\ast R)$ ought to converge to an identity operator in the limit $\sigma_{\bm{w}}, \sigma_{b} \rightarrow \infty$; an important contribution of this work is to provide an explicit, non-asymptotic approximation error bound in \Cref{sec: proof_recon_mod}.
This analysis, which connects existing theory of the ridgelet transform to practical applications, may therefore be useful in the aforementioned applications of harmonic analysis, as well as in related areas, such as image analysis/denoising, where the ridgelet transform is sometimes employed as an alternative to the wavelet transform \citep{Do2003, AlZu'bi2011}.

\subsection{Convergence of the Finite-Bandwidth Ridgelet Prior} \label{subsec: theorems etc}

In this section, we build on \Cref{thm: recon_mod} to obtain theoretical guarantees for our BNN prior. We present \Cref{thm: recon_rate}, which establishes an explicit error bound for approximation of a GP prior using the ridgelet prior in a BNN.

Recall from \Cref{sec:methodology} that the ridgelet prior was derived from discretisation of the classical ridgelet transform in \eqref{eq: classical ridgelet} and \eqref{eq: classical inv trans}.
The starting point of our analysis is to consider how the finite-bandwidth ridgelet transform in \eqref{eq: int1} and \eqref{eq: int2} can be discretised.
To this end, we view the discretisation of $R_\sigma^\ast R$ as an operator $ I_{\sigma, D, N}: C(\mathbb{R}^d) \to C(\mathbb{R}^d) $ by 
\begin{eqnarray}
I_{\sigma, D, N} [f] (\bm{x}) := \sum_{i=1}^{N} v_i \left( \sum_{j=1}^{D} u_j  f(\bm{x}_j) \psi(\bm{w}_i \cdot \bm{x}_j + b_i) \right) \phi(\bm{w}_i \cdot \bm{x} + b_i) \label{eq: I_sdn}
\end{eqnarray}
where $ \{ \bm{x}_j, u_j \}_{j=1}^{D} $ and $ \{ (\bm{w}_i, b_i), v_i \}_{i=1}^{N} $ are the cubature nodes and weights used, respectively, to discretise the integrals \eqref{eq: int1} and \eqref{eq: int2}.
Informally, $I_{\sigma,D,N}$ ought to converge in some sense to the identity operator in an appropriate limit involving $\sigma,D,N \rightarrow \infty$; this will be made precise in the sequel.
The weights $\bm{w}_i$ and biases $b_i$ can be identified with $\bm{w}_i^0$ and $b_i^0$ in the ridgelet prior, as can $N_1$ be identified with $N$.
Conditional on $\{\bm{w}_i, b_i\}_{i=1}^{N}$, a draw from the BNN $f(\bm{x},\theta) = \sum_{i=1}^{N_1} w_{1,i}^1 \phi (\bm{w}_i^0 \cdot \bm{x} + b_i^0)$ equipped with the ridgelet prior is equal in distribution to $I_{\sigma,D,N}[f]$ when $f$ is drawn from $\mathcal{GP}(m,k)$.

Our analysis exploits properties of the cubature methods defined in \Cref{asmp: disc_mod} and \Cref{asmp: disc_mod_2} below.
To rigorously introduce these cubature methods, let $\mathcal{X}$ be a bounded subset of $\mathbb{R}^d$; without loss of generality we assume $\mathcal{X}$ is contained in the interior of the hyper-cube $[-S,S]^d$ for some constant $S > 0$.
It follows that there exists an infinitely differentiable function (a ``mollifier'') $\mathbbm{1}(\cdot)$ with the properties that $\mathbbm{1}(\bm{x}) \in [0,1]$, $\mathbbm{1}(\bm{x}) = 1$ if $\bm{x} \in \mathcal{X}$ and $\mathbbm{1}(\bm{x}) = 0$ if $\bm{x} \notin [-S,S]^d$.
Indeed, letting $a := 1 - \inf\{\|\bm{x} - \bm{y}\| : \bm{x} \in \mathcal{X}, \bm{y} \notin [-S,S]^d \}$ and 
\begin{eqnarray}
g(t) := \begin{cases}
e^{-\frac{1}{t}} & \text{for } t > 0 \\
0 & \text{for } t \le 0 
\end{cases} \qquad \text{ and } \qquad
h(t) := \frac{g(t)}{g(t) + g(1-t)} \nonumber
\end{eqnarray}
then the function 
\begin{eqnarray}
\mathbbm{1}(\bm{x}) := \prod_{i=1}^d \left[ 1 - h\left(\frac{x_i^2 - a^2}{1 - a^2}\right) \right] \label{eq: mollifier}
\end{eqnarray}
satisfies the desired properties with $\mathbbm{1}(\cdot)$ \cite[p.141-143]{Tu2010}. 
The function $\mathbbm{1}(\cdot)$ is used in our theoretical analysis to restrict, via multiplication, the support of function $f$ from $\mathbb{R}^d$ to $[-S,S]^d$ without changing $f$ on $\mathcal{X}$ and without global smoothness on $\mathbb{R}^d$ being lost; see \Cref{fig: morifier_eg}. 

\begin{figure}[t!]
	\centering
	\subcaptionbox{function before multiplying $\mathbbm{1}$}{\includegraphics[width=0.325\textwidth]{./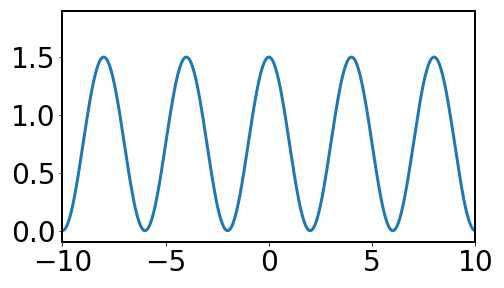}}
	\hfill
	\subcaptionbox{the mollifier $\mathbbm{1}$}{\includegraphics[width=0.325\textwidth]{./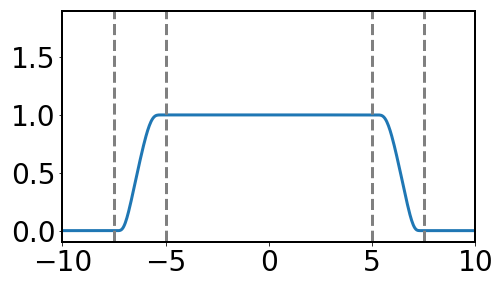}}
	\hfill
	\subcaptionbox{function after multiplying $\mathbbm{1}$}{\includegraphics[width=0.325\textwidth]{./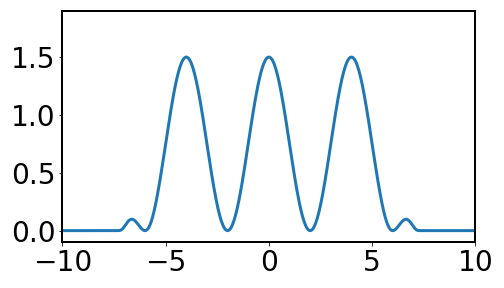}}
	\caption{Illustrating the role of a mollifier:
		(a) A function $f(\cdot)$ defined on $\mathbb{R}$, to be mollified. 
		(b) Here $\mathbbm{1}$ is mollifier that is equal to $1$ on $\mathcal{X} = (-5,5)$ and equal to $0$ outside $[-S,S]$ with $S = 7.5$.  
		(c) The product $f \cdot \mathbbm{1}$ is equal to $f$ on $\mathcal{X}$ and has a compact support on $[-S,S]$.}
	\label{fig: morifier_eg}
\end{figure}

\begin{assumption}[Discretisation of $R$] \label{asmp: disc_mod}
	The cubature nodes $ \{ \bm{x}_j \}_{j=1}^{D} $ are a regular grid on $[-S,S]^d$, corresponding to a Cartesian product of left endpoint rules \citep[(2.1.2) of][]{Davis1984}, and the cubature weights are $u_j := (2 S)^d \mathbbm{1}(\bm{x}_j) / D$ for all $ j = 1, ..., D $.
\end{assumption}

\noindent The use of a Cartesian grid cubature method is not crucial to our analysis and other methods could be considered.
However, we found that the relatively weak assumptions required to ensure convergence of a cubature method based on a Cartesian grid were convenient to check. 
One could consider adapting the cubature method to exploit additional smoothness that may be present in the integrand, but we did not pursue that in this work.

\begin{assumption}[Discretisation of $R_\sigma^*$] \label{asmp: disc_mod_2}
	The cubature nodes $ \{ \bm{w}_i, b_i \}_{i=1}^{N} $ are independently sampled from $\mathcal{N}(0, \sigma_{\bm{w}}^2 \bm{I}_{d \times d}) \times \mathcal{N}(0, \sigma_{b}^2)$ and $v_i := Z / N$ for all $ i = 1, \dots , N $.
\end{assumption}

\noindent The use of a Monte Carlo cubature scheme in \Cref{asmp: disc_mod_2} ensures an exact identification between the ridgelet prior, where the $(\bm{w}_i,b_i)$ are \textit{a priori} independent, and a cubature method.
From the approximation theoretic viewpoint, $I_{\sigma,D,N}$ is now a random operator and we must account for this randomness in our analysis; high probability bounds are used to this effect in \Cref{thm: recon_rate}. 

Now we present out main result, which is a finite-sample-size error bound for the ridgelet prior as an approximation of a GP.
The approximation error that we study is defined over $\mathcal{X} \subset [-S,S]^d$ and to this end we introduce the notation 
\begin{align*}
M_1^{*}(m) := \max_{|\bm{\alpha}| \leq 1} \sup_{\bm{x} \in [-S,S]^d} |\partial^{\bm{\alpha}} m(\bm{x})| , \qquad M_1^{*}(k) := \max_{|\bm{\alpha}| , |\bm{\beta}| \leq 1} \sup_{\bm{x}, \bm{y} \in [-S,S]^d} |\partial_{\bm{x}}^{\bm{\alpha}} \partial_{\bm{y}}^{\bm{\beta}} k(\bm{x}, \bm{y})| .    
\end{align*}
In what follows we introduce a random variable $f$ to represent the GP, and we assume that $f$ is independent of the random variables $\{\bm{w}_i,b_i\}_{i=1}^N$; all random variables are understood to be defined on a common underlying probability space whose expectation operator is denoted $\mathbb{E}$.

\begin{theorem} \label{thm: recon_rate}
	Consider a stochastic process $ f \sim \mathcal{GP}(m, k) $ with mean function $ m \in C^1(\mathbb{R}^d) $ and symmetric positive definite covariance function $ k \in C^{1 \times 1}(\mathbb{R}^d \times \mathbb{R}^d) $. 
	Let Assumptions \ref{assm: ridgelet pair}, \ref{asmp: disc_mod} and \ref{asmp: disc_mod_2} hold.
	Further, assume $ \phi $ is $ L_{\phi} $-Lipschitz continuous.
	With probability at least $ 1 - \delta $, 
	\begin{align*}
	& \sup_{\bm{x} \in \mathcal{X}} \sqrt{ \mathbb{E} \left[ \left( f(\bm{x}) - I_{\sigma,D,N} f (\bm{x}) \right)^2 | \{\bm{w}_i,b_i\}_{i=1}^N \right] } \\
	& \le C \left( M_1^{*}(m) + \sqrt{M_1^{*}(k)} \right) \left\{ \frac{1}{\sigma_{\bm{w}}} + \frac{\sigma_{\bm{w}}^{d} (\sigma_{\bm{w}} + 1)}{\sigma_{b}} + \frac{\sigma_b \sigma_{\bm{w}}^{d} (\sigma_{\bm{w}} + 1)}{D^{\frac{1}{d}}} + \frac{\sigma_b \sigma_{\bm{w}}^{d} (\sigma_{\bm{w}} + \sqrt{\log \delta^{-1}})}{\sqrt{N}} \right\} 
	\end{align*}
	where $ C $ is a constant independent of $m, k,  \sigma_{\bm{w}}, \sigma_b, D, N$ and $\delta$.
\end{theorem}
\begin{proof}
	The proof is established in \Cref{sec: proof_disc_rate}.
\end{proof}

\begin{remark}
	\Cref{thm: recon_rate} is analogous to universal approximation theorems for (non-Bayesian) neural networks \citep{Cybenko1989, Leshno1993b}, since it implies that shallow BNNs are capable of approximating essentially any GP whose mean and covariance functions are continuously differentiable.
\end{remark}

This result establishes that the root-mean-square error between the GP and its approximating BNN will vanish uniformly over the bounded set $\mathcal{X}$ as the bandwidth parameters $\sigma_{\bm{w}}$, $\sigma_b$ and the sizes $N$, $D$ of the cubature rules are increased at appropriate relative rates, which can be read off from the bound in \Cref{thm: recon_rate}.
In particular, the bandwidths $\sigma_{\bm{w}}$ and $\sigma_b$ should be increased in such a manner that $\sigma_{\bm{w}}^{d} (\sigma_{\bm{w}} + 1) / \sigma_b \rightarrow 0$ and the numbers $D$ and $N$ of cubature nodes should be increased in such a manner that the final two terms in the bound of \Cref{thm: recon_rate} asymptotically vanish.
In such a limit, the ridgelet prior provides a consistent approximation of the GP prior over the bounded set $\mathcal{X}$.
The result holds with high probability ($1 - \delta$) with respect to randomness in the sampling of the weights and biases $\{\bm{w}_i,b_i\}_{i=1}^N$. 
From this concentration inequality, and in an appropriate limit of $\sigma_{\bm{w}}$, $\sigma_b$, $N$ and $D$, the almost sure convergence of the root-mean-square error also follows (from Borel-Cantelli).

This finite-sample-size error bound can be contrasted with earlier work that provided only asymptotic convergence results \citep{Neal1995, Lee2018, Matthews2018}.

\begin{remark}
	The rates of convergence serve as a proof-of-concept and are the usual rates that one would expect in the absence of anisotropy assumptions, with the curse of dimension appearing in the term $D^{-1/d}$.
	In other words, we are gated by the rate at which we fill the volume $[-S,S]^d$ with cubature nodes $\{\bm{x}_j\}_{j=1}^D$.
	However, it may be the case that real applications posess additional regularity, such as a low ``effective dimension'', that was not part of the assumptions used to obtain our bound.
	We therefore defer further discussion of the approximation error to the empirical assessment in \Cref{sec:experiments}.
\end{remark}

\begin{remark}
	Related result can also be obtained when the boundedness assumption on the activation function is relaxed from $\phi \in C_0^*(\mathbb{R})$ to $\phi \in C_1^*(\mathbb{R})$.
	Details are reserved for \Cref{sec: proof_lemma_unbounded}.
\end{remark}

This completes our analysis of the ridgelet prior for BNNs and our attention turns, in the next section, to the empirical performance of the ridgelet prior.


\section{Implementation and Computation}
\label{sec:computation}

In this section, practical aspects of the ridgelet prior are discussed.
Since our main contributions are theoretical, this discussion is limited to a factual description of the complexity of the ridgelet prior (\Cref{subsec: prior comput}), a description of how computation was performed for the experiments reported in \Cref{sec:experiments} (\Cref{sec: mcmc_bnn}), and a brief discussion of the potential for further computational improvement (\Cref{subsec: variational spec}).

\subsection{Complexity of the Ridgelet Prior}
\label{subsec: prior comput}

To evaluate the ridgelet prior density, consider the $l$th layer and its associated Gaussian factor $\mathcal{N}(0, \bm{\Sigma}^l)$.
To evaluate the prior density three computations are required: matrix multiplication to compute $\bm{\Sigma}^l = \bm{\Psi}^{l-1} \bm{K} (\bm{\Psi}^{l-1})^\top$, inversion of $\bm{\Sigma}^l$ and computation of the determinant of $\bm{\Sigma}^l$.
The total computational complexity is therefore $O(N^l D^2 + (N^l)^2 D + (N^l)^3)$, which is cubic in $N^l$.
However, to sample from $\mathcal{N}(0, \bm{\Sigma}^l)$ the computation complexity can be reduced when $D \ll N^l$, since we can obtain a matrix square root of the form $( \bm{\Sigma}^l )^{1/2} := \bm{\Psi}^{l-1} \bm{L}$ via a Cholesky decomposition $\bm{K} = \bm{L} \bm{L}^\top$.
The complexity is then $O(D^3 + N^l D)$, which is linear in $N^l$.

\subsection{Posterior Computation} \label{sec: mcmc_bnn}

For the subsequent experiments reported in \Cref{sec: exp_02}, the posterior distribution over the parameters of the BNN is analytically intractable and must therefore be approximated.
In this paper, we exploited conditional linearity to obtain accurate approximations to the posterior using a Monte Carlo method, which will now be described.
All our experiments concern regression models of the form
\begin{align}
y^{(i)} = f(\bm{x}^{(i)}, \bm{\theta})+ \epsilon^{(i)}, \qquad f(\bm{x}, \bm{\theta}) = \sum_{i=1}^N w_i^{1} \phi(\bm{w}_i^{0} \cdot \bm{x} + b_i^{0}), \qquad \epsilon^{(i)} \sim \mathcal{N}(0, \sigma_{\epsilon}^2). \label{eq: linear reg model}
\end{align}
Letting $\bm{\theta}^{0} := ( (\bm{w}_1^{0}, b_1^{0}), \dots, (\bm{w}_N^{0}, b_N^{0}) )$ and $\bm{w}^{1} := (w_1^{1}, \dots, w_N^{1})$, the ridgelet prior takes the form
\begin{align*}
\bm{w}_i^{0} \sim \mathcal{N}(0,\sigma_{\bm{w}}^2 I_{d \times d}), \qquad b_i^{0} \sim \mathcal{N}(0,\sigma_{b}^2), \qquad \bm{w}^{1} \mid \bm{\theta}^{0} \sim \mathcal{N}(\bm{0}, \bm{\Sigma}) ,
\end{align*}
where $\bm{\Sigma} := \bm{\Psi}_0 \bm{K} \bm{\Psi}_0^\top$.
To simplify this discussion, we assume the mean function $m$ of the target GP is $m(\bm{x}) = 0$; if not then one can replace the responses $y^{(i)}$ with $y^{(i)} - m(\bm{x}^{(i)})$ in the sequel.
Let $[\bm{y}]_i = y^{(i)}$ and $[\bm{\Phi}_0]_{i,j} := \phi(\bm{w}_j^{0} \cdot \bm{x}^{(i)} + b_j^{0})$.
Our main observation is that, conditional on $\bm{\theta}^{0}$, the model \eqref{eq: linear reg model} is a linear regression whose coefficients $\bm{w}^{1}$ can be analytically marginalised to obtain a Gaussian marginal (log-)likelihood
\begin{align}
\log p( \bm{y} \mid \bm{\theta}^{0}) = C - \frac{1}{2} \log\det \bm{\Sigma}_{*} - \frac{1}{2} \bm{y}^{\top} \bm{\Sigma}_{*}^{-1} \bm{y} \nonumber
\end{align}
where $\bm{\Sigma}_{*} := \bm{\Phi}_0 \bm{\Sigma} \bm{\Phi}_0^{\top} + \sigma_{\epsilon}^2 I$ and $C$ is a constant with respect to $\bm{\theta}^{0}$.
This enables the use of MCMC to sample directly from the marginal posterior distribution $p( \bm{\theta}^{0} \mid \bm{y} )$, and for this purpose we employed the \emph{elliptical slice sampler} of \cite{Murray2010}.
If the number of data $\bm{y}$ is $M$, the computational complexity for marginal likelihood evaluation is $\mathcal{O}( N D^2 + N^2 D + N M^2 + N^2 M + M^3)$, which is quadratic with respect to $N$.
Furthermore, we recognise that the posterior distribution of $\bm{w}^{1}$ conditional on $\bm{\theta}^{0}$ are Gaussian, meaning that given each sample of $\bm{\theta}^{0}$ and data $\bm{y}$ we can simulate a corresponding sample:
\begin{align}
\bm{w}^{1} \mid \bm{\theta}^{0}, \bm{y} \sim \mathcal{N}(\bm{m}_{**}, \bm{\Sigma}_{**}) \nonumber
\end{align}
where $\bm{m}_{**} := \sigma_\epsilon^{-2} \bm{\Sigma}_{**} \bm{\Phi}_0^\top \bm{y}$ and $\bm{\Sigma}_{**}^{-1} := \bm{\Sigma}^{-1} + \sigma_\epsilon^{-2} \bm{\Phi}_0^\top \bm{\Phi}_0$.
Standard techniques were applied to regularise computation in cases where matrices were poorly conditioned; see \Cref{sec: experiment_apdx}.
This approach produces samples that are asymptotically drawn from the joint posterior $( \bm{\theta}^0, \bm{w}^1 ) | \bm{y}$.
The complexity of this conditional sampling step is $\mathcal{O}( N^3 + N^2 M + N M )$, which is cubic with respect to $N$.
This two-stage procedure was observed to work effectively in the experiments that we performed.
However, this approach does not extend to deeper BNNs and therefore more general-purpose MCMC or variational inference techniques are likely to be required.
The potential for variational techniques to be employed with the ridgelet prior is discussed next.

\subsection{Variational Inference}
\label{subsec: variational spec}

Although the algorithm in \Cref{sec: mcmc_bnn} was observed to perform well for the experiments reported in \Cref{sec:experiments}, the limited applicability to shallow networks suggests that an alternative approach to posterior approximation is needed.
As we explain in this section, variational inference may be well-suited to this task.
In variational inference, a posterior distribution $p(\bm{\theta} \mid \bm{y})$ is approximated by a distribution $q$ selected from a candidate set $\mathcal{Q}$ via maximisation of an evidence lower bound (ELBO):
\begin{align*}
\max_{q \in \mathcal{Q}}\ \mathbb{E}_{\bm{\theta} \sim q}[ \log p(\bm{y} \mid \bm{\theta}) ] - \operatorname{KL}( q \| \pi )
\end{align*}
where $\pi$ is a prior and $p(\bm{y} \mid \bm{\theta})$ is the likelihood \citep{Sun2019}.
Note that $\bm{\theta}$ denotes the set of
all parameters of the form $\bm{w}_i^l$, $b_i^l$ involved in the BNN.
The term $\operatorname{KL}( q \| \pi )$ will not available in closed form in general because of the complicated \textit{a priori} dependency among the parameters in $\bm{\theta}$.
However, closer inspection of the ridgelet prior reveals that each layer-wise component of $\bm{\theta}$ is conditionally Gaussian given the values of the remaining components; this conditional Gaussian form can be used to circumvent intractability of the Kullback--Leibler term in the ELBO.
This suggests a natural approach to optimisation, which cycles through layer-wise subsets of $\bm{\theta}$, maximising the ELBO with respect to each layer-wise component.
Although the design and assessment of a variational inference procedure is not a focus of this paper, the possibility to pursue an alternating optimisation procedure demonstrates that there may be room to pursue improved computational methodology for the ridgelet prior if needed.


\section{Empirical Assessment}
\label{sec:experiments}

In this section we briefly report empirical results that are intended as a proof-of-concept.
Our aims in this section are twofold:
First, in \Cref{sec: exp_01} we seek to illustrate the theoretical analysis of \Cref{sec:theory} and to explore how well a BNN with a ridgelet prior approximates its intended GP target.
Second, in \Cref{sec: exp_02} we aim to establish whether use of the ridgelet prior for a BNN in a Bayesian inferential context confers some of the same characteristics as when the target GP prior is used, in terms of the inferences that are obtained.

Throughout this section we focus on BNNs with a single hidden layer ($L=1$).
The settings for the ridgelet prior were identical to the hyperbolic tangent activation pair $(\phi, \psi)$ in \Cref{ex: ridgelet_func} for all experiments reported in the main text; sensitivity to these choices examined in \Cref{sec: comp_exp}.
Our principal interest is in how many hidden units, $N$, are required in order for the ridgelet prior to provide an adequate approximation of the associated GP, since the size of the network determines the computational complexity of performing inference with the ridgelet prior; see \Cref{sec:computation}.
To this end, we fix $D$, $\sigma_{\bm{w}}$ and $\sigma_b$ and investigate the influence of $N$.
The fixed values of the bandwidths $\sigma_{\bm{w}}$ and $\sigma_{b}$ were selected by analysing the approximation quality, as described in \Cref{sec: comp_exp}.
For completeness, \Cref{tbl: quad_list} in \Cref{app: experimental settings} reports the values of $D$, $\sigma_{\bm{w}}$ and $\sigma_b$ that were used for each experiment.

\subsection{Approximation Quality} \label{sec: exp_01}

The aim of this section is to explore how well a BNN with a ridgelet prior approximates its intended GP target.
For this experiment we consider dimension $d = 1$ and the so-called \emph{squared exponential} covariance function $k(x, x') := l^2 \exp( - \frac{1}{2 s^2} | x - x' |^2  )$, where $ l = 1.0$, $s = 1.5$ were chosen to ensure that samples from the GP demonstrate non-trivial behaviour on our domain of interest $\mathcal{X} = (-5,5)$.

\Cref{fig: exp_prior}, introduced earlier in the paper, presents sample paths from a BNN equipped with the ridgelet prior as the number $N$ of hidden units is varied from 100 to 3000.
In the case $N=3000$ the sample paths are almost indistinguishable from those of the target GP.
On the other hand, $N = 3000$ may be a rather large number of units to require, depending on the applied context. 
\Cref{fig: exp_prior_error} explores the approximation quality of the discretised ridgelet transform construction, reporting  the \emph{maximum root-mean-square error} (MRMSE) 
\begin{eqnarray*}
	\sup_{\bm{x} \in \mathcal{X}} \sqrt{ \mathbb{E}[ ( f(\bm{x}) - I_{\sigma, D, N} f (\bm{x}) )^2 \mid \{ \bm{w}_i, b_i \}_{i=1}^{N} ] }.
\end{eqnarray*}
It was observed that the approximation error between the BNN and the GP decays at a slow rate, consistent with our theoretical error bound $O(N^{-\frac{1}{2}})$.
To explore how well the second moments of the GP are being approximated, in \Cref{fig: exp_prior_cov} we compared the BNN covariance function $\mathbb{E}[f(x) f(0)]$ as in \eqref{eq: one layer ridgelet cov} against the covariance function $k(x, 0)$ of the target GP.
As $N$ is varied we observe convergence of the BNN covariance function to the GP covariance function. 
For accurate approximation, a large number of hidden units appears to be required.
Additional results, detailing the effect of varying $D$, $\sigma_{\bm{w}}$, $\sigma_{b}$, activation function $\phi$, and GP covariance $k$ are provided in \Cref{sec: comp_exp}.

The MRMSE is a strong assessment that acknowledges the paired nature of $f$ and $I_{\sigma,D,N}f$, in contrast to a more standard comparison of the distributions produced by the BNN and the GP.
However, as an alternative criteria to MRMSE, we also computed the maximum mean discrepancy (MMD) between the BNN and the GP; these details and results are reserved for \Cref{app: MMD_error}.

\begin{figure}[t!]
	\centering
	\subcaptionbox{approximation error\label{fig: exp_prior_error}}{\includegraphics[height=100pt]{./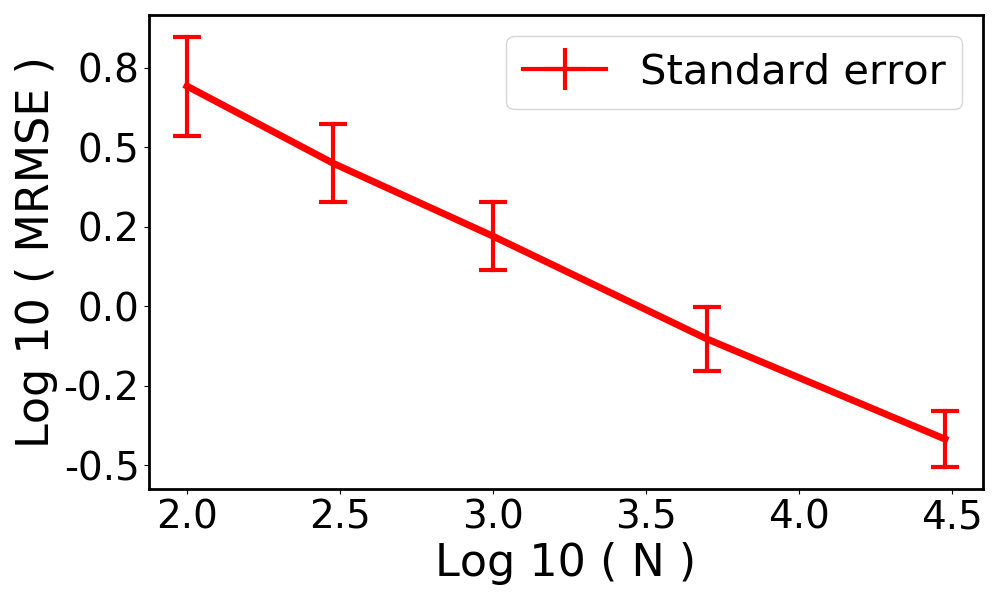}}
	\hfill
	\subcaptionbox{BNN covariance\label{fig: exp_prior_cov}}{\includegraphics[height=100pt]{./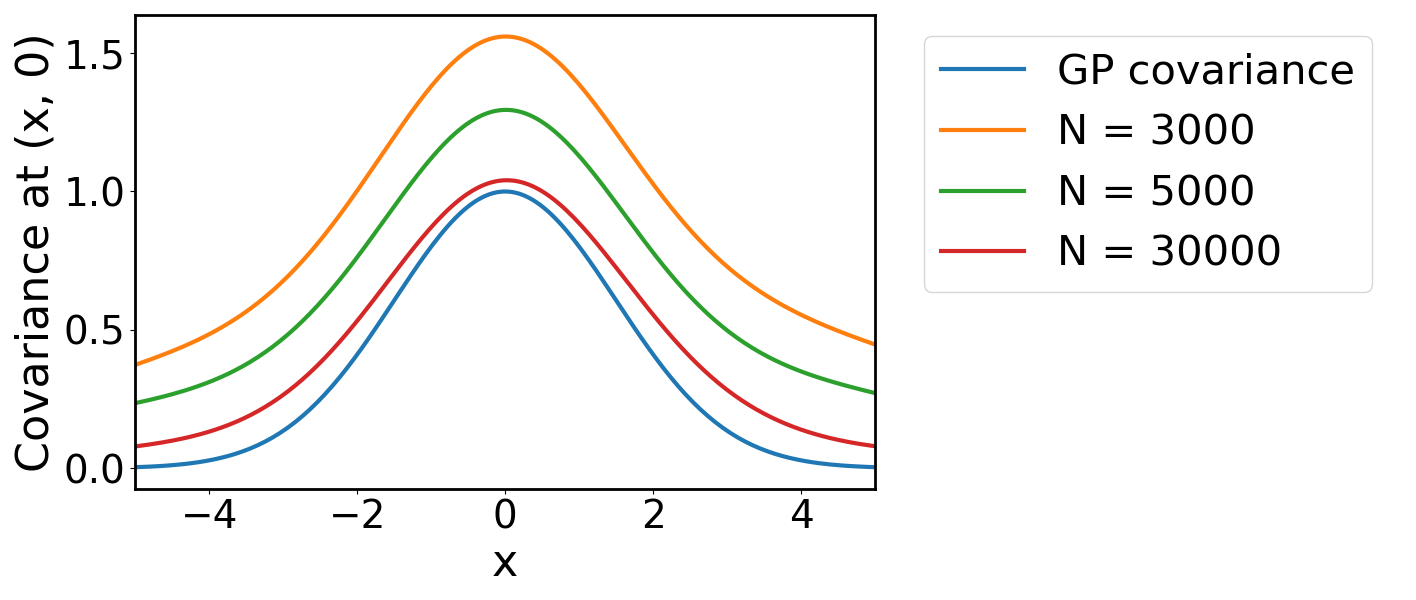}}
	\caption{Approximation quality of the ridgelet prior:
		(a) MRMSE for the BNN approximation of the GP was estimated as the number $N$ of hidden units was varied with standard errors of 100 independent experiments displayed.
		(b) The covariance associated to the BNN, as $N$ is varied, together with the covariance of the GP being approximated.
	}
	\label{fig:exp_01}
\end{figure}

\subsection{Inference and Prediction Using the Ridgelet Prior} \label{sec: exp_02}

In this section we compare the performance of the ridgelet prior to that of its target GP in an inferential context.
To this end, we identified tasks where non-trivial prior information is available and can be encoded into a covariance model; the ridgelet prior is then used to approximate this covariance model using a BNN.
Three tasks were considered:
(i) prediction of atmospheric CO$_2$ concentration using the well-known Mauna Loa dataset; (ii) prediction of airline passenger numbers using a historical dataset; (iii) a simple in-painting task that is closer in spirit to applications where BNNs may be used.
These tasks are toy in their nature and we do not attempt an empirical investigation into the practical value of the ridgelet prior; this will be reserved for a sequel.

\paragraph{Prediction of Atmospheric CO$_2$:}

In this first task we extracted 43 consecutive months of atmospheric CO$_2$ concentration data recorded at the Mauna Loa observatory in Hawaii \citep{Keeling2004} and considered the task of predicting CO$_2$ concentration up to 23 months ahead.
The time period under discussion was re-scaled onto interval $\mathcal{X} = (-5,5)$ and the response variable, representing CO$_2$ concentration, was standardised.
This task is well-suited to regression models, such as GPs, that are able to account for non-trivial prior information.
In particular, one may seek to encode (1) a linear trend toward increasing concentration of atmospheric CO$_2$ and (2) a periodic seasonal trend.
In the GP framework, this can be achieved using a mean function $m$ and covariance function $k$ of the form
\begin{eqnarray}
m(x) := a x, \qquad k(x, x') := l^2 \exp\left( - \frac{2}{s^2} \sin\left( \frac{\pi}{p^2} | x - x' | \right)^2 \right) \label{eq: exp_2_u_k}
\end{eqnarray}
where $a = 0.06$, $l = 1.0$, $s = 0.75$, $p = 1.8$; see \cite[][Section 5.4.3]{Rasmussen2006}.
The aim of this experiment is to explore whether the posterior predictive distribution obtained using the ridgelet prior for a BNN is similar to that which would be obtained using this GP prior.
To this end, we employed a simple Gaussian likelihood
\begin{align*}
y_i = f(x_i, \theta) + \epsilon_i, \qquad \epsilon_i \stackrel{\text{i.i.d.}}{\sim} \mathcal{N}(0, \sigma_{\epsilon}^2) 
\end{align*}
where $y_i$ is the standardised CO$_2$ concentration in month $i$, $x_i$ is the standardised time corresponding to month $i$, $x \mapsto f(x,\theta)$ is the regression model and $\sigma_{\epsilon} = 0.065$ was fixed.
The posterior GP is available in closed form, while we used MCMC to sample from the posterior distribution over the parameters $\theta$ when a neural network is used, as described in \Cref{sec: mcmc_bnn}.

\begin{figure}[t!]
	\centering
	\subcaptionbox{i.i.d. prior}{\includegraphics[width=0.325\textwidth]{./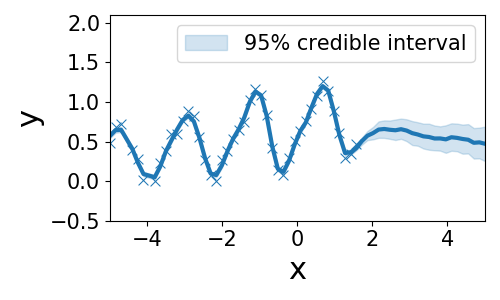}}
	\hfill
	\subcaptionbox{the ridgelet prior}{\includegraphics[width=0.325\textwidth]{./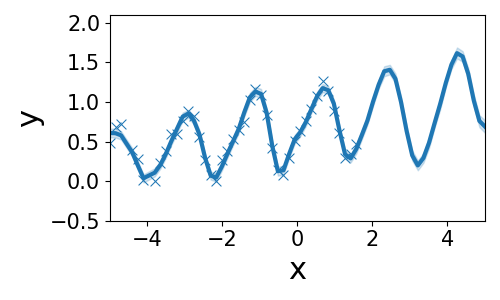}}
	\hfill
	\subcaptionbox{GP}{\includegraphics[width=0.325\textwidth]{./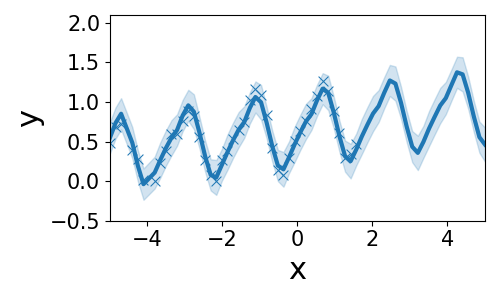}}
	\caption{Prediction of Atmospheric CO$_2$: Posterior predictive distributions were obtained using (a) a Bayesian neural network (BNN) with an i.i.d. prior, (b) a BNN with the ridgelet prior, and (c) a Gaussian process (GP) prior.
		The ridgelet prior in (b) was designed to approximate the GP prior used in (c).
		The solid blue curve is the posterior mean and the shaded regions represent the pointwise 95\% credible interval.}
	\label{fig: exp_co2}
\end{figure}

Here we present results for BNN with $N = 500$ hidden units.
Our benchmark is the so-called \textit{i.i.d. prior} that takes all parameters to be \textit{a priori} independent with $w_{i,j}^{0} \sim \mathcal{N}(0, \sigma_{w^{0}} )$, $b_i^{0} \sim \mathcal{N}(0, \sigma_{b^{0}})$, and $w_{1,i}^1 \sim \mathcal{N}(0, \sigma_{w^1})$.
Here $\sigma_{w^{0}}=3$, $\sigma_{b^{0}}=12$ and $\sigma_{w^1} = 0.1/\sqrt{N}$.
This is to be contrasted with the ridgelet prior, with values of $\sigma_{\bm{w}} = 3$ and $\sigma_b = 12$ chosen to ensure a fair comparison with the i.i.d. prior.
\Cref{fig: exp_co2} displays the posterior predictive distributions obtained using (a) the i.i.d. prior, (b) the ridgelet prior, and (c) the original GP.
It can be seen that (b) and (c) are more similar than (a) and (c).
These results suggest that the ridgelet prior is able to encode non-trivial information on the seasonality of CO$_2$ concentration into the prior distribution for the parameters of the BNN.

\paragraph{Prediction of Airline Passenger Numbers:}

Next we considered a slightly more challenging example that involves a more intricate periodic trend.
The dataset here is a subset of the airline passenger dataset studied in \cite{Pearce2019}.
The response $y_i$ represents the (standardised) monthly total number of international airline passengers in the United States and the input $x_i$ represents the (standardised) time corresponding to month $i$.
The experimental set-up was identical to the CO$_2$ experiment, but the prior in \eqref{eq: exp_2_u_k} was modified to reflect the more intricate periodic trend by taking $a = 0.2$, $l = 1.0$, $s = 0.75$, $p = 1.75$ and the measurement noise was fixed to $\sigma_{\epsilon} = 0.145$.

In a similar manner to the CO$_2$ experiment, we implemented the i.i.d. prior and the analogous ridgelet prior, in the latter case based on $D = 200$ quadrature points.
Results in \Cref{fig: exp_airline} support the conclusion that the ridgelet prior is able to capture this more complicated seasonal trend. In comparison with \cite{Pearce2019} we required more hidden units to perform this regression task.
However, we used a standard activation function, where the method of \cite{Pearce2019} would need to develop a new activation function for each covariance model of interest.

\begin{figure}[t!]
	\centering
	\subcaptionbox{i.i.d. prior}{\includegraphics[width=0.325\textwidth]{./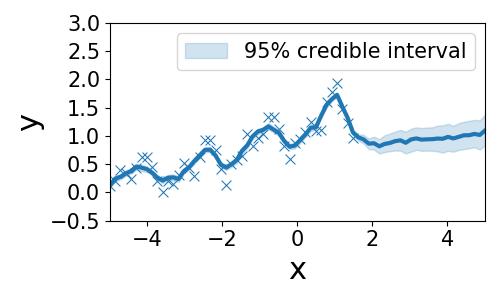}}
	\hfill
	\subcaptionbox{the ridgelet prior}{\includegraphics[width=0.325\textwidth]{./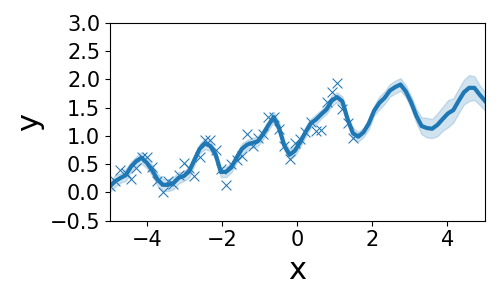}}
	\hfill
	\subcaptionbox{GP}{\includegraphics[width=0.325\textwidth]{./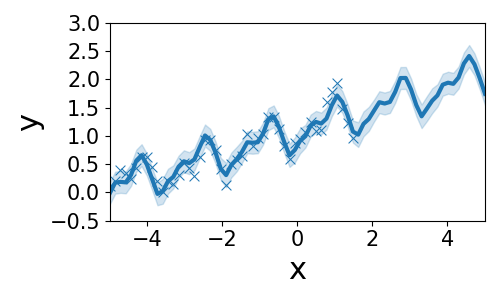}}
	\caption{Prediction of Airline Passenger Numbers: Posterior predictive distributions were obtained using (a) a Bayesian neural network (BNN) with an i.i.d. prior, (b) a BNN with the ridgelet prior, and (c) a Gaussian process (GP) prior.
		The ridgelet prior in (b) was designed to approximate the GP prior used in (c).
		The solid blue curve is the posterior mean and the shaded regions represent the pointwise 95\% credible interval.}
	\label{fig: exp_airline}
\end{figure}

\paragraph{In-Painting Task:}

Our final example is a so-called \textit{in-painting} task, where we are required to infer a missing part of an image from the remaining part.
Our aim is not to assess the suitability of the ridgelet prior for such tasks -- to do so would require extensive and challenging empirical investigation, beyond the scope of the present paper -- but rather to validate the ridgelet prior as a proof-of-concept.

The image that we consider is shown in \Cref{fig: exp_inprint_a} and we censor the central part, described by the red square in \Cref{fig: exp_inprint_b}.
Each pixel corresponds to a real value $y_i$ and the location of the pixel is denoted $\bm{x}_i \in (-5,5)^2$.
To the remaining part we add a small amount of i.i.d. noise, $\epsilon_i \sim \mathcal{N}(0, \sigma_{\epsilon}^2)$ to each pixel $i$, in \Cref{fig: exp_inprint_b} (the addition of noise here ensures conditions for ergodicity of MCMC are satisfied; otherwise the posterior is supported on a submanifold of the parameter space and more sophisticated sampling procedures would be required).
The task is then to infer this missing central region using the remaining part of the image as a training dataset.

\begin{figure}[t!]
	\centering
	\subcaptionbox{original image\label{fig: exp_inprint_a}}{\includegraphics[width=0.20\textwidth]{./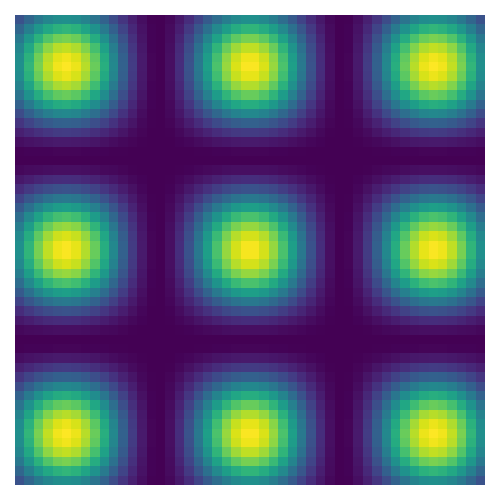}}
	\hfill
	\subcaptionbox{training dataset\label{fig: exp_inprint_b}}{\includegraphics[width=0.20\textwidth]{./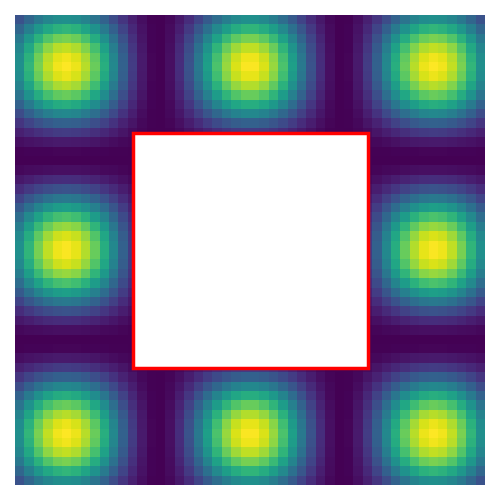}}
	\hfill
	\subcaptionbox{i.i.d. prior\label{fig: exp_inprint_c}}{\includegraphics[width=0.20\textwidth]{./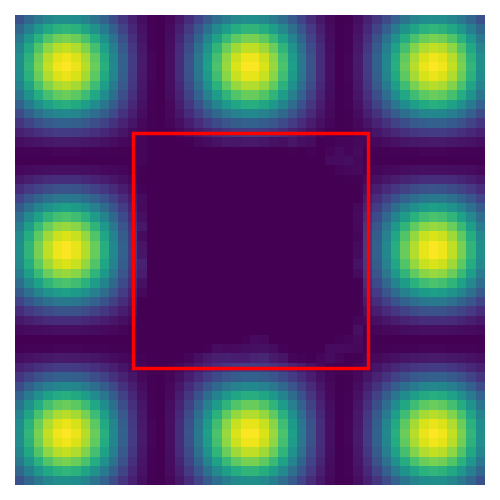}}
	\hfill
	\subcaptionbox{the ridgelet prior\label{fig: exp_inprint_d}}{\includegraphics[width=0.20\textwidth]{./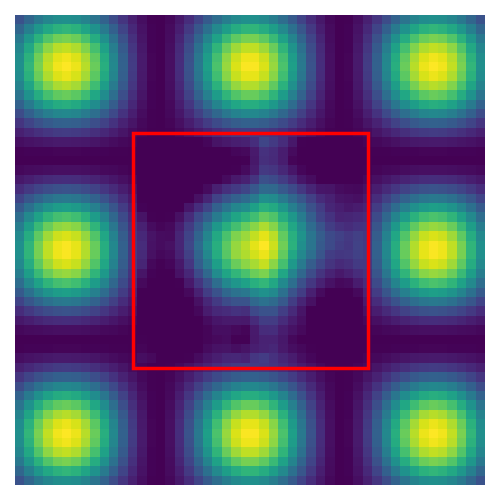}}
	\caption{In-Painting Task: 
		The central part of the original image in (a) was censored to produce (b) and the in-painting task is to infer the missing part of the image using the remaining part as a training dataset.
		Posterior predictive distributions were obtained using (c) a Bayesian neural network (BNN) with an i.i.d. prior, and (d) a BNN with the ridgelet prior.
	}
\end{figure}

For the statistical regression model we considered a GP whose mean function $m$ is zero and covariance function $k$ is
\begin{eqnarray}
k(\bm{x}, \bm{x}') & := & \textstyle l^2 \exp\left( - \frac{\| \bm{x} - \bm{x}' \|_2^2}{2 s^2} \right) +  \label{eq: exp_3_u_k} \\
& & \textstyle \Big( \cos\Big( \frac{\pi x_1}{2} \Big) + 1 \Big) \Big( \cos\Big( \frac{\pi x_2}{2} \Big) + 1 \Big) \Big( \cos\Big( \frac{\pi x_1'}{2} \Big) + 1 \Big) \Big( \cos\Big( \frac{\pi x_2'}{2} \Big) + 1 \Big), \nonumber  
\end{eqnarray}
where $l = 0.1$, $s = 0,1$.
The periodic structure induced by cosine functions is deliberately chosen to be commensurate with the separation between modes in the original dataset, meaning that considerable prior information is provided in the GP covariance model.
Our benchmark here was a BNN equipped with with $N = 2000$ hidden units and an i.i.d. prior, constructed in an analogous manner as before with parameters $\sigma_{w^0} = 2$, $\sigma_{b^0} = 12$ and $\sigma_{w^1} = 0.1/\sqrt{N}$.
The ridgelet prior was based on a BNN of the same size, with $\sigma_{\bm{w}} = 2$ and $\sigma_b = 12$ to ensure a fair comparison with the i.i.d. prior.

\Cref{fig: exp_inprint_c} and \Cref{fig: exp_inprint_d} show posterior means estimates for the missing region in the in-painting task, respectively for the i.i.d. prior and the ridgelet prior.
It is interesting to observe that the i.i.d. prior gives rise to a posterior mean that is approximately constant, whereas the ridgelet prior gives rise to a posterior mean that somewhat resembles the original image.
This suggests that the specific structure of the covariance model in \eqref{eq: exp_3_u_k} has been faithfully encoded into the ridgelet prior, leading to superior performance on this stylised in-painting task.

\subsection{Limitations of the Ridgelet Prior for Deep Networks} \label{sec: exp_03}

Finally, in this section we examine the performance of the ridgelet prior in a setting where the depth of the network is increased.
For this experiment, we fixed the hidden unit number $N_l$ of each hidden layer $l$ to $N_l = 10,000$ and increased the number of hidden layers from $L = 1$ to $L = 5$.
The approximation error between a BNN with the ridgelet prior and the target GP was quantified using MMD, aforementioned in \Cref{sec: exp_01}: see \Cref{app: MMD_error} for detail.
Results are displayed in \Cref{fig: deep_prior}.
Interestingly, the lowest value of MMD was observed at $L = 1$, suggesting that the performance of the ridgelet prior deteriorates when the depth of the network is increased.
This makes intuitive sense, since with each additional layer in the network an additional discretisation of the ridgelet transform is required.
This discretisation error then has an opportunity to accumulate and propagate through the network.
It would be interesting to explore strategies to mitigate this degradation of performance, but our focus in this work was limited to providing theoretical analysis and an empirical proof-of-concept.

\begin{figure}[t!]
	\centering
	\begin{minipage}{0.675\textwidth}
		\subcaptionbox{BNN ($L = 1$)}{\includegraphics[width=0.325\textwidth]{./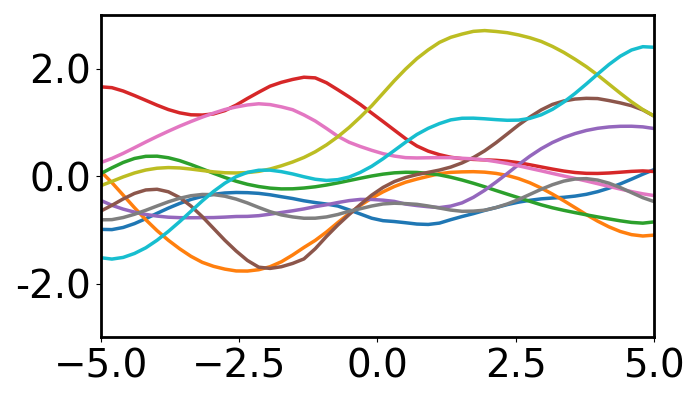}}
		\subcaptionbox{BNN ($L = 2$)}{\includegraphics[width=0.325\textwidth]{./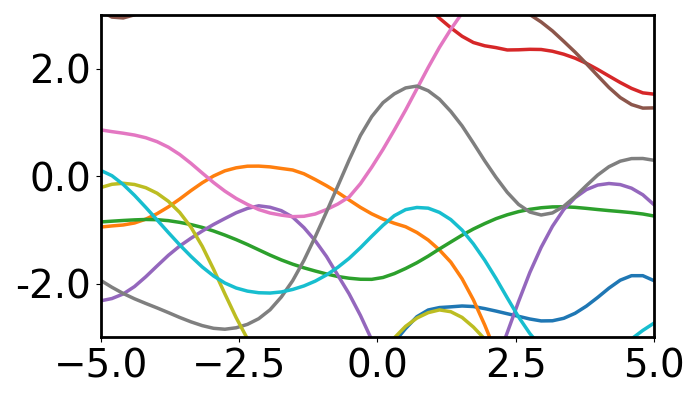}}
		\subcaptionbox{BNN ($L = 3$)}{\includegraphics[width=0.325\textwidth]{./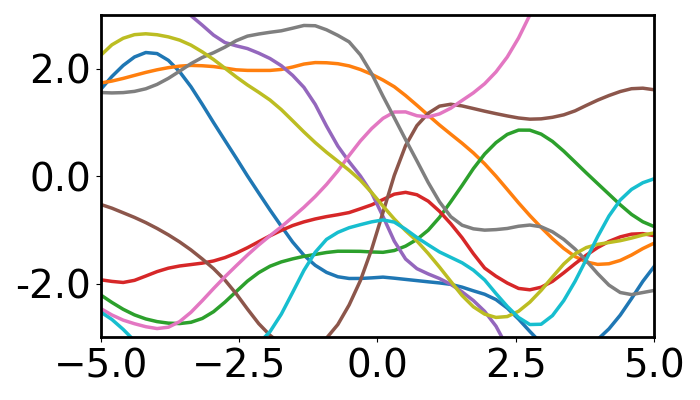}}
		
		\subcaptionbox{BNN ($L = 4$)}{\includegraphics[width=0.325\textwidth]{./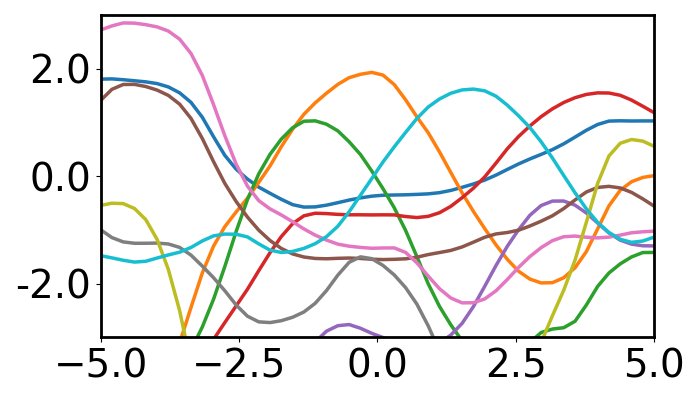}}
		\subcaptionbox{BNN ($L = 5$)}{\includegraphics[width=0.325\textwidth]{./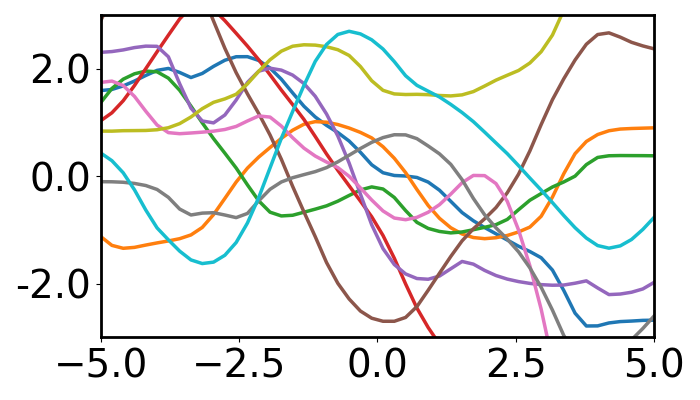}}
		\subcaptionbox{GP}{\includegraphics[width=0.325\textwidth]{./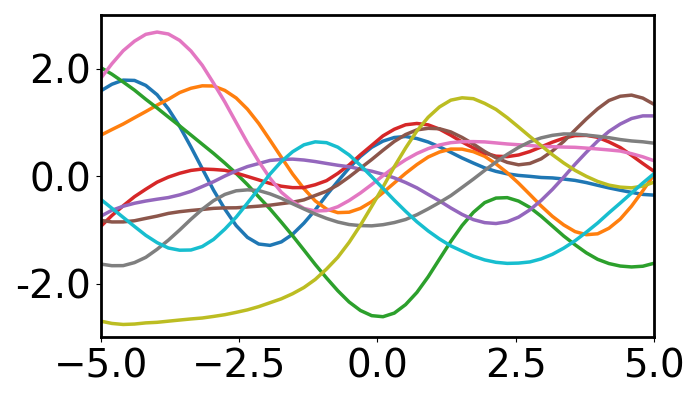}}
	\end{minipage}
	\hfill
	\begin{minipage}{0.3\textwidth}
		\subcaptionbox{MMD$^2$ ($L=1, \dots, 5$)}{\includegraphics[width=\textwidth]{./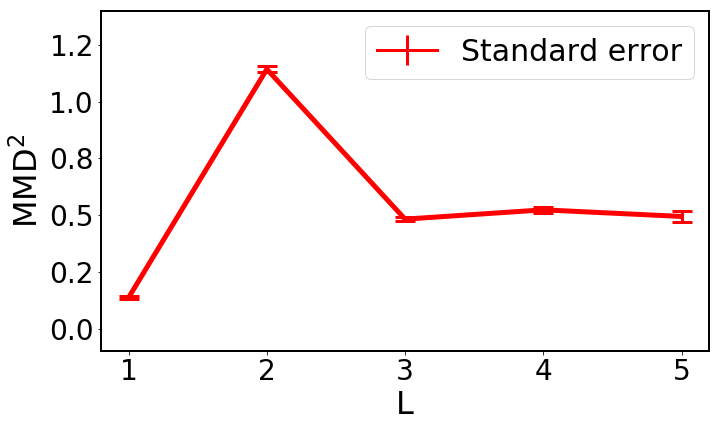}}
	\end{minipage}
	\caption{
		Sample paths from Bayesian neural networks (BNNs) equipped with the ridgelet prior as the number of hidden layers $L$ is increased. 
		The number of hidden units was fixed to $N_l=10,000$ for each hidden layer $l = 1, \dots, L$.
		In (f) sample paths are shown from the target Gaussian process (GP) prior.
		In (g) the MMD$^2$ between the BNN and the GP are displayed.
		The standard error is calculated by 10 independent computations.
	}
	\label{fig: deep_prior}
\end{figure}


\section{Conclusion}
\label{sec:conclusion}

One of the main barriers to the wide-spread adoption of BNN is the identification of prior distributions that are meaningful when lifted to the output space of the network.
In this paper it was shown that the ridgelet transform facilitates the consistent approximation of a GP using a BNN.
This has the potential to bring the powerful framework of covariance modelling for GPs to bear on the task of prior specification for BNN. 
In contrast to earlier work in this direction \citep{Flam-Shepherd2017,Hafner2019,Pearce2019,Sun2019}, our construction is accompanied by theoretical analysis that establishes the approximation is consistent.
Moreover, we are able to provide a finite-sample-size error bound that requires only weak assumptions on the GP covariance model (i.e. that the mean and covariance function is continuous and differentiable).

This role of this paper was to establish the ridgelet prior as a theoretical proof-of-concept only and there remain several open questions to be addressed:
\begin{itemize}
	\item In real applications, is it necessary to have an accurate approximation of the intended covariance model, in order to deliver improved performance of the BNN, or is a crude approximation sufficient?
	\item What cubature rules $\{\bm{x}_i,u_i\}_{i=1}^D$ are most effective in applications - is a regular grid needed, or do real-world applications exhibit an effective low dimension so that e.g. a sparse grid could be used?
	\item In the case of multiple hidden layers, can the convergence of the ridgelet prior to a \textit{deep GP} \citep{Damianou2012,Dunlop2017} be established? 
\end{itemize}
These questions will likely require a substantial amount of work to address in full, but we hope to pursue some of them in a sequel.


\paragraph*{Acknowledgements}
The authors were supported by the Lloyd's Register Foundation Programme on Data-Centric Engineering and the Alan Turing Institute under the EPSRC grant [EP/N510129/1].


\newpage
\appendix
\label{sec:appendix}

\section{Supplementary Material}
This appendix is structured as follows:

\vspace{5pt}
\etocsettocstyle{}{}
\localtableofcontents

\vspace{10pt}
\noindent
\textit{Notation:} Throughout this appendix we adopt identical notation to the main text, but for brevity we denote $\| \cdot \|_{L^1(\mathcal{X})}$ as $\| \cdot \|_{L^1}$ and $\| \cdot \|_{L^\infty(\mathcal{X})}$ as $\| \cdot \|_{L^\infty}$ whenever $\mathcal{X}$ is the Euclidean space $\mathbb{R}^d$ of any dimension $d \in \mathbb{N}$. 
We also introduce function classes of higher order differentiability.
Denote by $C^{r}(\mathcal{X})$ the set of all functions $f : \mathcal{X} \rightarrow \mathbb{R}$ for which the derivatives $\partial^{\bm{\alpha}}f$ exist and are continuous on $\mathcal{X}$ for all $\bm{\alpha} \in \mathbb{N}_{0}^{d}$ s.t. $|\bm{\alpha}| \leq r$.
Denote by $C^{r \times r}(\mathcal{X} \times \mathcal{X})$ the set of all functions $h : \mathcal{X} \times \mathcal{X} \rightarrow \mathbb{R}$ for which the derivatives $\partial^{\bm{\alpha}, \bm{\beta}}h$ exist and are continuous on $\mathcal{X} \times \mathcal{X}$ for all $\bm{\alpha}, \bm{\beta} \in \mathbb{N}_{0}^{d}$ s.t. $|\bm{\alpha}|, |\bm{\beta}| \leq r$.


\subsection{Convergence of the Finite-Bandwidth Ridgelet Transform}
\label{sec: proof_recon_mod}

In this section we derive a finite-sample-size error bound for the finite-bandwidth ridgelet transform of Definition \ref{def: ridgelet_modified}.
Our analysis is more general than the Gaussian case presented in the main text, and we consider more general probability measures on $\bm{w}$ and $b$ to control convergence of $\lambda_\sigma$ toward the improper uniform limit.

Let $p_{\bm{w}}$ and $p_b$ be probability densities, respectively, on $\mathbb{R}^d$ and $\mathbb{R}$, satisfying regularity requirements that will shortly be described.
For $ 0 < \sigma_{\bm{w}} < \infty, 0 < \sigma_b < \infty $, define scaled densities of $ p_{\bm{w}} $ and $ p_b $ by $ p_{\bm{w}, \sigma}(\bm{w}) := \sigma_{\bm{w}}^{-d} p_{\bm{w}}(\sigma_{\bm{w}}^{-1} \bm{w}) $ and $ p_{b, \sigma}(b) := \sigma_b^{-1} p_b(\sigma_b^{-1} b) $.
The parameters $\sigma_{\bm{w}}$ and $\sigma_b$ will be called \emph{bandwidths}.
To recover the case presented in the main text, if $p_{\bm{w}}$ and $p_b$ are standard Gaussians then $p_{\bm{w}, \sigma}(\bm{w})$ and $p_{b, \sigma}(b)$ are Gaussians with variances $\sigma_{\bm{w}}^2 \bm{I}$ and $\sigma_{b}^2$, i.e. 
\begin{align*}
p_{\bm{w}, \sigma}(\bm{w}) = \frac{1}{(2 \pi \sigma_{\bm{w}}^2)^{d / 2} } \exp\left( - \frac{\| \bm{w} \|^2}{2 \sigma_{\bm{w}}^2} \right), \qquad p_{b, \sigma}(b) = \frac{1}{(2 \pi \sigma_{b}^2)^{1/2}} \exp\left( - \frac{b^2}{2 \sigma_{b}^2} \right). 
\end{align*}
Then let $\lambda_\sigma(\bm{w}, b)$ be a measure whose density is $Z p_{\bm{w}, \sigma}(\bm{w}) p_{b, \sigma}(b)$, where for the analysis that follows it will be convenient to set 
\begin{align*}
Z := \frac{ (2 \pi)^{\frac{1}{2}} \sigma_{\bm{w}}^{d} \sigma_b }{ \| \widehat{p_{\bm{w}}} \|_{L^1(\mathbb{R}^d)} \| \widehat{p_b} \|_{L^1(\mathbb{R})} } , 
\end{align*}
indicating the total measure assigned by $\lambda_{\sigma}$ to $\mathbb{R}^{d+1}$.
For the case of standard Gaussian densities $ p_{\bm{w}}$ and $p_{b}$ presented in the main text, we have $ \| \widehat{p_{\bm{w}}} \|_{L^1(\mathbb{R}^d)} = \| \widehat{p_{b}} \|_{L^1(\mathbb{R})} = 1 $. 

Our analysis covers general $p_{\bm{w}}$ and $p_b$ provided that certain regularity conditions are satisfied.
These will now be described.
First, consider an arbitrary probability density function $p$ defined on $\mathbb{R}^m$ for some $m \in \mathbb{N}$ and consider the following properties that $p$ may satisfy: 
\begin{enumerate}[label=(\roman*)]
	\item (Symmetry at the origin) $p(\bm{x}) = p(-\bm{x})$, $\forall \bm{x} \in \mathbb{R}^m$ ,
	\item (Boundedness) $\|p\|_{L^\infty(\mathbb{R}^m)} < \infty$,
	\item (Finite second moment) $\int_{\mathbb{R}^m} \|\bm{x}\|^2 p(\bm{x}) \mathrm{d}\bm{x} < \infty$ ,
	\item (Positivity of the Fourier transform) $\widehat{p}(\bm{x}) > 0$ , $\forall \bm{x} \in \mathbb{R}^m$ ,
	\item (Integrable Fourier transform) $ \| \widehat{p} \|_{L^1(\mathbb{R}^m)} < \infty $.
\end{enumerate}
Of these properties, only (iv) and (v) are not straight forward.
If $\hat{p}$ can be computed then conditions (iv) and (v) can be directly verified.
Otherwise, sufficient conditions on $p$ for (iv) can be found in \cite{Tuck2006} and sufficient conditions on $p$ for (v) can be found in \cite{Liflyand2012}.
The regularity that we require on $p_{\bm{w}}$ and $p_b$ can now be specified:

\begin{assumption}[Finite bandwidth] \label{asmp: ridgelet}
	The probability density functions $p_{\bm{w}} : \mathbb{R}^d \rightarrow [0,\infty)$ and $p_b : \mathbb{R} \rightarrow [0,\infty)$ satisfy properties \emph{(i)} – \emph{(v)} above and, in addition,
	\begin{enumerate}
		\item $\int_{\mathbb{R}^d} \|\bm{x}\|^2 \widehat{p_{\bm{w}}}(\bm{x}) \mathrm{d}\bm{x} < \infty$,
		\item $\|\partial p_b\|_{L^\infty(\mathbb{R})} < \infty$,
	\end{enumerate}
	where we recall that $\partial p_b$ denotes the first derivative of $p_b$.
\end{assumption}

Now we present the convergence result of interest.
Let $M_1(f) := \max_{|\bm{\alpha}| \leq 1} \sup_{\bm{x} \in \mathbb{R}^d} | \partial^{\bm{\alpha}} f(\bm{x})|$ and $B_1(f) := \int_{\mathbb{R}^d} |f(\bm{x})| (1 + \|\bm{x}\|_2) \mathrm{d}\bm{x}$.

\begin{theorem} \label{thm: recon_mod}
	Let \Cref{assm: ridgelet pair} and \Cref{asmp: ridgelet} hold, and let $ f \in C^1(\mathbb{R}^d) $ satisfy $M_1(f) < \infty$ and $ B_1(f) < \infty $.
	Then 
	\begin{eqnarray*}
		\sup_{\bm{x} \in \mathbb{R}^d} \left| f(\bm{x}) - (R_\sigma^{\ast} R)[ f ](\bm{x}) \right| \le C \max(M_1(f),B_1(f)) \left\{ \frac{1}{\sigma_{\bm{w}}} + \frac{\sigma_{\bm{w}}^{d} (\sigma_{\bm{w}} + 1)}{\sigma_{b}} \right\}
	\end{eqnarray*}
	for some constant $C $ that is independent of $\sigma_{\bm{w}}, \sigma_{b}$ and $f$, but may depend on $ \phi $, $ \psi $, $ p_{\bm{w}} $ and $ p_b $.
\end{theorem}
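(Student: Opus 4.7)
The strategy is to compare the finite-bandwidth operator $R_\sigma^\ast R$ to the classical (Lebesgue) operator $R^\ast R$, which equals the identity by the ridgelet inversion formula embodied in \Cref{assm: ridgelet pair} part~3, and to track the discrepancy through a Fourier-side analysis. The starting point is to apply Fourier inversion in the $b$-variable to both $\phi$ and $\psi$ inside the triple integral defining $(R_\sigma^\ast R)[f](x)$, and to use Fourier inversion on $\hat f$ after rewriting $R[f](w,b)$ in its Fourier representation. With the scaling identities $\widehat{p_{b,\sigma}}(\eta) = \hat p_b(\sigma_b\eta)$ and $\widehat{p_{w,\sigma}}(\xi) = \hat p_w(\sigma_w\xi)$, integrating out $b$ against $p_{b,\sigma}$ produces a factor $\hat p_b(\sigma_b(\eta+\eta'))$, while integrating out $w$ against $p_{w,\sigma}$ produces a factor $\hat p_w(\sigma_w(x\eta + y\eta'))$. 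Performing the change of variables $\mu = \sigma_b(\eta+\eta')$ then yields an explicit triple integral representation of $(R_\sigma^\ast R)[f](x)$ over $\eta,\mu,y$ whose formal $\sigma_w,\sigma_b\to\infty$ limit reduces, by \Cref{assm: ridgelet pair} part~3 and the choice of $Z$, to $f(x)$.

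I would then decompose $f(x) - (R_\sigma^\ast R)[f](x) = E_w + E_b$. The term $E_w$ captures the residual when $\sigma_b$ is formally sent to infinity: up to explicit constants it reduces to $\int |\eta|^{-d}\hat\phi(\eta)\overline{\hat\psi(\eta)} \int [f(x) - f(x + u/(\sigma_w\eta))] \hat p_w(u)\,\mathrm{d}u\,\mathrm{d}\eta$. The mean value theorem gives $|f(x) - f(x + u/(\sigma_w\eta))| \leq M_1(f)\|u\|/(\sigma_w|\eta|)$, the resulting $\eta$-integral $\int|\eta|^{-d-1}|\hat\phi\,\overline{\hat\psi}|\,\mathrm{d}\eta$ is finite by \Cref{assm: ridgelet pair} part~4, and the $u$-integral $\int\|u\|\hat p_w(u)\,\mathrm{d}u$ is finite thanks to \Cref{asmp: ridgelet} part~1; together this yields $|E_w| \lesssim M_1(f)/\sigma_w$. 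The term $E_b$ captures the correction from the finite $\sigma_b$: a first-order Taylor expansion in $\mu/\sigma_b$ of both $\hat\psi(\mu/\sigma_b - \eta)$ and $\hat p_w(\sigma_w(x\eta + y(\mu/\sigma_b - \eta)))$ contributes a factor $|\mu|/\sigma_b$ multiplied by bounded derivatives---$\partial\hat\psi$ is bounded since $\psi\in\mathcal{S}(\mathbb{R})$, and the chain rule on $\hat p_w(\sigma_w\,\cdot)$ generates an additional factor $\sigma_w\|y\|$. Integrating $|\mu|\hat p_b(\mu)$---finite because \Cref{asmp: ridgelet} part~2 forces $\hat p_b$ to decay sufficiently---then $|f(y)|(1+\|y\|)$ to generate $B_1(f)$, and finally $\eta$ over its effective support of size $\sigma_w$ (contributing the volume factor $\sigma_w^d$), delivers $|E_b| \lesssim B_1(f)\,\sigma_w^d(\sigma_w+1)/\sigma_b$.

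The main obstacle will be executing the $E_b$ estimate rigorously: multiple Fubini interchanges have to be justified by absolute integrability, and the Taylor remainder in $\mu/\sigma_b$ must be controlled uniformly in $\eta$ across the effectively enlarged range of integration. Tracking the two distinct appearances of $\sigma_w$---one from the volume of integration in $w$, and one from differentiating $\hat p_w(\sigma_w\,\cdot)$---is where the combination $\sigma_w^d(\sigma_w+1)$ in the final bound originates, with the two summands $\sigma_w^d$ and $\sigma_w^{d+1}$ arising respectively from the $\partial\hat\psi$ piece and the $\partial\hat p_w$ piece of the Taylor expansion. Summing $|E_w|+|E_b|$ and majorising by $\max(M_1(f),B_1(f))$ then produces the stated inequality, with the absolute constant $C$ absorbing $\|\hat p_w\|_{L^1}$, $\|\hat p_b\|_{L^1}$, the first moment of $\hat p_w$, $\|\partial p_b\|_{L^\infty}$, and $\int|\xi|^{-d-2}|\hat\phi\,\overline{\hat\psi}|\,\mathrm{d}\xi$.
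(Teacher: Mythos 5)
Your split of the error into a $\bm{w}$-bandwidth piece $E_{\bm{w}}$ and a $b$-bandwidth piece $E_b$ matches the paper's decomposition into $(*)$ and $(**)$ (the paper introduces the intermediate operator $R_\sigma^\ast R^{(w)}$ that uses Lebesgue measure on $b$), and your mean-value-theorem estimate for $E_{\bm{w}}$ recovers the same $O(1/\sigma_{\bm{w}})$ rate that the paper obtains via the Gin\`{e}--Nickl approximate-identity machinery. The gap is in $E_b$. Your Fourier-side first-order Taylor expansion leaves a factor $|\mu|/\sigma_b$ to be integrated against $\widehat{p_b}(\mu)$, and you assert that $\int_{\mathbb{R}}|\mu|\,\widehat{p_b}(\mu)\,\mathrm{d}\mu<\infty$ follows from \Cref{asmp: ridgelet} part~2, i.e.\ from $\|\partial p_b\|_{L^\infty(\mathbb{R})}<\infty$. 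That implication goes the wrong way: since $\widehat{\partial p_b}(\mu)=i\mu\,\widehat{p_b}(\mu)$, integrability of $\mu\,\widehat{p_b}(\mu)$ would imply boundedness of $\partial p_b$ by Fourier inversion, not conversely. A compactly supported density $p_b$ that is only $C^{1,\alpha}$ with $0<\alpha<1$ satisfies all of the hypotheses of \Cref{asmp: ridgelet} yet has $\widehat{p_b}(\mu)\sim|\mu|^{-1-\alpha}$ at infinity, so $\int|\mu|\,\widehat{p_b}(\mu)\,\mathrm{d}\mu=\infty$. Under the stated assumptions your $E_b$ bound is therefore unjustified.

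The paper avoids this obstacle by never moving the $b$-bandwidth analysis into Fourier space. \Cref{lem: FT of pb} combines the Fourier-inversion identity $p_b^\epsilon(0)=(2\pi)^{-1/2}\|\widehat{p_b}\|_{L^1}$ with the fundamental theorem of calculus to write
\begin{equation*}
1-(2\pi)^{\frac{1}{2}}\|\widehat{p_b}\|_{L^1}^{-1}p_b^\epsilon(b)=\epsilon_b(2\pi)^{\frac{1}{2}}\|\widehat{p_b}\|_{L^1}^{-1}\int_0^1 b\,\partial p_b(t\epsilon_b b)\,\mathrm{d} t .
\end{equation*}
Substituting this into the integral defining $\Delta k=\widetilde{k}_\epsilon-k_\epsilon$ produces a factor of $b$ that is absorbed by the Schwartz decay of $\psi$ (via $\int_{\mathbb{R}}(1+|b|)|\psi(b)|\,\mathrm{d} b<\infty$) rather than by any decay of $\widehat{p_b}$, and the final bound uses only $\|\partial p_b\|_{L^\infty}$, precisely the hypothesis assumed. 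To carry through your Fourier-side programme you would need either the additional hypothesis $\int|\mu|\,\widehat{p_b}(\mu)\,\mathrm{d}\mu<\infty$, which the paper does not make, or a rearrangement that moves the factor of $\mu$ onto a function whose decay is already controlled.
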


Our priority here was to provide a simple upper bound on the reconstruction error, that separates the influence of $f$ from the influence of the bandwidths $\sigma_{\bm{w}}$ and $\sigma_b$; the bound is not claimed to be tight. 
The formulation of \Cref{thm: recon_mod} enables us to conveniently conclude that taking $\sigma_{\bm{w}}$ and $\sigma_b$ to infinity in a manner such that $\sigma_{\bm{w}}^{d} (\sigma_{\bm{w}} + 1) / \sigma_b \rightarrow 0$ will lead to consistent recovery of $f$, for any function $f$ for which $M_1(f), B_1(f) < \infty$ is satisfied.
The term $M_1(f)$ reflects the general difficulty of approximating $f$ using a finite-bandwidth ridgelet transform, while the term $B_1(f)$ serves to deal with the tails of $f$ and the fact that a supremum is taken over the unbounded domain $\mathbb{R}^d$.
\Cref{lem: alt_1} in \Cref{sec: proof_lemma_unbounded} presents a variation on \Cref{thm: recon_mod} that relaxes the boundedness assumption on the activation function from $\phi \in C_0^*(\mathbb{R})$ to $\phi \in C_1^*(\mathbb{R})$ at the expense of changing the tail condition from $B_1(f) < \infty$ to $B_2(f) := \int_{\mathbb{R}^d} |f(\bm{x})| (1 + \|\bm{x}\|_2)^2 \mathrm{d}\bm{x} < \infty$ and restricting the supremum to a bounded subset of $\mathbb{R}^d$.

The proof of \Cref{thm: recon_mod} makes use of techniques from the theory of \emph{approximate identity} integral operators, which we recall next in \Cref{sec: appdx_apprximate_identity}.
The proof of \Cref{thm: recon_mod} is then presented in \Cref{sec: appdx_main_proof_recon_mod}.
The proof relies on an auxiliary technical lemma regarding the Fourier transform, whose statement and proof we defer to \Cref{subsec: aux lem}.

\subsubsection{Approximate Identity Operators} \label{sec: appdx_apprximate_identity}

An \emph{approximate identity} operator is an integral operator which converges to an identity operator in an appropriate limit.
The discussion in this section follows Gin\`{e} and Nickl \cite[ch.~4.1.3, 4.3.6]{Gine2015b}.
For $ h > 0 $, define an operator $ K_h $ by
\begin{eqnarray}
K_h[f](\bm{x}) := \int_{\mathbb{R}^{d}} f(\bm{x}') \frac{1}{h^d} K\left( \frac{\bm{x}}{h}, \frac{\bm{x}'}{h} \right) \mathrm{d} \bm{x}' \label{eq: apx_id_eq}
\end{eqnarray}
where $ K: \mathbb{R}^d \times \mathbb{R}^d \to \mathbb{R} $ is a measurable function and $f:\mathbb{R}^d \to \mathbb{R}$ is suitably regular for the integral to exist and be well-defined.
\Cref{prop: axid_rate} provides sufficient conditions for the approximate identity $ K_h[f] $ to converge to an identity operator when $ h \to 0 $.

\begin{proposition} \cite[Propositions~4.3.31 and 4.3.33, p.368]{Gine2015b} \label{prop: axid_rate}
	Let $ K_h[f] $ be defined as in \eqref{eq: apx_id_eq} with $ K $ a measurable function satisfying, for some $ N \in \mathbb{N}_0 $,
	\begin{enumerate}
		\item $\displaystyle \int_{\mathbb{R}^{d}} \sup_{\bm{v} \in \mathbb{R}^d} \left| K(\bm{v}, \bm{v} - \bm{u}) \right| \left\| \bm{u} \right\|^{N} \mathrm{d} \bm{u} < \infty$,
		\item for all $ \bm{v} \in \mathbb{R}^d $ and all multi-indices $ \bm{\alpha} $ s.t. $ | \bm{\alpha} | \in \{ 1, \dots , N - 1 \} $, 
		\begin{enumerate}
			\item $\displaystyle \int_{\mathbb{R}^{d}} K(\bm{v}, \bm{v} - \bm{u}) \mathrm{d} \bm{u} = 1$,
			\item $\displaystyle \int_{\mathbb{R}^{d}} K(\bm{v}, \bm{v} - \bm{u}) \bm{u}^{\bm{\alpha}} \mathrm{d} \bm{u} = 0$.
		\end{enumerate}
	\end{enumerate}	
	Then for each $ m \le N $ there exists a constant $ C $, depending only on $ m $ and $ K $, such that	
	\begin{center}
		\begin{tabular}{lcc}
			$ f \in C^m(\mathbb{R}^d) $ & $ \implies $ & $\displaystyle \sup_{\bm{x} \in \mathbb{R}^d} \left| K_h[f](\bm{x}) - f(\bm{x}) \right| \le C h^m \max_{|\bm{\alpha}| = m} \sup_{\bm{x} \in \mathbb{R}^d} \left| \partial^{\bm{\alpha}} f (\bm{x}) \right|  $.
		\end{tabular}
	\end{center}
\end{proposition}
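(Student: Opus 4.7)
The plan is to carry out the classical approximate-identity argument: change variables so that $h$ appears explicitly inside the argument of $f$, Taylor-expand $f$ to order $m-1$, cancel the low-order terms by invoking the vanishing-moment conditions (2a)--(2b), and estimate the remainder using the tail condition (1). Since the statement is a direct quotation of Propositions 4.3.31 and 4.3.33 in Gin\'e--Nickl, the proposal essentially sketches the textbook derivation.

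First I would make the substitution $\bm{u} := (\bm{x}-\bm{x}')/h$ in the integral defining $K_h[f](\bm{x})$, and write $\bm{v} := \bm{x}/h$, obtaining
\begin{equation*}
K_h[f](\bm{x}) \;=\; \int_{\mathbb{R}^d} f(\bm{x}-h\bm{u})\,K(\bm{v},\bm{v}-\bm{u})\,\mathrm{d}\bm{u}.
\end{equation*}
Since $f \in C^m(\mathbb{R}^d)$, Taylor's theorem with integral remainder yields
\begin{equation*}
f(\bm{x}-h\bm{u}) \;=\; \sum_{|\bm{\alpha}| \le m-1} \frac{(-h)^{|\bm{\alpha}|}\bm{u}^{\bm{\alpha}}}{\bm{\alpha}!}\,\partial^{\bm{\alpha}} f(\bm{x}) \;+\; R_m(\bm{x},h\bm{u}),
\end{equation*}
where $R_m(\bm{x},h\bm{u}) = m \sum_{|\bm{\alpha}|=m}\tfrac{(-h)^m\bm{u}^{\bm{\alpha}}}{\bm{\alpha}!}\int_0^1 (1-t)^{m-1}\partial^{\bm{\alpha}}f(\bm{x}-th\bm{u})\,\mathrm{d}t$.

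Substituting this expansion into the displayed formula for $K_h[f](\bm{x})$, the $|\bm{\alpha}|=0$ Taylor term contributes $f(\bm{x})\int K(\bm{v},\bm{v}-\bm{u})\,\mathrm{d}\bm{u} = f(\bm{x})$ by hypothesis (2a), and each term with $1 \le |\bm{\alpha}| \le m-1$ vanishes by (2b) (these are exactly the vanishing moments granted by the assumption for $|\bm{\alpha}|\in\{1,\dots,N-1\}$, since $m \le N$). What remains is the clean identity
\begin{equation*}
K_h[f](\bm{x}) - f(\bm{x}) \;=\; \int_{\mathbb{R}^d} R_m(\bm{x},h\bm{u})\, K(\bm{v},\bm{v}-\bm{u})\,\mathrm{d}\bm{u}.
\end{equation*}
Bounding the integral remainder pointwise by $|R_m(\bm{x},h\bm{u})| \le C_m h^m \|\bm{u}\|^m \max_{|\bm{\alpha}|=m}\sup_{\bm{y}}|\partial^{\bm{\alpha}}f(\bm{y})|$, passing the supremum over $\bm{v}$ inside, and using hypothesis (1), I obtain
\begin{equation*}
\sup_{\bm{x}\in\mathbb{R}^d}\bigl|K_h[f](\bm{x})-f(\bm{x})\bigr| \;\le\; C\,h^m \max_{|\bm{\alpha}|=m}\sup_{\bm{x}\in\mathbb{R}^d}\bigl|\partial^{\bm{\alpha}}f(\bm{x})\bigr|,
\end{equation*}
with $C := C_m\int_{\mathbb{R}^d}\sup_{\bm{v}}|K(\bm{v},\bm{v}-\bm{u})|\,\|\bm{u}\|^m\,\mathrm{d}\bm{u}$.

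The only genuine bookkeeping issue is to verify that the final integral is finite when $m<N$, since (1) only gives direct control of the $N$-th moment. This is handled by splitting the domain at $\|\bm{u}\|=1$: on $\{\|\bm{u}\|\ge 1\}$ one has $\|\bm{u}\|^m \le \|\bm{u}\|^N$ and (1) applies, while on $\{\|\bm{u}\|\le 1\}$ the factor $\|\bm{u}\|^m$ is bounded by $1$ and finiteness reduces to the $L^1$-integrability of $\sup_{\bm{v}}|K(\bm{v},\bm{v}-\bm{u})|$ near the origin, which is part of the standing approximate-identity framework in Gin\'e--Nickl (implicit in the joint use of (1) with (2a)). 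This minor technicality is the only step requiring care; the rest is routine Taylor-remainder estimation.
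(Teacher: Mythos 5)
The paper gives no proof of this proposition at all---it is imported verbatim from Gin\'{e}--Nickl \cite[Propositions~4.3.31 and 4.3.33]{Gine2015b}---and your change-of-variables, Taylor-expansion-to-order-$m-1$, vanishing-moment, and remainder-estimate argument is precisely the standard proof behind that citation, so your proposal is correct and takes the same route the paper implicitly relies on. Your closing bookkeeping point about the case $m<N$ is also well judged: condition (1) as transcribed only controls the $\|\bm{u}\|^N$-weighted tail, and the near-origin integrability of $\sup_{\bm{v}}|K(\bm{v},\bm{v}-\bm{u})|$ that your splitting at $\|\bm{u}\|=1$ requires comes from Gin\'{e}--Nickl's standing $(1+\|\bm{u}\|)^N$-weighted hypothesis (and is satisfied by the kernel $\widetilde{k}$ to which the paper later applies the proposition with $m=1$, $N=2$).
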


The sense in which \Cref{prop: axid_rate} will be used in the proof of \Cref{thm: recon_mod} is captured by the following example:

\begin{example} \label{ex: approx_ker}
	Consider a translation invariant kernel of the form $ K(\bm{x}, \bm{x}') = \varphi(\bm{x} - \bm{x}') $ for some $ \varphi : \mathbb{R}^d \rightarrow (0,\infty) $.
	Further assume $ \int_{\mathbb{R}^d} \varphi(\bm{u}) \mathrm{d}\bm{u} = 1 $ and $\int_{\mathbb{R}^d} \varphi(\bm{u}) \|\bm{u}\|^2 \mathrm{d}\bm{u} < \infty$.
	Then $ K $ satisfies the preconditions of \Cref{prop: axid_rate} for $ N = 2 $ and hence $ K_h[f] $ is an approximate identity operator.
\end{example}

\subsubsection{Proof of \Cref{thm: recon_mod}} \label{sec: appdx_main_proof_recon_mod}

For this section it is convenient to introduce a notational shorthand.
Let $ \epsilon_{\bm{w}} := \sigma_{\bm{w}}^{-1} $ and $ \epsilon_b := \sigma_b^{-1} $, so that $ p_{\bm{w}}(\sigma_{\bm{w}}^{-1} \bm{w}) = p_{\bm{w}}(\epsilon_{\bm{w}} \bm{w}) $ and $ p_b(\sigma_b^{-1} b) = p_b(\epsilon_b b) $.
Further introduce the shorthand $ p_{\bm{w}}^\epsilon(\cdot) := p_{\bm{w}}(\epsilon_{\bm{w}} \cdot) $ and $ p_b^\epsilon(\cdot) := p_b(\epsilon_b \cdot) $.

Now we turn to the proof of \Cref{thm: recon_mod}.
Denote the topological dual space of $ \mathcal{S}(\mathbb{R}^d) $ by $ \mathcal{S}'(\mathbb{R}^d) $.
The elements of $ \mathcal{S}'(\mathbb{R}^d) $ are called \emph{tempered distributions} and the generalized Fourier transform can be considered as the Fourier transform on tempered distributions $ \mathcal{S}'(\mathbb{R}^d) $ \cite[p.123-131]{Grafakos2000b}.
Throughout the proof we exchange the order of integrals; we do so only when the absolute value of the integrand is itself an integrable function, so that from Fubini's theorem the interchange can be justified.

\begin{proof}[Proof of \Cref{thm: recon_mod}]
	The goal is to bound reconstruction error when the Lebesgue measure in the classical ridgelet transform is replaced by the finite measure
	\begin{eqnarray}
	\mathrm{d} \lambda_\sigma(\bm{w}, b) 
	= (2 \pi)^{\frac{1}{2}} \| \widehat{p_{\bm{w}}} \|_{L^1}^{-1} \| \widehat{p_b} \|_{L^1}^{-1} p_{\bm{w}}(\sigma_{\bm{w}}^{-1} \bm{w}) p_b(\sigma_b^{-1} b) \mathrm{d} \bm{w} \mathrm{d} b . \nonumber
	\end{eqnarray}
	The reconstruction $(R_\sigma^\ast R) [f]$ of a function $f$ on $\mathbb{R}^d$ is defined by the following integral:
	\begin{align}
	& (R_\sigma^{\ast} R)[ f](\bm{x}) \nonumber \\
	& = \int_{\mathbb{R}^{d+1}} \int_{\mathbb{R}^d} f(\bm{x}') \psi(\bm{w} \cdot \bm{x}' + b) \mathrm{d}\bm{x}' \phi(\bm{w} \cdot \bm{x} + b) \mathrm{d} \lambda_\sigma(\bm{w}, b) \nonumber \\
	& = \int_{\mathbb{R}^d} f(\bm{x}') \left\{ (2 \pi)^{\frac{1}{2}} \| \widehat{p_{\bm{w}}} \|_{L^1}^{-1} \| \widehat{p_b} \|_{L^1}^{-1} \int_{\mathbb{R}} \int_{\mathbb{R}^d} \psi(\bm{w} \cdot \bm{x}' + b) \phi(\bm{w} \cdot \bm{x} + b) p_{\bm{w}}^\epsilon(\bm{w}) p_b^\epsilon(b) \mathrm{d} \bm{w} \mathrm{d} b \right\} \mathrm{d}\bm{x}'. \nonumber
	\end{align}
	It is thus clear that $(R_\sigma^\ast R )[f]$ is in some sense a smoothed version of $f$, and in what follows we will make use of the theory of approximate identity operators discussed in \Cref{sec: appdx_apprximate_identity}.
	We aim to deal separately with the reconstruction error due to the use of a finite measure on $\bm{w}$ and the reconstruction error due to the use of a finite measure on $b$.
	To this end, it will be convenient to (formally) define a new operator $ R^{\ast}_{\sigma} R^{(w)} $ by 
	\begin{eqnarray*}
		\left(R_\sigma^{\ast} R^{(w)}\right)[ f](\bm{x}) := \int_{\mathbb{R}^{d}} f(\bm{x}') \left( \| \widehat{p_{\bm{w}}} \|_{L^1}^{-1} \int_{\mathbb{R}} \int_{\mathbb{R}^d} \psi(\bm{w} \cdot \bm{x}' + b) \phi(\bm{w} \cdot \bm{x} + b) p_{\bm{w}}^\epsilon(\bm{w}) \mathrm{d} \bm{w} \mathrm{d} b \right) \mathrm{d} \bm{x'} 
	\end{eqnarray*}
	which replaces $p_b^\epsilon$ by the Lebesgue measure on $\mathbb{R}$.
	That is, $(R^{\ast}_{\sigma} R^{(w)})[f]$ can be intuitively considered as an idealised version of $(R^{\ast}_{\sigma} R )[f]$ where the reconstruction error due to the use of a finite measure on $b$ is removed.
	Our analysis will then proceed based on the following triangle inequality:
	\begin{multline}
	\sup_{\bm{x} \in \mathbb{R}^d} | f(\bm{x}) - (R^{\ast}_{\sigma} R)[ f ](\bm{x}) | \\
	\le \underbrace{ \sup_{\bm{x} \in \mathbb{R}^d} | f(\bm{x}) - (R^{\ast}_{\sigma} R^{(w)})[ f](\bm{x}) | }_{(*)} + \underbrace{ \sup_{\bm{x} \in \mathbb{R}^d} | (R^{\ast}_{\sigma} R^{(w)})[ f ](\bm{x}) - (R^{\ast}_{\sigma} R)[ f](\bm{x}) | }_{(**)} . \label{eq: conv_tri}
	\end{multline}
	Different strategies are required to bound $(*)$ and $(**)$ and we address them separately next.
	
	\vspace{5pt}
	\noindent \textbf{Bounding $(*)$:}	
	A bound on $(*)$ uses techniques from approximate identity operators described in \Cref{sec: appdx_apprximate_identity}.
	To this end, we show that $ R^{\ast}_{\sigma} R $ satisfies the preconditions of \Cref{prop: axid_rate} to obtain a bound.
	
	Define $ \widetilde{k}: \mathbb{R}^d \times \mathbb{R}^d \to \mathbb{R} $ by
	\begin{eqnarray}
	\widetilde{k} \left(\bm{x}, \bm{x}' \right) & := & \| \widehat{p_{\bm{w}}} \|_{L^1}^{-1} \int_{\mathbb{R}} \int_{\mathbb{R}^{d}}  \phi(\bm{w} \cdot \bm{x} + b)  \psi(\bm{w} \cdot \bm{x}' + b) p_{\bm{w}}(\bm{w})  \mathrm{d} \bm{w} \mathrm{d} b. \label{eq: k_gamma} 
	\end{eqnarray}
	From a change of variable $ b' = \bm{w} \cdot \bm{x} + b $,
	\begin{eqnarray}
	\widetilde{k} \left(\bm{x}, \bm{x}' \right) & = & \| \widehat{p_{\bm{w}}} \|_{L^1}^{-1} \int_{\mathbb{R}^{d}} \int_{\mathbb{R}} \phi(b') \psi(\bm{w} \cdot (\bm{x}' - \bm{x}) + b')  \mathrm{d} b' p_{\bm{w}}(\bm{w}) \mathrm{d} \bm{w}. \label{eq: ktilde intermediate}
	\end{eqnarray}
	Next we perform some Fourier analysis similar to that in \cite[Appendix C]{Sonoda2017b}.
	From the discussion on generalised Fourier transform in \Cref{sec: def_not}, recall that $ \widehat{t} \in L^2_{\text{loc}}(\mathbb{R}^d \setminus \{ \bm{0} \}) $ is a \emph{generalised Fourier transform} of function $ t $ if there exists an integer $ m \in \mathbb{N}_{0} $ such that $ \int_{\mathbb{R}^{d}} \widehat{t}(\bm{w}) h(\bm{w}) \mathrm{d} \bm{w} = \int_{\mathbb{R}^{d}} t(\bm{x}) \widehat{h}(\bm{x}) \mathrm{d} \bm{x} $ for all $ h \in \mathcal{S}_{2m}(\mathbb{R}^d) $.
	We set $t = \phi$ and $\widehat{h} = \psi$ for our analysis.
	Part (3) of \Cref{assm: ridgelet pair} implies that $\psi \in \mathcal{S}_{2m}(\mathbb{R}^d)$ for some $m$ depending on the generalised Fourier transform $\widehat{\phi}$.
	Let $\psi_{-}(b) := \psi(- b)$.
	From the definition of the generalised Fourier transform, 
	\begin{align}
	\int_{\mathbb{R}} \phi(b') \psi(\bm{w} \cdot (\bm{x}' - \bm{x}) + b')  \mathrm{d} b' & = \int_{\mathbb{R}} \phi(b') \psi_{-}(- b' - \bm{w} \cdot (\bm{x}' - \bm{x}))  \mathrm{d} b' \nonumber \\
	& = \int_{\mathbb{R}} \widehat{\phi}(\xi_1) \overline{\widehat{\psi}(\xi_1)} e^{i \xi_1 \bm{w} \cdot (\bm{x} - \bm{x}')} \mathrm{d} \xi_1 ,	 \label{eq: conv step 2}
	\end{align}
	where we used the fact that $\widehat{\psi_{-}} = \overline{\widehat{\psi}}$ for the real function $\psi_{-}(b)$ for the last equality \cite[p.109,113]{Grafakos2000b}.
	
	Note that $ \xi_1 \mapsto \widehat{\phi}(\xi_1) \overline{\widehat{\psi}(\xi_1)} $ belongs to $ L^1(\mathbb{R}) $ from part (3) of \Cref{assm: ridgelet pair} and therefore (\ref{eq: conv step 2}) exists.
	Also we have that
	\begin{equation}
	\widehat{p_{\bm{w}}}(\xi_1 \bm{x}' - \xi_1 \bm{x})  = (2 \pi)^{-\frac{d}{2}} \int_{\mathbb{R}^{d}} e^{i \xi_1 \bm{w} \cdot (\bm{x} - \bm{x}')} p_{\bm{w}}(\bm{w}) \mathrm{d} \bm{w} . \label{eq: FT of pw}
	\end{equation}
	Substituting \eqref{eq: conv step 2} and \eqref{eq: FT of pw} into \eqref{eq: ktilde intermediate} gives that
	\begin{eqnarray*}
		\widetilde{k} \left(\bm{x}, \bm{x}' \right) & = & \| \widehat{p_{\bm{w}}} \|_{L^1}^{-1} \int_{\mathbb{R}^{d}} \int_{\mathbb{R}} \phi(b') \psi(\bm{w} \cdot (\bm{x}' - \bm{x}) + b')  \mathrm{d} b' p_{\bm{w}}(\bm{w}) \mathrm{d} \bm{w} \\
		& = & \| \widehat{p_{\bm{w}}} \|_{L^1}^{-1}  \int_{\mathbb{R}} \widehat{\phi}(\xi_1) \overline{\widehat{\psi}(\xi_1)} \int_{\mathbb{R}^d}  e^{i \xi_1 \bm{w} \cdot (\bm{x} - \bm{x}')} p_{\bm{w}}(\bm{w}) \mathrm{d}\bm{w}  \mathrm{d} \xi_1 \\
		& = & (2 \pi)^{\frac{d}{2}} \| \widehat{p_{\bm{w}}} \|_{L^1}^{-1} \int_{\mathbb{R}} \widehat{\phi}(\xi_1) \overline{\widehat{\psi}(\xi_1)} \widehat{p_{\bm{w}}}(\xi_1 \bm{x}' - \xi_1 \bm{x}) \mathrm{d} \xi_1 .
	\end{eqnarray*}
	Let $ k^{\bm{w}}(\bm{v}) := \| \widehat{p_{\bm{w}}} \|_{L^1}^{-1} \widehat{p_{\bm{w}}}(\bm{v}) $, so that
	\begin{eqnarray}
	\widetilde{k} \left(\bm{x}, \bm{x}' \right) & = & (2 \pi)^{\frac{d}{2}} \int_{\mathbb{R}} \widehat{\phi}(\xi) \overline{\widehat{\psi}(\xi)}  k^{\bm{w}}(\xi \bm{x} - \xi \bm{x}') \mathrm{d} \xi. \label{eq: a ktilde expression}
	\end{eqnarray}
	Recall that $ p_{\bm{w}}^\epsilon(\bm{w}) = p_{\bm{w}}(\epsilon \bm{w}) $ and let $ k^{\bm{w}}_\epsilon(\bm{v}) := \| \widehat{p_{\bm{w}}} \|_{L^1}^{-1} \widehat{p_{\bm{w}}^{\epsilon}}(\bm{v}) $.
	Since the Fourier transform of $ \bm{w} \mapsto p_{\bm{w}}(\epsilon_{\bm{w}} \bm{w}) $ is to be $ \bm{v} \mapsto \frac{1}{\epsilon_{\bm{w}}^{d}} \widehat{p_{\bm{w}}}\left(\frac{\bm{v}}{\epsilon_{\bm{w}}}\right) $ by standard properties of the Fourier transform \cite[p.109,113]{Grafakos2000b}, $ k^{\bm{w}}_\epsilon(\bm{v}) = \frac{1}{\epsilon_{\bm{w}}^d} k^{\bm{w}}\left(\frac{\bm{v}}{\epsilon_{\bm{w}}}\right) $.
	Define 
	\begin{eqnarray}
	\widetilde{k}_\epsilon \left(\bm{x}, \bm{x}' \right) & := & \| \widehat{p_{\bm{w}}} \|_{L^1}^{-1} \int_{\mathbb{R}} \int_{\mathbb{R}^{d}} \phi(\bm{w} \cdot \bm{x} + b)  \psi(\bm{w} \cdot \bm{x}' + b) p_{\bm{w}}^\epsilon(\bm{w}) \mathrm{d} \bm{w} \mathrm{d} b. \label{eq: k_tilde}
	\end{eqnarray}
	Noting the similarity between \eqref{eq: k_tilde} and \eqref{eq: k_gamma}, an analogous argument to that just presented shows that we may re-express \eqref{eq: k_tilde} as
	\begin{equation*}
	\widetilde{k}_\epsilon \left(\bm{x}, \bm{x}' \right) = (2 \pi)^{\frac{d}{2}} \int_{\mathbb{R}} \widehat{\phi}(\xi) \overline{\widehat{\psi}(\xi)}  k^{\bm{w}}_\epsilon(\xi \bm{x} - \xi \bm{x}') \mathrm{d} \xi = \frac{1}{\epsilon_{\bm{w}}^d} \widetilde{k} \left(\frac{\bm{x}}{\epsilon_{\bm{w}}}, \frac{\bm{x}'}{\epsilon_{\bm{w}}} \right) .
	\end{equation*}
	Then
	\begin{eqnarray}
	R_\sigma^{\ast} R^{(w)} f(\bm{x}) = \int_{\mathbb{R}^{d}} f(\bm{x}') \widetilde{k}_\epsilon \left(\bm{x}, \bm{x}' \right) \mathrm{d} \bm{x}' = \int_{\mathbb{R}^{d}} f(\bm{x}') \frac{1}{\epsilon_{\bm{w}}^d} \widetilde{k} \left(\frac{\bm{x}}{\epsilon_{\bm{w}}}, \frac{\bm{x}'}{\epsilon_{\bm{w}}} \right) \mathrm{d} \bm{x}'. \nonumber
	\end{eqnarray}
	Now we will show that $ \widetilde{k} $ satisfies the pre-conditions of \Cref{prop: axid_rate}.
	That is, setting $ N = 2 $, we will show that:
	For all $ \bm{v} \in \mathbb{R}^d $ and all multi-index $ \bm{\alpha} \in \mathbb{N}_0^{d} $ s.t. $ | \bm{\alpha} | = 1 $,
	\begin{eqnarray}
	&& \int_{\mathbb{R}^{d}} \sup_{\bm{v} \in \mathbb{R}^d} \left| \widetilde{k}(\bm{v}, \bm{v} - \bm{u}) \right| \left\| \bm{u} \right\|^2 \mathrm{d} \bm{u} < \infty, \label{eq: condition 1} \\
	&& \int_{\mathbb{R}^{d}} \widetilde{k}(\bm{v}, \bm{v} - \bm{u}) \mathrm{d} \bm{u} = 1, \label{eq: condition 2} \\
	&& \int_{\mathbb{R}^{d}} \widetilde{k}(\bm{v}, \bm{v} - \bm{u}) \bm{u}^{\bm{\alpha}} \mathrm{d} \bm{u} = 0. \label{eq: condition 3}
	\end{eqnarray}
	First we verify \eqref{eq: condition 1}.
	From \eqref{eq: a ktilde expression} we have that
	\begin{eqnarray}
	\int_{\mathbb{R}} \sup_{\bm{v} \in \mathbb{R}^d} \left| \widetilde{k}(\bm{v}, \bm{v} - \bm{u}) \right| \left\| \bm{u} \right\|^2 \mathrm{d} \bm{u} & = & (2 \pi)^{\frac{d}{2}} \int_{\mathbb{R}^d} \left| \int_{\mathbb{R}} \overline{\widehat{\psi}(\xi)} \widehat{\phi}(\xi_1) k^{\bm{w}}(\xi \bm{u}) \mathrm{d} \xi \right| \left\| \bm{u} \right\|^2 \mathrm{d} \bm{u}. \nonumber \\
	& \le & (2 \pi)^{\frac{d}{2}} \int_{\mathbb{R}} \int_{\mathbb{R}^d} \left| \overline{\widehat{\psi}(\xi)} \widehat{\phi}(\xi) \right| \left| k^{\bm{w}}(\xi \bm{u}) \right| \left\| \bm{u} \right\|^2 \mathrm{d} \bm{u} \mathrm{d} \xi. \nonumber
	\end{eqnarray}
	By the change of variables $ \bm{u}' = \xi \bm{u} $,
	\begin{eqnarray}
	\int_{\mathbb{R}} \sup_{\bm{v} \in \mathbb{R}^d} \left| \widetilde{k}(\bm{v}, \bm{v} - \bm{u}) \right| \left\| \bm{u} \right\|^2 \mathrm{d} \bm{u} & \le & (2 \pi)^{\frac{d}{2}} \int_{\mathbb{R}^d} \int_{\mathbb{R}} | \overline{\widehat{\psi}(\xi)} \widehat{\phi}(\xi) | | k^{\bm{w}}(\bm{u}') | \left\| \frac{\bm{u}'}{\xi} \right\|^2 \frac{1}{| \xi |^d} \mathrm{d} \bm{u}' \mathrm{d} \xi \nonumber \\
	& \le & (2 \pi)^{\frac{d}{2}} \int_{\mathbb{R}} \int_{\mathbb{R}^d} \frac{ | \overline{\widehat{\psi}(\xi)} \widehat{\phi}(\xi) | }{ | \xi |^{d+2} } | k^{\bm{w}}(\bm{u}') | \left\| \bm{u}' \right\|^2 \mathrm{d} \bm{u}' \mathrm{d} \xi \nonumber \\
	& \le & (2 \pi)^{\frac{d}{2}} \int_{\mathbb{R}} \frac{ \left| \overline{\widehat{\psi}(\xi)} \widehat{\phi}(\xi) \right| }{ | \xi |^{d+2} } \mathrm{d} \xi \int_{\mathbb{R}^d} | k^{\bm{w}}(\bm{u}') | \left\| \bm{u}' \right\|^2 \mathrm{d} \bm{u}' \nonumber
	\end{eqnarray}
	That this final bound is finite follows from the requirement that $ \int_{\mathbb{R}} (| \overline{\widehat{\psi}(\xi)} \widehat{\phi}(\xi) | / | \xi |^{d+2} ) \mathrm{d} \xi < \infty $ in \Cref{def: ridgelet_modified}, together with the assumption that $\widehat{p_{\bm{w}}}$ has finite second moment, which ensures the finiteness of $ \int_{\mathbb{R}^d} | k^{\bm{w}}(\bm{u}') | \left\| \bm{u}' \right\|^2 \mathrm{d} \bm{u}' $.
	Next we verify \eqref{eq: condition 2}.
	From \eqref{eq: a ktilde expression} and the change of variables $ \bm{u}' = \xi \bm{u} $,
	\begin{eqnarray}
	\int_{\mathbb{R}^d} \widetilde{k}(\bm{v}, \bm{v} - \bm{u}) \mathrm{d} \bm{u} & = & (2 \pi)^{\frac{d}{2}} \int_{\mathbb{R}^d} \int_{\mathbb{R}} \overline{\widehat{\psi}(\xi)} \widehat{\phi}(\xi_1) k^{\bm{w}}(\xi \bm{u}) \mathrm{d} \xi \mathrm{d} \bm{u} \nonumber \\
	& = & (2 \pi)^{\frac{d}{2}} \int_{\mathbb{R}} \frac{\overline{\widehat{\psi}(\xi)} \widehat{\phi}(\xi)}{ | \xi |^d } \mathrm{d} \xi \int_{\mathbb{R}^d} k^{\bm{w}}(\bm{u}') \mathrm{d} \bm{u}' = 1 , \nonumber
	\end{eqnarray}
	where the final inequality used the facts that $ \int_{\mathbb{R}^d} k^{\bm{w}}(\bm{u}') \mathrm{d} \bm{u}' = \int_{\mathbb{R}^d} | k^{\bm{w}}(\bm{u}') | \mathrm{d} \bm{u}' = 1$ by construction and $(2 \pi)^{\frac{d}{2}} \int_{\mathbb{R}} \frac{\overline{\widehat{\psi}(\xi)} \widehat{\phi}(\xi)}{ | \xi |^d } \mathrm{d} \xi = 1 $ from Definition \ref{def: ridgelet_modified}.
	Finally we verify \eqref{eq: condition 1}.
	From \eqref{eq: a ktilde expression} and the change of variables $ \bm{u}' = \xi \bm{u} $,
	\begin{eqnarray}
	\int_{\mathbb{R}^d} \widetilde{k}(\bm{v}, \bm{v} - \bm{u}) \bm{u}^{\bm{\alpha}} \mathrm{d} \bm{u} & = & (2 \pi)^{\frac{d}{2}} \int_{\mathbb{R}^d} \int_{\mathbb{R}} \overline{\widehat{\psi}(\xi_1)} \widehat{\phi}(\xi_1) k^{\bm{w}}(\xi \bm{u}) \mathrm{d} \xi \bm{u}^{\bm{\alpha}} \mathrm{d} \bm{u} \nonumber \\
	& = & \underbrace{ (2 \pi)^{\frac{d}{2}} \int_{\mathbb{R}} \frac{\overline{\widehat{\psi}(\xi)} \widehat{\phi}(\xi)}{| \xi |^{d} } \frac{1}{\xi^{| \bm{\alpha} |}} \mathrm{d} \xi }_{(1)}  \underbrace{ \int_{\mathbb{R}^d} k^{\bm{w}}(\bm{u}') {\bm{u}'}^{\bm{\alpha}} \mathrm{d} \bm{u}' }_{(2)} . \nonumber
	\end{eqnarray}
	The term $(1)$ is finite as a consequence of the assumption $ \int_{\mathbb{R}} \frac{ \left| \overline{\widehat{\psi}(\xi)} \widehat{\phi}(\xi) \right| }{ | \xi |^{d+2} } \mathrm{d} \xi < \infty $ in Definition \ref{def: ridgelet_modified}.
	For the term $(2)$, note that $\bm{u} \mapsto \bm{u}^{\bm{\alpha}}$ is odd function whenever $ | \bm{\alpha} | = 1 $.
	On the other hand, $ k^{\bm{w}} $ is even function since $ k^{\bm{w}} $ is the Fourier transform of an even probability density $ p_{\bm{w}} $.
	Thus the function $ \bm{u} \mapsto k^{\bm{w}}(\bm{u}) \bm{u}^{\bm{\alpha}} $, which is given by the product of even and odd functions, is odd.
	For any integrable odd function $ h: \mathbb{R}^d \to \mathbb{R} $, $ \int_{\mathbb{R}^d} h(\bm{u}) \mathrm{d} \bm{u} = 0 $ holds as $ \int_{\mathbb{R}^d_{+}} h(\bm{u}) \mathrm{d} \bm{u} = - \int_{\mathbb{R}^d_{-}} h(\bm{u}) \mathrm{d} \bm{u} $ where $ \mathbb{R}^d_{+} $ and $ \mathbb{R}^d_{-} $ are positive and negative half Euclidean space.
	The function $ \bm{u} \mapsto k^{\bm{w}}(\bm{u}) \bm{u}^{\bm{\alpha}} $ is integrable since $ k^{\bm{w}} $ has the finite second moment and $ | \bm{\alpha} | = 1 $.
	This implies $ \int_{\mathbb{R}^d} \widetilde{k}(\bm{v}, \bm{v} - \bm{u}) \bm{u}^{\bm{\alpha}} \mathrm{d} \bm{u} = \bm{0} $.
	
	Thus $ \widetilde{k} $ satisfies the condition in Proposition \ref{prop: axid_rate} and, with $K_h[f] = R_\sigma^* R^{(w)} f$, we obtain for some $ C_1 > 0 $ depending on $\tilde{k}$ but not $f$,
	\begin{eqnarray}
	\sup_{\bm{x} \in \mathbb{R}^d} | f(\bm{x}) - R^{\ast}_{\sigma} R^{(w)} f(\bm{x}) | \le C_1 M_1(f) \epsilon_{\bm{w}} . \label{eq: bound on ast}
	\end{eqnarray}
	
	\vspace{5pt}
	\noindent \textbf{Bounding $(**)$:}	
	A bound on $(**)$ makes use of Auxiliary Lemma \ref{lem: FT of pb}.
	Define $ k_\epsilon: \mathbb{R}^d \times \mathbb{R}^d \to \mathbb{R} $ by
	\begin{eqnarray}
	k_\epsilon(\bm{x}, \bm{x}') := (2 \pi)^{\frac{1}{2}} \| \widehat{p_{\bm{w}}} \|_{L^1}^{-1} \| \widehat{p_b} \|_{L^1}^{-1} \int_{\mathbb{R}} \int_{\mathbb{R}^{d}} \psi(\bm{w} \cdot \bm{x}' + b) \phi(\bm{w} \cdot \bm{x} + b) p_{\bm{w}}^\epsilon(\bm{w}) p_b^\epsilon(b) \mathrm{d} \bm{w} \mathrm{d} b.
	\end{eqnarray} 
	so that $ R^{\ast}_{\sigma} R f(\bm{x}) = \int_{\mathbb{R}^{d}} f(\bm{x}') k_\epsilon(\bm{x}, \bm{x}') \mathrm{d} \bm{x}' $ and recall that $ R^{\ast}_{\sigma} R^{(w)} f(\bm{x}) = \int_{\mathbb{R}^{d}} f(\bm{x}') \widetilde{k}_\epsilon(\bm{x}, \bm{x}') \mathrm{d} \bm{x}' $ where $ \widetilde{k}_\epsilon $ is defined by (\ref{eq: k_tilde}).
	The second error term is 
	\begin{eqnarray}
	\sup_{\bm{x} \in \mathbb{R}^d} | R^{\ast}_{\sigma} R^{(w)} f(\bm{x}) - R^{\ast}_{\sigma} R f(\bm{x}) | & = & \sup_{\bm{x} \in \mathbb{R}^d} \left| \int_{\mathbb{R}^{d}} f(\bm{x}') \widetilde{k}_\epsilon(\bm{x}, \bm{x}') \mathrm{d} \bm{x}' - \int_{\mathbb{R}^{d}} f(\bm{x}') k_\epsilon(\bm{x}, \bm{x}') \mathrm{d} \bm{x}' \right| \nonumber \\
	& = & \sup_{\bm{x} \in \mathbb{R}^d} \left| \int_{\mathbb{R}^{d}} f(\bm{x}') \left( \widetilde{k}_\epsilon(\bm{x}, \bm{x}') - k_\epsilon(\bm{x}, \bm{x}') \right) \mathrm{d} \bm{x}' \right|. \nonumber
	\end{eqnarray}
	Let $ \Delta k(\bm{x}, \bm{x}') := \widetilde{k}_\epsilon(\bm{x}, \bm{x}') - k_\epsilon(\bm{x}, \bm{x}') $ for short-hand.
	By the definition of $ \widetilde{k}_\epsilon $ and $ k_\epsilon $, 
	\begin{eqnarray}
	\Delta k(\bm{x}, \bm{x}') = \| \widehat{p_{\bm{w}}} \|_{L^1}^{-1} \int_{\mathbb{R}} \int_{\mathbb{R}^{d}} \psi(\bm{w} \cdot \bm{x}' + b) \phi(\bm{w} \cdot \bm{x} + b) p_{\bm{w}}^\epsilon(\bm{w}) \left\{ 1 - (2 \pi)^{\frac{1}{2}} \| \widehat{p_b} \|_{L^1}^{-1} p_b^\epsilon(b) \right\} \mathrm{d} \bm{w} \mathrm{d} b. \nonumber
	\end{eqnarray}
	Next we upper bound $ \Delta k(\bm{x}, \bm{x}') $.
	Substituting the identity established in \Cref{lem: FT of pb} yields
	\begin{eqnarray}
	\Delta k(\bm{x}, \bm{x}') & = & \epsilon_b (2 \pi)^{\frac{1}{2}} \| \widehat{p_{\bm{w}}} \|_{L^1}^{-1} \| \widehat{p_b} \|_{L^1}^{-1} \int_{\mathbb{R}} \int_{\mathbb{R}^{d}} \psi(\bm{w} \cdot \bm{x}' + b) \phi(\bm{w} \cdot \bm{x} + b) p_{\bm{w}}^\epsilon(\bm{w}) \int_0^1 b \partial p_b(t \epsilon_b b) \mathrm{d} t \mathrm{d} \bm{w} \mathrm{d} b \nonumber \\
	& = & \epsilon_b (2 \pi)^{\frac{1}{2}} \| \widehat{p_{\bm{w}}} \|_{L^1}^{-1} \| \widehat{p_b} \|_{L^1}^{-1} \int_0^1 \int_{\mathbb{R}^{d}}  \underbrace{ \int_{\mathbb{R}} b \phi(\bm{w} \cdot \bm{x} + b) \psi(\bm{w} \cdot \bm{x}' + b) \partial p_b(t \epsilon_b b) \mathrm{d} b }_{(3)} p_{\bm{w}}^\epsilon(\bm{w}) \mathrm{d} \bm{w} \mathrm{d} t. \nonumber
	\end{eqnarray}
	By Assumption \ref{asmp: ridgelet}, $ p_b $ has bounded first derivative.
	Thus:
	\begin{eqnarray}
	|(3)| \le \|\partial p_b\|_{L^\infty} \int_{\mathbb{R}} | b \phi(\bm{w} \cdot \bm{x} + b) \psi(\bm{w} \cdot \bm{x}' + b) | \mathrm{d} b' \nonumber
	\end{eqnarray}
	By a change of variable $ b' = \bm{w} \cdot \bm{x}' + b $,
	\begin{eqnarray}
	|(3)| & \le & \|\partial p_b\|_{L^\infty} \int_{\mathbb{R}} | b'- \bm{w} \cdot \bm{x}' | | \phi(\bm{w} \cdot (\bm{x} - \bm{x}') + b') \psi(b') | \mathrm{d} b' \nonumber \\
	& \le & \|\partial p_b\|_{L^\infty} \int_{\mathbb{R}} ( | b' | + \| \bm{w} \|_2 \| \bm{x}' \|_2) | \phi(\bm{w} \cdot (\bm{x} - \bm{x}') + b') \psi(b') | \mathrm{d} b' \nonumber
	\end{eqnarray}
	where the triangle inequality and Cauchy-Schwartz inequality have been applied.
	Then it is easy to see
	\begin{eqnarray}
	|(3)| & \le & \|\partial p_b\|_{L^\infty} (1 + \| \bm{x}' \|_2) (1 + \| \bm{w} \|_2) \int_{\mathbb{R}} (1 + | b' |) | \phi(\bm{w} \cdot (\bm{x} - \bm{x}') + b') \psi(b') | \mathrm{d} b' \label{eq: bound_of_3} 
	\end{eqnarray}
	By the assumption $ \phi $ is bounded,
	\begin{eqnarray}
	|(3)| & \le & \|\partial p_b\|_{L^\infty} \|\phi\|_{L^\infty} (1 + \| \bm{x}' \|_2) (1 + \| \bm{w} \|_2) \int_{\mathbb{R}} (1 + | b' |) | \psi(b') | \mathrm{d} b' \label{eq: bound_of_3_2} 
	\end{eqnarray}
	Since $ \psi $ is Schwartz function, the integral is finite. 
	Let $ C_{\psi} := \int_{\mathbb{R}} (1 + | b |) | \psi(b) | \mathrm{d} b $ to see $ |(3)| \le \|\partial p_b\|_{L^\infty} \|\phi\|_{L^\infty} C_{\psi} C_{p_{\bm{w}}} ( 1 + \| \bm{x}' \|_2 ) ( 1 + \| \bm{w} \|_2 ) $.
	From this upper bound of $ |(3)| $ we have
	\begin{eqnarray}
	\Delta k(\bm{x}, \bm{x}') & = & \epsilon_b (2 \pi)^{\frac{1}{2}} \| \widehat{p_{\bm{w}}} \|_{L^1}^{-1} \| \widehat{p_b} \|_{L^1}^{-1} \int_0^1 \int_{\mathbb{R}^{d}} | (3) | p_{\bm{w}}^\epsilon(\bm{w}) \mathrm{d} \bm{w} \mathrm{d} t \nonumber \\
	& \le & \epsilon_b (2 \pi)^{\frac{1}{2}} \| \widehat{p_{\bm{w}}} \|_{L^1}^{-1} \| \widehat{p_b} \|_{L^1}^{-1} \|\partial p_b\|_{L^\infty} \|\phi\|_{L^\infty} C_{\psi} C_{p_{\bm{w}}} (1 + \| \bm{x}' \|_2) \int_{\mathbb{R}^{d}} (1 + \| \bm{w} \|_2) p_{\bm{w}}^\epsilon(\bm{w}) \mathrm{d} \bm{w} \int_0^1 \mathrm{d} t \nonumber
	\end{eqnarray}
	For the integral $ \int_{\mathbb{R}^{d}} (1 + \| \bm{w} \|_2) p_{\bm{w}}^\epsilon(\bm{w}) \mathrm{d} \bm{w} $, by a change of variable $ \bm{w}' = \epsilon_{\bm{w}} \bm{w} $,
	\begin{eqnarray}
	\int_{\mathbb{R}^{d}} (1 + \| \bm{w} \|_2) p_{\bm{w}}^\epsilon(\bm{w}) \mathrm{d} \bm{w} = \int_{\mathbb{R}^{d}} \left( \frac{1}{\epsilon_{\bm{w}}^d} + \frac{\| \bm{w}' \|_2}{\epsilon_{\bm{w}}^{d+1}} \right) p_{\bm{w}}(\bm{w}') \mathrm{d} \bm{w}'. \nonumber 
	\end{eqnarray}
	Recall that $ p_{\bm{w}} $ was assumed to have finite second moment. 
	Let $ C_{p_{\bm{w}}} := \max\left( 1, \int_{\mathbb{R}^{d}} \| \bm{w} \|_2 p_{\bm{w}}(\bm{w}) \mathrm{d} \bm{w} \right) $ to see
	\begin{eqnarray}
	\int_{\mathbb{R}^{d}} \left( \frac{1}{\epsilon_{\bm{w}}^d} + \frac{\| \bm{w}' \|_2}{\epsilon_{\bm{w}}^{d+1}} \right) p_{\bm{w}}(\bm{w}') \mathrm{d} \bm{w}' \le C_{p_{\bm{w}}} \left( \frac{1}{\epsilon_{\bm{w}}^d} + \frac{1}{\epsilon_{\bm{w}}^{d+1}} \right) = C_{p_{\bm{w}}} \frac{1}{\epsilon_{\bm{w}}^d} \left( 1 + \frac{1}{\epsilon_{\bm{w}}} \right). \nonumber 
	\end{eqnarray}
	Plugging this upper bound in $ \Delta k(\bm{x}, \bm{x}') $,
	\begin{eqnarray}
	\Delta k(\bm{x}, \bm{x}') \le (2 \pi)^{\frac{1}{2}} \| \widehat{p_{\bm{w}}} \|_{L^1}^{-1} \| \widehat{p_b} \|_{L^1}^{-1} \|\partial p_b\|_{L^\infty} \|\phi\|_{L^\infty} C_{\psi} C_{p_{\bm{w}}} (1 + \| \bm{x}' \|_2) \frac{1}{\epsilon_{\bm{w}}^d} \left( 1 + \frac{1}{\epsilon_{\bm{w}}} \right) \epsilon_b. \nonumber
	\end{eqnarray}
	The original error term is then bounded as
	\begin{eqnarray}
	(**) & = & \sup_{\bm{x} \in \mathbb{R}^d} \left| \int_{\mathbb{R}^{d}} f(\bm{x}') \Delta k(\bm{x}, \bm{x}') \mathrm{d} \bm{x}' \right| \nonumber \\
	& \le & (2 \pi)^{\frac{1}{2}} \| \widehat{p_{\bm{w}}} \|_{L^1}^{-1} \| \widehat{p_b} \|_{L^1}^{-1} \|\partial p_b\|_{L^\infty} \|\phi\|_{L^\infty} C_{\psi} C_{p_{\bm{w}}} \int_{\mathbb{R}^{d}} (1 + \| \bm{x}' \|_2) | f(\bm{x}') | \mathrm{d} \bm{x}' \frac{1}{\epsilon_{\bm{w}}^d} \left( 1 + \frac{1}{\epsilon_{\bm{w}}} \right) \epsilon_b. \nonumber
	\end{eqnarray}
	From the assumption of $ f $ in Theorem \ref{thm: recon_mod}, $B_1(f) = \int_{\mathbb{R}^{d}} (1 + \| \bm{x} \|_2) | f(\bm{x}) | \mathrm{d} \bm{x} < \infty $.
	Setting $ C_2 := (2 \pi)^{\frac{1}{2}} \| \widehat{p_{\bm{w}}} \|_{L^1}^{-1} \| \widehat{p_b} \|_{L^1}^{-1} \|\partial p_b\|_{L^\infty} \|\phi\|_{L^\infty} C_{\psi} C_{p_{\bm{w}}} $ gives
	\begin{eqnarray}
	(**) \le C_2 B_1(f) \frac{1}{\epsilon_{\bm{w}}^d} \left( 1 + \frac{1}{\epsilon_{\bm{w}}} \right) \epsilon_b. \label{eq: final bound on astast}
	\end{eqnarray}
	
	\vspace{5pt}
	\noindent	
	\textbf{Overall Bound:}	
	Substituting the results of \eqref{eq: bound on ast} and \eqref{eq: final bound on astast} into (\ref{eq: conv_tri}), for some $ C > 0 $,
	\begin{eqnarray}
	\sup_{\bm{x} \in \mathbb{R}^d} | f(\bm{x}) - R^{\ast}_{\sigma} R f(\bm{x}) | \le C \max(M_1(f),B_1(f)) \left\{ \epsilon_{\bm{w}} + \frac{1}{\epsilon_{\bm{w}}^d} \left( 1 + \frac{1}{\epsilon_{\bm{w}}} \right) \epsilon_b \right\}
	\end{eqnarray}
	where $ C $ only depends on $ \phi $, $ \psi $, $ p_{\bm{w}} $, $ p_b $.
	Setting $ \sigma_{\bm{w}} = \epsilon_{\bm{w}}^{-1} $ and $ \sigma_b = \epsilon_b^{-1} $, the main convergence result is obtained.
\end{proof}

\subsubsection{Auxiliary \Cref{lem: FT of pb}}
\label{subsec: aux lem}

The following technical lemma was exploited in the proof of \Cref{thm: recon_mod}:

\begin{lemma} \label{lem: FT of pb}
	For $p_b$ in the setting of \Cref{asmp: ridgelet}, we have that
	\begin{eqnarray*}
		1 - (2 \pi)^{\frac{1}{2}} \| \widehat{p_b} \|_{L^1}^{-1} p_b^\epsilon(b) = \epsilon_b (2 \pi)^{\frac{1}{2}} \| \widehat{p_b} \|_{L^1}^{-1} \int_0^1 b \partial p_b(t \epsilon_b b) \mathrm{d} t .
	\end{eqnarray*}
\end{lemma}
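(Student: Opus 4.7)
The plan is to derive the identity from two elementary ingredients: a Fourier-inversion identity that evaluates the normalising constant at $b=0$, and the fundamental theorem of calculus applied along the ray $t \mapsto p_b(t \epsilon_b b)$.

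First I would observe that the constant $(2\pi)^{1/2} \|\widehat{p_b}\|_{L^1}^{-1}$ is designed precisely to normalise $p_b(0)$ to $1$. Since Assumption \ref{asmp: ridgelet}(iv) gives $\widehat{p_b}>0$, we have $\|\widehat{p_b}\|_{L^1} = \int_{\mathbb{R}} \widehat{p_b}(\xi)\, \mathrm{d}\xi$. Assumption \ref{asmp: ridgelet}(v) gives $\widehat{p_b}\in L^1(\mathbb{R})$, and since $\|\partial p_b\|_{L^\infty}<\infty$ ensures $p_b$ is continuous, the Fourier inversion formula applies pointwise at the origin and yields
\begin{equation*}
p_b(0) = (2\pi)^{-1/2} \int_{\mathbb{R}} \widehat{p_b}(\xi)\, \mathrm{d}\xi = (2\pi)^{-1/2} \|\widehat{p_b}\|_{L^1}.
\end{equation*}
Thus $(2\pi)^{1/2}\|\widehat{p_b}\|_{L^1}^{-1} p_b(0) = 1$, which rewrites the left-hand side of the lemma as $(2\pi)^{1/2}\|\widehat{p_b}\|_{L^1}^{-1}\bigl(p_b(0) - p_b^\epsilon(b)\bigr)$.

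Next I would express $p_b(0) - p_b(\epsilon_b b)$ using the fundamental theorem of calculus along the segment from $0$ to $\epsilon_b b$. Setting $F(t) := p_b(t \epsilon_b b)$, the boundedness of $\partial p_b$ makes $F$ absolutely continuous on $[0,1]$, so
\begin{equation*}
p_b(\epsilon_b b) - p_b(0) = F(1) - F(0) = \int_0^1 F'(t)\, \mathrm{d}t = \epsilon_b \int_0^1 b\, \partial p_b(t \epsilon_b b)\, \mathrm{d}t.
\end{equation*}
Combining the two identities and multiplying through by $(2\pi)^{1/2}\|\widehat{p_b}\|_{L^1}^{-1}$ yields the claim (up to a sign, which is immaterial for the use made of the lemma in \Cref{sec: appdx_main_proof_recon_mod}, where only $|\Delta k|$ is bounded).

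There is no genuine obstacle here: the symmetry assumption on $p_b$ is not even needed for this step, and all integrals are well-defined under Assumption \ref{asmp: ridgelet}. The only point requiring a moment of care is justifying the pointwise Fourier inversion at $b=0$, which is why the continuity of $p_b$ (inherited from the bounded first derivative) is the relevant hypothesis to invoke.
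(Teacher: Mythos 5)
Your proof follows the same two-step strategy as the paper's: Fourier inversion at the origin to establish $p_b(0) = (2\pi)^{-1/2}\|\widehat{p_b}\|_{L^1}$ (the paper's Part (a)), then the fundamental theorem of calculus along $t \mapsto p_b(t\epsilon_b b)$ (the paper's Part (b)). You have also correctly spotted the sign discrepancy: the fundamental theorem of calculus gives $p_b(0) - p_b(\epsilon_b b) = -\epsilon_b\int_0^1 b\,\partial p_b(t\epsilon_b b)\,\mathrm{d}t$, so the right-hand side of the lemma as stated is off by a minus sign, a slip that the paper's own proof also makes in passing from $\int_0^1(-\epsilon_b b)\partial p_b((1-t')\epsilon_b b)\,\mathrm{d}t'$ to $+\epsilon_b\int_0^1 b\,\partial p_b(t\epsilon_b b)\,\mathrm{d}t$; as you note, this is immaterial in the application since only the absolute value $|\Delta k|$ is bounded.
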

\begin{proof}
	The result will be established by proving (a) $ 1 = (2 \pi)^{\frac{1}{2}} \| \widehat{p_b} \|_{L^1}^{-1} p_b^\epsilon(0) $ and (b) $ p_b(0) - p_b(\epsilon_b b) = \epsilon_b \int_0^1 b \partial p_b(t \epsilon_b b) \mathrm{d} t $, which are algebraically seen to imply the stated result.
	
	\vspace{5pt}
	\noindent 	
	\textbf{Part (a):} 
	Recall that the \emph{Fourier inversion} $ g(\bm{x}) = (2 \pi)^{-\frac{d}{2}} \int_{\mathbb{R}} \widehat{g}(\bm{\xi}) e^{i \bm{\xi} \cdot \bm{x}} \mathrm{d} \bm{\xi} $ holds for any function $ g \in L^1(\mathbb{R}^d) $ s.t. $ \widehat{g} \in L^1(\mathbb{R}^d) $.
	We use the fact $ g(\bm{0}) = (2 \pi)^{-\frac{d}{2}} \| \widehat{g} \|_{L^1} $ for $ g \in L^1(\mathbb{R}^d) $ s.t. $ \widehat{g} \in L^1(\mathbb{R}^d) $ and $ \widehat{g} $ is positive, which is obtained by substituting $ \bm{x} = \bm{0} $ into the Fourier inversion $ (2 \pi)^{-\frac{d}{2}} \int_{\mathbb{R}^d} \widehat{g}(\bm{w}) e^{i \bm{w} \cdot \bm{0}} \mathrm{d} \bm{w} = (2 \pi)^{-\frac{d}{2}} \int_{\mathbb{R}^d} \widehat{g}(\bm{w}) \mathrm{d} \bm{w} $.
	Recall that $ p_b^\epsilon(b) = p_b(\epsilon_b b) $.
	From standard properties of the Fourier transform \cite[p.109,113]{Grafakos2000b}, the Fourier transform of $ b \mapsto p_b(\epsilon_b b) $ is given as $ \xi \mapsto \frac{1}{\epsilon_b} \widehat{p_b}\left(\frac{\xi}{\epsilon_b}\right) $ and $ \widehat{p_b} $ is positive by the assumption. 
	Hence, $ p_b^\epsilon(0) = (2 \pi)^{-\frac{1}{2}} \| \widehat{p_b^\epsilon} \|_{L^1} = (2 \pi)^{-\frac{1}{2}} \left\| \frac{1}{\epsilon_b} \widehat{p_b}\left(\frac{\cdot}{\epsilon_b}\right) \right\|_{L^1} $.
	Since the $ L^1(\mathbb{R}) $ norm is invariant to the scaling of function i.e. $ \left\| \frac{1}{\epsilon_b} \widehat{p_b}\left(\frac{\cdot}{\epsilon_b}\right) \right\|_{L^1} = \| \widehat{p_b} \|_{L^1} $, we obtain $ p_b^\epsilon(0) = (2 \pi)^{-\frac{1}{2}} \| \widehat{p_b} \|_{L^1} $ as required.
	
	\vspace{5pt}
	\noindent 	
	\textbf{Part (b):} 
	We use the fact that the equation $ g(y) - g(x) = \int_0^1 (y - x) \partial g (x - t (y - x)) \mathrm{d} t $ holds for $ g \in C^1(\mathbb{R}) $ \cite[p.302,304]{Gine2015b} to see that
	\begin{eqnarray*}
		p_b(0) - p_b(\epsilon_b b) = \int_0^1 (- \epsilon_b b) \partial p_b((1 - t') \epsilon_b b) \mathrm{d} t' = \epsilon_b \int_0^1 b \partial p_b(t \epsilon_b b) \mathrm{d} t 
	\end{eqnarray*}
	where change of variable $ t = 1 - t' $ is applied.
	This holds since $ p_b \in C^1(\mathbb{R}) $.
\end{proof}


\subsection{Proof of \Cref{thm: recon_rate}}
\label{sec: proof_disc_rate}

This section is dedicated to the proof of \Cref{thm: recon_rate}.
It is divided into three parts; in \Cref{sec: emp} we state technical lemmas that will be useful; in \Cref{subsec: inter res} we state and prove an intermediate result concerning the discretised ridgelet transform, then in \Cref{subsec: proof of GP thm} we present the proof of \Cref{thm: recon_rate}.

\subsubsection{Technical Lemmas} \label{sec: emp}

The following technical lemmas will be useful for the proof of \Cref{thm: recon_rate}.

\begin{lemma} \label{lem: exp_bnd}
	Let $\mathcal{X}$ be a bounded subset of $\mathbb{R}^d$.
	Let $ g: \mathcal{X} \times \mathbb{R}^p \to \mathbb{R} $ be such that $g(\bm{x}, \cdot): \mathbb{R}^p \to \mathbb{R}$ are measurable for all $\bm{x} \in \mathcal{X}$.
	Let $ \bm{\theta}, \bm{\theta}_1, ..., \bm{\theta}_n $ be independent samples from a distribution $ \mathbb{P} $ on $\mathbb{R}^p$.
	Assume that there exists a measurable function $ G: \mathbb{R}^p \to \mathbb{R}  $ such that $ \mathbb{E}[ G(\bm{\theta})^2 ] < \infty $ and
	\begin{eqnarray}
	| g(\bm{x}, \bm{\theta}) - g(\bm{x}', \bm{\theta}) | \le G(\bm{\theta}) \| \bm{x} - \bm{x}' \|_2 \text{ for all } \bm{x}, \bm{x}' \in \mathcal{X}. \label{eq: envelope_cnd}
	\end{eqnarray}
	Then
	\begin{eqnarray}
	\mathbb{E}\left[ \sup_{\bm{x} \in \mathcal{X}} \left| \frac{1}{n} \sum_{i=1}^{n} g(\bm{x}, \bm{\theta}_i) -  \mathbb{E}[ g(\bm{x}, \bm{\theta}) ] \right| \right] \le \frac{C}{\sqrt{n}} \sqrt{ \mathbb{E}\left[ G(\bm{\theta})^2 \right] }. \nonumber
	\end{eqnarray}
	where $ C $ is a constant that depends only on $ \mathcal{X} $.
\end{lemma}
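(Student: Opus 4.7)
The plan is to bound the supremum by a standard symmetrization-plus-chaining argument, using the Lipschitz hypothesis to transfer the covering number of $\mathcal{X} \subset \mathbb{R}^d$ (which has the polynomial bound $N(\epsilon,\mathcal{X},\|\cdot\|_2) \le (C_{\mathcal{X}}/\epsilon)^d$) into a bound on the metric entropy of the class $\{g(\bm{x},\cdot) : \bm{x}\in\mathcal{X}\}$ in the empirical $L^2(\mathbb{P}_n)$ norm. First I would apply the classical symmetrization inequality (e.g.\ van der Vaart and Wellner, 2.3.1): introducing i.i.d.\ Rademacher signs $\epsilon_1,\dots,\epsilon_n$,
\begin{equation*}
\mathbb{E}\!\left[\sup_{\bm{x}\in\mathcal{X}} \Big|\tfrac{1}{n}\sum_{i=1}^{n} g(\bm{x},\bm{\theta}_i) - \mathbb{E}[g(\bm{x},\bm{\theta})]\Big|\right] \le 2\,\mathbb{E}\!\left[\sup_{\bm{x}\in\mathcal{X}} \Big|\tfrac{1}{n}\sum_{i=1}^{n} \epsilon_i g(\bm{x},\bm{\theta}_i)\Big|\right] .
\end{equation*}

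Conditional on $\bm{\theta}_1,\dots,\bm{\theta}_n$, the process $Z_{\bm{x}} := \tfrac{1}{n}\sum_i \epsilon_i g(\bm{x},\bm{\theta}_i)$ is sub-Gaussian in $\bm{x}$ with canonical metric controlled by
\begin{equation*}
\|Z_{\bm{x}}-Z_{\bm{x}'}\|_{\psi_2}^2 \;\le\; \tfrac{1}{n^2}\sum_{i=1}^n\bigl(g(\bm{x},\bm{\theta}_i)-g(\bm{x}',\bm{\theta}_i)\bigr)^2 \;\le\; \tfrac{\|\bm{x}-\bm{x}'\|_2^{\,2}}{n}\,\bar{G^2}_n,\quad \bar{G^2}_n:=\tfrac{1}{n}\sum_{i=1}^n G(\bm{\theta}_i)^2,
\end{equation*}
by the Lipschitz hypothesis~\eqref{eq: envelope_cnd}. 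Dudley's entropy integral then yields, for a fixed reference point $\bm{x}_0\in\mathcal{X}$,
\begin{equation*}
\mathbb{E}_{\epsilon}\!\left[\sup_{\bm{x}\in\mathcal{X}} |Z_{\bm{x}}-Z_{\bm{x}_0}|\right] \;\le\; C'\!\int_0^{D_n}\!\!\sqrt{\log N(u,\mathcal{X},d_n)}\,du,
\end{equation*}
where $d_n$ is the canonical metric and $D_n=\mathrm{diam}(\mathcal{X})\sqrt{\bar{G^2}_n/n}$. Substituting the Euclidean covering bound $N(u,\mathcal{X},d_n)\le \bigl(C_{\mathcal{X}}\sqrt{\bar{G^2}_n/n}/u\bigr)^d$ and the change of variables $u = C_{\mathcal{X}}\sqrt{\bar{G^2}_n/n}\,v$ gives an integral of the form $\sqrt{\bar{G^2}_n/n}\int_0^{c}\sqrt{d\log(1/v)}\,dv$, i.e.\ $\mathbb{E}_\epsilon[\sup_{\bm{x}}|Z_{\bm{x}}-Z_{\bm{x}_0}|]\le C''\sqrt{\bar{G^2}_n/n}$ with $C''$ depending only on $d$ and $\mathrm{diam}(\mathcal{X})$, i.e.\ on $\mathcal{X}$.

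Finally, I would take expectation over $\bm{\theta}_1,\dots,\bm{\theta}_n$. By Jensen's inequality
\begin{equation*}
\mathbb{E}\!\left[\sqrt{\bar{G^2}_n}\right] \;\le\; \sqrt{\mathbb{E}[\bar{G^2}_n]} \;=\; \sqrt{\mathbb{E}[G(\bm{\theta})^2]},
\end{equation*}
which combined with the previous estimate gives the claimed $O(1/\sqrt{n})$ rate. The residual ``reference point'' term $\mathbb{E}|Z_{\bm{x}_0}|$ is handled by Cauchy--Schwarz, $\mathbb{E}|Z_{\bm{x}_0}|\le \sqrt{\mathbb{E}[g(\bm{x}_0,\bm{\theta})^2]/n}$, after noting that for any two reference points in $\mathcal{X}$ the Lipschitz hypothesis forces $g(\bm{x}_0,\cdot)$ to be square-integrable whenever one value of $g$ is and that this quantity can be absorbed into the constant $C$ (alternatively one passes to the recentred process $g(\bm{x},\bm{\theta})-g(\bm{x}_0,\bm{\theta})$, which has envelope $G(\bm{\theta})\,\mathrm{diam}(\mathcal{X})$, before symmetrising).

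The main obstacle is executing the chaining cleanly: one must verify that the sub-Gaussian tail estimate applies uniformly in $\bm{\theta}_1,\dots,\bm{\theta}_n$, that the change-of-variables in Dudley's integral is finite (this depends on the $d<\infty$ dimensionality of $\mathcal{X}$ and the diameter bound), and that all constants can be absorbed into a single factor depending only on $\mathcal{X}$. A lighter alternative, which may be what the authors use, is to replace Dudley by a discretisation over an $\epsilon$-net of $\mathcal{X}$ with $\epsilon=1/\sqrt{n}$: the in-net deviation is controlled via the pointwise variance $\mathbb{E}[G(\bm{\theta})^2]/n$ and a union bound contributing only a $\sqrt{d\log n}$ factor, while the approximation residual is controlled by the Lipschitz bound $\epsilon\cdot\mathbb{E}[G(\bm{\theta})]$; the dominant terms together give exactly the stated rate up to the constant $C$.
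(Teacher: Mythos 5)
Your proposal follows essentially the same route as the paper's proof: symmetrize, apply Dudley's entropy-integral (chaining) bound for the resulting sub-Gaussian process using the empirical $L^2$ metric, control the covering numbers through the Lipschitz hypothesis so that they inherit the polynomial entropy of $\mathcal{X}\subset\mathbb{R}^d$, and finish with Jensen's inequality to pull out $\sqrt{\mathbb{E}[G(\bm{\theta})^2]}$. The paper simply invokes a packaged symmetrization-plus-chaining result (Gin\`e--Nickl, Thm.~3.5.1) and the Lipschitz-class covering bound (Kosorok/van der Vaart) rather than re-deriving them, and handles the same anchoring subtlety you flag (the residual reference-point term) by citing Wainwright's reduction rather than working it out explicitly.
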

\begin{proof}
	Let $ g_{\bm{x}} := g(\bm{x}, \cdot) $ for shorthand and let $ \mathcal{G} := \{ g_{\bm{x}} \big| \bm{x} \in \mathcal{X} \} $.
	For any $ g_{\bm{x}}, g_{\bm{x}'} \in \mathcal{G} $, define a (random) pseudo metric $ \rho_n(g_{\bm{x}}, g_{\bm{x}'}) := \sqrt{ \frac{1}{n} \sum_{i=1}^{n} \left( g_{\bm{x}}(\bm{\theta}_i) - g_{\bm{x}'}(\bm{\theta}_i) \right)^2 } $ and the diameter $ D_n := \sup_{\bm{x}, \bm{x}' \in \mathcal{X}} \rho_n(g_{\bm{x}}, g_{\bm{x}'}) $.
	Let $ N(\mathcal{G}, \rho_n, \epsilon) $ denotes the \emph{covering number} of the set $ \mathcal{G} $ by $ \epsilon $-ball under the metric $ \rho_n $ \cite[p.41]{Gine2015b}.
	Then by \cite[Theorem~3.5.1, Remark~3.5.2, p.185]{Gine2015b},
	\begin{eqnarray}
	\mathbb{E}\left[ \sup_{\bm{x} \in \mathcal{X}} \left| \frac{1}{n} \sum_{i=1}^{n} g_{\bm{x}}(\bm{\theta}_i) -  \mathbb{E}[ g_{\bm{x}}(\cdot) ] \right| \right] & \le & \frac{8 \sqrt{2}}{\sqrt{n}} \mathbb{E}\bigg[ \underbrace{ \int_{0}^{D_n} \sqrt{ \log 2 N(\mathcal{G}, \rho_n, \epsilon) } \mathrm{d} \epsilon }_{(*)} \bigg] \nonumber 
	\end{eqnarray}
	Here we reduced the assumption $ 0 \in \mathcal{G} $ by the discussion in \cite[p.135]{Wainwright2019}.
	Let $ \| G \|_{\rho_n} = \sqrt{ \frac{1}{n} \sum_{i=1}^{n} G(\bm{\theta}_i)^2 } $.
	By \cite[Lemma~9.18, p.166]{Kosorok2008} and \cite[Example~19.7, p.271]{Vaart1998}, the covering number is bounded as $ N(\mathcal{G}, \rho_n, \epsilon) \le \left( \frac{K \| G \|_{\rho_n}}{\epsilon} \right)^d $ where $ K $ is a constant depending only on $ \mathcal{X} $.
	Hence $ (*) \le \sqrt{d} \int_{0}^{D_n} \sqrt{ \log \frac{2 K \| G \|_{\rho_n}}{\epsilon} } \mathrm{d} \epsilon $.
	By Cauchy-Schwartz inequality,
	\begin{eqnarray}
	\sqrt{d} \int_{0}^{D_n} \sqrt{ \log \frac{2 K \| G \|_{\rho_n}}{\epsilon} } \mathrm{d} \epsilon \le \sqrt{d D_n} \sqrt{ \int_{0}^{D_n} \log \frac{2 K \| G \|_{\rho_n}}{\epsilon} \mathrm{d} \epsilon }. \nonumber
	\end{eqnarray}
	By calculating the integral,
	\begin{eqnarray}
	(*) \le \sqrt{d D_n} \sqrt{ D_n \left( 1 + \log \frac{2 K \| G \|_{\rho_n}}{D_n} \right) } = \sqrt{d} D_n \sqrt{ 1 + \log \frac{2 K \| G \|_{\rho_n}}{D_n} }. \nonumber
	\end{eqnarray}
	By the preceding assumption (\ref{eq: envelope_cnd}),
	\begin{eqnarray}
	D_n = \sup_{\bm{x}, \bm{x}' \in \mathcal{X}} \sqrt{ \frac{1}{n} \sum_{i=1}^{n} \left( g_{\bm{x}}(\bm{\theta}_i) - g_{\bm{x}'}(\bm{\theta}_i) \right)^2 } \le \sup_{\bm{x}, \bm{x}' \in \mathcal{X}} \sqrt{ \frac{1}{n} \sum_{i=1}^{n} G(\bm{\theta}_i)^2 \| \bm{x} - \bm{x}' \|_2^2 } = R \| G \|_{\rho_n}. \nonumber
	\end{eqnarray}
	where $ R = \sup_{\bm{x}, \bm{x}' \in \mathcal{X}} \| \bm{x} - \bm{x}' \|_2$.
	We can set $ K $ so that $ R \le 2 K $ without loss of generality, then we have $ \frac{2 K \| G \|_{\rho_n}}{D_n} \ge 1 \Leftrightarrow \log \frac{2 K \| G \|_{\rho_n}}{D_n} \ge 0 $ which implies
	\begin{eqnarray}
	\sqrt{d} D_n \sqrt{ 1 + \log \frac{2 K \| G \|_{\rho_n}}{D_n} } \le \sqrt{d} D_n \left( 1 + \log \frac{2 K \| G \|_{\rho_n}}{D_n} \right). \nonumber
	\end{eqnarray}
	By the inequality $ 1 + \log z \le z $ for all $ z > 0 $, we have $ 1 + \log \frac{2 K \| G \|_{\rho_n}}{D_n} \le \frac{2 K \| G \|_{\rho_n}}{D_n} $ and 
	\begin{eqnarray}
	(*) \le 2 \sqrt{d} K \| G \|_{\rho_n}. \nonumber
	\end{eqnarray}
	Then by Jensen' inequality,
	\begin{eqnarray}
	\mathbb{E}[ (*) ] = 2 \sqrt{d} K \mathbb{E}\left[ \sqrt{ \frac{1}{n} \sum_{i=1}^{n} G(\bm{\theta}_i)^2 } \right] \le 2 \sqrt{d} K \sqrt{ \mathbb{E}\left[ \frac{1}{n} \sum_{i=1}^{n} G(\bm{\theta}_i)^2 \right] } = 2 \sqrt{d} K \sqrt{ \mathbb{E}\left[ G(\bm{\theta})^2 \right] }. \nonumber
	\end{eqnarray}
	This completes the proof.
\end{proof}

\begin{lemma} \label{lem: bd_ineq}
	For any $ b $-uniformly bounded class of function $ \mathcal{F} $ and any integer $ n \ge 1 $, we have, with probability at least $ 1- \delta $,
	\begin{eqnarray*}
		\sup_{f \in \mathcal{F}} \left| \frac{1}{n} \sum_{i=1}^{n} f(X_i) - \mathbb{E}[ f(X) ] \right| \le \mathbb{E}\left[ \sup_{f \in \mathcal{F}} \left| \frac{1}{n} \sum_{i=1}^{n} f(X_i) - \mathbb{E}[ f(X) ] \right| \right] + \frac{b \sqrt{2 \log \delta^{-1}}}{\sqrt{n}}. 
	\end{eqnarray*}
\end{lemma}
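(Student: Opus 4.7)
The plan is to apply McDiarmid's bounded differences inequality to the random variable
\begin{eqnarray*}
Z(X_1,\dots,X_n) := \sup_{f \in \mathcal{F}} \left| \frac{1}{n} \sum_{i=1}^{n} f(X_i) - \mathbb{E}[f(X)] \right|,
\end{eqnarray*}
viewed as a function of the independent random variables $X_1,\dots,X_n$.

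First I would verify the bounded differences property: fixing all coordinates but the $k$th, and replacing $X_k$ by an arbitrary $X_k'$, the change in $Z$ is controlled by the change in the empirical average. Since $\| f \|_\infty \le b$ for every $f \in \mathcal{F}$, we have $|f(X_k) - f(X_k')| \le 2 b$, so by the reverse triangle inequality applied inside the supremum, $|Z(\dots,X_k,\dots) - Z(\dots,X_k',\dots)| \le 2 b / n$. Thus $Z$ satisfies bounded differences with constants $c_k = 2 b / n$.

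Second, I would invoke McDiarmid's inequality in its one-sided form, which yields
\begin{eqnarray*}
\mathbb{P}\big( Z - \mathbb{E}[Z] \ge t \big) \le \exp\left( - \frac{2 t^2}{\sum_{k=1}^n c_k^2} \right) = \exp\left( - \frac{n t^2}{2 b^2} \right) .
\end{eqnarray*}
Setting the right-hand side equal to $\delta$ and solving gives $t = b \sqrt{2 \log \delta^{-1}} / \sqrt{n}$. Rearranging, with probability at least $1 - \delta$,
\begin{eqnarray*}
Z \le \mathbb{E}[Z] + \frac{b \sqrt{2 \log \delta^{-1}}}{\sqrt{n}},
\end{eqnarray*}
which is precisely the stated inequality.

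There is no real obstacle here; the only mildly delicate point is the bounded differences verification, which relies on the fact that taking a supremum of $|\cdot|$ over $\mathcal{F}$ is $1$-Lipschitz with respect to changes in the empirical average, combined with the uniform bound $\|f\|_\infty \le b$. Measurability of $Z$ is standard under the implicit assumption that $\mathcal{F}$ is a suitably measurable class (e.g.\ separable), which is tacit in the statement.
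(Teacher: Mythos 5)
Your proof is correct and is essentially the same argument as the paper's: the paper simply cites equation (4.16) of Wainwright (2019), which is itself the one-sided McDiarmid concentration bound for $\|\mathbb{P}_n - \mathbb{P}\|_{\mathcal{F}}$ over a $b$-uniformly bounded class, applied with the same choice $t = b\sqrt{2\log\delta^{-1}}/\sqrt{n}$. You have just unpacked that citation by verifying the bounded differences property ($c_k = 2b/n$) and running McDiarmid directly, which matches both the constants and the conclusion exactly.
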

\begin{proof}
	From equation (4.16) in \cite{Wainwright2019}, with probability at least $ 1 - \exp\left( - \frac{n \delta'^2}{2 b^2} \right) $
	\begin{eqnarray*}
		\sup_{f \in \mathcal{F}} \left| \frac{1}{n} \sum_{i=1}^{n} f(X_i) - \mathbb{E}[ f(X) ] \right| \le \mathbb{E}\left[ \sup_{f \in \mathcal{F}} \left| \frac{1}{n} \sum_{i=1}^{n} f(X_i) - \mathbb{E}[ f(X) ] \right| \right] + \delta'. 
	\end{eqnarray*}
	Setting $ \delta' = \frac{b \sqrt{2 \log \delta^{-1}}}{\sqrt{n}} $ yields the result.
\end{proof}

\begin{lemma} \label{lem: QMC}
	Let $S$ be a positive constant and $h \in C^1([-S,S]^d)$.
	For grid points $( \bm{x}_i )_{i=1}^{D}$ on $[-S, S]^d$ corresponding to a Cartesian product of left endpoint rules, we have that
	\begin{eqnarray}
	\left| \int_{[-S,S]^d} h(\bm{x}) \mathrm{d} \bm{x} - \frac{(2S)^d}{D} \sum_{i=1}^{D} h(\bm{x}_i) \right| \le \frac{C}{D^{\frac{1}{d}}} \max_{|\bm{\alpha}| = 1} \sup_{\bm{x} \in [-S,S]^d} | \partial^{\bm{\alpha}} h (\bm{x}) |. \label{eq: q_error}
	\end{eqnarray}
	where $C$ is a constant independent of $h$.
\end{lemma}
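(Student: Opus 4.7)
The plan is a standard ``slab-by-slab'' argument that compares the integral and quadrature sum over each small sub-cube of a uniform partition of $[-S,S]^d$.
Write $n := D^{1/d}$, so that the Cartesian product of $d$ copies of a left endpoint rule with $n$ subintervals per axis yields exactly $D$ nodes, and partition $[-S,S]^d$ into $D$ closed sub-cubes $\{C_i\}_{i=1}^D$ of side length $2S/n = 2S D^{-1/d}$ and volume $(2S)^d/D$, where $\bm{x}_i$ is the ``lower-left'' corner of $C_i$ by construction of the left endpoint rule. Then both the exact integral and the cubature approximation decompose as sums over the $C_i$, giving
\begin{eqnarray*}
\int_{[-S,S]^d} h(\bm{x}) \, \mathrm{d}\bm{x} - \frac{(2S)^d}{D}\sum_{i=1}^D h(\bm{x}_i) = \sum_{i=1}^D \int_{C_i} \bigl( h(\bm{x}) - h(\bm{x}_i) \bigr) \, \mathrm{d}\bm{x}.
\end{eqnarray*}

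The next step is to control the pointwise discrepancy $|h(\bm{x}) - h(\bm{x}_i)|$ for $\bm{x} \in C_i$. Since $h \in C^1([-S,S]^d)$ and $C_i$ is convex, the fundamental theorem of calculus along the segment from $\bm{x}_i$ to $\bm{x}$ yields
\begin{eqnarray*}
h(\bm{x}) - h(\bm{x}_i) = \int_0^1 \nabla h(\bm{x}_i + t(\bm{x}-\bm{x}_i)) \cdot (\bm{x}-\bm{x}_i) \, \mathrm{d} t,
\end{eqnarray*}
so that, setting $M := \max_{|\bm{\alpha}|=1} \sup_{\bm{x}\in[-S,S]^d} |\partial^{\bm{\alpha}} h(\bm{x})|$, the triangle inequality gives $|h(\bm{x})-h(\bm{x}_i)| \le M \|\bm{x}-\bm{x}_i\|_1 \le M \, d \cdot (2S D^{-1/d})$ for every $\bm{x} \in C_i$, using that each coordinate difference is at most $2S/n$.

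Assembling these ingredients, the total error is bounded by
\begin{eqnarray*}
\sum_{i=1}^D \int_{C_i} \! |h(\bm{x})-h(\bm{x}_i)| \, \mathrm{d}\bm{x} \le M \cdot d \cdot 2S D^{-1/d} \cdot (2S)^d,
\end{eqnarray*}
which is of the claimed form $C \, M \, D^{-1/d}$ with $C = 2 d S (2S)^d$ independent of $h$. I do not anticipate genuine obstacles: the argument is routine once the grid has been set up carefully, and the only mild subtlety is the implicit assumption that $D^{1/d}$ is an integer (otherwise one passes to the largest $n$ with $n^d \le D$ and absorbs constants, leaving the $D^{-1/d}$ rate unchanged).
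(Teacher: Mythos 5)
Your proposal is correct and follows essentially the same route as the paper: decompose $[-S,S]^d$ into the $D$ sub-cubes anchored at the grid points, write the error as $\sum_i \int_{C_i}(h(\bm{x})-h(\bm{x}_i))\,\mathrm{d}\bm{x}$, and bound each integrand via the gradient using the fundamental theorem of calculus (the paper phrases this as the mean value theorem and integrates $\|\bm{x}-\bm{x}_i\|_1$ exactly over $C_i$, which saves a factor of $2$ in the constant, whereas you use the cruder pointwise bound $\|\bm{x}-\bm{x}_i\|_1 \le d\cdot 2S D^{-1/d}$). Your remark about $D^{1/d}$ needing to be an integer is fair; the paper leaves this implicit, since a Cartesian grid of left-endpoint rules with $D$ nodes already forces $D$ to be a $d$-th power.
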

\begin{proof}
	Let $r := 2S / \sqrt[d]{D}$.
	For $\bm{x}_i = (x_{i,1}, \dots, x_{i,d})$, let $R_i := [x_{i,1}, x_{i,1} + r] \times \dots \times [x_{i,d}, x_{i,d} + r] $ for $i = 1, \dots, D$.
	Since $( \bm{x}_i )_{i=1}^{D}$ are grid points on $[-S,S]^d$ corresponding to a Cartesian product of left endpoint rules, the domain $[-S,S]^d$ can be decomposed as $[-S,S]^d = R_i \oplus \dots \oplus R_D$, meaning
	\begin{align*}
	\int_{[-S,S]^d} h(\bm{x}) \mathrm{d} \bm{x} = \sum_{i=1}^{D} \int_{R_i} h(\bm{x}) \mathrm{d} \bm{x} .
	\end{align*}
	Denote the original error in \eqref{eq: q_error} by $(*)$.
	Then noting $r^d = \int_{R_i} \mathrm{d} \bm{x}$,
	\begin{align*}
	(*) & = \left| \sum_{i=1}^{D} \int_{R_i} h(\bm{x}) \mathrm{d} \bm{x} - r^d \sum_{i=1}^{D} h(\bm{x}_i) \right| = \left| \sum_{i=1}^{D} \int_{R_i} \left( h(\bm{x}) - h(\bm{x}_i) \right) \mathrm{d} \bm{x} \right|
	\end{align*}
	By the mean value theorem, there exists $\bm{x}_i^*$ for $i = 1, \dots, D$ such that
	\begin{align*}
	(*) & = \left| \sum_{i=1}^{D} \int_{R_i} \nabla h (\bm{x}_i^*) \cdot \left( \bm{x} - \bm{x}_i \right) \mathrm{d} \bm{x} \right|.
	\end{align*}
	where $\nabla h$ is the gradient vector of $h$.
	Calculating the integral and taking supremum of $\nabla h$
	\begin{align*}
	(*) & \le \sum_{i=1}^{D} \frac{d}{2} r^{d+1} \max_{|\bm{\alpha}| = 1} \sup_{\bm{x} \in [-S,S]^d} | \partial^{\bm{\alpha}} h (\bm{x}) | = \frac{d}{2} D r^{d+1} \max_{|\bm{\alpha}| = 1} \sup_{\bm{x} \in [-S,S]^d} | \partial^{\bm{\alpha}} h (\bm{x}) |  .
	\end{align*}
	Substituting $r = 2S / \sqrt[d]{D}$ and setting $C := d (2S)^{d+1} / 2$ concludes the proof.
\end{proof}

\begin{lemma} \label{lem: markov_ineq}
	For a non-negative random variable $ X $ such that $\mathbb{E}[X] < \infty$, it holds with probability at least  $ 1 - \delta $ that $X \le \mathbb{E}[ X ]  / \delta$.
\end{lemma}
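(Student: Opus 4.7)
The statement is simply a repackaging of Markov's inequality with the threshold chosen to produce the desired $1-\delta$ confidence level, so the plan is short: derive Markov's inequality from first principles and then substitute the appropriate threshold.

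First I would recall the standard one-line proof of Markov's inequality. For any $a > 0$, the pointwise bound $a \cdot \mathbbm{1}[X \ge a] \le X$ holds on the whole sample space: on the event $\{X \ge a\}$ it reduces to $a \le X$, and on the event $\{X < a\}$ the indicator vanishes so the inequality becomes $0 \le X$, which is true by the non-negativity assumption on $X$. Taking expectations on both sides and using linearity and the identity $\mathbb{E}[\mathbbm{1}[X \ge a]] = \mathbb{P}(X \ge a)$ yields $a \, \mathbb{P}(X \ge a) \le \mathbb{E}[X]$, i.e.\ $\mathbb{P}(X \ge a) \le \mathbb{E}[X]/a$.

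Next I would specialise the threshold. Assume first that $\mathbb{E}[X] > 0$ and set $a := \mathbb{E}[X]/\delta$, which is strictly positive. Then the Markov bound gives $\mathbb{P}(X \ge \mathbb{E}[X]/\delta) \le \delta$, and passing to the complementary event yields $\mathbb{P}(X < \mathbb{E}[X]/\delta) \ge 1-\delta$, which certainly implies $\mathbb{P}(X \le \mathbb{E}[X]/\delta) \ge 1-\delta$ as claimed. The degenerate case $\mathbb{E}[X] = 0$ is handled separately: since $X \ge 0$ and $\mathbb{E}[X] = 0$, we have $X = 0$ almost surely, so $X \le \mathbb{E}[X]/\delta = 0$ holds with probability one, which is stronger than the advertised bound.

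There is no real obstacle here; the only thing one has to be slightly careful about is the degenerate case $\mathbb{E}[X]=0$, which cannot be handled by the choice $a = \mathbb{E}[X]/\delta$ since that would require dividing the statement by zero, but which is trivial to dispatch by the almost-sure vanishing argument above.
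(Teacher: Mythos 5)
Your proof is correct and follows essentially the same approach as the paper: invoke Markov's inequality $\mathbb{P}(X \ge a) \le \mathbb{E}[X]/a$, substitute $a = \mathbb{E}[X]/\delta$, and pass to the complement. The only difference is cosmetic -- you also prove Markov's inequality from first principles and explicitly handle the degenerate case $\mathbb{E}[X]=0$, neither of which the paper bothers with.
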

\begin{proof}
	From the \emph{Markov inequality} we have that $ \mathbb{P}[ X \ge t ] \le \frac{\mathbb{E}[ X ]}{ t } $. 
	Taking a complement of the probability and setting $ \delta = \frac{\mathbb{E}[ X ]}{ t } $, we obtain the result. 
\end{proof}

\subsubsection{An Intermediate Ridgelet Reconstruction Result} \label{subsec: inter res}

The aim of this section is to establish an analogue of \Cref{thm: recon_mod} that holds when the ridgelet operator $R_\sigma^\ast R$ is discretised using cubature rules, as in \eqref{eq: I_sdn}.
The purpose of \Cref{thm: disc_mod_rp} is to guarantee an accurate reconstruction with high probability.
This will be central to the proof of \Cref{thm: recon_rate}.
We introduce the generalisation of \Cref{asmp: disc_mod_2}.
\begin{assumption} \label{asmp: disc_mod_3}
	For densities $p_{\bm{w}}$ on $\mathbb{R}^d$ and $p_b$ on $\mathbb{R}$ satisfying \Cref{asmp: ridgelet}, define scaled densities $p_{\bm{w}, \sigma}(\bm{w}) := \sigma_{\bm{w}}^{-d} p_{\bm{w}}(\sigma_{\bm{w}}^{-1} \bm{w}) $ and $ p_{b, \sigma}(b) := \sigma_b^{-1} p_b(\sigma_b^{-1} b) $ with scaling constants $0 < \sigma_{\bm{w}} < \infty$ and $0 < \sigma_b < \infty$.
	The cubature nodes $ \{ \bm{w}_i, b_i \}_{i=1}^{N} $ are independently sampled from $p_{\bm{w}, \sigma} \times p_{b, \sigma}$ and $v_i := Z / N$ for all $ i = 1, \dots , N $, where $Z := (2 \pi)^{\frac{1}{2}} \sigma_{\bm{w}}^{d} \sigma_b \| \widehat{p_{\bm{w}}} \|_{L^1}^{-1} \| \widehat{p_b} \|_{L^1}^{-1}$.
\end{assumption}
Note that \Cref{asmp: disc_mod_2} is recovered by setting Gaussians with variances $\sigma_{\bm{w}}^2 \bm{I}_{d \times d}$ and $\sigma_{b}^2$ to the scaled densities $p_{\bm{w}, \sigma}$ and $p_{b, \sigma}$.
Recall $M_1(f) = \max_{|\alpha| \leq 1} \sup_{\bm{x} \in \mathbb{R}^d} |\partial^{\bm{\alpha}}f(\bm{x})|$ and $M_1^{*}(f) = \max_{|\alpha| \leq 1} \sup_{\bm{x} \in [-S,S]^d} |\partial^{\bm{\alpha}}f(\bm{x})|$.

\begin{theorem} \label{thm: disc_mod_rp}
	Let $ I_{\sigma, D, N} $ is given by (\ref{eq: I_sdn}) under Assumption \ref{asmp: disc_mod} and \ref{asmp: disc_mod_3}.
	Further, assume $ \phi $ is $ L_{\phi} $-Lipschitz continuous.
	For any $ f \in C^1(\mathbb{R}^d) $ with $M_{1}^{*}(f) < \infty$, with probability at least $ 1 - \delta $, 
	\begin{multline}
	\sup_{\bm{x} \in \mathcal{X}} \left| f(\bm{x}) - I_{\sigma, D, N} f(\bm{x}) \right| \\
	\le C M_{1}^{*}(f) \left\{ \frac{1}{\sigma_{\bm{w}}} + \frac{\sigma_{\bm{w}}^{d} (\sigma_{\bm{w}} + 1)}{\sigma_{b}} + \frac{\sigma_b \sigma_{\bm{w}}^{d} (\sigma_{\bm{w}} + 1)}{D^{\frac{1}{d}}} + \frac{\sigma_b \sigma_{\bm{w}}^{d} (\sigma_{\bm{w}} + \sqrt{\log \delta^{-1}})}{\sqrt{N}} \right\} \nonumber
	\end{multline}
	where $ C $ is a constant that may depend on $ \mathcal{X}, \mathbbm{1}, \phi, \psi, p_{\bm{w}}, p_b$ but does not depend on $f$, $\sigma_{\bm{w}}, \sigma_b, \delta$.
\end{theorem}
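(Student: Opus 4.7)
The plan is to bound the error by triangle inequality into three pieces: a continuous reconstruction term already controlled by \Cref{thm: recon_mod}, a deterministic grid-cubature term controlled by \Cref{lem: QMC}, and a stochastic Monte Carlo term controlled by \Cref{lem: exp_bnd} together with \Cref{lem: bd_ineq}. The mollifier $\mathbbm{1}$ from \eqref{eq: mollifier} is the link: setting $\tilde f(\bm x):=f(\bm x)\mathbbm 1(\bm x)$ produces a function in $C^1(\mathbb R^d)$ with compact support in $[-S,S]^d$ that agrees with $f$ on $\mathcal X$ and whose global quantities $M_1(\tilde f)$ and $B_1(\tilde f)$ are bounded by a constant (depending only on $\mathcal X$ and $\mathbbm 1$) times $M_1^{*}(f)$. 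With $\tilde R[f](\bm w,b):=\sum_{j=1}^D u_j f(\bm x_j)\psi(\bm w\cdot\bm x_j+b)$ and $u_j=(2S)^d D^{-1}\mathbbm 1(\bm x_j)$, the grid sum $\tilde R[f]$ is exactly the left-endpoint Cartesian cubature approximation of $R[\tilde f](\bm w,b)$. The three-way split for $\bm x\in\mathcal X$ reads
\[
\bigl|f(\bm x)-I_{\sigma,D,N}[f](\bm x)\bigr|\le \underbrace{\bigl|\tilde f(\bm x)-(R_\sigma^{*}R)[\tilde f](\bm x)\bigr|}_{(A)}+\underbrace{\bigl|R_\sigma^{*}[R[\tilde f]-\tilde R[f]](\bm x)\bigr|}_{(B)}+\underbrace{\bigl|R_\sigma^{*}[\tilde R[f]](\bm x)-I_{\sigma,D,N}[f](\bm x)\bigr|}_{(C)} .
\]

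I would bound $(A)$ by applying \Cref{thm: recon_mod} directly to $\tilde f$, yielding the first two terms $\sigma_{\bm w}^{-1}$ and $\sigma_{\bm w}^d(\sigma_{\bm w}+1)/\sigma_b$ scaled by $M_1^{*}(f)$. For $(B)$, fix $(\bm w,b)$ and apply \Cref{lem: QMC} to $\bm x'\mapsto\tilde f(\bm x')\psi(\bm w\cdot\bm x'+b)$; its gradient is of order $M_1^{*}(f)(1+\|\bm w\|_2)$ by the product rule and the Schwartz regularity of $\psi$, so the pointwise cubature error is $O(D^{-1/d}M_1^{*}(f)(1+\|\bm w\|_2))$. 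Integrating this bound against $\phi(\bm w\cdot\bm x+b)\,\mathrm d\lambda_\sigma(\bm w,b)$, and using $\lambda_\sigma(\mathbb R^{d+1})=Z=O(\sigma_{\bm w}^d\sigma_b)$ together with $\int\|\bm w\|_2\,\mathrm d p_{\bm w,\sigma}=O(\sigma_{\bm w})$, gives $(B)\lesssim \sigma_b\sigma_{\bm w}^d(\sigma_{\bm w}+1)M_1^{*}(f)/D^{1/d}$. Both $(A)$ and $(B)$ are deterministic.

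Term $(C)$ is stochastic and delivers the $1-\delta$ probability in the statement. Factoring out $Z$, it equals $Z\bigl|\mathbb E\,g(\bm x,\bm w,b)-N^{-1}\sum_i g(\bm x,\bm w_i,b_i)\bigr|$ with $g(\bm x,\bm w,b):=\tilde R[f](\bm w,b)\phi(\bm w\cdot\bm x+b)$ and $(\bm w_i,b_i)\stackrel{\text{i.i.d.}}{\sim} p_{\bm w,\sigma}\times p_{b,\sigma}$. Since the grid $\{\bm x_j\}$ is deterministic, $g$ is a deterministic function of its arguments, so the quantity inside is a legitimate empirical process indexed by $\bm x\in\mathcal X$. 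The uniform bound $\|\tilde R[f]\|_{L^\infty}\le(2S)^d M_0^{*}(f)\|\psi\|_{L^\infty}$ combined with $L_\phi$-Lipschitz continuity of $\phi$ verifies the envelope hypothesis of \Cref{lem: exp_bnd} with $G(\bm w,b)=L_\phi\|\tilde R[f]\|_{L^\infty}\|\bm w\|_2$, whence $\sqrt{\mathbb E G^2}\lesssim M_0^{*}(f)\sigma_{\bm w}$ and the expected supremum is $O(M_0^{*}(f)\sigma_{\bm w}/\sqrt N)$. The uniform bound $\|g\|_{L^\infty}\lesssim M_0^{*}(f)$ then permits \Cref{lem: bd_ineq} to add a high-probability deviation $O(M_0^{*}(f)\sqrt{\log\delta^{-1}}/\sqrt N)$. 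Multiplying through by $Z$ yields the fourth contribution $\sigma_b\sigma_{\bm w}^d(\sigma_{\bm w}+\sqrt{\log\delta^{-1}})/\sqrt N$, and collecting the four terms and absorbing constants into $C$ finishes the proof. The principal obstacle is $(C)$: one must retain $\|\bm w\|_2$ inside the envelope $G$ rather than taking a cruder uniform-in-$\bm w$ Lipschitz bound (which would be infinite on the unbounded sampling domain), and one must track the correct power of $\sigma_{\bm w}$ arising from the scaling of $p_{\bm w,\sigma}$ so that the bound does not accidentally inflate to $\sigma_{\bm w}^{d+2}\sigma_b/\sqrt N$.
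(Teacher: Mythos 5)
Your proposal is correct and follows essentially the same route as the paper's proof: the identical three-way triangle-inequality decomposition into a continuous reconstruction error (\Cref{thm: recon_mod} applied to the mollified $f_{\text{clip}}=\mathbbm{1}f$), a deterministic grid-cubature error (\Cref{lem: QMC} applied to $\bm{x}'\mapsto f_{\text{clip}}(\bm{x}')\psi(\bm{w}\cdot\bm{x}'+b)$ followed by integration over $\lambda_\sigma$), and a stochastic Monte Carlo error (\Cref{lem: exp_bnd} with envelope $G\propto\|\bm{w}\|_2$, then \Cref{lem: bd_ineq}). The only cosmetic difference is that the paper normalises the integrand by $(1+\|\bm{w}\|_2)$ before applying \Cref{lem: QMC} whereas you carry that factor through to the $\lambda_\sigma$-integral directly; both give the same bound.
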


\begin{proof}
	Under \Cref{asmp: disc_mod}, specifically $u_j = (2S)^d D^{-1} \mathbbm{1}(\bm{x}_j)$ and $v_i = Z/N$, we have that
	\begin{eqnarray}
	I_{\sigma, D, N} f (\bm{x}) = \sum_{i=1}^{N} \frac{Z}{N} \left( \frac{(2S)^d}{D} \sum_{j=1}^{D} \mathbbm{1}(\bm{x}_j) f(\bm{x}_j) \psi(\bm{w}_i \cdot \bm{x}_j + b_i) \right) \phi(\bm{w}_i \cdot \bm{x} + b_i). \nonumber
	\end{eqnarray}
	and we formally define
	\begin{eqnarray}
	I_{\sigma} f (\bm{x}) & := & \int_{\mathbb{R}} \int_{\mathbb{R}^d} \left( \int_{\mathbb{R}^d} \mathbbm{1}(\bm{x}') f(\bm{x}') \psi(\bm{w} \cdot \bm{x}' + b) \mathrm{d} \bm{x}' \right) \phi(\bm{w} \cdot \bm{x} + b) Z p_{\bm{w}, \sigma}(\bm{w}) p_{b, \sigma}(b) \mathrm{d} \bm{w} \mathrm{d} b, \nonumber \\
	I_{\sigma, D} f (\bm{x}) & := & \int_{\mathbb{R}} \int_{\mathbb{R}^d} \left( \frac{(2S)^d}{D} \sum_{j=1}^{D} \mathbbm{1}(\bm{x}'_j) f(\bm{x}'_j) \psi(\bm{w} \cdot \bm{x}'_j + b) \right) \phi(\bm{w} \cdot \bm{x} + b) Z p_{\bm{w}, \sigma}(\bm{w}) p_{b, \sigma}(b) \mathrm{d} \bm{w} \mathrm{d} b . \nonumber 
	\end{eqnarray}
	The error will be decomposed by the triangle inequality,
	\begin{align}
	& \hspace{-20pt} \sup_{\bm{x} \in \mathcal{X}} \left| f(\bm{x}) - I_{\sigma, D, N} f(\bm{x}) \right| \nonumber \\
	& \le \underbrace{ \sup_{\bm{x} \in \mathcal{X}} \left| f(\bm{x}) - I_{\sigma} f(\bm{x}) \right| }_{(*)} + \underbrace{ \sup_{\bm{x} \in \mathcal{X}} \left| I_{\sigma} f(\bm{x}) - I_{\sigma, D} f (\bm{x}) \right| }_{(**)} + \underbrace{ \sup_{\bm{x} \in \mathcal{X}} \left| I_{\sigma, D} f (\bm{x}) - I_{\sigma, D, N} f (\bm{x}) \right| }_{(***)}. \label{eq: errors to be bounded}
	\end{align}
	
	\vspace{5pt}
	\noindent \textbf{Bounding $(*)$:} 
	Let $ f_{\text{clip}}(\bm{x}) := \mathbbm{1}(\bm{x}) f(\bm{x}) $.
	Since $f_{\text{clip}}(\bm{x}) = f(\bm{x})$ for all $\bm{x} \in \mathcal{X}$ and our construction ensures that $ I_{\sigma} f (\bm{x}) = R_{\sigma}^{*} R f_{\text{clip}}(\bm{x}) $, we have that
	\begin{eqnarray}
	(*) = \sup_{\bm{x} \in \mathcal{X}} \left| f(\bm{x}) - I_{\sigma} f(\bm{x}) \right| = \sup_{\bm{x} \in \mathcal{X}} \left| f_{\text{clip}}(\bm{x}) - R_{\sigma}^{*} R f_{\text{clip}}(\bm{x}) \right|. \nonumber
	\end{eqnarray}
	In order to apply \Cref{thm: recon_mod} for $(*)$, the two quantities $M_1(f_{\text{clip}})$ and $B(f_{\text{clip}})$ must be shown to be finite.
	For the first quantity, we have
	\begin{align}
	M_1(f_{\text{clip}}) & \le \sup_{\bm{x} \in \mathbb{R}^d} \left| (\mathbbm{1} f)(\bm{x}) \right| + \max_{|\bm{\alpha}| = 1} \sup_{\bm{x} \in \mathbb{R}^d} \left| \partial^{\bm{\alpha}} (\mathbbm{1} f) (\bm{x}) \right|  \nonumber \\
	& = \sup_{\bm{x} \in \mathbb{R}^d} \left| \mathbbm{1}(\bm{x}) f(\bm{x}) \right| + \max_{|\bm{\alpha}|=1} \sup_{\bm{x} \in \mathbb{R}^d} \left| \mathbbm{1}(\bm{x}) \partial^{\bm{\alpha}} f(\bm{x}) + f(\bm{x}) \partial^{\bm{\alpha}} \mathbbm{1}(\bm{x}) \right|  \nonumber \\
	& \leq \sup_{\bm{x} \in \mathbb{R}^d} \left| \mathbbm{1}(\bm{x}) f(\bm{x}) \right| + \max_{|\bm{\alpha}|=1} \sup_{\bm{x} \in \mathbb{R}^d} \left| \mathbbm{1}(\bm{x}) \partial^{\bm{\alpha}} f(\bm{x}) \right| + \max_{|\bm{\alpha}|=1} \sup_{\bm{x} \in \mathbb{R}^d} \left| f(\bm{x}) \partial^{\bm{\alpha}} \mathbbm{1}(\bm{x}) \right|. \nonumber
	\end{align}
	Recall that the infinitely differentiable function $\mathbbm{1}$ has the property $\mathbbm{1}(\bm{x}) = 1$ if $\bm{x} \in \mathcal{X}$ and $\mathbbm{1}(\bm{x}) = 0$ if $\bm{x} \notin [-S,S]^d$, meaning that $\mathbbm{1}(\bm{x}) = 0$ and $\partial^{\bm{\alpha}} \mathbbm{1}(\bm{x}) = 0$ for all $\bm{x} \notin [-S,S]^d$.
	By the assumption $f \in C^1(\mathbb{R}^d)$, the all terms $\mathbbm{1}(\bm{x}) f(\bm{x})$, $\mathbbm{1}(\bm{x}) \partial^{\bm{\alpha}} f(\bm{x})$, and $f(\bm{x}) \partial^{\bm{\alpha}} \mathbbm{1}(\bm{x})$ vanish outside of $\bm{x} \in [-S,S]^d$.
	Therefore the following inequality holds:
	\begin{align}
	M_1(f_{\text{clip}}) & \leq \sup_{\bm{x} \in [-S,S]^d} \left| \mathbbm{1}(\bm{x}) f(\bm{x}) \right| + \max_{|\bm{\alpha}|=1} \sup_{\bm{x} \in [-S,S]^d} \left| \mathbbm{1}(\bm{x}) \partial^{\bm{\alpha}} f(\bm{x}) \right| + \max_{|\bm{\alpha}|=1} \sup_{\bm{x} \in [-S,S]^d} \left| f(\bm{x}) \partial^{\bm{\alpha}} \mathbbm{1}(\bm{x}) \right|. \nonumber \\
	& \leq \sup_{\bm{x} \in [-S,S]^d} \left| \mathbbm{1}(\bm{x}) \right| \times \sup_{\bm{x} \in [-S,S]^d} \left| f(\bm{x}) \right| + \nonumber \\
	& \hspace{20pt} \sup_{\bm{x} \in [-S,S]^d} \left| \mathbbm{1}(\bm{x}) \right| \times \max_{|\bm{\alpha}|=1} \sup_{\bm{x} \in [-S,S]^d} \left| \partial^{\bm{\alpha}} f(\bm{x}) \right| + \sup_{\bm{x} \in [-S,S]^d} \left| f(\bm{x}) \right| \times \max_{|\bm{\alpha}|=1} \sup_{\bm{x} \in [-S,S]^d} \left| \partial^{\bm{\alpha}} \mathbbm{1}(\bm{x}) \right| \nonumber \\
	& = 3 M_{1}^{*}(\mathbbm{1}) M_{1}^{*}(f) < \infty.  \label{eq: Mf bound}
	\end{align}
	The quantity $B_1(f_{\text{clip}})$ is clearly bounded since $f_{\text{clip}}(\bm{x})$ is compactly support on $[-S,S]^d$:
	\begin{align}
	B_1(f_{\text{clip}}) & = \int_{[-S,S]^d} f_{\text{clip}}(\bm{x}) (1 + \| \bm{x} \|_2) \mathrm{d} \bm{x} \nonumber \\
	&\leq \int_{[-S,S]^d} M_1(f_{\text{clip}}) \cdot (1 + S) \mathrm{d} \bm{x} = (1 + S) (2S)^d M_1(f_{\text{clip}}) < \infty. \label{eq: bound_b_1}
	\end{align}
	Thus we may apply \Cref{thm: recon_mod} to obtain
	\begin{align}
	(*) & \leq C_1' \max(M_1(f_{\text{clip}}), B_1(f_{\text{clip}})) \left( \frac{1}{\sigma_{w}} + \frac{\sigma_{\bm{w}}^{d} (\sigma_{\bm{w}} + 1)}{\sigma_{b}} \right) \nonumber
	\end{align}
	for some constant $C_1' > 0$ depending only on $\phi, \psi, p_{\bm{w}}, p_b $.
	Let $C_1 := (1 + S) (2S)^d C_1'$, so that from \eqref{eq: bound_b_1} we have
	\begin{align}
	(*) & \leq C_1 M_1(f_{\text{clip}}) \left( \frac{1}{\sigma_{w}} + \frac{\sigma_{\bm{w}}^{d} (\sigma_{\bm{w}} + 1)}{\sigma_{b}} \right) . \nonumber
	\end{align}
	
	\vspace{5pt}
	\noindent \textbf{Bounding $(**)$:} Let 
	$$
	h_{\bm{w}, b}(\bm{x}) := f_{\text{clip}}(\bm{x}) \frac{\psi(\bm{w} \cdot \bm{x} + b)}{1 + \| \bm{w} \|_2} .
	$$
	Since $f_{\text{clip}}$ vanishes outside $[-S,S]^d$, so does $h_{\bm{w}, b}$ and therefore
	\begin{multline*}
	I_{\sigma} f(\bm{x}) - I_{\sigma, D} f (\bm{x}) = \int_{\mathbb{R}} \int_{\mathbb{R}^d} \bigg\{ \underbrace{ \int_{[-S,S]^d} h_{\bm{w}, b}(\bm{x}') \mathrm{d} \bm{x}' - \frac{2^d}{D} \sum_{j=1}^{D} h_{\bm{w}, b}(\bm{x}_j) }_{\text{(a)}} \bigg\} \times \\
	(1 + \| \bm{w} \|_2) \phi(\bm{w} \cdot \bm{x} + b) Z p_{\bm{w}, \sigma}(\bm{w}) p_{b, \sigma}(b) \mathrm{d} \bm{w} \mathrm{d} b. \nonumber
	\end{multline*}
	
	Our aim here to show that the collection of functions $ h_{\bm{w}, b} $ indexed by $\bm{w}$ and $b$ has derivatives that are uniformly bounded on $[-S,S]^d$, in order that \Cref{lem: QMC} can be applied to (a).
	Let $\bm{\alpha}$ be a multi-index such that $| \bm{\alpha} | = 1$.
	By the chain rule of differentiation, 
	\begin{align*}
	\partial^{\bm{\alpha}} h_{\bm{w}, b}(\bm{x}) & = \Big( \partial^{\bm{\alpha}} f_{\text{clip}}(\bm{x}) \psi(\bm{w} \cdot \bm{x} + b) + f_{\text{clip}}(\bm{x}) \partial^{\bm{\alpha}} \psi(\bm{w} \cdot \bm{x} + b) \Big) \frac{1}{1 + \| \bm{w} \|_2}\\
	& = \Big( \partial^{\bm{\alpha}} f_{\text{clip}}(\bm{x}) \psi(\bm{w} \cdot \bm{x} + b) + f_{\text{clip}}(\bm{x}) \bm{w}^{\bm{\alpha}} ( \partial \psi )(\bm{w} \cdot \bm{x} + b) \Big) \frac{1}{1 + \| \bm{w} \|_2} 
	\end{align*}
	where we recall that $\bm{w}^{\bm{\alpha}} = w_i$ for the non zero element index $i$ of $\bm{\alpha}$ and that $\partial \psi$ is the first derivative of $\psi: \mathbb{R} \to \mathbb{R}$.
	Since $| \bm{\alpha} | = 1$, we have $\bm{w}^{\bm{\alpha}} \le \| \bm{w} \|_2$.
	In addition, $\partial^{\bm{\alpha}} f_{\text{clip}}(\bm{x}) \le M_1(f_{\text{clip}})$ and $f_{\text{clip}}(\bm{x}) \le M_1(f_{\text{clip}})$ for all $\bm{x} \in [-S,S]^d$ by definition.
	Therefore
	\begin{align*}
	\partial^{\bm{\alpha}} h_{\bm{w}, b}(\bm{x}) & \le M_1(f_{\text{clip}}) \left( \frac{1}{1 + \| \bm{w} \|_2} \psi(\bm{w} \cdot \bm{x} + b) + \frac{ \| \bm{w} \|_2 }{1 + \| \bm{w} \|_2} ( \partial \psi )(\bm{w} \cdot \bm{x} + b) \right) .
	\end{align*}
	Since $\psi \in \mathcal{S}(\mathbb{R})$, $\frac{ 1 }{1 + \| \bm{w} \|_2} \le 1$, and $\frac{ \| \bm{w} \|_2 }{1 + \| \bm{w} \|_2} \le 1$ for all $\bm{w} \in \mathbb{R}^d$, we further have
	\begin{align*}
	\partial^{\bm{\alpha}} h_{\bm{w}, b}(\bm{x}) & \le 2 M_1(f_{\text{clip}}) M_1(\psi), 
	\end{align*}
	which is a uniform bound, independent of $\bm{x} \in [-S,S]^d$, $\bm{w} \in \mathbb{R}^d$, $b \in \mathbb{R}$, and $\bm{\alpha}$ such that $| \bm{\alpha} | = 1$.
	Applying \Cref{lem: QMC} for the term (a) with the grid points $ ( \bm{x}_i )_{i=1}^{D} $ on $[-S,S]^d$,
	\begin{align*}
	\left| \int_{[-S,S]^d} h_{\bm{w}, b}(\bm{x}') \mathrm{d} \bm{x}' - \frac{2^d}{D} \sum_{j=1}^{D} h_{\bm{w}, b}(\bm{x}_j) \right| & \le \frac{C_Q'}{D^{\frac{1}{d}}} \max_{|\bm{\alpha}| = 1} \sup_{\bm{x}' \in [-S,S]^d} | \partial^{\bm{\alpha}} h_{\bm{w}, b}(\bm{x}') | \\
	& \le \frac{2 C_Q' M_1(f_{\text{clip}}) M_1(\psi)}{D^{\frac{1}{d}}} .
	\end{align*}
	where $C_Q'$ is a constant independent of $h_{\bm{w}, b}$.
	Therefore setting $C_Q = 2 C_Q'$ we have that
	\begin{multline}
	\left| I_{\sigma} f(\bm{x}) - I_{\sigma, D} f (\bm{x}) \right| \\
	\le \frac{C_{\text{Q}} M_1(f_{\text{clip}}) M_1(\psi) }{D^{\frac{1}{d}}} \underbrace{ \int_{\mathbb{R}} \int_{\mathbb{R}^{d}} \Big| ( 1 + \| \bm{w} \|_2 ) \phi(\bm{w} \cdot \bm{x} + b) Z p_{\bm{w}, \sigma}(\bm{w}) p_{b, \sigma}(b) \Big| \mathrm{d} \bm{w} \mathrm{d} b }_{\text{(b)}} , \label{eq: I_quadrature_bnd}
	\end{multline}
	where we move $C_{\text{Q}} M_1(\psi) M_1(f_{\text{clip}}) $ outside the integral as they are $\bm{w}$ and $b$-independent.
	
	It remains to bound (b).
	By the assumption $\phi \in C_{0}^{*}(\mathbb{R})$,
	\begin{eqnarray}
	\text{(b)} \le Z \| \phi \|_{L^\infty} \int_{\mathbb{R}^d} (1 + \| \bm{w} \|_2) p_{\bm{w}, \sigma}(\bm{w}) \mathrm{d} \bm{w} \int_{\mathbb{R}} p_{b, \sigma}(b) \mathrm{d} b = Z \| \phi \|_{L^\infty} \int_{\mathbb{R}^d} (1 + \| \bm{w} \|_2) p_{\bm{w}, \sigma}(\bm{w}) \mathrm{d} \bm{w}, \nonumber
	\end{eqnarray}
	where $ \int_{\mathbb{R}} p_{b, \sigma}(b) \mathrm{d} b = 1 $ since $ p_{b. \sigma} $ is probability density.
	Since $ p_{\bm{w}, \sigma}(\bm{w}) = \frac{1}{\sigma_{\bm{w}}^d} p_{\bm{w}}\left( \frac{\bm{w}}{\sigma_{\bm{w}}} \right) $, by a change of variable $ \bm{w}' = \sigma_{\bm{w}}^{-1} \bm{w} $,
	\begin{eqnarray}
	\text{(b)} \le Z \| \phi \|_{L^\infty} \int_{\mathbb{R}^d} (1 + \sigma_{\bm{w}} \| \bm{w}' \|_2) p_{\bm{w}}(\bm{w}') \mathrm{d} \bm{w}'. \nonumber 
	\end{eqnarray}
	Since $ p_{\bm{w}} $ has the finite second moment, let $ C_{p_{\bm{w}}} := \max \left(1,  \int_{\mathbb{R}^d} \| \bm{w} \|_2^2 p_{\bm{w}}(\bm{w}) \mathrm{d} \bm{w} \right) $ to see
	\begin{eqnarray}
	\text{(b)} \le Z \| \phi \|_{L^\infty} C_{p_{\bm{w}}} (1 + \sigma_{\bm{w}}). \nonumber 
	\end{eqnarray}
	Recalling $ Z = (2 \pi)^{\frac{1}{2}} \| \widehat{p_{\bm{w}}} \|_{L^1}^{-1} \| \widehat{p_b} \|_{L^1}^{-1} \sigma_{\bm{w}}^{d} \sigma_b $,
	\begin{eqnarray}
	\text{(b)} \le (2 \pi)^{\frac{1}{2}} \| \widehat{p_{\bm{w}}} \|_{L^1}^{-1} \| \widehat{p_b} \|_{L^1}^{-1} \| \phi \|_{L^\infty} C_{p_{\bm{w}}} \sigma_b \sigma_{\bm{w}}^{d} (\sigma_{\bm{w}} + 1) . \nonumber 
	\end{eqnarray}
	By plugging the upper bound of (b) in and setting $ C_2 := (2 \pi)^{\frac{1}{2}} \| \widehat{p_{\bm{w}}} \|_{L^1}^{-1} \| \widehat{p_b} \|_{L^1}^{-1} \| \phi \|_{L^\infty} M_1(\psi) C_{p_{\bm{w}}} C_{\text{Q}} $, we arrive at the overall bound
	\begin{eqnarray}
	\sup_{\bm{x} \in \mathcal{X}} | I_{\sigma} f(\bm{x}) - I_{\sigma, D} f (\bm{x}) | \le C_2 M_1(f_{\text{clip}}) \frac{\sigma_b \sigma_{\bm{w}}^{d} (\sigma_{\bm{w}} + 1)}{D^{\frac{1}{d}} (\log D)^{-1}}. \nonumber
	\end{eqnarray}

	\vspace{5pt}
	\noindent \textbf{Bounding $(***)$:}
	Define $ \tau_{\phi}(\bm{x}, \bm{w}, b) := \left( \frac{(2S)^d}{D} \sum_{j=1}^{D} f_{\text{clip}}(\bm{x}'_j) \psi(\bm{w} \cdot \bm{x}'_j + b) \right) \phi(\bm{w} \cdot \bm{x} + b) $.
	Then the term $(***)$ can be written as
	\begin{eqnarray}
	\sup_{\bm{x} \in \mathcal{X}} \left| I_{\sigma, D} f (\bm{x}) - I_{\sigma, D, N} f (\bm{x}) \right| = Z \sup_{\bm{x} \in \mathcal{X}} \left| \mathop{\mathbb{E}}_{(\bm{w}, b)}[ \tau_{\phi}(\bm{x}, \bm{w}, b) ] - \frac{1}{N} \sum_{i=1}^{N} \tau_{\phi}(\bm{x}, \bm{w}_i, b_i) \right|. \nonumber
	\end{eqnarray}
	We apply \Cref{lem: exp_bnd} and \Cref{lem: bd_ineq} to obtain the upper bound of $(***)$.
	In order to apply Lemma \ref{lem: exp_bnd}, it is to be verified that there exists $ G: \mathbb{R}^{d+1} \to \mathbb{R} $ such that $ \mathop{\mathbb{E}}_{(\bm{w}, b)}[ G(\bm{w}, b)^2 ] < \infty $ and $ | \tau_{\phi}(\bm{x}, \bm{w}, b) - \tau_{\phi}(\bm{x}', \bm{w}, b) | \le G(\bm{w}, b) \| \bm{x} - \bm{x}' \|_2 $.
	
	Recalling that $\| f_{\text{clip}} \|_{L^\infty} \leq M_1(f_{\text{clip}})$ and that $\phi$ was assumed to be Lipschitz with constant denoted $L_\phi$, the difference of $ \tau_{\phi} $ is given by
	\begin{eqnarray*}
		| \tau_{\phi}(\bm{x}, \bm{w}, b) - \tau_{\phi}(\bm{x}', \bm{w}, b) | & = & \left| \left( \frac{(2S)^d}{D} \sum_{j=1}^{D} f_{\text{clip}}(\bm{x}'_j) \psi(\bm{w} \cdot \bm{x}'_j + b) \right) \left( \phi(\bm{w} \cdot \bm{x} + b) - \phi(\bm{w} \cdot \bm{x}' + b) \right) \right|  \\
		& \leq & (2S)^d M_1(f_{\text{clip}}) \| \psi \|_{L^\infty} L_\phi  \left| \bm{w} \cdot (\bm{x} - \bm{x}') \right| \\
		& \leq & (2S)^d M_1(f_{\text{clip}}) \| \psi \|_{L^\infty} L_{\phi} \| \bm{w} \|_2 \| \bm{x} - \bm{x}' \|_2
	\end{eqnarray*}
	where the final inequality used Cauchy-Schwartz.
	Let $ G(\bm{w}, b) := M_1(f_{\text{clip}}) \| \psi \|_{L^\infty} L_{\phi} \| \bm{w} \|_2 $, so that
	\begin{eqnarray}
	\mathop{\mathbb{E}}_{(\bm{w}, b)}[ G(\bm{w}, b)^2 ] & = & (2S)^{2d} M_1(f_{\text{clip}})^2 \| \psi \|_{L^\infty}^2  L_{\phi}^2 \int_{\mathbb{R}^d} \| \bm{w} \|_2^2 p_{\bm{w}, \sigma}(\bm{w}) \mathrm{d} \bm{w} \int_{\mathbb{R}} p_{b, \sigma}(b) \mathrm{d} b \nonumber \\
	& = & (2S)^{2d} M_1(f_{\text{clip}})^2 \| \psi \|_{L^\infty}^2  L_{\phi}^2 \int_{\mathbb{R}^d} \| \bm{w} \|_2^2 p_{\bm{w}, \sigma}(\bm{w}) \mathrm{d} \bm{w}. \nonumber
	\end{eqnarray}
	Since $ p_{\bm{w}, \sigma}(\bm{w}) = \frac{1}{\sigma_{\bm{w}}^d} p_{\bm{w}}\left( \frac{\bm{w}}{\sigma_{\bm{w}}} \right) $, by a change of variable $ \bm{w}' = \sigma_{\bm{w}}^{-1} \bm{w} $, we have $ \int_{\mathbb{R}^d} \| \bm{w} \|_2^2 p_{\bm{w}, \sigma}(\bm{w}) \mathrm{d} \bm{w} = \sigma_{\bm{w}}^2 \int_{\mathbb{R}^d} \| \bm{w}' \|_2^2 p_{\bm{w}}(\bm{w}') \mathrm{d} \bm{w}' $.
	By the assumption that $ p_{\bm{w}} $ has the finite second moment, let $ V_{p_{\bm{w}}}^2 := \int_{\mathbb{R}^d} \| \bm{w} \|_2^2 p_{\bm{w}}(\bm{w}) \mathrm{d} \bm{w} $ to see
	\begin{eqnarray}
	\mathop{\mathbb{E}}_{(\bm{w}, b)}[ G(\bm{w}, b)^2 ] \le \left\{ (2S)^d M_1(f_{\text{clip}}) \| \psi \|_{L^\infty} L_{\phi} V_{p_{\bm{w}}} \sigma_{\bm{w}} \right\}^2. \nonumber
	\end{eqnarray}
	By Lemma \ref{lem: exp_bnd}, for some constant $ C_{\mathcal{X}} $ only depending on $ \mathcal{X} $,
	\begin{align}
	\mathop{\mathbb{E}}\left[ \sup_{\bm{x} \in \mathcal{X}} \left| \mathop{\mathbb{E}}_{(\bm{w}, b)}[ \tau_{\phi}(\bm{x}, \bm{w}, b) ] - \frac{1}{N} \sum_{i=1}^{N} \tau_{\phi}(\bm{x}, \bm{w}_i, b_i) \right| \right] & \le \frac{ C_{\mathcal{X}} \sqrt{\mathop{\mathbb{E}}_{(\bm{w}, b)}[ G(\bm{w}, b)^2 ]} }{\sqrt{N}} \nonumber \\
	& \le \frac{ C_{\mathcal{X}} (2S)^d M_1(f_{\text{clip}}) \| \psi \|_{L^\infty} L_{\phi} V_{p_{\bm{w}}} \sigma_{\bm{w}} }{\sqrt{N}}. \label{eq: exp_I_bnd}
	\end{align}
	
	By the upper bound $ \left( \frac{(2S)^d}{D} \sum_{j=1}^{D} f_{\text{clip}}(\bm{x}'_j) \psi(\bm{w} \cdot \bm{x}'_j + b) \right) \le (2S)^d M_1(f_{\text{clip}}) \| \psi \|_{L^\infty} $ and the assumption $\phi \in C_0^*(\mathbb{R})$, we have $ \tau_{\phi}(\bm{x}, \bm{w}, b) \le (2S)^d M_1(f_{\text{clip}}) \| \psi \|_{L^\infty} \| \phi \|_{L^\infty} $ for all $(\bm{x}, \bm{w}, b) \in \mathbb{R}^d \times \mathbb{R}^d \times \mathbb{R}$.
	From \Cref{lem: bd_ineq}, we have, with probability at least $ 1- \delta $, 
	\begin{eqnarray}
	\sup_{\bm{x} \in \mathcal{X}} \left| I_{\sigma, D} f (\bm{x}) - I_{\sigma, D, N} f (\bm{x}) \right| \le Z (2S)^d M_1(f_{\text{clip}}) \| \psi \|_{L^\infty} \left( \frac{ C_{\mathcal{X}} L_{\phi} V_{p_{\bm{w}}} \sigma_{\bm{w}} }{\sqrt{N}} + \frac{ \sqrt{2} \| \phi \|_{L^\infty} \sqrt{\log \delta^{-1}} }{\sqrt{N}} \right). \nonumber
	\end{eqnarray}
	Let $ C_3 := (2 \pi)^{\frac{1}{2}} (2S)^d \| \widehat{p_{\bm{w}}} \|_{L^1}^{-1} \| \widehat{p_b} \|_{L^1}^{-1} \| \psi \|_{L^\infty} \max\left( C_{\mathcal{X}} L_{\phi} V_{p_{\bm{w}}}, \sqrt{2} \| \phi \|_{L^\infty} \right) $ where we recall that $ Z = (2 \pi)^{\frac{1}{2}} \| \widehat{p_{\bm{w}}} \|_{L^1}^{-1} \| \widehat{p_b} \|_{L^1}^{-1} \sigma_{\bm{w}}^{d} \sigma_b $. 
	Then we have
	\begin{align}
	\sup_{\bm{x} \in \mathcal{X}} \left| I_{\sigma, D} f (\bm{x}) - I_{\sigma, D, N} f (\bm{x}) \right| & \leq C_3 M_1(f_{\text{clip}}) \left( \frac{ \sigma_{b} \sigma_{\bm{w}}^{d} ( \sigma_{\bm{w}} + \sqrt{\log \delta^{-1}} ) }{\sqrt{N}} \right). \nonumber
	\end{align}
	
	For all bounds of $(*)$, $(**)$ and $(***)$, recall that $ M(f_{\text{clip}}) \le 3 M_{1}^{*}(\mathbbm{1}) M_{1}^{*}(f) $ from (\ref{eq: Mf bound}).
	Then combining $(*)$, $(**)$ and $(***)$ and setting $C := 3 M_1^*(\mathbbm{1}) ( C_1 + C_2 + C_3 )$ completes the proof.
\end{proof}

\subsubsection{Proof of Theorem \ref{thm: recon_rate}} \label{subsec: proof of GP thm}

\begin{proof}
	For a bivariate function $g(\bm{x},\bm{y})$ we let $I_{\sigma,D,N}^{\bm{x}} g(\bm{x},\bm{y})$ denote the action of $I_{\sigma,D,N}$ on the first argument of $g$ and we let $I_{\sigma,D,N}^{\bm{y}} g(\bm{x},\bm{y})$ denote the action of $I_{\sigma,D,N}$ on the second argument of $g$.
	To reduce notation, in this proof we denote $\mathbb{E}_{f | \bm{w},b}[ \cdot ] := \mathbb{E}[ \cdot \mid \{ \bm{w}_i, b_i \}_{i=1}^{N} ]$.
	For fixed $\bm{x},\bm{y}$ we let
	\begin{eqnarray*}
		(a) & := & \mathbb{E}_{f | \bm{w},b}\left[ \left( f(\bm{x}) - I_{\sigma, D, N}f(\bm{x})  \right) \left( f(\bm{y}) - I_{\sigma, D, N}f(\bm{y})  \right) \right] \\
		& = & \mathbb{E}_{f | \bm{w},b}[ f(\bm{x}) f(\bm{y})  ] - \mathbb{E}_{f | \bm{w},b}[ I_{\sigma, D, N} f(\bm{x}) f(\bm{y}) ] - \mathbb{E}_{f | \bm{w},b}[ f(\bm{x}) I_{\sigma, D, N}f(\bm{y}) ] \\
		& & \qquad + \mathbb{E}_{f | \bm{w},b}[ I_{\sigma, D, N}f(\bm{x}) I_{\sigma, D, N}f(\bm{y}) ] .
	\end{eqnarray*}
	Recall that $ \mathbb{E}_{f | \bm{w},b}[ f(\bm{x}) f(\bm{y}) ] = k(\bm{x}, \bm{y}) + m(\bm{x}) m(\bm{y}) $ for $ f \sim \mathcal{GP}(m, k) $.
	Then
	\begin{eqnarray*}
		(a) & = & \mathbb{E}_{f | \bm{w},b}[ f(\bm{x}) f(\bm{y}) ] - I_{\sigma, D, N}^{\bm{x}} \mathbb{E}_{f | \bm{w},b}[ f(\bm{x}) f(\bm{y}) ] - I_{\sigma, D, N}^{\bm{y}} \mathbb{E}_{f | \bm{w},b}[ f(\bm{x}) f(\bm{y}) ] \\
		& & \qquad + I_{\sigma, D, N}^{\bm{x}} I_{\sigma, D, N}^{\bm{y}} \mathbb{E}_{f | \bm{w},b}[ f(\bm{x}) f(\bm{y}) ] \\
		& = & \left( k(\bm{x},\bm{y}) + m(\bm{x}) m(\bm{y}) \right) - I_{\sigma,D,N}^{\bm{x}} \left( k(\bm{x},\bm{y}) + m(\bm{x}) m(\bm{y}) \right) \\
		& & \hspace{80pt} - I_{\sigma,D,N}^{\bm{y}} \left( k(\bm{x},\bm{y}) + m(\bm{x}) m(\bm{y}) \right)  + I_{\sigma,D,N}^{\bm{x}} I_{\sigma,D,N}^{\bm{y}} \left( k(\bm{x},\bm{y}) + m(\bm{x}) m(\bm{y}) \right) \\
		& = & \left( m(\bm{x}) m(\bm{y}) - I_{\sigma,D,N} m(\bm{x}) m(\bm{y}) - m(\bm{x}) I_{\sigma,D,N} m(\bm{y}) + I_{\sigma,D,N} m(\bm{x}) I_{\sigma,D,N} m(\bm{y}) \right) \\
		& & \hspace{80pt} + \left( k(\bm{x},\bm{y}) - I_{\sigma,D,N}^{\bm{x}} k(\bm{x},\bm{y}) - I_{\sigma,D,N}^{\bm{y}} k(\bm{x},\bm{y}) + I_{\sigma,D,N}^{\bm{x}} I_{\sigma,D,N}^{\bm{y}} k(\bm{x},\bm{y}) \right) 
	\end{eqnarray*}
	Let $h(\bm{x}, \bm{y}) := k(\bm{x},\bm{y}) - I_{\sigma,D,N}^{\bm{x}} k(\bm{x},\bm{y})$ in order to see
	\begin{eqnarray*}
		(a) & = & ( m(\bm{x}) - I_{\sigma,D,N} m(\bm{x}) ) ( m(\bm{y}) - I_{\sigma,D,N} m(\bm{y}) ) + \left( h(\bm{x}, \bm{y}) - I_{\sigma,D,N}^{\bm{y}} h(\bm{x}, \bm{y}) \right).
	\end{eqnarray*}
	Therefore the error is
	\begin{align*}
	\sup_{\bm{x} \in \mathcal{X}} \sqrt{ \mathbb{E}_{f | \bm{w},b} \left[ \left( f(\bm{x}) - I_{\sigma,D,N} f(\bm{x}) \right)^2 \right] } & \leq \sup_{\bm{x} \in [-S,S]^d} \sqrt{ \mathbb{E}_{f | \bm{w},b} \left[ \left( f(\bm{x}) - I_{\sigma,D,N} f(\bm{x}) \right)^2 \right] } \nonumber \\
	& \hspace{-30pt} \leq \underbrace{ \sup_{\bm{x} \in [-S,S]^d} \left| m(\bm{x}) - I_{\sigma,D,N} m(\bm{x}) \right| }_{(*)} + \underbrace{ \sup_{\bm{x} \in [-S,S]^d} \left| h(\bm{x}, \bm{x}) - I_{\sigma,D,N}^{\bm{y}} h(\bm{x}, \bm{x}) \right|^{\frac{1}{2}} }_{(**)}
	\end{align*}
	In the remainder we bound $(*)$ and $(**)$.
	
	\vspace{5pt}
	\noindent \textbf{Bounding $ (*) $:} 
	Applying Theorem \ref{thm: disc_mod_rp}, we immediately have, with probability at least $ 1 - \delta $ with respect to the random variables $\{\bm{w}_i, b_i\}_{i=1}^N$,
	\begin{eqnarray}
	(*) \le C_1 M_1^*(m) \left( \frac{1}{\sigma_{\bm{w}}} + \frac{\sigma_{\bm{w}}^{d} (\sigma_{\bm{w}} + 1)}{\sigma_{b}} + \frac{\sigma_b \sigma_{\bm{w}}^{d} (\sigma_{\bm{w}} + 1)}{D^{\frac{1}{d}}} + \frac{\sigma_b \sigma_{\bm{w}}^{d} (\sigma_{\bm{w}} + \sqrt{\log \delta^{-1}})}{\sqrt{N}} \right) \nonumber
	\end{eqnarray}
	for some constant $ C_1 $ independent of $\delta, \sigma_{\bm{x}} , \sigma_b, D, N$ and $m$.

	\vspace{5pt}
	\noindent \textbf{Bounding $ (**) $:} 
	It is clear to see
	\begin{eqnarray}
	(**) = \left\{ \sup_{\bm{x} \in [-S,S]^d} \left| h(\bm{x}, \bm{x}) - I_{\sigma,D,N}^{\bm{y}} h(\bm{x}, \bm{x}) \right| \right\}^{\frac{1}{2}} \leq \left\{ \sup_{\bm{x} \in [-S,S]^d} \sup_{\bm{y} \in [-S,S]^d} \left| h(\bm{x}, \bm{y}) - I_{\sigma,D,N}^{\bm{y}} h(\bm{x}, \bm{y}) \right| \right\}^{\frac{1}{2}} \label{eq: I_k_error}
	\end{eqnarray}
	First, with respect to the supremum of $\bm{y}$, from Theorem \ref{thm: disc_mod_rp}, with probability at least $ 1 - \delta $ with respect to the random variables $\{\bm{w}_i, b_i\}_{i=1}^N$,
	\begin{multline}
	\sup_{\bm{y} \in [-S,S]^d} \left| h(\bm{x}, \bm{y}) - I_{\sigma,D,N}^{\bm{y}} h(\bm{x}, \bm{y}) \right| \\
	\leq C_2' M_1^*(h(\bm{x}, \cdot)) \left( \frac{1}{\sigma_{\bm{w}}} + \frac{\sigma_{\bm{w}}^{d} (\sigma_{\bm{w}} + 1)}{\sigma_{b}} + \frac{\sigma_b \sigma_{\bm{w}}^{d} (\sigma_{\bm{w}} + 1)}{D^{\frac{1}{d}}} + \frac{\sigma_b \sigma_{\bm{w}}^{d} (\sigma_{\bm{w}} + \sqrt{\log \delta^{-1}})}{\sqrt{N}} \right) \label{eq: I_k_bound}
	\end{multline}
	where $M_1^*(h(\bm{x}, \cdot))$ is given as
	\begin{align}
	M_1^*(h(\bm{x}, \cdot)) = \max_{|\bm{\beta}| \leq 1} \sup_{\bm{y} \in [-S,S]^d} | \partial^{\bm{0}, \bm{\beta}} h (\bm{x}, \bm{y})|. \label{eq: I_k_bound_M}
	\end{align}
	Second, $\left| M_1^*(h(\bm{x}, \cdot)) \right|$ is to be bounded.
	Recall $h(\bm{x}, \bm{y}) = k(\bm{x},\bm{y}) - I_{\sigma,D,N}^{\bm{x}} k(\bm{x},\bm{y})$ to see $\partial^{\bm{0}, \bm{\beta}}h(\bm{x}, \bm{y}) = \partial^{\bm{0}, \bm{\beta}}k(\bm{x},\bm{y}) - I_{\sigma,D,N}^{\bm{x}} \partial^{\bm{0}, \bm{\beta}}k(\bm{x},\bm{y})$.
	For fixed $\bm{y} \in [-S,S]^d$ and $|\beta| \leq 1$, $\partial^{\bm{0}, \bm{\beta}}h(\bm{x}, \bm{y})$ is upper bounded for all $\bm{x} \in [-S,S]^d$ from Theorem \ref{thm: disc_mod_rp}: with probability at least $1 - \delta$ with respect to the random variables $\{\bm{w}_i, b_i\}_{i=1}^N$,
	\begin{align*}
	\partial^{\bm{0}, \bm{\beta}}h(\bm{x}, \bm{y}) & \leq \sup_{\bm{x} \in [-S,S]^d} \left| \partial^{\bm{0}, \bm{\beta}}h(\bm{x}, \bm{y}) \right| \\
	& = \sup_{\bm{x} \in [-S,S]^d} \left| \partial^{\bm{0}, \bm{\beta}}k(\bm{x},\bm{y}) - I_{\sigma,D,N}^{\bm{x}} \partial^{\bm{0}, \bm{\beta}}k(\bm{x},\bm{y}) \right| \\
	& \leq C_2'' M_1^*(\partial^{\bm{0}, \bm{\beta}}k(\cdot, \bm{y})) \left( \frac{1}{\sigma_{\bm{w}}} + \frac{\sigma_{\bm{w}}^{d} (\sigma_{\bm{w}} + 1)}{\sigma_{b}} + \frac{\sigma_b \sigma_{\bm{w}}^{d} (\sigma_{\bm{w}} + 1)}{D^{\frac{1}{d}}} + \frac{\sigma_b \sigma_{\bm{w}}^{d} (\sigma_{\bm{w}} + \sqrt{\log \delta^{-1}})}{\sqrt{N}} \right).
	\end{align*}
	where $M_1^*(\partial^{\bm{0}, \bm{\beta}}k(\cdot, \bm{y})) = \max_{|\bm{\alpha}| \leq 1} \sup_{\bm{x} \in [-S,S]^d} |\partial^{\bm{\alpha}, \bm{\beta}}k(\bm{x}, \bm{y})|$.
	Plugging this upper bound into \eqref{eq: I_k_bound_M}, with probability at least $1 - \delta$ with respect to the random variables $\{\bm{w}_i, b_i\}_{i=1}^N$,
	\begin{multline*}
	M_1^*(h(\bm{x}, \cdot)) \leq C_2'' \max_{|\bm{\beta}| \leq 1} \sup_{\bm{y} \in [-S,S]^d} \left| \max_{|\bm{\alpha}| \leq 1} \sup_{\bm{x} \in [-S,S]^d} \left| \partial^{\bm{\alpha}, \bm{\beta}}k(\bm{x}, \bm{y}) \right| \right| \\
	\left( \frac{1}{\sigma_{\bm{w}}} + \frac{\sigma_{\bm{w}}^{d} (\sigma_{\bm{w}} + 1)}{\sigma_{b}} + \frac{\sigma_b \sigma_{\bm{w}}^{d} (\sigma_{\bm{w}} + 1)}{D^{\frac{1}{d}}} + \frac{\sigma_b \sigma_{\bm{w}}^{d} (\sigma_{\bm{w}} + \sqrt{\log \delta^{-1}})}{\sqrt{N}} \right).
	\end{multline*}
	Recall $M_1^{*}(k) := \max_{|\bm{\alpha}| \leq 1, |\bm{\beta}| \leq 1} \sup_{\bm{x}, \bm{y} \in [-S,S]^d} |\partial^{\bm{\alpha}, \bm{\beta}}k(\bm{x}, \bm{y})|$ to see, for all $\bm{x} \in [-S,S]^d$, and with probability at least $1 - \delta$ with respect to the random variables $\{\bm{w}_i, b_i\}_{i=1}^N$,
	\begin{align*}
	M_1^*(h(\bm{x}, \cdot)) \leq C_2'' M_1^{*}(k) \left( \frac{1}{\sigma_{\bm{w}}} + \frac{\sigma_{\bm{w}}^{d} (\sigma_{\bm{w}} + 1)}{\sigma_{b}} + \frac{\sigma_b \sigma_{\bm{w}}^{d} (\sigma_{\bm{w}} + 1)}{D^{\frac{1}{d}}} + \frac{\sigma_b \sigma_{\bm{w}}^{d} (\sigma_{\bm{w}} + \sqrt{\log \delta^{-1}})}{\sqrt{N}} \right).
	\end{align*}
	Notice that $M_1^{*}(k) < \infty$ by the assumption $k \in C^{1 \times 1}(\mathbb{R}^d \times \mathbb{R}^d)$.
	Combining this upper bound with \eqref{eq: I_k_bound} and taking the supremum over $\bm{x} \in [-S,S]^d$, from \eqref{eq: I_k_error} we have, with probability at least $1 - \delta$ with respect to the random variables $\{\bm{w}_i, b_i\}_{i=1}^N$,
	\begin{align*}
	(**) \le C_2 \sqrt{M_1^{*}(k)} \left( \frac{1}{\sigma_{\bm{w}}} + \frac{\sigma_{\bm{w}}^{d} (\sigma_{\bm{w}} + 1)}{\sigma_{b}} + \frac{\sigma_b \sigma_{\bm{w}}^{d} (\sigma_{\bm{w}} + 1)}{D^{\frac{1}{d}}} + \frac{\sigma_b \sigma_{\bm{w}}^{d} (\sigma_{\bm{w}} + \sqrt{\log \delta^{-1}})}{\sqrt{N}} \right).
	\end{align*}
	where $C_2 := \sqrt{C_2' C_2''}$.
	
	Combining these bounds on $(*)$ and $(**)$ completes the proof.
\end{proof}


\subsection{An Analogous Result to \Cref{thm: recon_rate} for Unbounded $\phi$}
\label{sec: proof_lemma_unbounded}

In this section we state and prove an analogous result of \Cref{thm: recon_rate} that holds under weaker assumptions on the activation function $\phi$, with a correspondingly stronger assumption on the GP.
This ensures that our theory is compatible with activation functions $\phi$ that may be unbounded, including the ReLU activation function $\phi(x) = \max(0,x)$.
An example of the associated pair function $\psi$ is given in \Cref{ex: ridgelet_func}.

First of all, we recall that a positive semi-definite function $k : \mathbb{R}^d \times \mathbb{R}^d \rightarrow \mathbb{R}$ \textit{reproduces} a Hilbert space $\mathcal{H}_k$ whose elements are functions $f : \mathbb{R}^d \rightarrow \mathbb{R}$ such that $k(\cdot,\bm{x}) \in \mathcal{H}_k$ for all $\bm{x} \in \mathbb{R}^d$ and $\langle f , k(\cdot,\bm{x}) \rangle_{\mathcal{H}_k} = f(\bm{x})$ for all $\bm{x} \in \mathbb{R}^d$ and all $f \in \mathcal{H}_k$.
The Hilbert space $\mathcal{H}_k$ is called a \textit{reproducing kernel Hilbert space} (RKHS) .
It is a well-known fact that GP sample paths are not contained in the RKHS induced by the GP covariance kernel i.e. $f \sim \mathcal{GP}(m,k) \implies \mathbb{P}(f \in \mathcal{H}_k) = 0$, unless $\mathcal{H}_k$ is finite dimensional \cite[Theorem 7.5.4]{Hsing2015b}.
However, $\mathbb{P}(f \in \mathcal{H}_R) = 1$ holds whenever $\mathcal{H}_R$ satisfies \emph{nuclear dominance} over $\mathcal{H}_k$; see \cite{Lukic2001}.
The additional assumption that we require on the GP in this appendix is that the GP takes values in a RKHS $\mathcal{H}_R$ where $R$ is continuously differentiable.
Intuitively, this imposes an additional smoothness requirement on the GP compared to \Cref{thm: recon_rate}.
Our analogous result to \Cref{thm: recon_rate} is as follows:

\begin{theorem}[Analogue of \Cref{thm: recon_rate} for Unbounded $\phi$] \label{lem: alt_unbnd}
	In the same setting of \Cref{thm: recon_rate}, replace the assumption $\phi \in C_0^*(\mathbb{R})$ with $\phi \in C_1^*(\mathbb{R})$.
	In addition, assume that $f \sim \mathcal{GP}(m, k)$ is a random variable taking values in $\mathcal{H}_R$ with the reproducing kernel $R \in C^{1 \times 1}(\mathbb{R}^d \times \mathbb{R}^d)$.
	Assume that $m \in \mathcal{H}_R$ and that the covariance operator $\mathcal{K}$ of $f$ is trace class.
	Then, with probability at least $ 1 - \delta $, 
	\begin{multline*}
	\sup_{\bm{x} \in \mathcal{X}} \sqrt{ \mathop{\mathbb{E}} \left[ \left( f(\bm{x}) - I_{\sigma,D,N} f (\bm{x}) \right)^2  | \{\bm{w}_i,b_i\}_{i=1}^N \right] } \\
	\le C \sqrt{ M_1^*(R) } 
	\left( \| m \|_{\mathcal{H}_R} + \sqrt{ \text{\normalfont tr}(\mathcal{K}) } \right)
	\left\{ \frac{1}{\sigma_{\bm{w}}} + \frac{\sigma_{\bm{w}}^{d} (\sigma_{\bm{w}} + 1)^2}{\sigma_{b}} + \frac{\sigma_b (\sigma_{b} + 1) \sigma_{\bm{w}}^{d} (\sigma_{\bm{w}} + 1)^2}{D^{\frac{1}{d}}} + \frac{\sigma_b \sigma_{\bm{w}}^{d+1}}{\delta \sqrt{N}} \right\}.
	\end{multline*}
	where $\text{\normalfont tr}(\mathcal{K})$ is the trace of the operator $\mathcal{K}$ and $ C $ is a constant independent of $m, k, \sigma_{\bm{w}}, \sigma_b, D, N, \delta $.
\end{theorem}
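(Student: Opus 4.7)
The plan is to mirror the proof of \Cref{thm: recon_rate}, but modify two ingredients to accommodate the weaker boundedness on $\phi$ and the stronger regularity on the GP sample paths. First, I would establish a version of \Cref{thm: disc_mod_rp} valid under $\phi \in C_1^*(\mathbb{R})$. As a prerequisite this needs the \Cref{lem: alt_1} mentioned in the main text: the proof of \Cref{thm: recon_mod} uses $\|\phi\|_{L^\infty}$ only at the final bound \eqref{eq: bound_of_3_2}, where it controls the integrand $|b\,\phi(\cdots)\,\psi(\cdots)|$; replacing $|\phi(z)| \le \|\phi\|_{L^\infty}$ by the polynomial bound $|\phi(z)| \le C(1+|z|)$ costs one extra factor of $(1+|b'|)$ in the integral against $\psi$, which is still finite because $\psi \in \mathcal{S}(\mathbb{R})$, but forces the tail hypothesis on $f$ to be strengthened to $B_2(f) < \infty$. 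Because the ambient function is now compactly supported via the mollifier $\mathbbm{1}$, this bound is a statement on a compact subset of $\mathbb{R}^d$, which is consistent with the $\sup_{\bm{x} \in \mathcal{X}}$ in the statement.

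Next I would revisit the three-term decomposition $(*)+(**)+(***)$ in the proof of \Cref{thm: disc_mod_rp} and track how linear growth of $\phi$ propagates. Term $(*)$ inherits its new rate directly from the unbounded-$\phi$ analogue of \Cref{thm: recon_mod}. In term $(**)$, the bound (b) in the proof uses $\|\phi\|_{L^\infty}$; after replacing this by $|\phi(\bm{w}\cdot\bm{x}+b)| \le C(1+|\bm{w}\cdot\bm{x}+b|) \le C(1+\|\bm{w}\|_2 S)(1+|b|)$ on the compact domain $\mathcal{X} \subset [-S,S]^d$, the integrals pick up one extra power of $(\sigma_{\bm{w}}+1)$ from the $\bm{w}$ marginal and one factor of $(\sigma_b+1)$ from the $b$ marginal, producing the coefficients $(\sigma_{\bm{w}}+1)^2$ and $\sigma_b(\sigma_b+1)$ that appear in the claimed rate. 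In term $(***)$, the envelope $G(\bm{w},b)$ and the uniform bound on $\tau_\phi(\bm{x},\bm{w},b)$ are both multiplied by $\sigma_{\bm{w}}+1$, but, crucially, $\tau_\phi$ is no longer uniformly bounded, so \Cref{lem: bd_ineq} can no longer be applied. I would replace this step by applying Markov's inequality (\Cref{lem: markov_ineq}) directly to $\sup_{\bm{x}}|\mathbb{E}_{(\bm{w},b)}[\tau_\phi(\bm{x},\bm{w},b)] - \tfrac{1}{N}\sum_i \tau_\phi(\bm{x},\bm{w}_i,b_i)|$, which is why the $\sqrt{\log\delta^{-1}}$ factor of \Cref{thm: recon_rate} is replaced by the cruder $1/\delta$ factor in the $N$-term of the present bound.

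The remaining step is to pass from the pathwise reconstruction bound to the GP $L^2$ bound. Unlike the proof of \Cref{thm: recon_rate}, I would not split $\mathbb{E}[f(\bm{x})f(\bm{y})]$ into $m \cdot m + k$; the new unbounded-$\phi$ analogue of \Cref{thm: disc_mod_rp} involves $M_1^*(f)$, and the natural way to control $M_1^*(f)$ for a GP realisation is via the RKHS embedding. The assumption that $f$ takes values in $\mathcal{H}_R$ with $R \in C^{1\times 1}$ gives pointwise and first-derivative reproducing identities $\partial^{\bm{\alpha}} f(\bm{x}) = \langle f,\partial^{\bm{0},\bm{\alpha}} R(\cdot,\bm{x})\rangle_{\mathcal{H}_R}$, whence by Cauchy--Schwarz $M_1^*(f) \le \|f\|_{\mathcal{H}_R}\sqrt{M_1^*(R)}$ (up to a constant). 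Applying the unbounded-$\phi$ analogue of \Cref{thm: disc_mod_rp} pathwise, squaring, taking conditional expectation over $f$, and using $\mathbb{E}[\|f\|_{\mathcal{H}_R}^2] = \|m\|_{\mathcal{H}_R}^2 + \text{tr}(\mathcal{K})$ (valid because $m \in \mathcal{H}_R$ and $\mathcal{K}$ is trace class), followed by Jensen's inequality to pass inside the square root, delivers the factor $\sqrt{M_1^*(R)}(\|m\|_{\mathcal{H}_R} + \sqrt{\text{tr}(\mathcal{K})})$ multiplying the rate. Finally, the $\sup$ over $\bm{x} \in \mathcal{X}$ commutes with the conditional expectation in the required direction, so the $L^\infty$ bound in $\bm{x}$ carries over.

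The main obstacle I anticipate is the Monte Carlo step of $(***)$: once $\phi$ is no longer uniformly bounded, $\tau_\phi$ has polynomial tails in $\bm{w}$ under $p_{\bm{w},\sigma}$, and one loses access to the sub-Gaussian concentration of \Cref{lem: bd_ineq}. I would check carefully whether a truncation argument (splitting $\|\bm{w}\| \le R$ and $\|\bm{w}\|>R$) recovers a $\sqrt{\log\delta^{-1}}$ rate at the cost of additional polynomial factors, but the cleanest route is simply Markov's inequality, which accounts for the $1/\delta$ factor appearing in the stated bound.
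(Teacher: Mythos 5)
Your proposal reconstructs the paper's argument essentially step for step: an unbounded-$\phi$ analogue of \Cref{thm: recon_mod} with $B_1$ upgraded to $B_2$ and the supremum restricted to a compact domain (this is \Cref{lem: alt_1}); a corresponding analogue of \Cref{thm: disc_mod_rp} in which the polynomial growth of $\phi$ introduces the extra $(\sigma_{\bm w}+1)$ and $(\sigma_b+1)$ factors in terms $(*)$ and $(**)$, and in which \Cref{lem: bd_ineq} is replaced by Markov's inequality in term $(***)$ because $\tau_\phi$ is no longer uniformly bounded, giving the $1/\delta$ in place of $\sqrt{\log\delta^{-1}}$ (this is \Cref{lem: alt_2}); and then, exactly as in the paper, a Jensen step to exchange the supremum and conditional expectation, the pathwise application of \Cref{lem: alt_2}, the RKHS bound $M_1^*(f) \le \sqrt{M_1^*(R)}\,\|f\|_{\mathcal{H}_R}$ via the reproducing property and Cauchy--Schwarz, and the identity $\mathbb{E}[\|f\|_{\mathcal{H}_R}^2] = \|m\|_{\mathcal{H}_R}^2 + \mathrm{tr}(\mathcal{K})$. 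This is the same route the paper takes, with no substantive gaps.
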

\begin{proof}
	To reduce notation, in this proof we denote $\mathbb{E}_{f | \bm{w},b}[ \cdot ] := \mathbb{E}[ \cdot \mid \{ \bm{w}_i, b_i \}_{i=1}^{N} ]$.
	By Jensen's inequality, we have
	\begin{align}
	\sup_{\bm{x} \in \mathcal{X}} \sqrt{ \mathbb{E}_{f | \bm{w},b}\left[ \left( f(\bm{x}) - I_{\sigma, D, N}f(\bm{x})  \right)^2 \right] } \leq \sqrt{ \mathbb{E}_{f | \bm{w},b}\bigg[ \bigg(  \sup_{\bm{x} \in \mathcal{X}} \Big| f(\bm{x}) - I_{\sigma, D, N}f(\bm{x}) \Big| \bigg)^2 \bigg] }  \label{eq: looser cdn}
	\end{align}
	By \Cref{lem: alt_2} we have, with probability $1 - \delta$ with respect to the random variables $\{\bm{w}_i, b_i\}_{i=1}^N$, the right hand side of \eqref{eq: looser cdn} can be bounded as
	\begin{align*}
	& \leq \sqrt{ \mathbb{E}_{f | \bm{w},b}\left[ C^2 M_{1}^{*}(f)^2 \left\{ \frac{1}{\sigma_{\bm{w}}} + \frac{\sigma_{\bm{w}}^{d} (\sigma_{\bm{w}} + 1)}{\sigma_{b}} + \frac{\sigma_b (\sigma_{b} + 1) \sigma_{\bm{w}}^{d} (\sigma_{\bm{w}} + 1)^2}{D^{\frac{1}{d}}} + \frac{\sigma_b \sigma_{\bm{w}}^{d+1}}{\delta \sqrt{N}} \right\}^2 \right] } \\
	& = C \sqrt{ \mathbb{E}_{f | \bm{w},b}\left[ M_{1}^{*}(f)^2 \right] } \left\{ \frac{1}{\sigma_{\bm{w}}} + \frac{\sigma_{\bm{w}}^{d} (\sigma_{\bm{w}} + 1)}{\sigma_{b}} + \frac{\sigma_b (\sigma_{b} + 1) \sigma_{\bm{w}}^{d} (\sigma_{\bm{w}} + 1)^2}{D^{\frac{1}{d}}} + \frac{\sigma_b \sigma_{\bm{w}}^{d+1}}{\delta \sqrt{N}} \right\} ,
	\end{align*}
	where $ C $ is a constant independent on $f$, $\sigma_{\bm{w}}, \sigma_b, \delta$.
	Next we will upper-bound $\sqrt{ \mathbb{E}_{f | \bm{w},b}\left[ M_{1}^{*}(f)^2 \right] }$.
	Since $f$ is a $\mathcal{H}_R$-valued random variable, by the reproducing property of $\mathcal{H}_R$
	\begin{align*}
	\mathbb{E}_{f | \bm{w},b}\left[ M_{1}^{*}(f)^2 \right] = \mathbb{E}_{f | \bm{w},b}\left[ \left( \max_{|\bm{\alpha}| \leq 1} \sup_{\bm{x} \in [-S,S]^d} \partial^{\bm{\alpha}} f(\bm{x}) \right)^2 \right] = \mathbb{E}_{f | \bm{w},b}\left[ \left( \max_{|\bm{\alpha}| \leq 1} \sup_{\bm{x} \in [-S,S]^d} \langle f, \partial^{\bm{\alpha}, \bm{0}} R(\bm{x}, \cdot) \rangle_{\mathcal{H}_R} \right)^2 \right].
	\end{align*}
	By the Cauchy-Schwartz inequality,
	\begin{align*}
	\mathbb{E}_{f | \bm{w},b}\left[ \left( \max_{|\bm{\alpha}| \leq 1} \sup_{\bm{x} \in [-S,S]^d} \langle f, \partial^{\bm{\alpha}, \bm{0}} R(\bm{x}, \cdot) \rangle_{\mathcal{H}_R} \right)^2 \right] & \le \mathbb{E}_{f | \bm{w},b}\left[ \left( \max_{|\bm{\alpha}| \leq 1} \sup_{\bm{x} \in [-S,S]^d} \| f \|_{\mathcal{H}_R} \| \partial^{\bm{\alpha}, \bm{0}} R(\bm{x}, \cdot) \|_{\mathcal{H}_R} \right)^2 \right] \\
	& = \mathbb{E}_{f | \bm{w},b}\left[ \max_{|\bm{\alpha}| \leq 1} \sup_{\bm{x} \in [-S,S]^d} \| f \|_{\mathcal{H}_R}^2 \partial^{\bm{\alpha}, \bm{\alpha}} R(\bm{x}, \bm{x}) \right] \\
	&\le M_1^*(R) \mathbb{E}_{f | \bm{w},b}\left[ \| f \|_{\mathcal{H}_R}^2 \right].
	\end{align*}
	From \cite[(1.13)]{Prato2006b}, $\mathbb{E}_{f | \bm{w},b}\left[ \| f \|_{\mathcal{H}_R}^2 \right] = \| m \|_{\mathcal{H}_R}^2 + \text{tr}(\mathcal{K})$.
	Therefore, we have
	\begin{align*}
	\sqrt{ \mathbb{E}_{f | \bm{w},b}\left[ M_{1}^{*}(f)^2 \right] } \le \sqrt{ M_1^*(R) \left( \| m \|_{\mathcal{H}_R}^2 + \text{tr}(\mathcal{K}) \right) } \le \sqrt{ M_1^*(R) } \left( \| m \|_{\mathcal{H}_R} + \sqrt{ \text{tr}(\mathcal{K}) } \right)
	\end{align*}
	where the fact that $\sqrt{a + b} \le \sqrt{a} + \sqrt{b}$ for $a, b \in \mathbb{R}$ is applied for the last inequality.
	Plugging in this upper bound concludes the proof.
\end{proof}

In the remaining part of this section, we show \Cref{lem: alt_1} and \Cref{lem: alt_2} which are analogous results to \Cref{thm: recon_mod} and \Cref{thm: disc_mod_rp} for $\phi \in C_1^*(\mathbb{R})$.
The assumption $\phi \in C_1^*(\mathbb{R})$ implies for some $ C_{\phi} < \infty $
\begin{align}
\phi(\bm{w} \cdot \bm{x} + b) & \le C_{\phi} (1 + | \bm{w} \cdot \bm{x} + b | ) \nonumber \\
& \le C_{\phi} (1 + \| \bm{w} \|_2 \| \bm{x} \|_2 + | b | ) \nonumber \\
& \le C_{\phi} (1 + \| \bm{x} \|_2) (1 + \| \bm{w} \|_2) (1 + | b |) \label{eq: phi_bound}
\end{align}
where Cauchy-Schwartz inequality is applied for the second inequality.
The same discussions in the proof of \Cref{thm: recon_mod} and \Cref{thm: disc_mod_rp} holds by replacing all bounds involving $\phi$ with an expression similar to \eqref{eq: phi_bound}.
Let $B_2(f) := \int_{\mathbb{R}^d} |f(\bm{x})| (1 + \|\bm{x}\|_2)^2 \mathrm{d}\bm{x}$.

\begin{theorem}[Analogue of \Cref{thm: recon_mod} for Unbounded $\phi$] \label{lem: alt_1}
	Let $ \mathcal{X} \subset \mathbb{R}^d $ be bounded.
	Let \Cref{assm: ridgelet pair} and \Cref{asmp: ridgelet} hold, but with $\phi \in C_0^*(\mathbb{R})$ replaced with $\phi \in C_1^*(\mathbb{R})$, and let $ f \in C^1(\mathbb{R}^d) $ satisfy $M_1(f) < \infty$ and $ B_2(f) < \infty $.
	Then 
	\begin{eqnarray}
	\sup_{\bm{x} \in \mathcal{X}} \left| f(\bm{x}) - (R_\sigma^{\ast} R)[ f ](\bm{x}) \right| \le C \max(M_1(f),B_2(f)) \left\{ \frac{1}{\sigma_{\bm{w}}} + \frac{\sigma_{\bm{w}}^{d} (\sigma_{\bm{w}} + 1)^2}{\sigma_{b}} \right\}
	\end{eqnarray}
	for some constant $C $ that is independent of $\sigma_{\bm{w}}, \sigma_{b}$ and $f$, but may depend on $ \phi $, $ \psi $, $ p_{\bm{w}} $ and $ p_b $.
\end{theorem}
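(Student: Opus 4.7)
The plan is to mirror the proof of \Cref{thm: recon_mod} in \Cref{sec: appdx_main_proof_recon_mod}, using the same triangle-inequality decomposition
\begin{align*}
\sup_{\bm{x}\in\mathcal{X}}|f(\bm{x}) - (R_\sigma^\ast R)[f](\bm{x})| \le \underbrace{\sup_{\bm{x}\in\mathcal{X}}|f(\bm{x}) - (R_\sigma^\ast R^{(w)})[f](\bm{x})|}_{(\ast)} + \underbrace{\sup_{\bm{x}\in\mathcal{X}}|(R_\sigma^\ast R^{(w)})[f](\bm{x}) - (R_\sigma^\ast R)[f](\bm{x})|}_{(\ast\ast)},
\end{align*}
and to inspect where the assumption $\phi\in C_0^\ast(\mathbb{R})$ was used, replacing it by the polynomial-growth bound $|\phi(z)|\le C_\phi(1+|z|)$ afforded by $\phi\in C_1^\ast(\mathbb{R})$. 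For the term $(\ast)$ the analysis carries over verbatim: Parts 2--4 of \Cref{assm: ridgelet pair} are intact, the generalised Fourier transform of $\phi$ is still well-defined, and the kernel $\widetilde{k}$ still satisfies the hypotheses of \Cref{prop: axid_rate} with $N=2$, yielding $(\ast)\le C_1 M_1(f)\sigma_{\bm{w}}^{-1}$ uniformly for $\bm{x}\in\mathbb{R}^d$ (a fortiori over $\mathcal{X}$).

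The substantive work is the re-bounding of $(\ast\ast)$, where the original argument invoked $\|\phi\|_{L^\infty}$ in the step from (\ref{eq: bound_of_3}) to (\ref{eq: bound_of_3_2}). Using instead $|\phi(z)|\le C_\phi(1+|z|)$ together with compactness of $\mathcal{X}$, write $M_{\mathcal{X}}:=\sup_{\bm{x}\in\mathcal{X}}\|\bm{x}\|_2<\infty$, so that uniformly in $\bm{x}\in\mathcal{X}$
\begin{align*}
|\phi(\bm{w}\cdot(\bm{x}-\bm{x}')+b')| \le C_\phi\bigl(1+\|\bm{w}\|_2(M_{\mathcal{X}}+\|\bm{x}'\|_2)+|b'|\bigr) \le C(1+\|\bm{x}'\|_2)(1+\|\bm{w}\|_2)(1+|b'|),
\end{align*}
with $C$ depending on $C_\phi$ and $M_{\mathcal{X}}$. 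Substituting this inside the integrand defining $(3)$ in place of $\|\phi\|_{L^\infty}$ introduces an additional factor $(1+\|\bm{x}'\|_2)(1+\|\bm{w}\|_2)(1+|b'|)$. The $(1+|b'|)$ factor is absorbed by $\psi\in\mathcal{S}(\mathbb{R})$ since $\int_{\mathbb{R}}(1+|b'|)^2|\psi(b')|\mathrm{d} b'<\infty$; the $(1+\|\bm{w}\|_2)$ factor is absorbed when integrating against $p_{\bm{w},\sigma}$ by means of a change of variables and the finite second moment of $p_{\bm{w}}$, yielding
\begin{align*}
\int_{\mathbb{R}^d}(1+\|\bm{w}\|_2)^2 p_{\bm{w},\sigma}(\bm{w})\,\mathrm{d}\bm{w} \le C'\sigma_{\bm{w}}^d(\sigma_{\bm{w}}+1)^2,
\end{align*}
which accounts for the upgrade from $(\sigma_{\bm{w}}+1)$ in \Cref{thm: recon_mod} to $(\sigma_{\bm{w}}+1)^2$ here. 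The residual $(1+\|\bm{x}'\|_2)$ factor combines with the pre-existing $(1+\|\bm{x}'\|_2)$ weight already present in $\Delta k(\bm{x},\bm{x}')$, so the final outer integral against $|f(\bm{x}')|$ carries weight $(1+\|\bm{x}'\|_2)^2$, i.e. $B_2(f)$ appears in place of $B_1(f)$. Combining the bounds on $(\ast)$ and $(\ast\ast)$ and reverting to $\sigma_{\bm{w}},\sigma_b$ yields the claim.

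The main obstacle is purely book-keeping: tracking how the extra polynomial-growth factors from the replacement of $\|\phi\|_{L^\infty}$ distribute among the three integration variables, and verifying that each is absorbed at the right place -- Schwartz decay of $\psi$ for $b'$, finite second moment of $p_{\bm{w}}$ for $\bm{w}$ (producing the strengthened $(\sigma_{\bm{w}}+1)^2$), and the $B_2(f)$ tail condition for $\bm{x}'$. Compactness of $\mathcal{X}$ is essential only at the bound on $|\phi(\bm{w}\cdot(\bm{x}-\bm{x}')+b')|$ above, where it allows $\|\bm{x}\|_2$ to be replaced by a constant so that the dependence on $\bm{x}$ can be discarded before taking the supremum.
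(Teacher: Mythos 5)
Your proposal follows essentially the same route as the paper's proof: the same triangle-inequality decomposition, the $(\ast)$ bound carried over unchanged, and the $(\ast\ast)$ bound upgraded via the polynomial-growth estimate $|\phi(z)|\le C_\phi(1+|z|)$, tracking the extra factors through the $b'$, $\bm{w}$ and $\bm{x}'$ integrals exactly as the paper does (the paper retains the $(1+\|\bm{x}\|_2)$ factor and takes $\sup_{\bm{x}\in\mathcal{X}}$ at the very end, which is equivalent to your choice of absorbing $\|\bm{x}\|_2$ into a constant via compactness up front). One small book-keeping slip: the displayed estimate $\int(1+\|\bm{w}\|_2)^2 p_{\bm{w},\sigma}(\bm{w})\,\mathrm{d}\bm{w}\le C'\sigma_{\bm{w}}^d(\sigma_{\bm{w}}+1)^2$ is incorrect as written, since $p_{\bm{w},\sigma}$ is the normalised density and the integral is only $O((\sigma_{\bm{w}}+1)^2)$ --- the $\sigma_{\bm{w}}^d$ factor actually enters through the normalising constant $Z$ in $\lambda_\sigma$ (equivalently, from integrating the unnormalised $p_{\bm{w}}^\epsilon$ as the paper does) --- but this mis-attribution does not affect the structure of the argument or the final bound.
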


\begin{proof}
	We use the same proof as \Cref{thm: recon_mod}, but consider the supremum error over a bounded domain $\mathcal{X} \subset \mathbb{R}^d$ i.e. $\sup_{\bm{x} \in \mathcal{X}} \left| f(\bm{x}) - (R_\sigma^{\ast} R)[ f ](\bm{x}) \right|$, instead of error over $\mathbb{R}^d$.
	Recall the overall structure of the proof is \eqref{eq: conv_tri} and in particular there are two quantities, $(*)$ and $(**)$ to be bounded. The argument used to bound $(*)$ remains valid, so our attention below turns to the argument used to bound $(**)$.
	
	In order to establish a bound on $(**)$, we replace the upper bound involving $\phi$ subsequent to (\ref{eq: bound_of_3}).
	From (\ref{eq: bound_of_3}), we have
	\begin{eqnarray}
	|(3)| & \le & \| \partial p_b \|_{L^\infty} (1 + \| \bm{x}' \|_2) (1 + \| \bm{w} \|_2) \int_{\mathbb{R}} (1 + | b' |) | \phi(\bm{w} \cdot (\bm{x} - \bm{x}') + b') \psi(b') | \mathrm{d} b'. \label{eq: updated_3}
	\end{eqnarray}
	From (\ref{eq: phi_bound}), for some $ C_{\phi} < \infty $,
	\begin{eqnarray}
	\phi(\bm{w} \cdot (\bm{x} - \bm{x}') + b') & \le & C_{\phi} (1 + \| \bm{x} - \bm{x}' \|_2) (1 + \| \bm{w} \|_2) (1 + | b |) \nonumber \\
	& \le & C_{\phi} (1 + \| \bm{x} \|_2) (1 + \| \bm{x}' \|_2)  (1 + \| \bm{w} \|_2) (1 + | b |). \nonumber
	\end{eqnarray}
	Plugging this upper bound in \eqref{eq: updated_3},
	\begin{eqnarray}
	|(3)| & \le & C_{\phi} \| \partial p_b \|_{L^\infty} (1 + \| \bm{x} \|_2) (1 + \| \bm{x}' \|_2)^2 (1 + \| \bm{w} \|_2)^2 \int_{\mathbb{R}} (1 + | b' |)^2 | \psi(b') | \mathrm{d} b'. \label{eq: updated_3_2} 
	\end{eqnarray}
	Let $ C_{\psi} := \int_{\mathbb{R}} (1 + | b |)^2 | \psi(b) | \mathrm{d} b $ and $C_{p_{\bm{w}}} := \max\left( 1, 2 \int_{\mathbb{R}^{d}} \| \bm{w} \|_2 p_{\bm{w}}(\bm{w}) \mathrm{d} \bm{w}, \int_{\mathbb{R}^{d}} \| \bm{w} \|_2^2 p_{\bm{w}}(\bm{w}) \mathrm{d} \bm{w} \right)$.
	By the same discussion in the proof of \Cref{thm: recon_mod} subsequent to (\ref{eq: bound_of_3}), considering the difference between \eqref{eq: bound_of_3_2} and \eqref{eq: updated_3_2},
	\begin{eqnarray}
	\Delta k(\bm{x}, \bm{x}') \le (2 \pi)^{\frac{1}{2}} \| \widehat{p_{\bm{w}}} \|_{L^1}^{-1} \| \widehat{p_b} \|_{L^1}^{-1} \|\partial p_b\|_{L^\infty} \|\phi\|_{L^\infty} C_{\psi} C_{p_{\bm{w}}} (1 + \| \bm{x} \|_2) (1 + \| \bm{x}' \|_2)^2 \frac{1}{\epsilon_{\bm{w}}^d} \left( 1 + \frac{1}{\epsilon_{\bm{w}}} \right)^2 \epsilon_b. \nonumber
	\end{eqnarray}
	Set $ C_2' := (2 \pi)^{\frac{1}{2}} \| \widehat{p_{\bm{w}}} \|_{L^1}^{-1} \| \widehat{p_b} \|_{L^1}^{-1} \|\partial p_b\|_{L^\infty} \|\phi\|_{L^\infty} C_{\psi} C_{p_{\bm{w}}} $.
	The error term $(**)$ is then bounded as
	\begin{eqnarray}
	(**) & = & \sup_{\bm{x} \in \mathcal{X}} \left| \int_{\mathbb{R}^{d}} f(\bm{x}') \Delta k(\bm{x}, \bm{x}') \mathrm{d} \bm{x}' \right| \nonumber \\
	& \le & C_2' \sup_{\bm{x} \in \mathcal{X}} \big| 1 + \| \bm{x} \|_2 \big| \int_{\mathbb{R}^{d}} (1 + \| \bm{x}' \|_2)^2 | f(\bm{x}') | \mathrm{d} \bm{x}' \frac{1}{\epsilon_{\bm{w}}^d} \left( 1 + \frac{1}{\epsilon_{\bm{w}}} \right)^2 \epsilon_b. \nonumber
	\end{eqnarray}
	Setting $C_2 := C_2' \sup_{\bm{x} \in \mathcal{X}} \big| 1 + \| \bm{x} \|_2 \big|$, we have
	\begin{eqnarray}
	(**) \le C_2 B_2(f) \frac{1}{\epsilon_{\bm{w}}^d} \left( 1 + \frac{1}{\epsilon_{\bm{w}}} \right)^2 \epsilon_b.
	\end{eqnarray}
	
	Combining  the bound for $(*)$ and $(**)$ and setting $C := \max(C_1,C_2)$,
	\begin{eqnarray}
	\sup_{\bm{x} \in \mathcal{X}} | f(\bm{x}) - R^{\ast}_{\sigma} R f(\bm{x}) | \le C \max(M_1(f), B_2(f)) \left\{ \epsilon_{\bm{w}} + \frac{1}{\epsilon_{\bm{w}}^d} \left( 1 + \frac{1}{\epsilon_{\bm{w}}} \right)^2 \epsilon_b \right\} \nonumber
	\end{eqnarray}
	where $ C $ only depends on $ \mathcal{X} $, $ \phi $, $ \psi $, $ p_{\bm{w}} $, $ p_b $ but not on $f, \epsilon_{\bm{w}}, \epsilon_{b}$.
	Setting $ \sigma_{\bm{w}} = \epsilon_{\bm{w}}^{-1} $ and $ \sigma_b = \epsilon_b^{-1} $ completes the proof.
\end{proof}

\begin{theorem}[Analogue of \Cref{thm: disc_mod_rp} for Unbounded $\phi$] \label{lem: alt_2}
	In the same setting of \Cref{thm: disc_mod_rp}, but with the assumption $\phi \in C_0^*(\mathbb{R})$ replaced with $\phi \in C_1^*(\mathbb{R})$, for any $ f \in C^1(\mathbb{R}^d) $ with $M_{1}^{*}(f) < \infty$, with probability at least $ 1 - \delta $, 
	\begin{multline}
	\sup_{\bm{x} \in \mathcal{X}} \left| f(\bm{x}) - I_{\sigma, D, N} f(\bm{x}) \right| \le \\
	C M_{1}^{*}(f) \left\{ \frac{1}{\sigma_{\bm{w}}} + \frac{\sigma_{\bm{w}}^{d} (\sigma_{\bm{w}} + 1)^2}{\sigma_{b}} + \frac{\sigma_b (\sigma_{b} + 1) \sigma_{\bm{w}}^{d} (\sigma_{\bm{w}} + 1)^2}{D^{\frac{1}{d}}} + \frac{\sigma_{b} \sigma_{\bm{w}}^{d+1}}{\delta \sqrt{N}} \right\} \nonumber
	\end{multline}
	where $ C $ is a constant that may depend on $ \mathcal{X}, \mathbbm{1}, \phi, \psi, p_{\bm{w}}, p_b$ but does not depend on $f$, $\sigma_{\bm{w}}, \sigma_b, \delta$.
\end{theorem}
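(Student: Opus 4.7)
The plan is to reuse the three-part decomposition
\begin{equation*}
\sup_{\bm{x}\in\mathcal{X}}\!\left|f(\bm{x})-I_{\sigma,D,N}f(\bm{x})\right|\le (*)+(**)+(***)
\end{equation*}
exactly as in the proof of \Cref{thm: disc_mod_rp}, with $(*)$, $(**)$, $(***)$ defined by inserting the operators $I_\sigma$ and $I_{\sigma,D}$ as intermediate steps. The key difference is that $\phi\in C_1^*(\mathbb{R})$ satisfies only the linear growth bound
\begin{equation*}
|\phi(\bm{w}\cdot\bm{x}+b)|\le C_\phi(1+\|\bm{x}\|_2)(1+\|\bm{w}\|_2)(1+|b|)
\end{equation*}
from \eqref{eq: phi_bound}, rather than being uniformly bounded, so everywhere in the proof of \Cref{thm: disc_mod_rp} the factor $\|\phi\|_{L^\infty}$ must be replaced by a polynomial envelope in $\bm{w},b,\bm{x}$ and the resulting moments integrated against $p_{\bm{w},\sigma}$ and $p_{b,\sigma}$.

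For $(*)$ I will simply invoke \Cref{lem: alt_1} applied to the mollified function $f_{\text{clip}}:=\mathbbm{1}f$. As in the bounded case $M_1(f_{\text{clip}})\le 3M_1^*(\mathbbm{1})M_1^*(f)$ and $B_2(f_{\text{clip}})\le(1+S)^2(2S)^dM_1(f_{\text{clip}})$ by compact support, so \Cref{lem: alt_1} yields the first two terms of the claimed bound, namely $\sigma_{\bm{w}}^{-1}+\sigma_{\bm{w}}^d(\sigma_{\bm{w}}+1)^2\sigma_b^{-1}$ multiplied by $M_1^*(f)$.

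For $(**)$ I follow the template used in \Cref{thm: disc_mod_rp}: write the difference $I_\sigma f-I_{\sigma,D}f$ as an outer integral against $Z p_{\bm{w},\sigma}p_{b,\sigma}$ of a quadrature error in $\bm{x}'$, then apply \Cref{lem: QMC}. The new wrinkle is that the multiplier $\phi(\bm{w}\cdot\bm{x}+b)$ outside the quadrature error is no longer uniformly bounded. To compensate, I will pull out the extra factor $(1+\|\bm{w}\|_2)(1+|b|)$ from \eqref{eq: phi_bound} and bound the resulting integral via
\begin{align*}
\int_{\mathbb{R}^d}(1+\|\bm{w}\|_2)^{\,2} p_{\bm{w},\sigma}(\bm{w})\,\mathrm{d}\bm{w} &\lesssim (1+\sigma_{\bm{w}})^2,\\
\int_{\mathbb{R}}(1+|b|)\, p_{b,\sigma}(b)\,\mathrm{d} b &\lesssim (1+\sigma_b),
\end{align*}
where the second factor of $(1+\sigma_{\bm{w}})$ is produced by the derivative of $\psi(\bm{w}\cdot\bm{x}'+b)$ inside the quadrature error term (which contributes $(1+\|\bm{w}\|_2)M_1(\psi)$ as in the proof of \Cref{thm: disc_mod_rp}). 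Combining with the prefactor $Z\sim\sigma_{\bm{w}}^d\sigma_b$ gives exactly $\sigma_b(\sigma_b+1)\sigma_{\bm{w}}^d(\sigma_{\bm{w}}+1)^2 D^{-1/d}$ times $M_1^*(f)$, which is the third term.

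For $(***)$, the Monte Carlo error over $\{(\bm{w}_i,b_i)\}$, the main obstacle is that $\tau_\phi(\bm{x},\bm{w},b)$ is no longer uniformly bounded in $(\bm{w},b)$, so \Cref{lem: bd_ineq} is unavailable and we must give up the favourable $\sqrt{\log\delta^{-1}}$ dependence. I will instead combine \Cref{lem: exp_bnd} with Markov's inequality (\Cref{lem: markov_ineq}), which costs a factor of $1/\delta$. The Lipschitz envelope required by \Cref{lem: exp_bnd} is obtained as in \Cref{thm: disc_mod_rp}: the Lipschitz constant of $\tau_\phi$ in $\bm{x}$ is bounded above by $(2S)^dM_1(f_{\text{clip}})\|\psi\|_{L^\infty}L_\phi\|\bm{w}\|_2$, whose second moment under $p_{\bm{w},\sigma}\times p_{b,\sigma}$ grows like $\sigma_{\bm{w}}^2$ by a change of variables. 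Multiplying the resulting $\sigma_{\bm{w}}/\sqrt{N}$ bound by $Z\sim\sigma_{\bm{w}}^d\sigma_b$ and by $1/\delta$ from Markov yields $\sigma_b\sigma_{\bm{w}}^{d+1}/(\delta\sqrt{N})$ times $M_1^*(f)$, the fourth term of the bound.

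The main technical obstacle is the careful bookkeeping in $(**)$, where four sources of polynomial growth in $\bm{w}$ and $b$ must be tracked simultaneously (the mollified integrand, the derivative of $\psi$, the unbounded $\phi$ outside the quadrature error, and the weights in $Z$) without losing the tight dependence $(1+\sigma_{\bm{w}})^2(1+\sigma_b)$. Combining the three bounds with a union bound on the two high-probability events arising from $(*)$ and $(***)$ (after rescaling $\delta\mapsto\delta/2$, absorbed into the constant $C$) completes the argument.
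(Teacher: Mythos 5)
Your proposal follows essentially the same route as the paper's proof: the same $(*)$, $(**)$, $(***)$ decomposition, with \Cref{lem: alt_1} applied to $f_{\text{clip}}$ for $(*)$ (using $B_2(f_{\text{clip}})\le(1+S)^2(2S)^d M_1(f_{\text{clip}})$ by compact support), the linear-growth envelope \eqref{eq: phi_bound} propagated through the quadrature-error integral for $(**)$, and \Cref{lem: exp_bnd} followed by Markov's inequality (\Cref{lem: markov_ineq}) for $(***)$, which is exactly why the paper incurs the $1/\delta$ factor in place of $\sqrt{\log\delta^{-1}}$. One minor inaccuracy at the end: the bound on $(*)$ is deterministic, since \Cref{lem: alt_1} concerns the idealised operator $R_\sigma^*R$ with no Monte Carlo step, so there is only one high-probability event (from $(***)$), and no union bound or rescaling of $\delta$ is needed.
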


\begin{proof}
	This result follows from a modification of the proof of \Cref{thm: disc_mod_rp}.
	Recall from \eqref{eq: errors to be bounded} there are three terms, $(*)$, $(**)$, and $(***)$, to be bounded.
	In what follows we indicate how the arguments used to establish \Cref{thm: disc_mod_rp} should be modified.
	
	\vspace{5pt}
	\noindent \textbf{Bounding $(*)$:} 
	Replace \eqref{eq: bound_b_1} in the proof of \Cref{thm: disc_mod_rp} with the following inequality
	\begin{align}
	B_2(f_{\text{clip}}) & = \int_{[-S,S]^d} f_{\text{clip}}(\bm{x}) (1 + \| \bm{x} \|_2)^2 \mathrm{d} \bm{x} \nonumber \\
	& \leq \int_{[-S,S]^d} M_1(f_{\text{clip}}) \cdot (1 + S)^2 \mathrm{d} \bm{x} = (1 + S)^2 (2S)^d M_1(f_{\text{clip}}) < \infty. \nonumber
	\end{align}
	Now apply \Cref{lem: alt_1} in place of \Cref{thm: recon_mod} to obtain
	\begin{align}
	(*) & \leq C_1' \max(M_1(f_{\text{clip}}), B_2(f_{\text{clip}})) \left( \frac{1}{\sigma_{w}} + \frac{\sigma_{\bm{w}}^{d} (\sigma_{\bm{w}} + 1)^2}{\sigma_{b}} \right) \nonumber 
	\end{align}
	for some constant $ C_1' > 0 $ depending only on $\phi, \psi, p_{\bm{w}}, p_b$.
	Let $C_1 = (1 + S)^2 (2S)^d C'$ to see
	\begin{align}
	(*) & \leq C_1 M_1(f_{\text{clip}}) \left( \frac{1}{\sigma_{w}} + \frac{\sigma_{\bm{w}}^{d} (\sigma_{\bm{w}} + 1)^2}{\sigma_{b}} \right) . \nonumber
	\end{align}
	
	\vspace{5pt}
	\noindent \textbf{Bounding $(**)$:} 
	Here we replace the upper bound of (b) in the proof of \Cref{thm: disc_mod_rp}.
	From (\ref{eq: phi_bound}),
	\begin{eqnarray}
	\text{(b)} & = & Z \int_{\mathbb{R}} \int_{\mathbb{R}^{d}} \Big| ( 1 + \| \bm{w} \|_2 ) \phi(\bm{w} \cdot \bm{x} + b) p_{\bm{w}, \sigma}(\bm{w}) p_{b, \sigma}(b) \Big| \mathrm{d} \bm{w} \mathrm{d} b \nonumber \\
	& \le & Z C_{\phi} (1 + \| \bm{x} \|_2) \underbrace{ \int_{\mathbb{R}^d} (1 + \| \bm{w} \|_2)^2 p_{\bm{w}, \sigma}(\bm{w}) \mathrm{d} \bm{w} }_{\text{(c)}} \underbrace{ \int_{\mathbb{R}} (1 + | b |) p_{b, \sigma}(b) \mathrm{d} b }_{\text{(d)}}. \nonumber
	\end{eqnarray}
	Let $ C_{p_{\bm{w}}} := \max\left( 1, 2 \int_{\mathbb{R}^{d}} \| \bm{w} \|_2 p_{\bm{w}}(\bm{w}) \mathrm{d} \bm{w}, \int_{\mathbb{R}^{d}} \| \bm{w} \|_2^2 p_{\bm{w}}(\bm{w}) \mathrm{d} \bm{w} \right) $ and $C_{p_b} := \max(1, \int_{\mathbb{R}} | b | p_{b}(b) \mathrm{d} b)$.
	By the same discussion on the upper bound of (b) in the proof of \Cref{thm: disc_mod_rp},
	\begin{eqnarray}
	\text{(c)} \le C_{p_{\bm{w}}} \left( 1 + \sigma_{\bm{w}} \right)^2, \qquad \text{(d)} \le C_{p_b} (1 + \sigma_{b}) . \nonumber 
	\end{eqnarray}
	Recall $ Z = (2 \pi)^{\frac{1}{2}} \| \widehat{p_{\bm{w}}} \|_{L^1}^{-1} \| \widehat{p_b} \|_{L^1}^{-1} \sigma_{\bm{w}}^{d} \sigma_b $.
	By the upper bound of (c) and (d),
	\begin{eqnarray}
	\text{(b)} \le (1 + \| \bm{x} \|_2) (2 \pi)^{\frac{1}{2}} \| \widehat{p_{\bm{w}}} \|_{L^1}^{-1} \| \widehat{p_b} \|_{L^1}^{-1} C_{p_{\bm{w}}} C_{p_b} C_{\phi} \sigma_b (1 + \sigma_{b}) \sigma_{\bm{w}}^{d} \left( 1 + \sigma_{\bm{w}} \right)^2. \nonumber
	\end{eqnarray}
	Let $C_2' := \sup_{\bm{x} \in [-S,S]^d} | (1 + \| \bm{x} \|_2) | (2 \pi)^{\frac{1}{2}} \| \widehat{p_{\bm{w}}} \|_{L^1}^{-1} \| \widehat{p_b} \|_{L^1}^{-1} C_{p_{\bm{w}}} C_{p_b} C_{\phi}$.
	By plugging the upper bound of (b) in \eqref{eq: I_quadrature_bnd} and setting $ C_2 := C_Q M_1(\psi) C_2'$, we have
	\begin{eqnarray}
	(**) \le C_2 M_1(f_{\text{clip}}) \frac{\sigma_b (\sigma_{b} + 1) \sigma_{\bm{w}}^{d} (\sigma_{\bm{w}} + 1)^2}{D^{\frac{1}{d}}}. \nonumber
	\end{eqnarray}
	
	\vspace{5pt}
	\noindent \textbf{Bounding $(***)$:} 
	Finally we apply \Cref{lem: markov_ineq} in place of \Cref{lem: bd_ineq}.
	From \eqref{eq: exp_I_bnd}, $\mathop{\mathbb{E}}_{\{ \bm{w}_i, b_i \}_{i=1}^{N}}\left[ (***) \right]$ is upper bounded by
	\begin{eqnarray}
	\mathop{\mathbb{E}}_{\{ \bm{w}_i, b_i \}_{i=1}^{N}}\left[ (***) \right] & \le & \frac{ C_{\mathcal{X}} (2S)^d M_1(f_{\text{clip}}) \| \psi \|_{L^\infty} L_{\phi} V_{p_{\bm{w}}} \sigma_{\bm{w}} }{\sqrt{N}} \nonumber
	\end{eqnarray}
	Set $C_3 := C_{\mathcal{X}} (2S)^d \| \psi \|_{L^\infty} L_{\phi} V_{p_{\bm{w}}} \sigma_{\bm{w}}$.
	By \Cref{lem: markov_ineq}, with probability at least $1 - \delta$,
	\begin{eqnarray}
	(***) \le C_3 M_1(f_{\text{clip}}) \frac{ \sigma_{b} \sigma_{\bm{w}}^{d+1} }{\delta \sqrt{N}}. \nonumber
	\end{eqnarray}
	
	For all bounds of $(*)$, $(**)$ and $(***)$, recall that $ M(f_{\text{clip}}) \le 3 M_{1}^{*}(\mathbbm{1}) M_{1}^{*}(f) $ from (\ref{eq: Mf bound}).
	Then combining $(*)$, $(**)$ and $(***)$ and setting $C := 3 M_1^*(\mathbbm{1}) (C_1 + C_2 + C_3)$ completes the proof.
\end{proof}


\subsection{Experiments}
\label{sec: experiment_apdx}

This section expands on the experiments that were reported in the main text.
\Cref{app: experimental settings} reports the cubature rules that were used.
\Cref{app: MMD_error} discusses how to measure the similarity between BNNs and GPs by the maximum mean discrepancy (MMD) as an alternative error criteria to the MRMSE and shows the MMD of the same experiment as \Cref{sec: exp_01}.
\Cref{sec: comp_exp} explores the effect of using alternative settings in the ridgelet prior, compared to those used to produce the figures in the main text.

In some instances of the experiments we report, poor numerical conditioning was encountered.
The results that we present employed a crude form of numerical regularisation in order that such issues -- which arise from the posterior approximation approach used and are not intrinsic to the ridgelet prior itself -- were obviated.
Specifically, we employed the Moore--Penrose pseudo-inverse whenever the action of an inverse matrix was required, and we employed Tikhonov regularisation (with the extent of the regularisation manually selected) whenever a matrix square root was required.

\subsubsection{Experimental Setting} \label{app: experimental settings}

Here, for completeness, we report the cubature rules $\{(u_i,\bm{x}_i)\}_{i=1}^D$ and the bandwidth parameters $\sigma_{\bm{w}}$, $\sigma_b$ that were used in our experiments.
The sensitivity of reported results to these choices is investigated in \Cref{sec: comp_exp}.

Recall that $\{(u_i,\bm{x}_i)\}_{i=1}^D$ is a cubature rule on $[-S,S]^d$ and that, in all experiments, we aim for accurate approximation on $\mathcal{X} = (-5,5)^d$.
For $S > 5$ a mollifier $\mathbbm{1}$ is required and in this case \eqref{eq: mollifier} we used.
The settings that we used for the results in the main text were as follows.
As aforementioned in \Cref{sec:experiments}, the functions $\phi$ and $\psi$ were set as \Cref{ex: ridgelet_func} and the densities $p_{\bm{w}}$ and $p_{b}$ were set as standard Gaussians for all of the results in \Cref{tbl: quad_list}.

\begin{table}[h!]
	\footnotesize
	\begin{center}
		\begin{tabular}{|l|l|l|}
			\hline
			\textbf{Experiment} & \textbf{Cubature points, weights, total number} & \textbf{Bandwidth of $p_{\bm{w}}, p_{b}$} \\
			\hline
			\Cref{sec: exp_01} & $x_j$: grid point on $[-6, 6]$, $u_j = 12 D^{-1} \mathbbm{1}(x_j)$, $D = 200$ & $\sigma_{\bm{w}} = 3$, $\sigma_{b} = 12$ \\
			\Cref{sec: exp_02} CO$_2$ & $x_j$: grid point on $[-5, 5]$, $u_j = 10 D^{-1}$, $D = 200$ & $\sigma_{\bm{w}} = 3$, $\sigma_{b} = 12$ \\
			\Cref{sec: exp_02} Airline & $x_j$: grid point on $[-5, 5]$, $u_j = 10 D^{-1}$, $D = 200$ & $\sigma_{\bm{w}} = 3$, $\sigma_{b} = 12$ \\
			\Cref{sec: exp_02} In-painting & $\bm{x}_j$: grid point on $[-5, 5]^2$, $u_j = 10^2 D^{-1}$, $D = 30^2$ & $\sigma_{\bm{w}} = 2$, $\sigma_{b} = 12$ \\
			\Cref{sec: exp_03} Deep BNN & $x_j$: grid point on $[-6, 6]$, $u_j = 12 D^{-1} \mathbbm{1}(x_j)$, $D = 50$ & $\sigma_{\bm{w}} = 3$, $\sigma_{b} = 12$ \\
			\hline
		\end{tabular}
	\end{center}
	\caption{Cubature rules and bandwidth parameters used for each experiment reported in the main text.}
	\label{tbl: quad_list}
\end{table}

\subsubsection{Alternative Error Measurement by MMD} \label{app: MMD_error}

In the main text we adopted the MRMSE to measure the similarity between a BNN and the target GP.
However, authors such as \cite{Matthews2018} considered instead a two sample test based on MMD.
The purpose of this appendix is to present complementary results to those in the main text, based instead on MMD.
The MMD is a distance between two probability measures $\mathbb{P}$ and $\mathbb{Q}$, defined as
\begin{align*}
\operatorname{MMD}^2(\mathbb{P}, \mathbb{Q}) := \sup_{\|h\|_{\mathcal{H}} \leq 1} \big| \mathbb{E}_{Y \sim \mathbb{P}}[ f(Y) ] - \mathbb{E}_{Y \sim \mathbb{Q}}[ f(Y) ] \big|
\end{align*}
where $\mathcal{H}$ is a reproducing kernel Hilbert space uniquely associated with a kernel $K$; see e.g. \cite{Muandet2017}.
The MMD has the closed form 
\begin{align*}
\operatorname{MMD}^2(\mathbb{P}, \mathbb{Q}) = \mathbb{E}_{Y, Y' \sim \mathbb{P}}[ K(Y, Y') ] -2 \mathbb{E}_{Y \sim \mathbb{P}, Y' \sim \mathbb{Q}}[ K(Y, Y') ] + \mathbb{E}_{Y, Y' \sim \mathbb{Q}}[ K(Y, Y') ] ,
\end{align*}
which can be approximated using samples from $\mathbb{P}$ and $\mathbb{Q}$.
To this end, we consider the BNN and GP's prior predictives on fixed inputs $\{ \bm{x}_i \}_{i=1}^{M}$ that are $M$-dimensional probability distributions $\mathbb{P}$ and $\mathbb{Q}$ and employ the squared exponential kernel
\begin{align*}
K(\bm{y}, \bm{y}') := \exp\left(- \frac{1}{\alpha^2} \| \bm{y} - \bm{y}' \|_2^2 \right), \qquad \bm{y}, \bm{y}' \in \mathbb{R}^M
\end{align*}
with $1 / \alpha^2 = 0.001$.
For the experiment that we report here, we set $M = 50$ where $\{ x_i \}_{i=1}^{50}$ are a regular grid over $(-5, 5)$, and we use $1,000$ samples from $\mathbb{P}$ and $\mathbb{Q}$ to approximate $\operatorname{MMD}^2$, where $\mathbb{P}$ and $\mathbb{Q}$ are the BNN and GP's prior predictives on $\{ x_i \}_{i=1}^{50}$.

The same experiment as in \Cref{sec: exp_01} was assessed using MMD, with results presented in \Cref{fig: apx_mmd_error}.
It can be observed that MMD decreases when $N$ increases, likewise the MRMSE in \Cref{fig:exp_01}.
The rate of decrease appears somewhat slower than observed in \cite{Matthews2018}, but this is to be expected since we are attempting to approximate an arbitrary GP whereas \cite{Matthews2018} consider approximation of the GP defined as the natural infinite-width limit of the BNN.

\begin{figure}[h!]
	\centering
	\includegraphics[width=0.35\textwidth]{./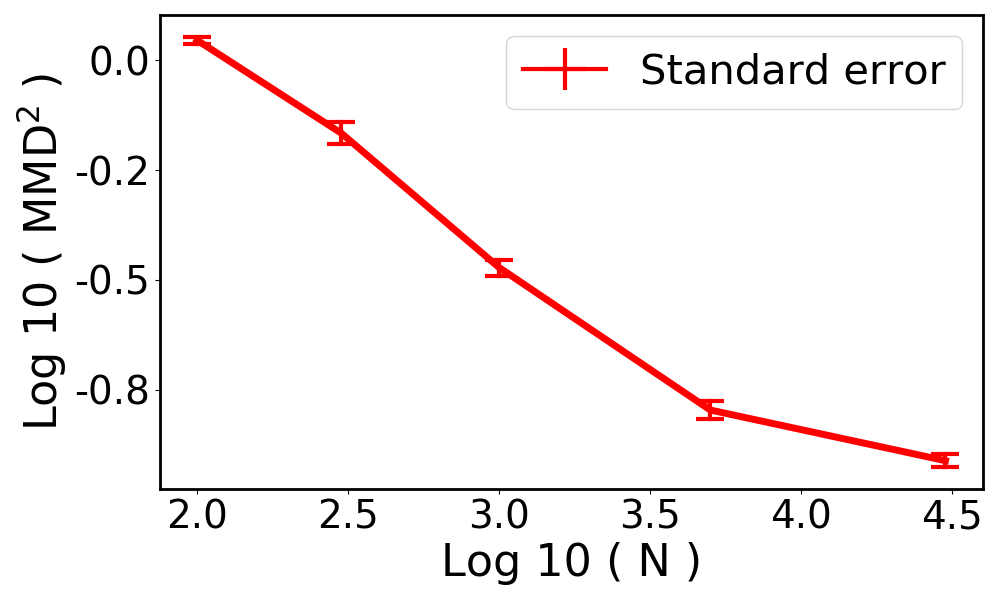}
	\caption{MMD computed from 1000 random samples of GP and BNN outputs at 50 inputs $\{ \bm{x}_i \}_{i=1}^{50}$. The standard error is computed by 10 independent experiments.}
	\label{fig: apx_mmd_error}
\end{figure}

\subsubsection{Comparison of Prior Predictive with Different Settings} \label{sec: comp_exp}

In this final section we investigate the effect of varying the settings of the ridgelet prior.
The default settings in \Cref{ex: ridgelet_func} were taken as a starting point and were then systematically varied.
Initially we consider a squared exponential covariance model for the target GP.
Specifically, we considered:
\begin{enumerate}
	\item Different choices of $\sigma_{\bm{w}}$ and $\sigma_{b}$: $(\sigma_{\bm{w}}, \sigma_{b}) = (1, 2), (2, 6), (3, 12), (4, 20), (5, 30)$
	\item Dynamic setting of $\sigma_{\bm{w}}$ and $\sigma_{b}$: $\sigma_{\bm{w}}$ and $\sigma_{b}$ varies depending on $N$
	\item Different choices of activation function: Gaussian, Tanh, ReLU
	\item Different choices of GP covariance model: squared exponential, rational quadratic, periodic
\end{enumerate}
The findings in each case are summarised next.

\vspace{5pt}
\noindent \textbf{Different choices of $\sigma_{\bm{w}}$ and $\sigma_{b}$:} 
\Cref{fig: comp_exp_01_a} displays the MRMSE and BNN covariance function for each of the choices $(\sigma_{\bm{w}}, \sigma_{b}) = (1, 2), (2, 6), (4, 20), (5, 30)$. Note the choice $(\sigma_{\bm{w}}, \sigma_{b}) = (3, 12)$ is displayed in \Cref{fig:exp_01}.
It can be observed that the BNN covariance function for larger $(\sigma_{\bm{w}}, \sigma_{b})$ has a qualitatively correct shape but is larger overall compared to the GP target when $N$ is small.
On the other hand, the BNN covariance function for smaller $(\sigma_{\bm{w}}, \sigma_{b})$ takes values that are closer to that of the GP, but is visually flatter than the GP and the approximation does not improve as $N$ is increased.
These observation indicates that it may be advantageous to change the values of $(\sigma_{\bm{w}}, \sigma_{b})$ in a manner that increases with $N$.
This leads us to the next experiment.

\begin{figure}[h!]
	\centering
	
	\includegraphics[width=0.24\textwidth]{./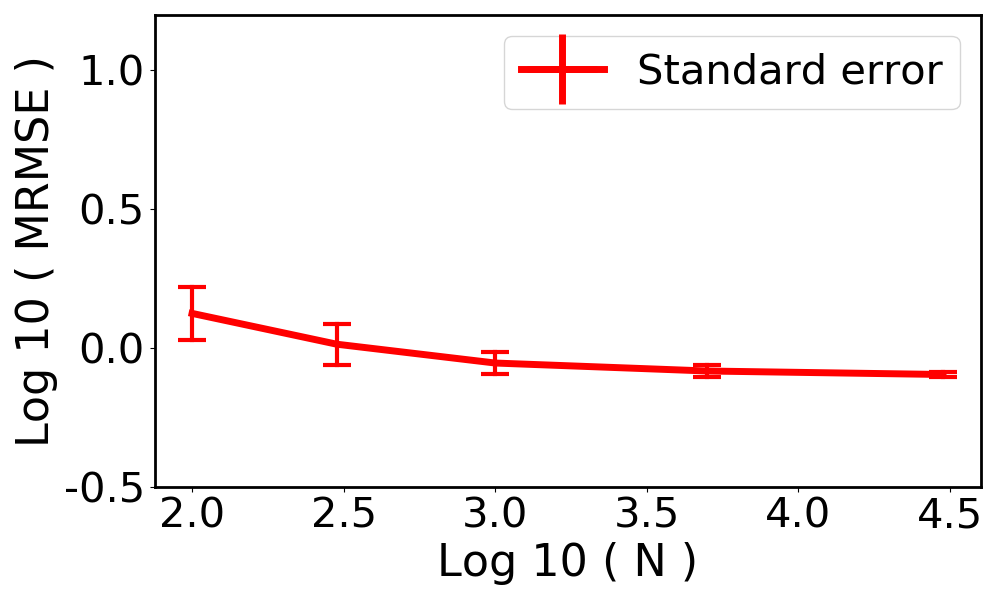}
	\includegraphics[width=0.24\textwidth]{./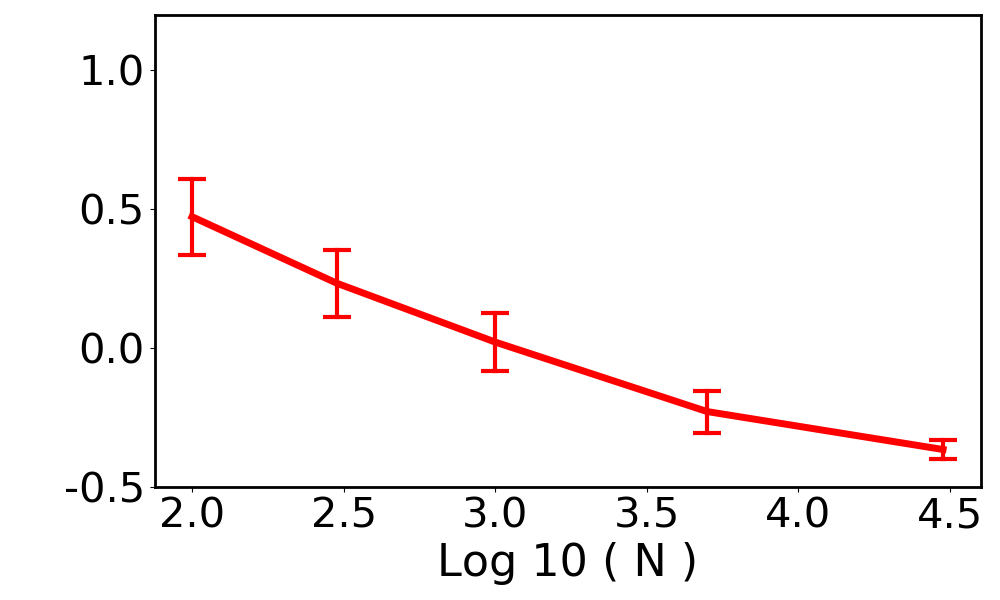}
	\includegraphics[width=0.24\textwidth]{./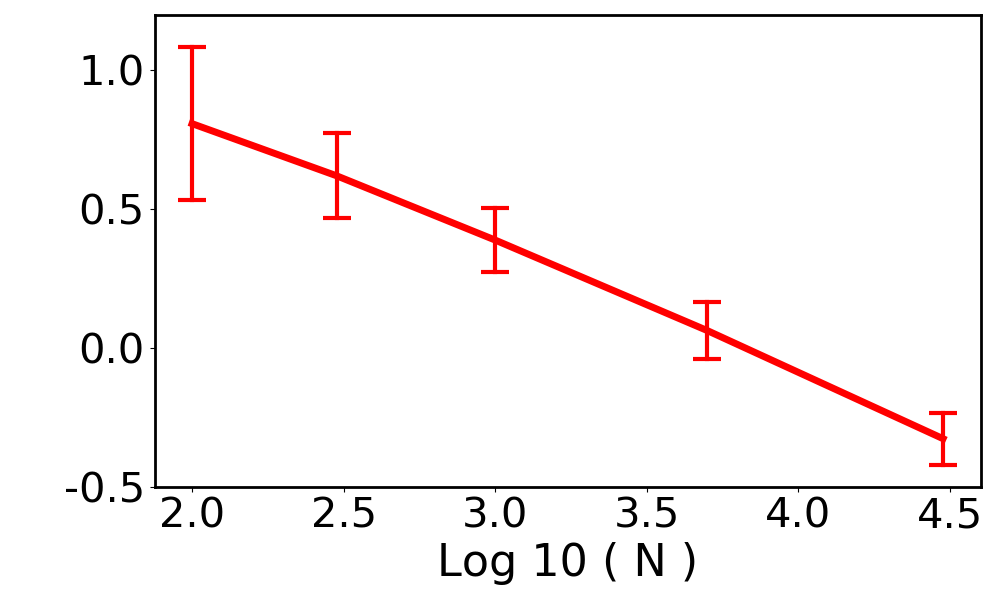}
	\includegraphics[width=0.24\textwidth]{./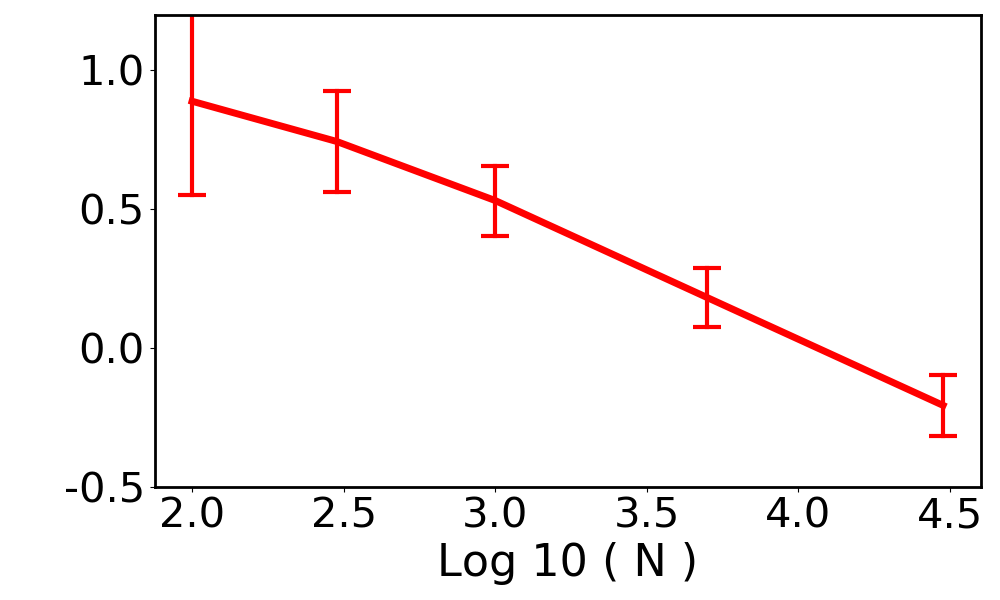}
	
	\subcaptionbox{$(\sigma_{\bm{w}}, \sigma_{b}) = (1,2)$}{\includegraphics[width=0.24\textwidth]{./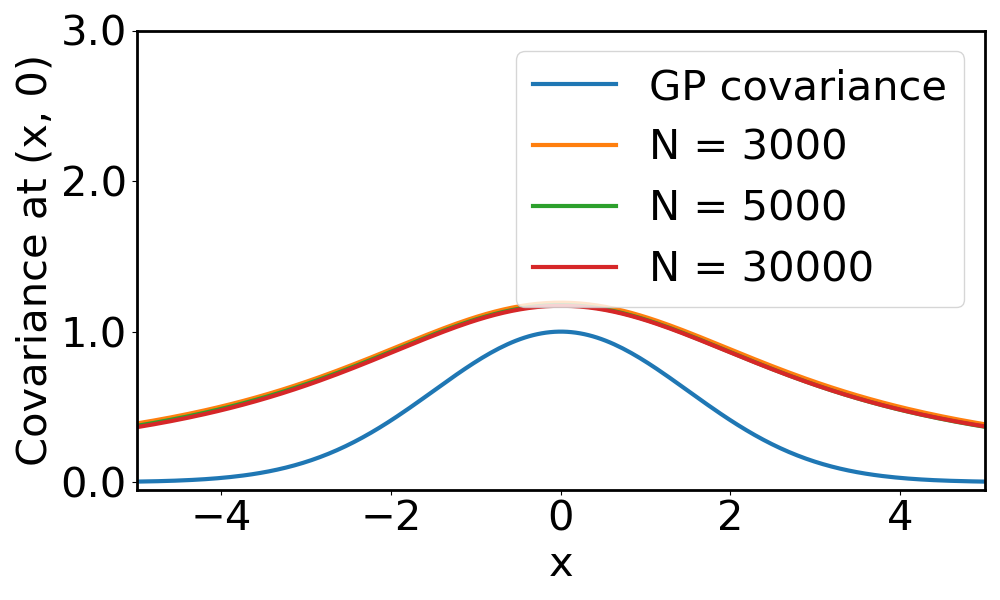}}
	\subcaptionbox{$(\sigma_{\bm{w}}, \sigma_{b}) = (2,6)$}{\includegraphics[width=0.24\textwidth]{./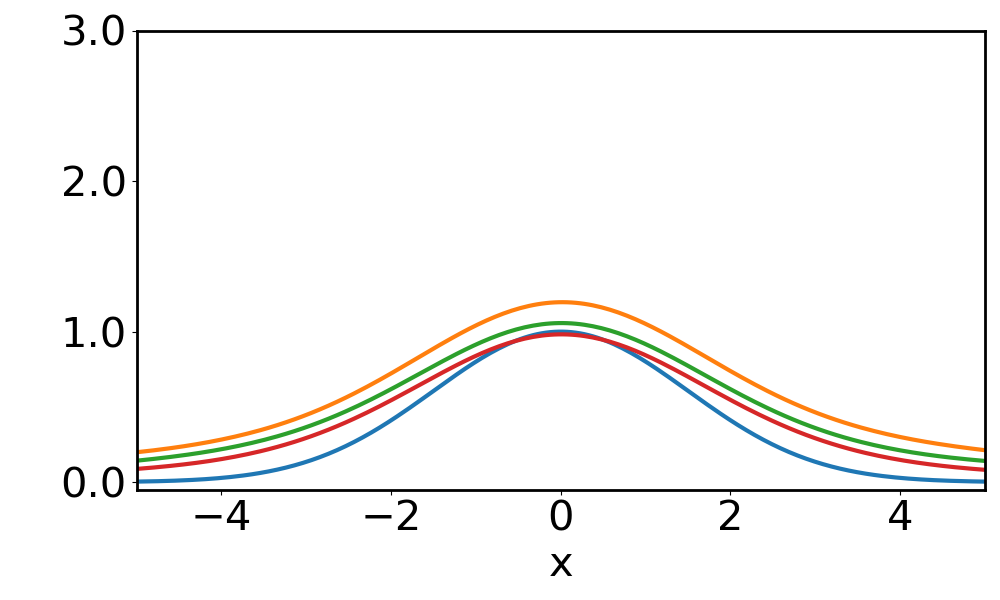}}
	\subcaptionbox{$(\sigma_{\bm{w}}, \sigma_{b}) = (4,20)$}{\includegraphics[width=0.24\textwidth]{./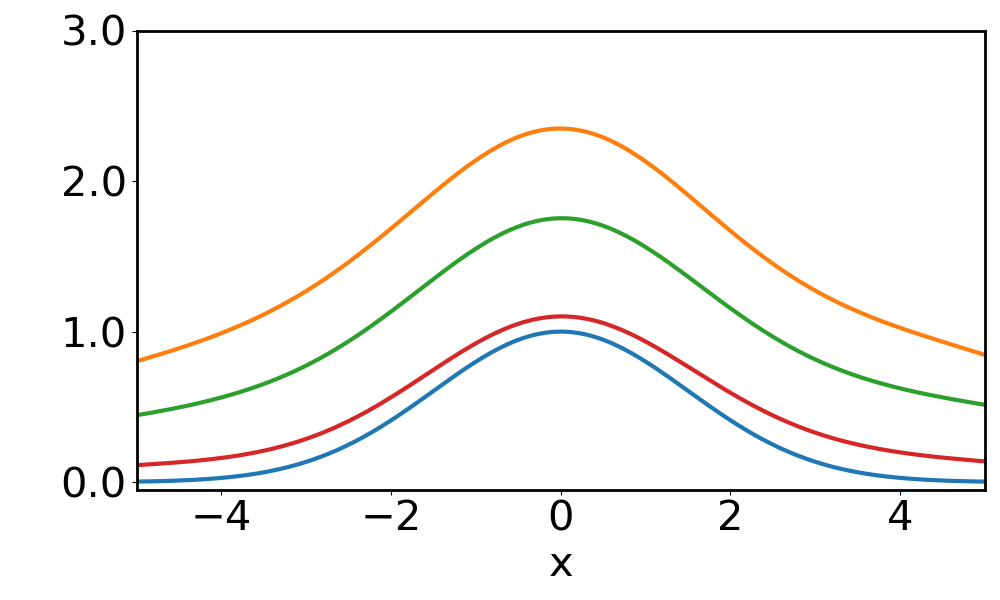}}
	\subcaptionbox{$(\sigma_{\bm{w}}, \sigma_{b}) = (5,30)$}{\includegraphics[width=0.24\textwidth]{./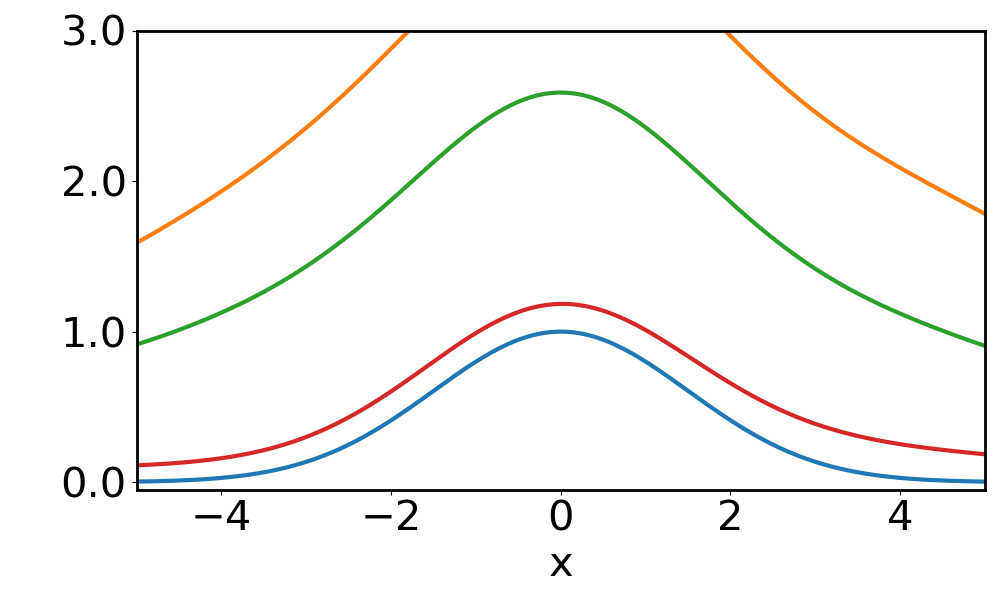}}
	
	\caption{MRMSE and BNN covariance for $(\sigma_{\bm{w}}, \sigma_{b}) = (1, 2), (2, 6), (4, 20), (5, 30)$.}
	\label{fig: comp_exp_01_a}
\end{figure}

\vspace{5pt}
\noindent \textbf{Dynamic setting of $\sigma_{\bm{w}}$ and $\sigma_{b}$:} 
As observed in the previous experiment, it may be advantageous to change $(\sigma_{\bm{w}},\sigma_b)$ in a manner that increases with $N$.
In this experiment, the values of $(\sigma_{\bm{w}}, \sigma_{b}, N)$ were varied as $(\sigma_{\bm{w}}, \sigma_{b}, N) = (1, 2, 300)$, $(2, 6, 1000)$, $(3, 12, 3000)$, $(4, 20, 10000)$, $(5, 30, 30000)$.
\Cref{fig: comp_exp_02_a} shows BNN sample paths and \Cref{fig: comp_exp_02_b} shows MRMSE and BNN covariance.
This dynamic setting of $(\sigma_{\bm{w}}, \sigma_{b}, N)$ appears to constitute a promising compromise between the two extremes of behaviour observed in \Cref{fig: comp_exp_01_a}.

\begin{figure}[h!]
	\centering
	
	\subcaptionbox{$(\sigma_{\bm{w}}, \sigma_{b}, N)=(1, 2, 300)$}{\includegraphics[width=0.325\textwidth]{./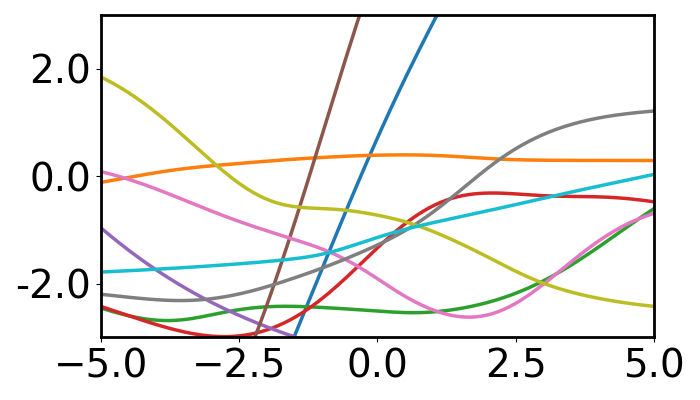}}
	\subcaptionbox{$(\sigma_{\bm{w}}, \sigma_{b}, N)=(2, 6, 1000)$}{\includegraphics[width=0.325\textwidth]{./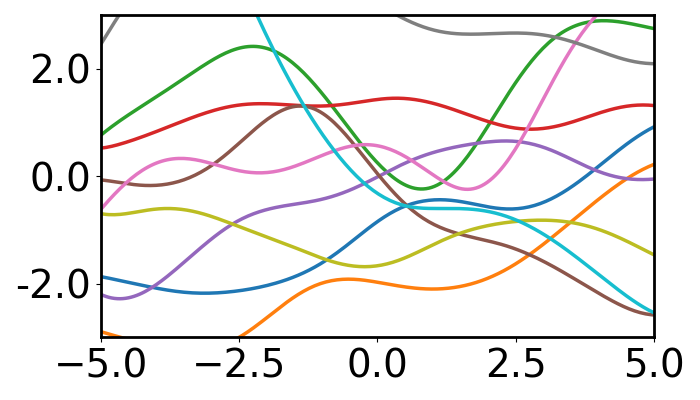}}
	\subcaptionbox{$(\sigma_{\bm{w}}, \sigma_{b}, N)=(3, 12, 3000)$}{\includegraphics[width=0.325\textwidth]{./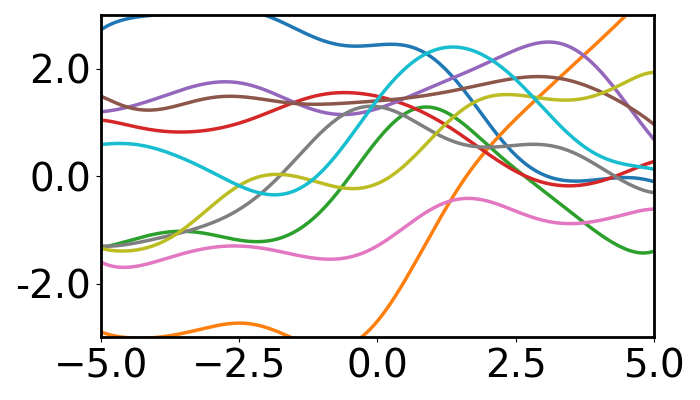}}
	\subcaptionbox{$(\sigma_{\bm{w}}, \sigma_{b}, N)=(4, 20, 10000)$}{\includegraphics[width=0.325\textwidth]{./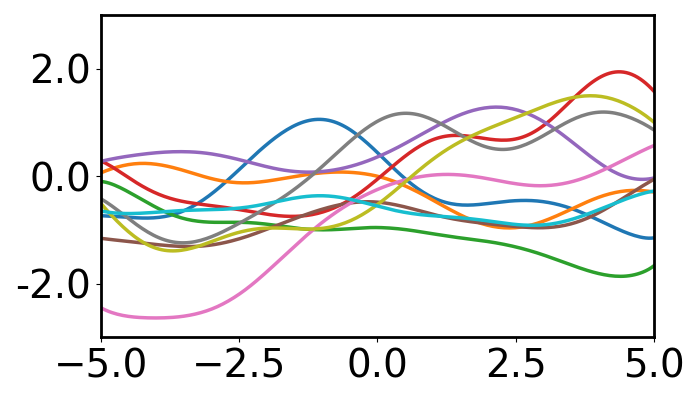}}
	\subcaptionbox{$(\sigma_{\bm{w}}, \sigma_{b}, N)=(5, 30, 30000)$}{\includegraphics[width=0.325\textwidth]{./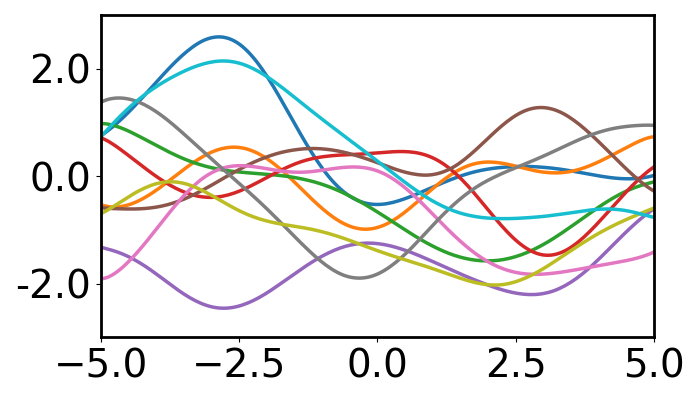}}
	\subcaptionbox{Original GP}{\includegraphics[width=0.325\textwidth]{./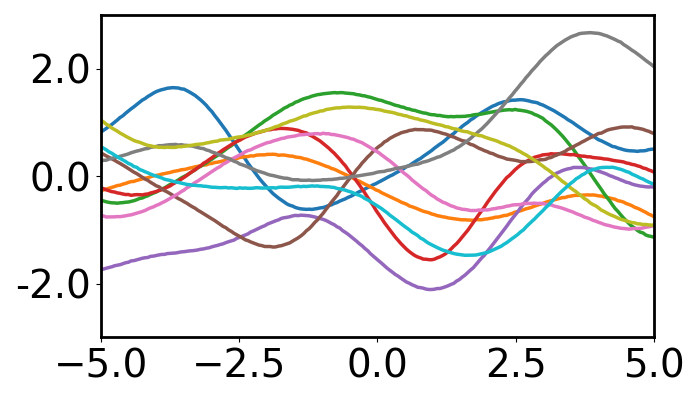}}
	
	\caption{Sample paths of BNN for dynamic setting of $(\sigma_{\bm{w}}, \sigma_{b}, N)$.}
	\label{fig: comp_exp_02_a}
\end{figure}

\begin{figure}[h!]
	\centering
	
	\includegraphics[height=100pt]{./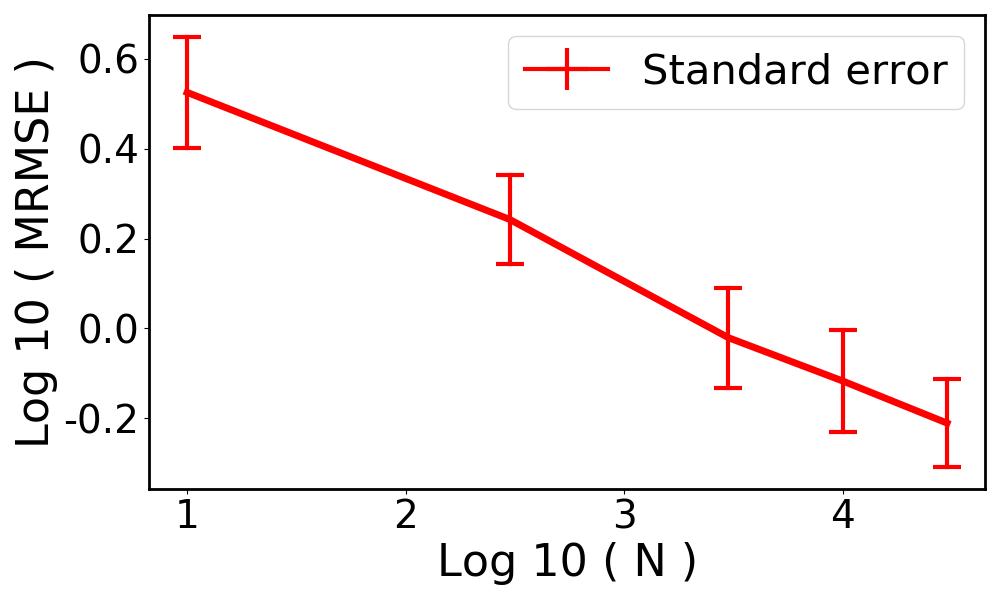}
	\hfill
	\includegraphics[height=100pt]{./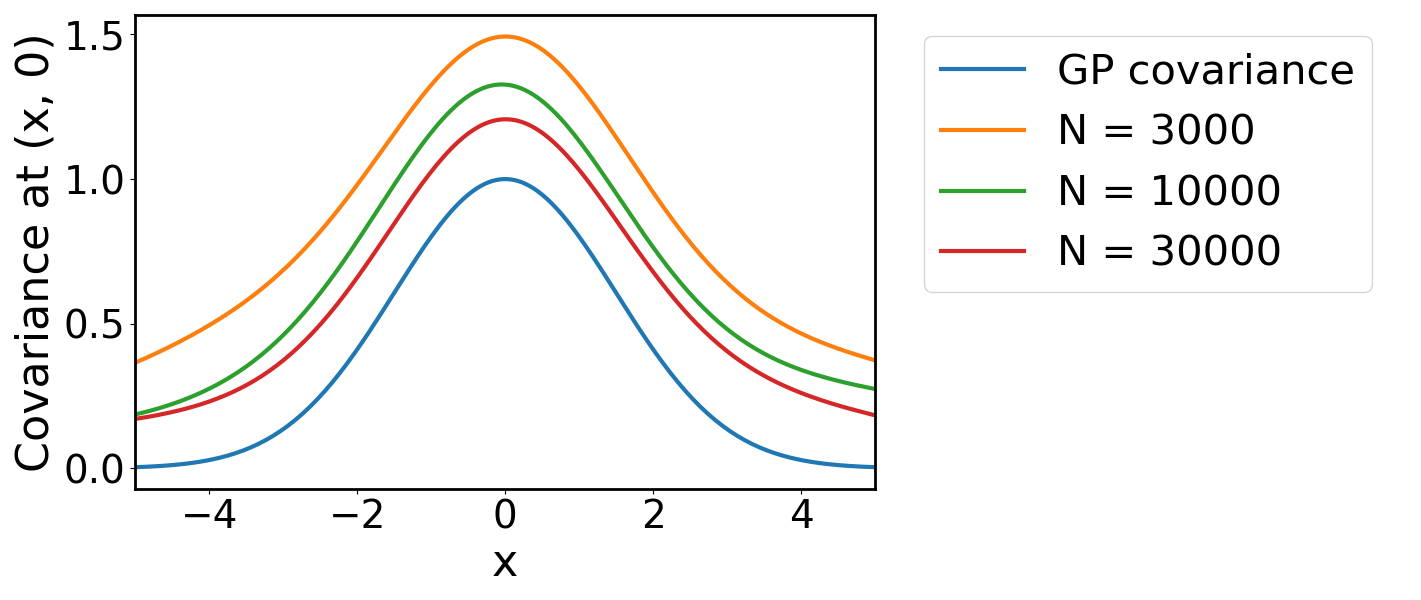}
	
	\caption{MRMSE and BNN covariance for dynamic setting of $(\sigma_{\bm{w}}, \sigma_{b}, N)$.}
	\label{fig: comp_exp_02_b}
\end{figure}

\vspace{5pt}
\noindent \textbf{Different choices of activation function:} 
In this experiment, we fix $(\sigma_{\bm{w}}, \sigma_b) = (3, 12)$ and use 3 different activation functions: Gaussian, hyperbolic tangent, and ReLU.
The settings for the ridgelet prior corresponding to each activation function are given in \Cref{ex: ridgelet_func}.
\Cref{fig: comp_exp_03_a} and \Cref{fig: comp_exp_03_b} indicate that smooth and bounded activation functions, such as the Gaussian activation function, allowed the ridgelet approximation to converge more rapidly to the GP in this experiment. 

\begin{figure}[h!]
	
	\subcaptionbox{Gauss $N=300$}{\includegraphics[width=0.24\textwidth]{./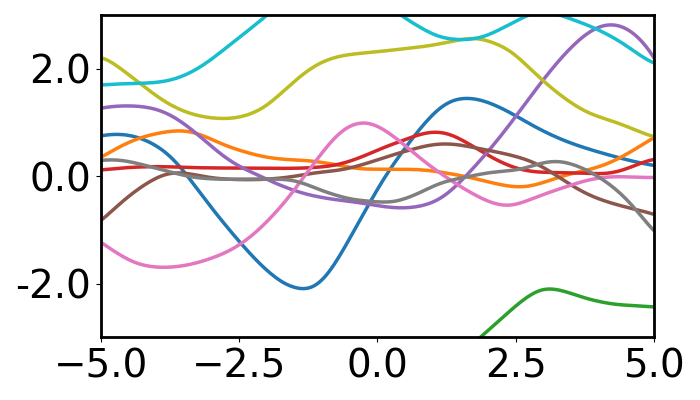}}
	\subcaptionbox{Gauss $N=3000$}{\includegraphics[width=0.24\textwidth]{./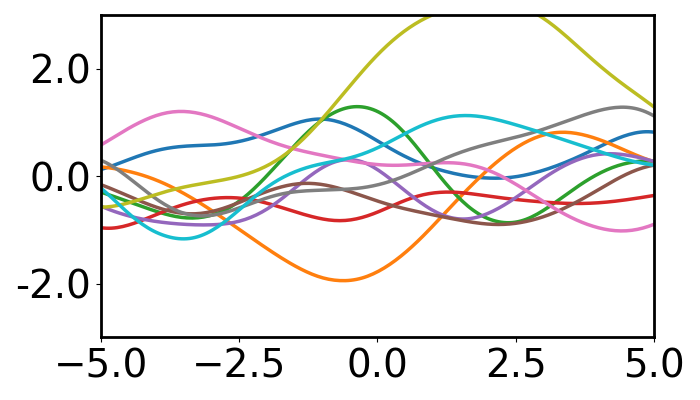}}
	\subcaptionbox{Gauss $N=30000$}{\includegraphics[width=0.24\textwidth]{./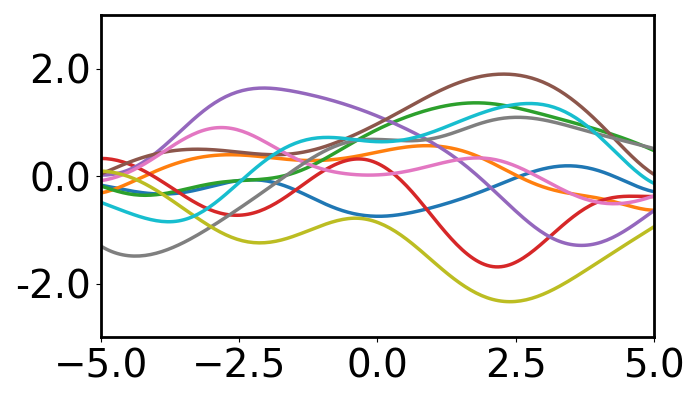}}
	
	\subcaptionbox{Tanh $N=300$}{\includegraphics[width=0.24\textwidth]{./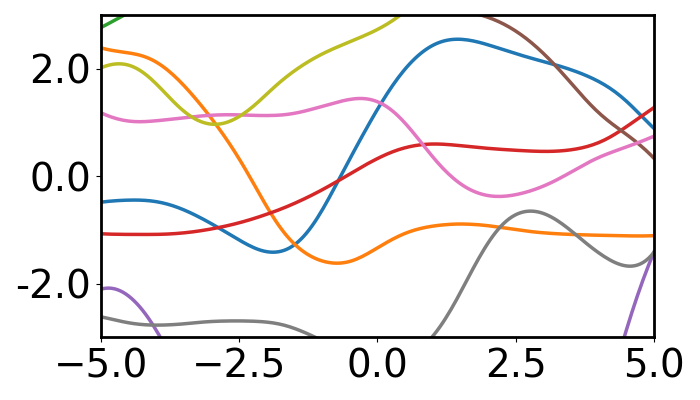}}
	\subcaptionbox{Tanh $N=3000$}{\includegraphics[width=0.24\textwidth]{./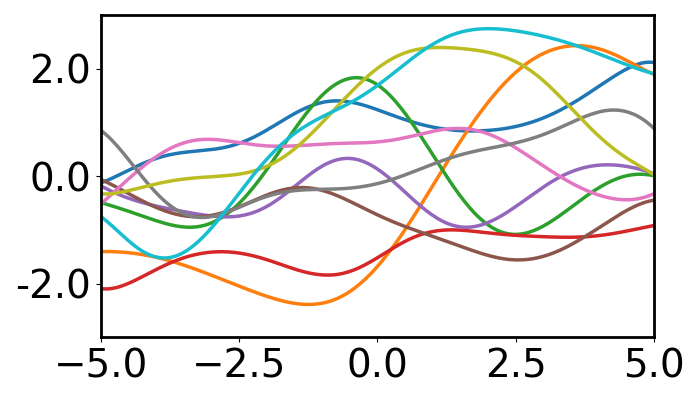}}
	\subcaptionbox{Tanh $N=30000$}{\includegraphics[width=0.24\textwidth]{./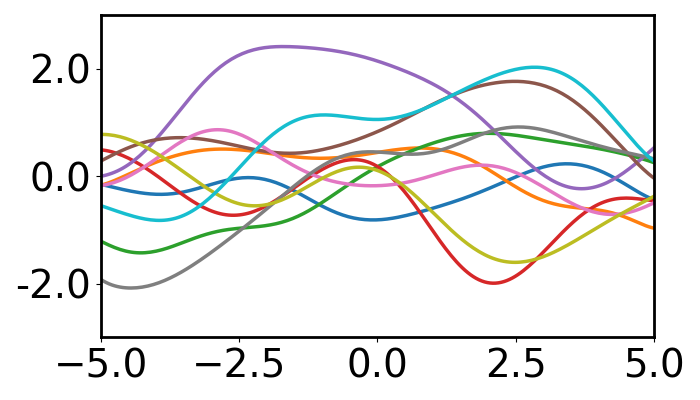}}
	
	\subcaptionbox{ReLU $N=300$}{\includegraphics[width=0.24\textwidth]{./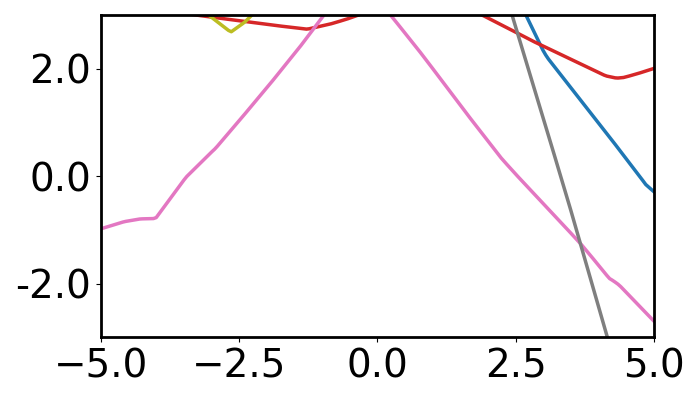}}
	\subcaptionbox{ReLU $N=3000$}{\includegraphics[width=0.24\textwidth]{./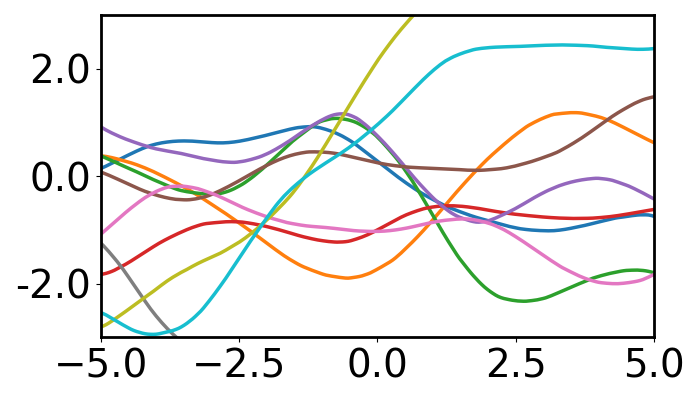}}
	\subcaptionbox{ReLU $N=30000$}{\includegraphics[width=0.24\textwidth]{./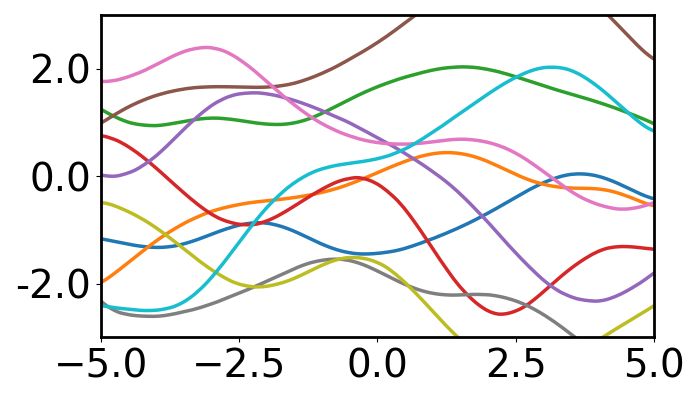}}
	\subcaptionbox{Original GP}{\includegraphics[width=0.24\textwidth]{./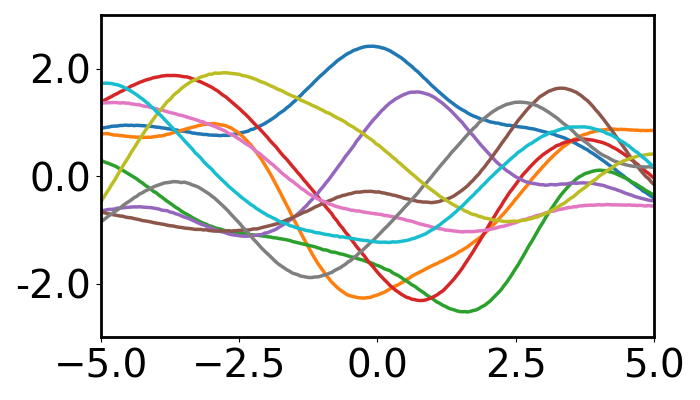}}
	
	\caption{Sample paths of the BNN for different activation functions; Gaussian, hyperbolic tangent, and ReLU.}
	\label{fig: comp_exp_03_a}
\end{figure}

\begin{figure}[h!]
	\centering
	
	\includegraphics[width=0.325\textwidth]{./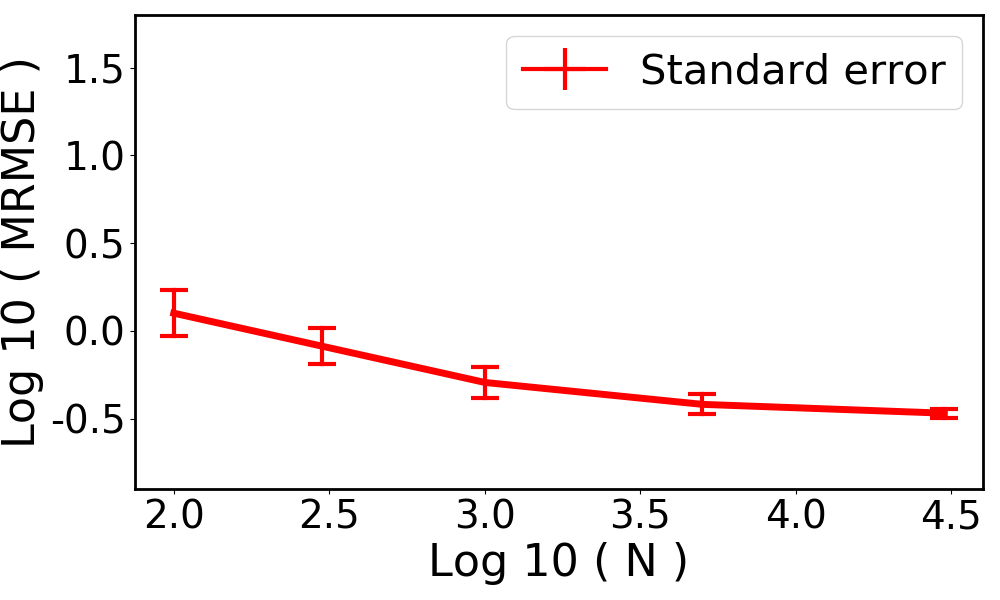}
	\includegraphics[width=0.325\textwidth]{./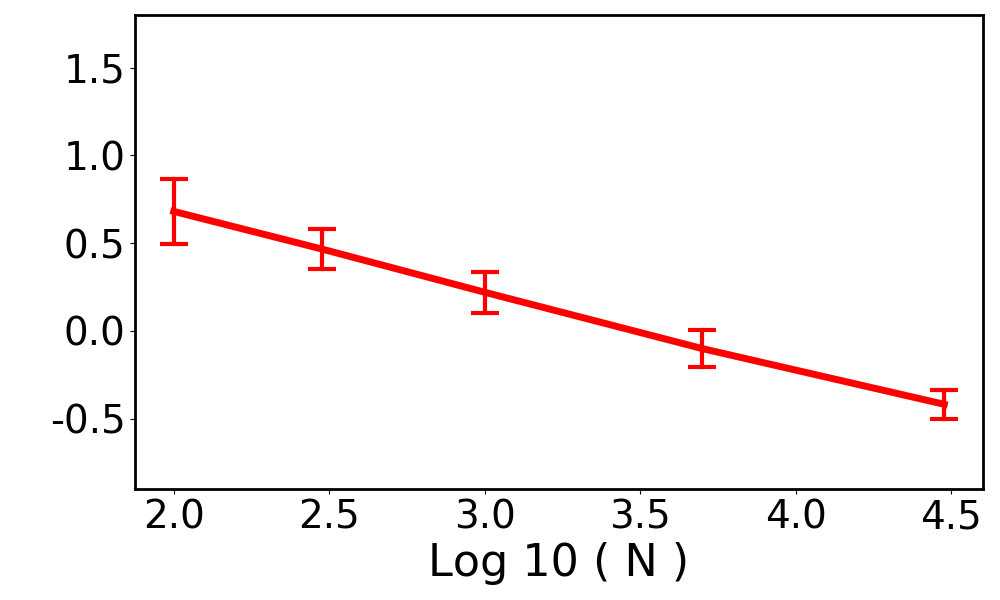}
	\includegraphics[width=0.325\textwidth]{./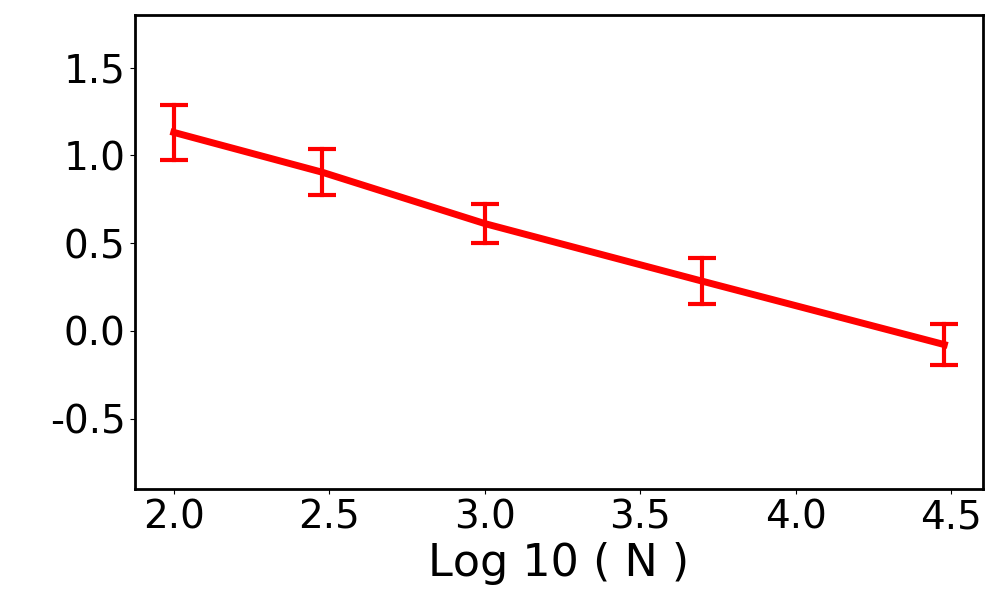}
	
	\subcaptionbox{Gauss}{\includegraphics[width=0.325\textwidth]{./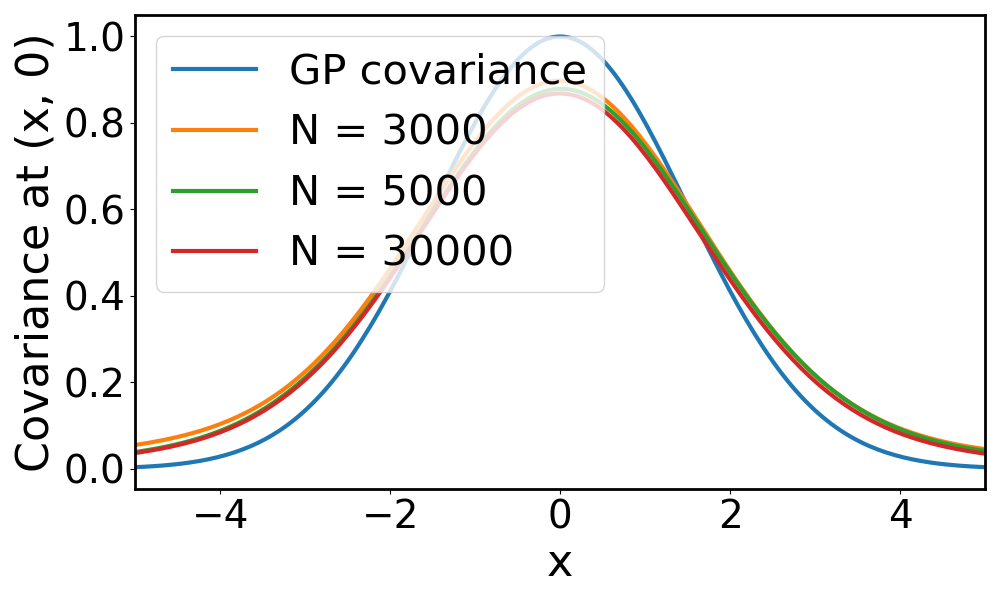}}
	\subcaptionbox{Tanh}{\includegraphics[width=0.325\textwidth]{./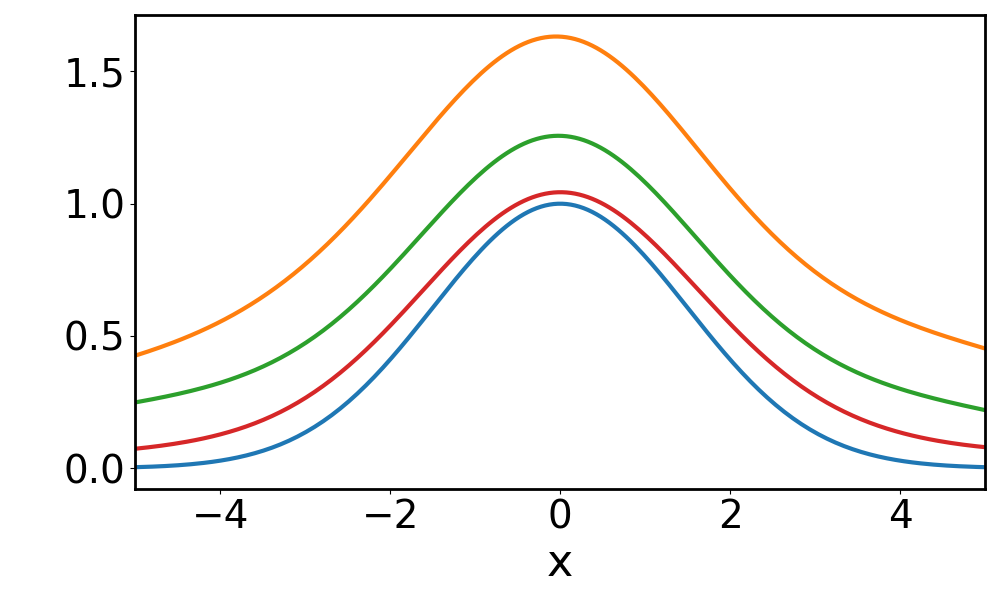}}
	\subcaptionbox{ReLU}{\includegraphics[width=0.325\textwidth]{./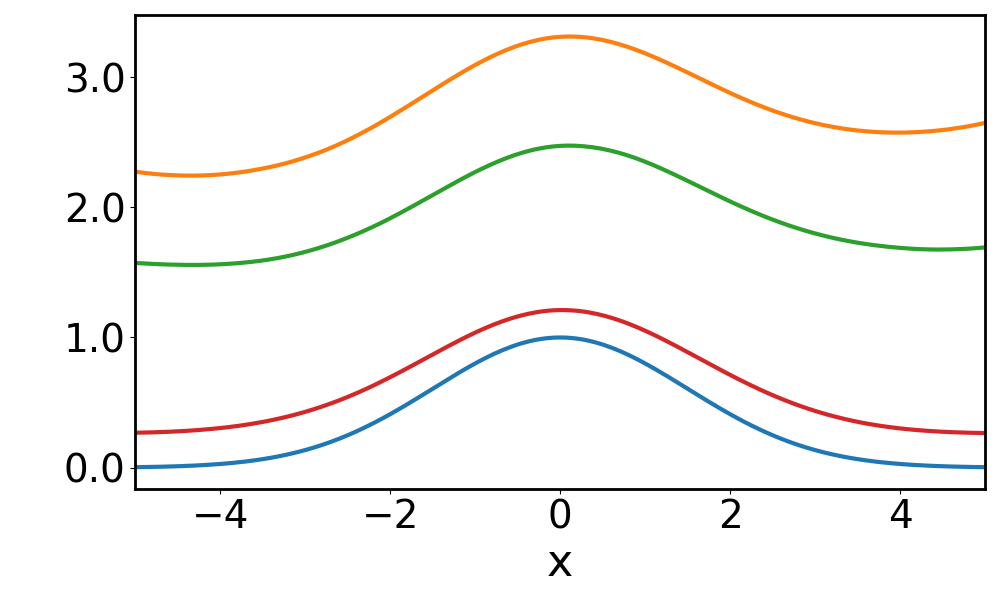}}
	
	\caption{MRMSE and BNN covariance for different activation functions; Gaussian, hyperbolic tangent, and ReLU.}
	\label{fig: comp_exp_03_b}
\end{figure}

\vspace{5pt}
\noindent \textbf{Different choice of GP covariance model:} 
For these experiments we fixed the activation function to the hyperbolic tangent.
Then we considered in turn each of the following covariance models for the target GP: square exponential $k_1$, rational quadratic $k_2$, and periodic $k_3$:
\begin{align*}
k_1(x, x') & := l_1^2 \exp\left( - \frac{1}{2 s_1^2} | x - x' |^2 \right) \\
k_2(x, x') & := l_2^2 \left( 1 + \frac{1}{2 \alpha_2 s_2^2} | x - x' |^2 \right)^{- \alpha_2} \\
k_3(x, x') & := l_3^2 \exp\left( - \frac{2}{s_3^2} \sin\left( \frac{\pi}{p_3^2} | x - x' | \right)^2 \right)
\end{align*}
where $l_1=1.0, s_1=1.5, l_2=1.0, \alpha_2=1.0, s_2=0.75, l_3=1.0, p_3=2.0, s_3=0.75$.
For the choice of $(\sigma_{\bm{w}}, \sigma_{b}, N)$, we used the five combination $(\sigma_{\bm{w}}, \sigma_{b}, N) = (1, 2, 300)$, $(2, 6, 1000)$, $(3, 12, 3000)$, $(4, 20, 10000)$, $(5, 30, 30000)$ examined in the previous experiment.
Such dynamic choice of $(\sigma_{\bm{w}}, \sigma_{b})$ is useful to achieve a better approximation quality when the covariance model is complex.
The sample paths from the BNN with the ridgelet prior are displayed in \Cref{fig: comp_exp_04_a} as a function of $N$, and the associated BNN covariance functions are displayed in \Cref{fig: comp_exp_04_b}.
It is perhaps not surprising that the periodic covariance model, being the most complex, appears to be the most challenging to approximate with a BNN.

\begin{figure}[h!]
	\centering
	
	\subcaptionbox{SE $N=300$}{\includegraphics[width=0.24\textwidth]{./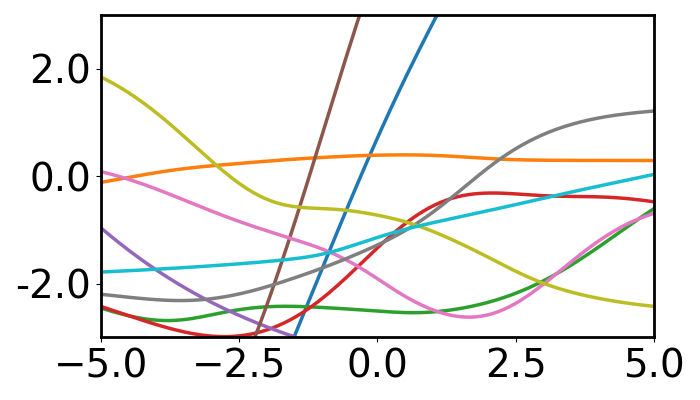}}
	\subcaptionbox{SE $N=3000$}{\includegraphics[width=0.24\textwidth]{./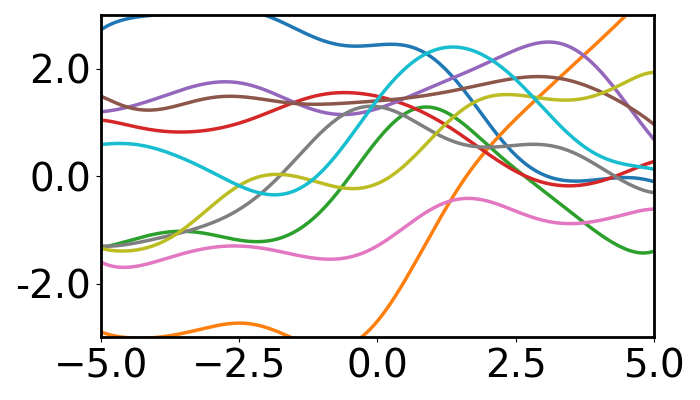}}
	\subcaptionbox{SE $N=30000$}{\includegraphics[width=0.24\textwidth]{./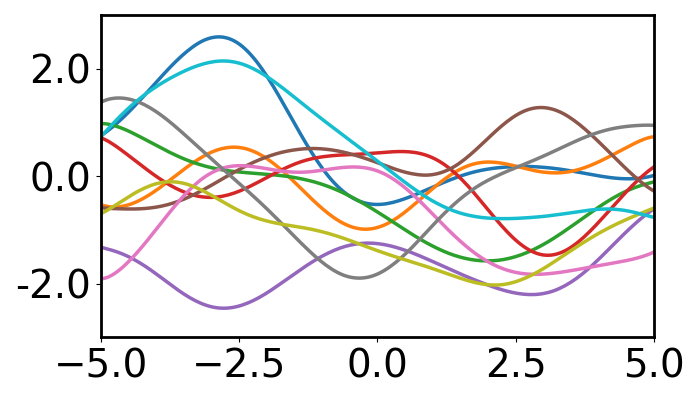}}
	\subcaptionbox{SE Original GP}{\includegraphics[width=0.24\textwidth]{./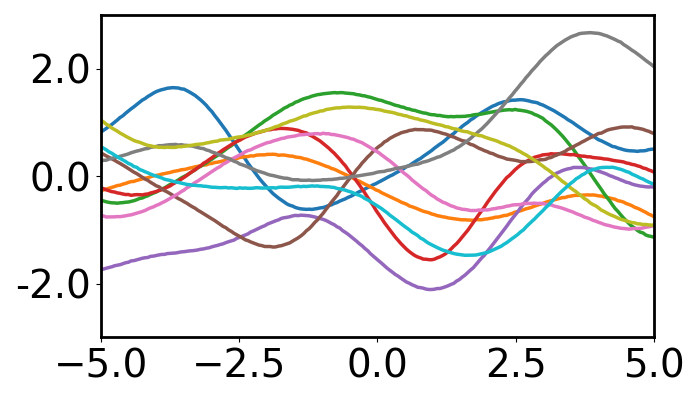}}
	
	\subcaptionbox{RQ $N=300$}{\includegraphics[width=0.24\textwidth]{./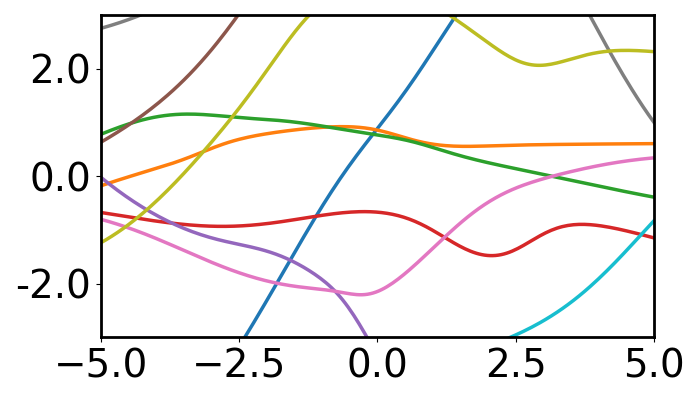}}
	\subcaptionbox{RQ $N=3000$}{\includegraphics[width=0.24\textwidth]{./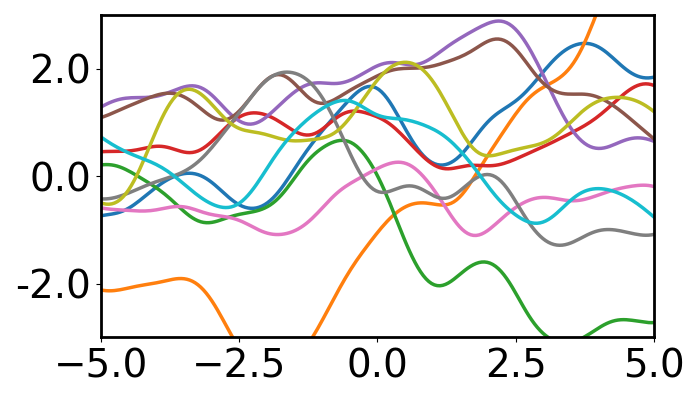}}
	\subcaptionbox{RQ $N=30000$}{\includegraphics[width=0.24\textwidth]{./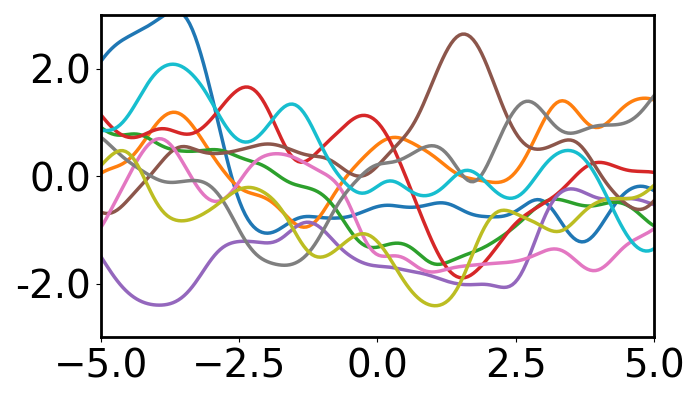}}
	\subcaptionbox{RQ Original GP}{\includegraphics[width=0.24\textwidth]{./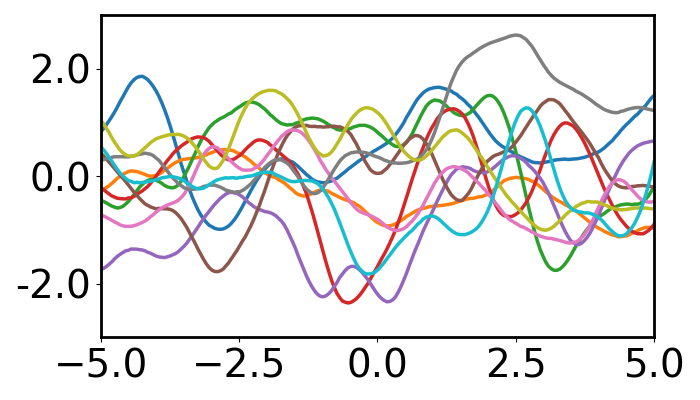}}
	
	\subcaptionbox{Periodic $N=300$}{\includegraphics[width=0.24\textwidth]{./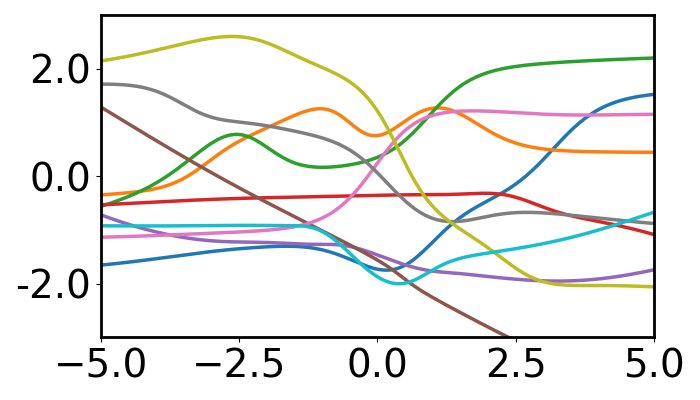}}
	\subcaptionbox{Periodic $N=3000$}{\includegraphics[width=0.24\textwidth]{./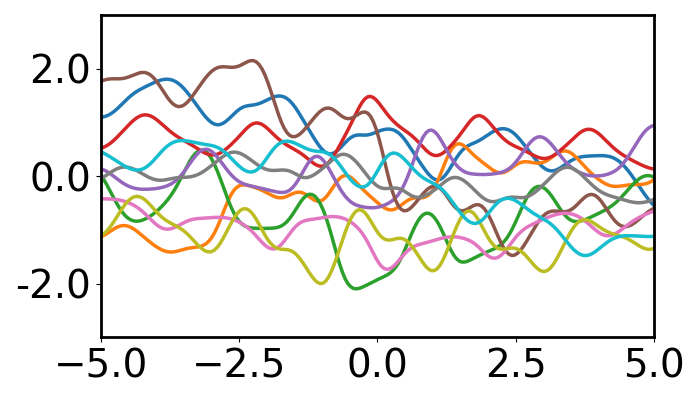}}
	\subcaptionbox{Periodic $N=30000$}{\includegraphics[width=0.24\textwidth]{./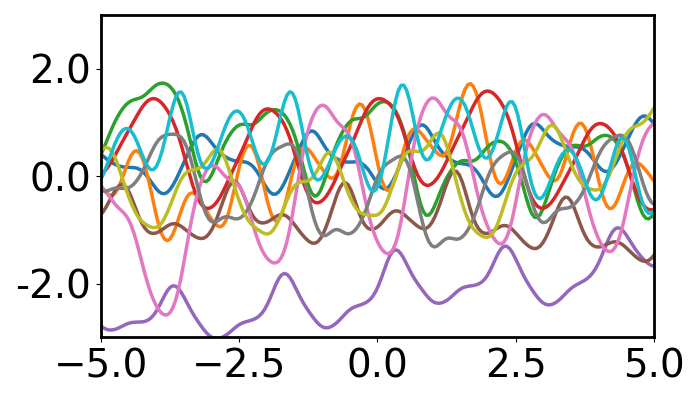}}
	\subcaptionbox{Periodic Original GP}{\includegraphics[width=0.24\textwidth]{./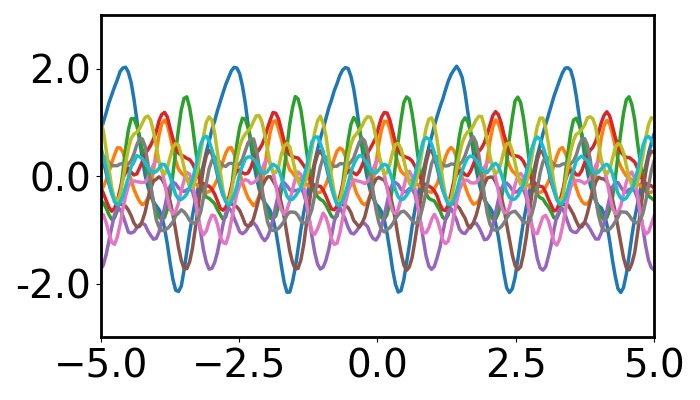}}
	
	\caption{Sample paths of the BNN for different GP covariance models; square exponential, rational quadratic, and periodic.}
	\label{fig: comp_exp_04_a}
\end{figure}

\begin{figure}[h!]
	\centering
	
	\includegraphics[width=0.325\textwidth]{./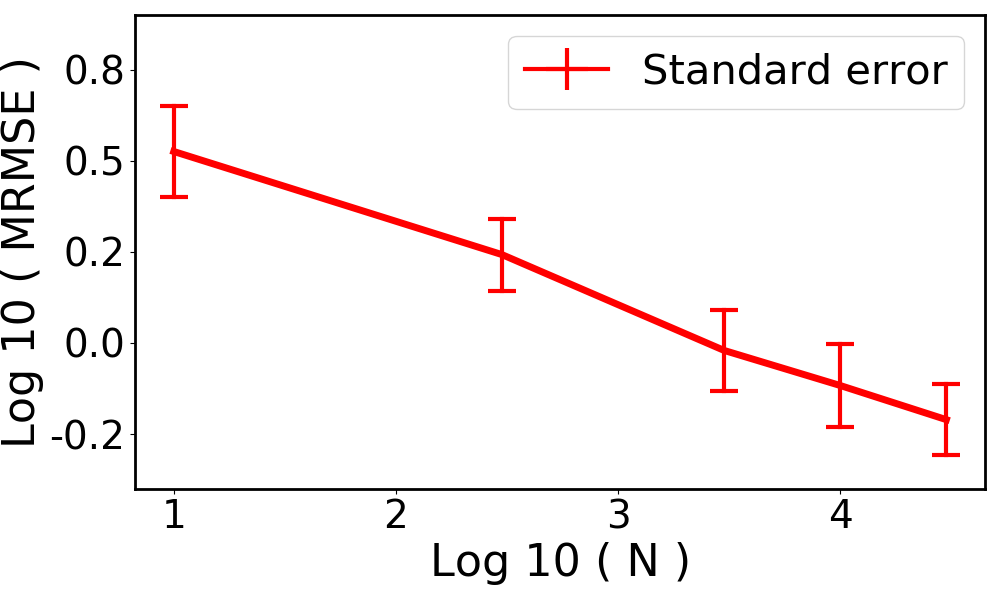}
	\includegraphics[width=0.325\textwidth]{./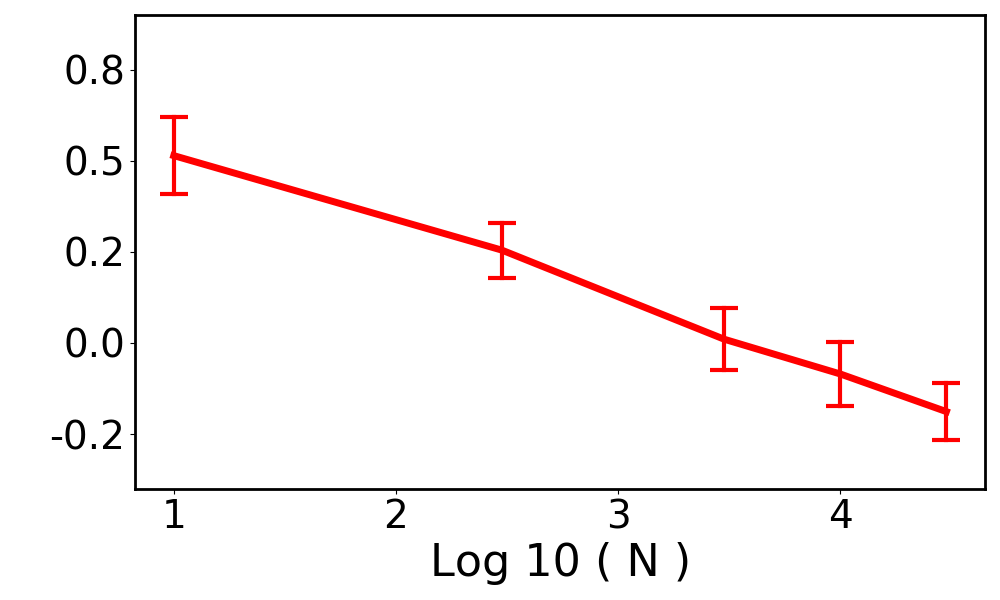}
	\includegraphics[width=0.325\textwidth]{./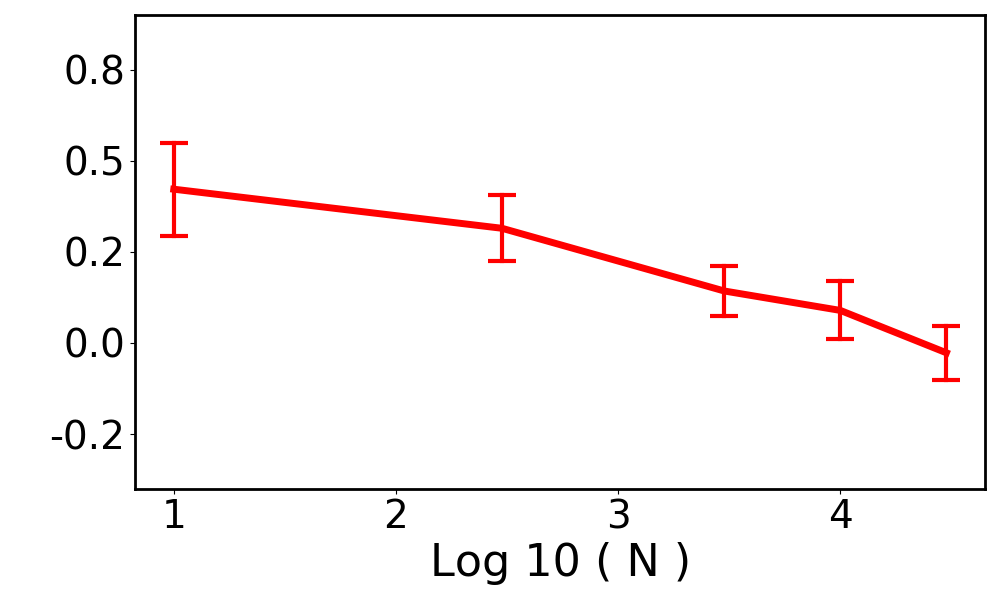}
	
	\subcaptionbox{SE}{\includegraphics[width=0.325\textwidth]{./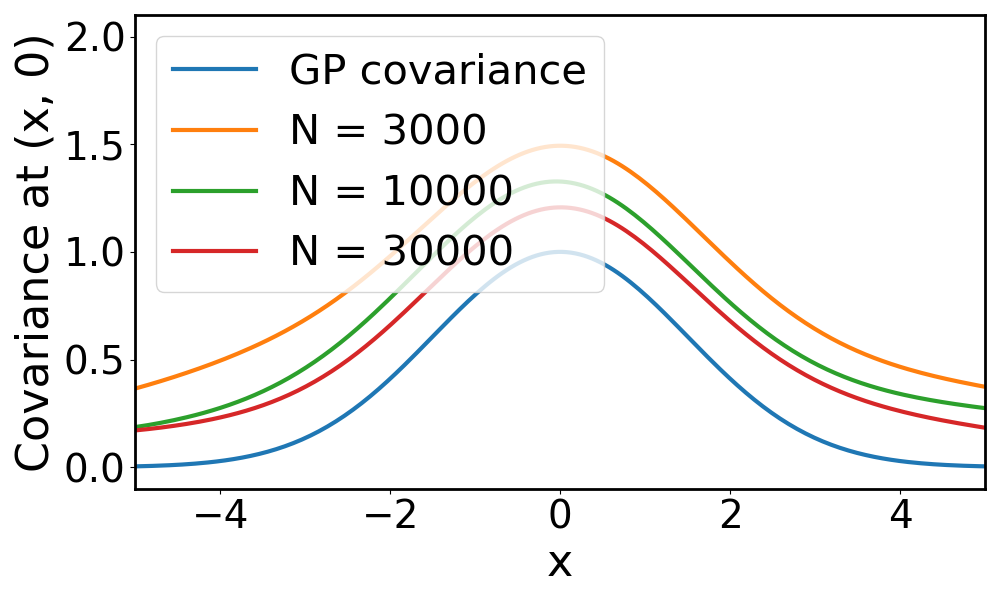}}
	\subcaptionbox{RQ}{\includegraphics[width=0.325\textwidth]{./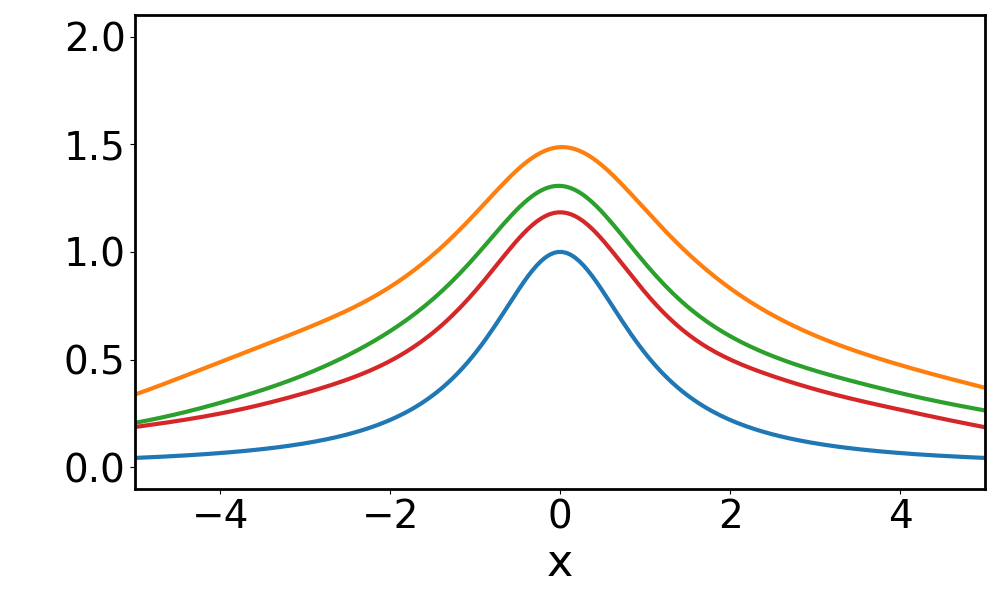}}
	\subcaptionbox{Periodic}{\includegraphics[width=0.325\textwidth]{./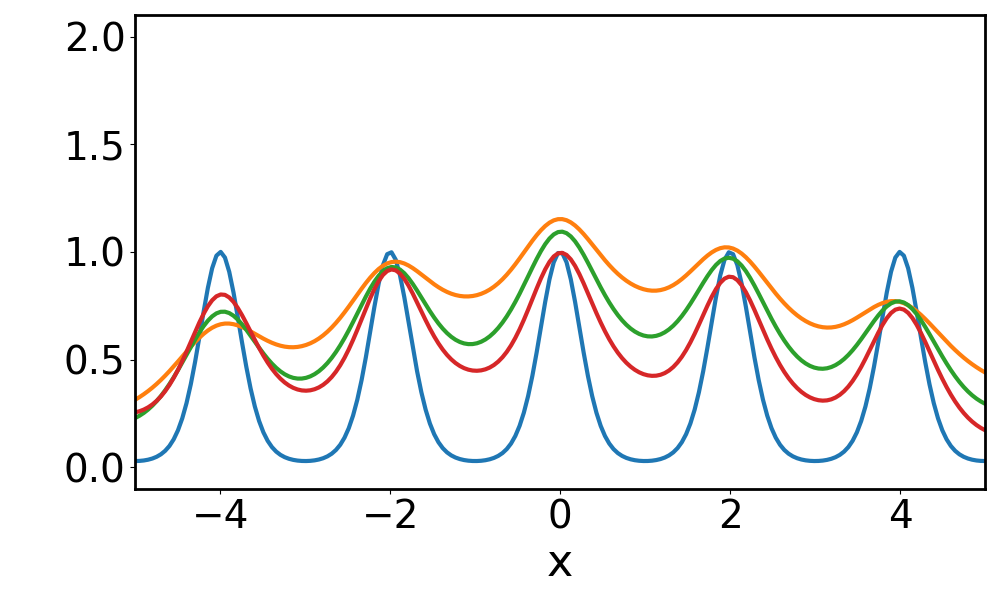}}
	
	\caption{MRMSE and BNN covariance for different GP covariance models; square exponential, rational quadratic, and periodic.}
	\label{fig: comp_exp_04_b}
\end{figure}


\newpage
\bibliography{reference}

\end{document}